\newcommand{\neurips}[1]{\iftoggle{arxiv}{}{#1}}
\newcommand{\arxiv}[1]{\iftoggle{arxiv}{#1}{}}
\definecolor{dgreen}{rgb}{0,0.5,0}
\theoremstyle{plain}
\newtheorem{theorem}{Theorem}
\newtheorem{lemma}[theorem]{Lemma}
\newtheorem{assumption}[theorem]{Assumption}
\theoremstyle{definition}
\newtheorem{definition}{Definition}
\newtheorem{defn}[definition]{Definition}
\numberwithin{theorem}{section}
\numberwithin{definition}{section}
\newcommand{\nc}{\newcommand}
\nc{\DMO}{\DeclareMathOperator}
\DeclareMathOperator*{\argmax}{arg\,max}
\DMO{\prox}{prox}
\DMO{\Span}{span}
\DMO{\UCB}{UCB}
\DMO{\LCB}{LCB}
\nc{\expl}[2]{\ME^{#1}_{#2}}
\nc{\tilmdp}[1]{\til \MM({#1})}
\nc{\barpdp}[2]{\ol \MP_{#1}({#2})}
\nc{\barmdp}[1]{\ol \MM({#1})}
\nc{\hatmdp}[1]{\wh \MM({#1})}
\nc{\rem}[2]{\MR_{#1}({#2})}
\nc{\Pigen}{\Pi^{\rm gen}}
\nc{\Pidet}{\Pi^{\rm det}}
\nc{\PiZ}{\Pi_{\SZ}^{\rm markov}}
\nc{\und}[3]{\MU_{{#1}}^{{#2}}({#3})}
\nc{\zlow}[2]{\MZ_{{#1}}^\lowv({#2})}
\nc{\dg}{\dagger}
\nc{\bB}{\mathbf{B}}
\nc{\unif}{\mu_{\rm unif}}
\nc{\indsig}[2]{\mathcal{I}_{#1}({#2})}
\nc{\total}{{\rm fin}}
\nc{\early}{{\rm pre}}
\nc{\zsink}{z_{\rm sink}}
\nc{\lowv}{{\rm low}}
\nc{\oo}[1]{\texttt{o}({#1})}
\nc{\posnrm}[1]{\left[ {#1} \right]_+}
\nc{\negnrm}[1]{\left[ {#1} \right]_-}
\nc{\tvnrm}[1]{\left\| {#1} \right\|_1}
\nc{\absval}[1]{\left| {#1} \right|}
\nc{\normalize}[1]{\mathfrak{n}\left({#1}\right)}
\renewcommand{\SS}{\textsf{S}}
\nc{\SZ}{\textsf{Z}}
\nc{\SO}{\textsf{O}}
\nc{\suff}[2]{{\rm suff}_{#1}({#2})}
\nc{\ApproxMDP}{\texttt{ConstructMDP}\xspace}
\nc{\mainalg}{\texttt{BaSeCAMP}\xspace} %
\nc{\bspanner}{\texttt{BarySpannerPolicy}\xspace}
\nc{\gamvec}{\gamma}
\nc{\til}{\widetilde}
\nc{\td}{\tilde}
\nc{\wh}{\widehat}
\nc{\todo}[1]{\ifnum\Comments=1 {\color{red}  [TODO: #1]}\fi}
\nc{\old}[1]{\ifnum\Comments=1 {\color{brown}  [COPIED: #1]}\fi}
\nc{\noah}[1]{\ifnum\Comments=1 {\color{brown} [ng: #1]}\fi}
\nc{\dhruv}[1]{\ifnum\Comments=1 {\color{purple} [dr: #1]}\fi}
\nc{\BP}{\mathbb{P}}
\nc{\BM}{\mathbb{M}}
\nc{\bbapx}{\bb^{\rm apx}}
\nc{\bbapxs}[1]{\bb^{\rm apx, {#1}}}
\nc{\fools}[3]{\MF_{#3}({#1}, {#2})}
\nc{\fool}[2]{\MF({#1},{#2})}
\nc{\clip}[2]{{\rm clip}\left[ \left. {#1} \right| {#2} \right]}
\nc{\imax}{\omega}
\DMO{\conv}{conv}
\nc{\MH}{\mathcal{H}}
\nc{\CH}{\mathscr{H}}
\nc{\CB}{\mathscr{B}}
\nc{\cD}{\mathscr{D}}
\nc{\MC}{\mathcal{C}}
\nc{\st}{\star}
\nc{\lng}{\langle}
\nc{\rng}{\rangle}
\DMO{\OOPT}{opt}
\nc{\dopt}[2]{\ell_{\OOPT}({#1},{#2})}
\nc{\grad}{\nabla}
\nc{\MG}{\mathcal{G}}
\nc{\MP}{\mathcal{P}}
\nc{\PP}{\mathbb{P}}
\nc{\TT}{\mathbb{T}}
\nc{\TTmax}{\TT_{\max}}
\DMO{\Reg}{Reg}
\DMO{\Ham}{Ham}
\DMO{\Gap}{Gap}
\DMO{\GD}{GD}
\DMO{\GDA}{GDA}
\DMO{\EG}{EG}
\DMO{\OGDA}{OGDA}
\DMO{\Unif}{Unif}
\DMO{\Tr}{Tr}
\nc{\ul}{\underline}
\nc{\ol}{\overline}
\nc{\Qu}{\ul{Q}}
\nc{\Qo}{\ol{Q}}
\nc{\Ro}{\ol{R}}
\nc{\Vu}{\ul{V}}
\nc{\Vo}{\ol{V}}
\nc{\RanQ}{\Delta Q}
\nc{\RanV}{\Delta V}
\nc{\clipQ}{\Delta \breve{Q}}
\nc{\frzQ}{\Delta \mathring{Q}}
\nc{\clipV}{\Delta \breve{V}}
\nc{\clipdelta}{\breve{\delta}}
\nc{\cliptheta}{\breve{\theta}}
\nc{\delmin}{\Delta_{{\rm min}}}
\nc{\delmins}[1]{\Delta_{{\rm min},{#1}}}
\nc{\gapfinal}[1]{\max \left\{ \frac{\frzQ_{{#1}}^{k^\st}(x,a)}{2H}, \frac{\delmin}{4H} \right\}}
\nc{\post}[2]{R({#1}; {#2})}
\nc{\posts}[3]{R_{#3}({#1}; {#2})}
\nc{\algnst}[1]{\begin{align*}#1\end{align*}}
\nc{\algn}[1]{\begin{align}#1\end{align}}
\nc{\matx}[1]{\left(\begin{matrix}#1\end{matrix}\right)}
\renewcommand{\^}[1]{^{(#1)}}
\nc{\nuu}{\nu}
\nc{\bel}[1]{\mathbf{b}({#1})}
\nc{\nbel}[1]{\bar{\mathbf{b}}({#1})}
\nc{\sbel}[2]{\mathbf{b}'_{#1}({#2})}
\nc{\nsbel}[2]{\bar{\mathbf{b}}'_{#1}({#2})}
\nc{\bv}{\mathbf{v}}
\nc{\bone}{\mathbf{1}}
\nc{\bX}{\mathbf{X}}
\nc{\be}{\mathbf{e}}
\nc{\bY}{\mathbf{Y}}
\nc{\bG}{\mathbf{G}}
\nc{\bz}{\mathbf{z}}
\nc{\bw}{\mathbf{w}}
\nc{\bA}{\mathbf{A}}
\nc{\bJ}{\mathbf{J}}
\nc{\bK}{\mathbf{K}}
\nc{\bb}{\mathbf{b}}
\nc{\ba}{\mathbf{a}}
\nc{\bc}{\mathbf{c}}
\nc{\bC}{\mathbf{C}}
\nc{\BR}{\mathbb R}
\nc{\BA}{\mathbb{A}}
\nc{\BC}{\mathbb C}
\nc{\bx}{\mathbf{x}}
\nc{\bS}{\mathbf{S}}
\nc{\bM}{\mathbf{M}}
\nc{\bR}{\mathbf{R}}
\nc{\bN}{\mathbf{N}}
\nc{\by}{\mathbf{y}}
\nc{\sy}{y}
\nc{\sx}{x}
\nc{\MO}{\mathcal O}
\nc{\MQ}{\mathcal{Q}}
\nc{\CO}{\mathscr{O}}
\nc{\MU}{\mathcal{U}}
\nc{\ME}{\mathcal{E}}
\nc{\MN}{\mathcal{N}}
\nc{\MK}{\mathcal{K}}
\nc{\MM}{\mathcal{M}}
\nc{\MS}{\mathcal{S}}
\nc{\MT}{\mathcal{T}}
\nc{\BF}{\mathbb F}
\nc{\BQ}{\mathbb Q}
\nc{\MX}{\mathcal{X}}
\nc{\MA}{\mathcal{A}}
\nc{\MD}{\mathcal{D}}
\nc{\MB}{\mathcal{B}}
\nc{\MZ}{\mathcal{Z}}
\nc{\MJ}{\mathcal{J}}
\nc{\MW}{\mathcal{W}}
\nc{\MR}{\mathcal{R}}
\nc{\MY}{\mathcal{Y}}
\nc{\BZ}{\mathbb Z}
\nc{\BN}{\mathbb N}
\nc{\ep}{\epsilon}
\nc{\gapfn}[1]{\varepsilon_{#1}}
\nc{\ggapfn}[2]{\varphi_{#1}({#2})}
\nc{\epsahk}{\gapfn{0}}
\nc{\BH}{\mathbb H}
\nc{\BG}{\mathbb{G}}
\nc{\D}{\Delta}
\nc{\MF}{\mathcal{F}}
\nc{\One}{\mathbbm{1}}
\nc{\bOne}{\mathbf{1}}
\nc{\Aopt}{\mathcal{A}^{\rm opt}}
\nc{\Amul}{\mathcal{A}^{\rm mul}}
\nc{\SP}{\mathsf P}
\nc{\SQ}{\mathsf Q}
\nc{\DO}{\accentset{\circ}{\D}}
\nc{\mf}{\mathfrak}
\nc{\mfp}{\mathfrak{p}}
\nc{\mfq}{\mf{q}}
\nc{\Sp}{\mbox{Spec}}
\nc{\Spm}{\mbox{Specm}}
\nc{\hookuparrow}{\mathrel{\rotatebox[origin=c]{90}{$\hookrightarrow$}}}
\nc{\hookdownarrow}{\mathrel{\rotatebox[origin=c]{-90}{$\hookrightarrow$}}}
\nc{\hra}{\hookrightarrow}
\nc{\tra}{\twoheadrightarrow}
\nc{\sgn}{{\rm sgn}}
\nc{\aut}{{\rm Aut}}
\nc{\Hom}{{\rm Hom}}
\nc{\img}{{\rm Im}}
\DMO{\id}{Id}
\DMO{\supp}{supp}
\DMO{\KL}{KL}
\nc{\kld}[2]{\KL({#1}||{#2})}
\nc{\ren}[2]{D_2({#1}||{#2})}
\nc{\chisq}[2]{\chi^2({#1}||{#2})}
\nc{\tvd}[2]{\left\| {#1} - {#2} \right\|_1}
\nc{\hell}[2]{H^2({#1}, {#2})}
\DMO{\BSS}{BSS}
\DMO{\BES}{BES}
\DMO{\BGS}{BGS}
\DMO{\poly}{poly}
\nc{\indep}{\perp}
\DMO{\sink}{sink}
\DMO{\nosink}{nosink}
\nc{\sinks}{s^{\sink}}
\nc{\sinkobs}{o^{\sink}}
\nc{\fp}[1]{\MP_1({#1})}
\nc{\BO}{\mathbb{O}}
\nc{\BT}{\mathbb{T}}
\nc{\RR}{\mathbb{R}}
\nc{\Gradient}{\nabla}
\DMO{\diag}{diag}
\nc{\norm}[1]{\left \lVert #1 \right \rVert}
\DMO*{\EE}{\mathbb{E}}
\DMO{\PR}{\mathbb{P}}
\renewcommand{\Pr}{\PR}
\nc{\E}{\mathbb{E}}
\nc{\ra}{\rightarrow}
\renewcommand{\t}{\top}
\newcommand{\LineComment}[1]{\State \textit{// #1}}
\title{Learning in Observable POMDPs,\\ without Computationally Intractable Oracles}
\author{%
  Noah Golowich\thanks{Email: \texttt{nzg@mit.edu}. \arxiv{Supported by a Fannie \& John Hertz Foundation Fellowship and an NSF Graduate Fellowship.}} \\
  MIT 
   \arxiv{\and}
    Ankur Moitra\thanks{Email: \texttt{moitra@mit.edu}. \arxiv{Supported by a Microsoft Trustworthy AI Grant, NSF Large CCF-1565235, a David and Lucile Packard Fellowship and an ONR Young Investigator Award.}} \\
    MIT 
    \arxiv{\and}
    Dhruv Rohatgi\thanks{Email: \texttt{drohatgi@mit.edu}. \arxiv{Supported by an Akamai Presidential Fellowship and a U.S. DoD NDSEG Fellowship.}} \\
    MIT 
}
\begin{document}

\maketitle

\begin{abstract}
Much of reinforcement learning theory is built on top of oracles that are computationally hard to implement. Specifically for learning near-optimal policies in Partially Observable Markov Decision Processes (POMDPs), existing algorithms either need to make strong assumptions about the model dynamics (e.g. deterministic transitions) or assume access to an oracle for solving a hard optimistic planning or estimation problem as a subroutine. In this work we develop the first oracle-free learning algorithm for POMDPs under reasonable assumptions. Specifically, we give a quasipolynomial-time end-to-end algorithm for learning in ``observable'' POMDPs, where observability is the assumption that well-separated distributions over states induce well-separated distributions over observations. Our techniques circumvent the more traditional approach of using the principle of optimism under uncertainty to promote exploration, and instead give a novel application of barycentric spanners to constructing policy covers.

\end{abstract}

\arxiv{
  \newpage
  \tableofcontents
  \newpage
  }
\section{Introduction}

Markov Decision Processes (MDPs) are a ubiquitous model in reinforcement learning that aim to capture sequential decision-making problems in a variety of applications spanning robotics to healthcare. However, modelling a problem with an MDP makes the often-unrealistic assumption that the agent has perfect knowledge about the state of the world. Partially Observable Markov Decision Processes (POMDPs) are a broad generalization of MDPs which capture an agent's inherent uncertainty about the state: while there is still an underlying state that updates according to the agent's actions, the agent never directly observes the state, but instead receives samples from a state-dependent observation distribution. The greater generality afforded by partial observability is crucial to applications in game theory \cite{brown2018superhuman}, healthcare \cite{hauskrecht2000value,hauskrecht2000planning}, market design \cite{wang2022using}, and robotics \cite{cassandra1996acting}.

Unfortunately, this greater generality comes with steep statistical and computational costs. There are well-known statistical lower bounds \cite{jin2020sample,krishnamurthy2016pac}, which show that in the worst case, it is statistically intractable to find a near-optimal policy for a POMDP given the ability to play policies on it (the \emph{learning} problem), even given unlimited computation.
Furthermore, there are worst-case computational lower bounds \cite{papadimitriou1987complexity, littman1994memoryless, burago1996complexity, lusena2001nonapproximability,vlassis2012computational}, %
which establish that it is computationally intractable to find a near-optimal policy even when given the exact parameters of the model (the substantially simpler \emph{planning} problem). %

Nevertheless there is a sizeable literature devoted to overcoming the \emph{statistical} intractability of the learning problem by restricting to natural subclasses of POMDPs \cite{krishnamurthy2016pac, guo2016pac, azizzadenesheli2016open, jin2020sample, xiong2021sublinear, kwon2021reinforcement, kwon2021rl, liu2022partially}. There are far fewer works attempting to overcome \emph{computational} intractability, and all make severe restrictions on either the model dynamics \cite{jin2020sample, krishnamurthy2016pac} or the structure of the uncertainty \cite{burago1996complexity,kwon2021reinforcement}. The standard practice is to simply sidestep computational issues by assuming access to strong oracles such as ones that solve \emph{Optimistic Planning} (given a constrained, non-convex set of POMDPs, find the maximum value achievable by any policy on any POMDP in the set) \cite{jin2020sample} or \emph{Optimistic Maximum Likelihood Estimation} (given a set of action/observation sample trajectories, find a POMDP which obtains maximum value conditioned on approximately maximizing the likelihood of seeing those trajectories) \cite{liu2022partially}. Unsurprisingly, these oracles are computationally intractable to implement. Is there any hope for giving computationally efficient, oracle-free learning algorithms for POMDPs under reasonable assumptions? The na\"ive approach would require exponential time, and thus even a quasi-polynomial time algorithm would represent a dramatic improvement. 

A necessary first step towards solving the learning problem is having a computationally efficient planning algorithm. Few such algorithms have provable guarantees under reasonable model assumptions, but recently it was shown \cite{golowich2022planning} that there is a quasipolynomial-time planning algorithm for POMDPs which satisfy an \emph{observability} assumption. Let $H \in \mathbb{N}$ be the horizon length of the POMDP, and for each state $s$ and step $h \in [H]$, let $\BO_h(\cdot|s)$ denote the observation distribution at state $s$ and step $h$. Then observability is defined as follows:

\begin{assumption}[\cite{even2007value, golowich2022planning}]\label{assumption:observability-intro}
  Let $\gamma > 0$. For $h \in [H]$, let $\BO_h$ be the matrix with columns $\BO_h(\cdot|s)$, indexed by states $s$. We say that the matrix $\BO_h$ satisfies \emph{$\gamma$-observability} if for each $h$, for any distributions $b,b'$ over states, $\norm{\BO_h b - \BO_h b'}_1 \geq \gamma \norm{b - b'}_1$.
  A POMDP satisfies \emph{(one-step) $\gamma$-observability} if all $H$ of its observation matrices do. 
\end{assumption}

Compared to previous assumptions enabling computationally efficient planning, observability is much milder because it makes no assumptions about the dynamics of the POMDP, and it allows for natural observation models such as noisy or lossy sensors \cite{golowich2022planning}. It is known that statistically efficient learning is possible under somewhat weaker assumptions than observability \cite{jin2020sample}, however these works rely on solving a planning problem that is computationally intractable. This raises the question: \emph{Is observability enough to remedy both the computational and statistical woes of learning POMDPs?} In particular, can we get not only efficient planning but efficient learning too?

\paragraph{Overview of results.} This work provides an affirmative answer to the questions above: we give an algorithm (\mainalg, Algorithm \ref{alg:main}) with quasi-polynomial time (and sample) complexity for learning near-optimal policies in observable POMDPs -- see Theorem \ref{thm:main}. While this falls just short of polynomial time, it turns out to be optimal in the sense that even for observable POMDPs there is a quasi-polynomial time lower bound for the (simpler) planning problem under standard complexity assumptions \cite{golowich2022planning}. %
A key innovation of our approach is an alternative technique to encourage exploration: whereas essentially all previous approaches for partially observable RL used the principle of \emph{optimism under uncertainty}  to encourage the algorithm to visit states \cite{jin2020sample,liu2022partially}, 
we introduce a new framework based on the use of \emph{barycentric spanners} \cite{awerbuch2008online} and \emph{policy covers} \cite{du2019provably}. %
While each of these tools has previously been used in the broader RL literature to promote exploration (e.g. \cite{lattimore2017end, foster2021statistical, du2019provably, agarwal2020pc,daskalakis2022complexity,jin2020reward,misra2020kinematic}), they have not been used specifically in the study of POMDPs with imperfect observations,\footnote{Policy covers have been used in the special case of \emph{block MDPs} \cite{du2019provably,misra2020kinematic}, namely where different states produce disjoint observations.} %
and indeed our usage of them differs substantially from past instances. %

The starting point for our approach is a result of \cite{golowich2022planning} (restated as Theorem \ref{thm:belief-contraction-informal}) which implies that the dynamics of an observable POMDP $\MP$ may be approximated by those of a \emph{Markov decision process} $\MM$ with a quasi-polynomial number of states. If we knew the transitions of $\MM$, then we could simply use dynamic programming to find an optimal policy for $\MM$, which would be guaranteed to be a near-optimal policy for $\MP$. Instead, we must \emph{learn} the transitions of $\MM$, for which it is necessary to explore all (reachable) states of the underlying POMDP $\MP$. A na\"ive approach to encourage exploration is to learn the transitions of $\MP$ via forwards induction on the layer $h$, using, at each step $h$, our knowledge of the learned transitions at steps prior to $h$ to find a policy which visits each reachable state at step $h$. Such an approach would lead to a \emph{policy cover}, namely a collection of policies which visits all reachable latent states.

A major problem with this approach is that the latent states are not observed: instead, we only see observations. Hence a natural approach might be to choose policies which lead to all possible \emph{observations} at each step $h$. This approach is clearly insufficient, since, e.g., a single state could emit a uniform distribution over observations. Thus we instead compute the following stronger concept: for each step $h$, we consider the set $\MX$ of all possible distributions over observations at step $h$ under any general policy, and attempt to find a \emph{barycentric spanner} of $\MX$, namely a small subset $\MX' \subset \MX$ so that all other distributions in $\MX$ can be expressed as a linear combination of elements of $\MX'$ with bounded coefficients. %
By playing a policy which realizes each distribution in such a barycentric spanner $\MX'$, we are able to explore all reachable latent states, despite having no knowledge about which states we are exploring. This discussion omits a key technical aspect of the proof, which is the fact that we can only compute a barycentric spanner for a set $\MX$ corresponding to an empirical estimate $\wh\MM$ of $\MM$. A key innovation in our proof is a technique to dynamically use such barycentric spanners, even when $\wh\MM$ is innacurate, to improve the quality of the estimate $\wh\MM$. We remark that a similar dynamic usage of barycentric spanners appeared in \cite{foster2021statistical}; we discuss in the appendix why the approach of \cite{foster2021statistical}, as well as related approaches involving nonstationary MDPs \cite{wei2021nonstationary,wei2022model,wu2021reinforcement}, cannot be applied here.

Taking a step back, few models in reinforcement learning (beyond tabular or linear MDPs) admit computationally efficient end-to-end learning algorithms -- indeed, our main contribution is a way to circumvent the daunting task of implementing any of the various constrained optimistic planning oracles assumed in previous optimism-based approaches. We hope that our techniques may be useful in other contexts for avoiding computational intractability without resorting to oracles.

\section{Preliminaries}
\label{sec:prelim}
For sets $\MT, \MQ$, let $\MQ^\MT$ denote the set of mappings from $\MT \ra \MQ$. Accordingly, we will identify $\BR^\MT$ with $|\MT|$-dimensional Euclidean space, and let $\Delta(\MT) \subset \BR^\MT$ consist of distributions on $\MT$. For $d \in \BN$ and a vector $\bv \in \BR^d$, we denote its components by $\bv(1), \ldots, \bv(d)$. For integers $a \leq b$, we abbreviate a sequence $(x_a, x_{a+1}, \ldots, x_b)$ by $x_{a:b}$. %
If $a > b$, then we let $x_{a:b}$ denote the empty sequence. \emph{Sometimes we refer to negative indices of a sequence $x_{1:n}$: in such cases the elements with negative indices may be taken to be aribtrary, as they will never affect the value of the expression. See Appendix~\ref{section:negative-indices} for clarification.} %
For $x \in \BR$, we write $\posnrm{x} = \max\{x,0\}$, and $\negnrm{x} = -\min\{x,0\}$.  For sets $\MS, \MT$, the notation $\MS \subset \MT$ allows for the possibility that $\MS = \MT$. 

\subsection{Background on POMDPs}
\label{sec:pomdp-background}
In this paper we address the problem of learning finite-horizon \emph{partially observable Markov decision processes (POMDPs)}. Formally, a POMDP $\MP$ is a tuple $\MP = (H, \MS, \MA, \MO, b_1, R, \BT, \BO)$, where:
$H \in \BN$ is a positive integer denoting the \emph{horizon} length;
$\MS$ is a finite set of \emph{states} of size $S := |\MS|$;
$\MA$ is a finite set of \emph{actions} of size $A := |\MA|$;
$\MO$ is a finite set of \emph{observations} of size $O := |\MO|$;
$b_1$ is the initial distribution over states; and $R, \BT,\BO$ are given as follows. First, $R = (R_1, \ldots, R_H)$ denotes a tuple of \emph{reward functions}, where, for $h \in [H]$, $R_h : \MO \ra [0,1]$ gives the reward received as a function of the observation at step $h$. Second, $\BT = (\BT_1, \ldots, \BT_H)$ is a tuple of \emph{transition kernels}, where, for $h \in [H], s,s' \in \MS, a \in \MA$, $\BT_h(s' | s,a)$ denotes the probability of transitioning from $s$ to $s'$ at step $h$ when action $a$ is taken. For each $a \in \MA$, we will write $\BT_h(a) \in \BR^{S \times S}$ to denote the matrix with $\BT_h(a)_{s,s'} = \BT_h(s | s', a)$. Third, $\BO = (\BO_1, \ldots, \BO_H)$ is a tuple of \emph{observation matrices}, where for $h \in [H], s \in \MS, o \in \MO$, $(\BO_h)_{o,s}$, also written as $\BO_h(o|s)$, denotes the probability observing $o$ while in state $s$ at step $h$. Thus $\BO_h \in \BR^{O \times S}$ for each $h$. 
Sometimes, for disambiguation, we will refer to the states $\MS$ as the \emph{latent states} of the POMDP $\MP$.

The interaction (namely, a single \emph{episode}) with $\MP$ proceeds as follows: initially a state $s_1 \sim b_1$ is drawn from the initial state distribution. At each step $1 \leq h < H$, an action $a_h \in \MA$ is chosen (as a function of previous observations and actions taken), $\MP$ transitions to a new state $s_{h+1} \sim \BT_h(\cdot | s_h, a_h)$, a new observation is observed, $o_{h+1} \sim \BO_{h+1} (\cdot | s_{h+1})$, and a reward of $R_{h+1}(o_{h+1})$ is received (and observed). We emphasize that the underlying states $s_{1:H}$ are never observed directly. As a matter of convention, we assume that no observation is observed at step $h=1$; thus the first observation is $o_2$.

\subsection{Policies, value functions}
\label{sec:pols-vals}
A \emph{deterministic policy} $\sigma$ is a tuple $\sigma = (\sigma_1, \ldots, \sigma_H)$, where $\sigma_h : \MA^{h-1} \times \MO^{h-1} \ra \MA$ is a mapping from \emph{histories} up to step $h$, namely tuples $(a_{1:h-1}, o_{2:h})$, to actions. We will denote the collection of histories up to step $h$ by $\CH_h := \MA^{h-1} \times \MO^{h-1}$ and the set of deterministic policies by $\Pidet$, meaning that $\Pidet = \prod_{h=1}^H \MA^{\CH_h}$. A \emph{general policy} $\pi$ is a distribution over deterministic policies; the set of general policies is denoted by $\Pigen := \Delta(\prod_{h=1}^H \MA^{\CH_h})$. Given a general policy $\pi \in \Pigen$, we denote by $\sigma \sim \pi$ the draw of a deterministic policy from the distribution $\pi$; to execute a general policy $\pi$, a sample $\sigma \sim \pi$ is first drawn and then followed for an episode of the POMDP. For a general policy $\pi$ and some event $\ME$, write $\Pr^\MP_{a_{1:H-1}, o_{2:H}, s_{1:H} \sim \pi}(\ME)$ to denote the probability of $\ME$ when $s_{1:H}, a_{1:H-1}, o_{2:H}$ is drawn from a trajectory following policy $\pi$ for the POMDP $\MP$. %
At times we will compress notation in the subscript, e.g., write $\Pr_{\pi}^\MP(\ME)$ if the definition of $s_{1:H}, a_{1:H-1}, o_{2:H}$ is evident. In similar spirit, we will write $\E_{a_{1:H-1}, o_{2:H}, s_{1:H} \sim \pi}^\MP[\cdot]$ to denote expectations.

Given a general policy $\pi \in \Pigen$, define the \emph{value function} for $\pi$ at step $1$ by 
$V_1^{\pi, \MP}(\emptyset) = \E_{o_{1:H} \sim \pi}^\MP \left[ \sum_{h=2}^H R_h(o_h)\right]$, namely as the expected reward received by following $\pi$.

Our objective is to find a policy $\pi$ which maximizes $V_1^{\pi, \MP}(\emptyset)$, in the \emph{PAC-RL model} \cite{kearns2002near}: in particular, the algorithm does not have access to the transition kernel, reward function, or observation matrices of $\MP$, but can repeatedly choose a general policy $\pi$ and observe the following data from a single trajectory drawn according to $\pi$: $(a_1, o_2, R_2(o_2), a_2, \ldots, a_{H-1}, o_H, R_H(o_H))$. The challenge is to choose such policies $\pi$ which can sufficiently explore the environment.

Finally, we remark that \emph{Markov decision processes} (MDPs) are a special case of POMDPs where $\MO = \MS$ and $\BO_h(o|s) = \mathbbm{1}[o=s]$ for all $h \in [H]$, $o,s \in \MS$. %
For the MDPs we will consider, the initial state distribution will be left unspecified (indeed, the optimal policy of an MDP does not depend on the initial state distribution). 
Thus, we consider MDPs $\MM$ described by a tuple $\MM = (H, \MS, \MA, R, \BT)$. %

\subsection{Belief contraction}
\label{sec:belief-contraction}
A prerequisite for a computationally efficient \emph{learning} algorithm in observable POMDPs is a computationally efficient \emph{planning} algorithm, i.e. an algorithm to find an approximately optimal policy when the POMDP is known. Recent work \cite{golowich2022planning} obtains such a planning algorithm taking quasipolynomial time; we now introduce the key tools used in  \cite{golowich2022planning}, which are used in our algorithm as well.

Consider a POMDP $\MP = (H, \MS, \MA, \MO, b_1, R, \BT, \BO)$. Given some $h \in [H]$ and a history $(a_{1:h-1}, o_{2:h}) \in \CH_h$, the \emph{belief state} $\bb_h^\MP(a_{1:h-1}, o_{2:h}) \in \Delta(\MS)$ is given by the distribution of the state $s_h$ conditioned on taking actions $a_{1:h-1}$ and observing $o_{2:h}$ in the first $h$ steps. Formally, the belief state is defined inductively as follows: $\bb_1^\MP(\emptyset) = b_1$, and for $2 \leq h \leq H$ and any $(a_{1:h-1}, o_{2:h}) \in \MH_h$,
\begin{align}
\bb_h^\MP(a_{1:h-1}, o_{2:h}) = U_{h-1}^\MP(\bb_{h-1}^\MP(a_{1:h-2}, o_{2:h-1}); a_{h-1}, o_h)\nonumber,
\end{align}
where for $\bb \in \Delta(\MS), a \in \MA, o \in \MO$, $U_h^\MP(\bb ; a,o) \in \Delta(\MS)$ is the distribution defined by
\begin{align}
U_h^\MP(\bb; a,o)(s) := \frac{\BO_{h+1}(o|s) \cdot \sum_{s' \in \MS} \bb(s') \cdot \BT_h(s | s', a)}{\sum_{x \in \MS} \BO_{h+1}(o | x) \sum_{s' \in \MS} \bb(s') \cdot \BT_h(x | s', a)}\nonumber.
\end{align}
We call $U_h^\MP$ the \emph{belief update operator}. The belief state $\bb_h^\MP(a_{1:h-1}, o_{2:h})$ is a sufficient statistic for the sequence of future actions and observations under any deterministic policy. In particular, the optimal policy can be expressed as a function of the belief state, rather than the entire history. Thus, a natural approach to plan a near-optimal policy is to find a small set $\CB \subset \Delta(\MS)$ of distributions over states such that each possible belief state $\bb_h^\MP(a_{1:h-1}, o_{2:h})$ is close to some element of $\CB$. Unfortunately, this is not possible, even in observable POMDPs \cite[Example D.2]{golowich2022planning}. The main result of \cite{golowich2022planning} circumvents this issue by showing that there is a subset $\CB \subset \Delta(\MS)$ of quasipolynomial size (depending on $\MP$) so that $\bb_h^\MP(a_{1:h-1}, o_{2:h})$ is close to some element of $\CB$ \emph{in expectation} under any given policy. 
To state the result of \cite{golowich2022planning}, we need to introduce \emph{approximate belief states}:
\begin{defn}[Approximate belief state]
  Fix a POMDP $\MP = (H,\MS, \MA, \MO, b_1, R, \BT, \BO)$. For any distribution $\cD \in \Delta(\MS)$, as well as any choices of $1 \leq h \leq H$ and $L \geq 0$, the approximate belief state $\bbapxs{\MP}_h(a_{h-L:h-1}, o_{h-L+1:h}; \cD)$ is defined as follows, via induction on $L$: in the case that $L = 0$, then we define %
  \begin{align}
    \bbapxs{\MP}_h(\emptyset; \cD) := \begin{cases}
      b_1 &: h = 1 \\
      \cD &: h > 1\nonumber,
    \end{cases}
   \end{align}
  and for the case that $L> 0$, define, for $h >L$,
  \begin{align}
\bbapxs{\MP}_h(a_{h-L:h-1}, o_{h-L+1:h}; \cD) := U_{h-1}^\MP(\bbapxs{\MP}_{h-1}(a_{h-L:h-2}, o_{h-L+1:h-1}; \cD); a_{h-1}, o_h)\nonumber.
  \end{align}
\end{defn}
We extend the above definition to the case that $h \leq L$
by defining  $\bbapxs{\MP}_h(a_{h-L:h-1}, o_{h-L+1:h};\cD) := \bbapxs{\MP}_h(a_{\max\{1,h-L\}:h-1}, o_{\max\{2,h-L+1\}:h}; \cD)$.
In words, the approximate belief state $\bbapxs{\MP}_h(a_{h-L:h-1}, o_{h-L+1:h}; \cD)$ is obtained by applying the belief update operator starting from the distribution $\cD$ at step $h-L$, if $h-L>1$ (and otherwise, starting from $b_1$, at step 1). At times, we will drop the superscript $\MP$ from the above definitions and write $\bb_h, \bbapx_h$. 

The main technical result of \cite{golowich2022planning}, stated as Theorem \ref{thm:belief-contraction-informal} below (with slight differences, see Appendix~\ref{section:additional-prelim}), proves that if the POMDP $\MP$ is $\gamma$-observable for some $\gamma > 0$, then for a wide range of distributions $\cD$, for sufficiently large $L$, the approximate belief state $\bbapxs{\MP}_h(a_{h-L:h-1}, o_{h-L+1:h}; \cD)$ will be close to (i.e., ``contract to'') the true belief state $\bb_h^\MP(a_{1:h-1}, o_{2:h})$.  %

\begin{theorem}[Belief contraction; Theorems 4.1 and 4.7 of \cite{golowich2022planning}]
  \label{thm:belief-contraction-informal}
  Consider any $\gamma$-observable POMDP $\MP$, 
  any $\ep > 0$ and $L \in \BN$ so that $L \geq C \cdot \min \left\{ \frac{\log(1/(\ep \phi)) \log(\log(1/\phi)/\ep)}{\gamma^2}, \frac{\log(1/(\ep\phi))}{\gamma^4}\right\}$. Fix any $\pi \in \Pigen$, and suppose that $\cD\in \Delta^\MS$ satisfies $\frac{\bb_h^\MP(a_{1:h-L-1}, o_{2:h-L})(s)}{\cD(s)} \leq \frac{1}{\phi}$ for all $(a_{h-1}, o_{2:h})$. Then
  $$
  \E^\MP_{(a_{1:h-1}, o_{2:h}) \sim \pi} \left\| \bb^{\MP}_{h}(a_{1:h-1}, o_{2:h}) -  \bbapxs{\MP}_h(a_{h-L:h-1}, o_{h-L+1:h}; \cD) \right\|_1 \leq \ep .
  $$
\end{theorem}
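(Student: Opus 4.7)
My plan is to recast both $\bb_h^\MP(a_{1:h-1}, o_{2:h})$ and $\bbapxs{\MP}_h(a_{h-L:h-1}, o_{h-L+1:h}; \cD)$ as Bayesian posteriors over $s_h$ given the same observed suffix $(a_{h-L:h-1}, o_{h-L+1:h})$, differing only in the ``prior'' used at step $h-L$: the exact belief uses $\bb_{h-L}^\MP(a_{1:h-L-1}, o_{2:h-L})$, while the approximate one uses $\cD$. The hypothesis $\bb_{h-L}^\MP(s)/\cD(s) \le 1/\phi$ controls the pointwise ratio of these two priors, and both posteriors are then formed by conditioning on the same $(a_{h-L:h-1}, o_{h-L+1:h})$ through the same $U^\MP_{h'}$. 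Hence the statement becomes a classical Bayesian prior-forgetting claim specialized to POMDP filtering: the Bayes filter, fed observations drawn from the true dynamics of $\MP$ under $\pi$, drives two posteriors with close-in-ratio priors to agreement in $L^1$ in expectation.

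To execute the contraction, I would move away from $L^1$ to a divergence that interacts well with Bayes updates, for example the Rényi-$2$ divergence $\ren{\bb_{h'}^\MP}{\bbapxs{\MP}_{h'}}$ or the KL divergence. At step $h-L$ the density-ratio hypothesis gives an initial divergence of at most $\log(1/\phi)$ (up to constants), and at the end I would convert back to $L^1$ via Pinsker. The core step is a one-step contraction lemma of the shape
$$\E_{(a_{h'}, o_{h'+1}) \sim \pi, \MP}\bigl[\Phi_{h'+1}\bigr] \;\le\; (1 - c\gamma^2)\,\Phi_{h'} + \text{(small error)},$$
where $\Phi_{h'}$ is the chosen divergence. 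The observability inequality $\norm{\BO_{h'+1} b - \BO_{h'+1} b'}_1 \ge \gamma \norm{b - b'}_1$ is what makes each observation distinguish states by a definite amount, and a Bayes-rule computation converts this distinguishability into multiplicative decrease of $\Phi$ in expectation. Iterating $L$ times yields $\E[\Phi_h] \lesssim \exp(-\Omega(\gamma^2 L)) \cdot \log(1/\phi)$, and the choice of $L$ in the statement is exactly what is needed to absorb both the initial $\log(1/\phi)$ and the target accuracy $\ep$ after converting back to $L^1$.

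The two bounds on $L$ in the statement likely correspond to two distinct execution strategies: a sharper KL/Rényi martingale argument paying an extra $\log(\log(1/\phi)/\ep)$ factor but achieving the $1/\gamma^2$ rate, and a cruder but simpler direct $L^1$ contraction giving the $1/\gamma^4$ rate. The main obstacle is the one-step contraction lemma itself: data processing alone yields only \emph{non-expansion} of $f$-divergences under Bayes updates, so extracting genuine contraction demands using observability quantitatively inside the Bayes-rule computation, not just at the output. Compounding the difficulty, the observation $o_{h'+1}$ is generated from the true posterior $\bb_{h'}^\MP$ rather than from the approximate one, so the lemma is genuinely asymmetric in its two arguments, and a careful choice of potential (together with care about which measure the expectation is under) is needed to avoid losing factors of $\gamma$ when this asymmetry is handled.
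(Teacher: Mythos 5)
This statement is not proved in the paper at all: it is imported verbatim (up to the generalization from $\cD = \mathrm{Unif}(\MS)$ to any $\cD$ with bounded density ratio, which the authors assert ``extends unchanged'') from Theorems 4.1 and 4.7 of \cite{golowich2022planning}. So there is no in-paper proof to compare against; what you have written is a reconstruction plan for the proof in that prior work. At the level of strategy your reconstruction is faithful: both quantities are indeed the same Bayes filter run from two priors whose ratio is controlled by $1/\phi$, the argument does go through an $f$-divergence-type potential rather than $L^1$ directly, observability must be injected quantitatively into the one-step Bayes update (data processing alone gives only non-expansion), the asymmetry you flag (observations drawn under the \emph{true} belief) is real, and the two bounds on $L$ do correspond to two different arguments (a cruder one at rate $\gamma^{-4}$ and a sharper one at rate $\gamma^{-2}$).

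The gap is that the entire technical content of the theorem is the one-step contraction lemma, and you explicitly leave it as ``the main obstacle'' rather than proving it. Worse, the lemma as you state it --- a uniform multiplicative decrease $\E[\Phi_{h'+1}] \le (1-c\gamma^2)\Phi_{h'}$ for KL or R\'enyi-2, iterated to give $\E[\Phi_h] \lesssim e^{-\Omega(\gamma^2 L)}\log(1/\phi)$ --- cannot be the whole story, as the shape of the theorem itself reveals: a clean geometric decay from initial potential $\log(1/\phi)$ down to $\ep^2$ (for Pinsker) would need only $L \gtrsim \gamma^{-2}\log\bigl(\log(1/\phi)/\ep^2\bigr)$, whereas the stated requirement is the much larger product $\gamma^{-2}\log(1/(\ep\phi))\cdot\log(\log(1/\phi)/\ep)$. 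This multiplicative structure signals that the actual argument is staged --- the per-step contraction available depends on how large the current likelihood ratio is, so one must first argue (e.g., via a supermartingale bound on the density ratio) that the ratio shrinks to a constant over a block of $O(\gamma^{-2}\log(1/\phi))$ steps before the clean contraction regime applies, and handle the exceptional event where it does not. Without proving a one-step lemma of the correct (conditional, ratio-dependent) form and assembling the blocks, the proposal does not establish the theorem; it identifies where the difficulty lives but does not resolve it.
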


\subsection{Visitation distributions}
\label{sec:visitation}
For a POMDP $\MP = (H,\MS,\MA,\MO,b_1,R,\BT,\BO)$, policy $\pi \in \Pigen$, and step $h \in [H]$, the (latent) \emph{state visitation distribution} at step $h$ is $d^{\MP,\pi}_{\SS, h} \in \Delta(\MS)$ defined by $d^{\MP,\pi}_{\SS, h}(s) := \Pr^\MP_{s_{1:H} \sim \pi}(s_h = s)$, and the \emph{observation visitation distribution} at step $h$ is $d^{\MP,\pi}_{\SO,h} \in \Delta(\MO)$ defined by $d^{\MP,\pi}_{\SO,h} := \Pr^\MP_{o_{1:H} \sim \pi}(o_h=\cdot ) = \BO_h \cdot d^{\MP,\pi}_{\SS,h}$. We will often deal with MDPs $\MM = (H,\MZ,\MA,R,\BT)$ where the set of states has a product structure $\MZ = \MA^L \times \MO^L$. We then define $\texttt{o}: \MZ \to \MO$ by $\oo{a_{1:L},o_{1:L}} = o_L$. %
For such MDPs, we define the observation visitation distributions by $d^{\MM,\pi}_{\SO,h}(o) := \Pr^\MM_{z_{1:H} \sim \pi}(\oo{z_h} = o)$. %

\section{Main result: learning observable POMDPs in quasipolynomial time}
Theorem \ref{thm:main} below states our main guarantee for \mainalg (\textbf{Ba}rycentric \textbf{S}pann\textbf{e}r policy \textbf{C}over with \textbf{A}pproximate \textbf{M}D\textbf{P}; Algorithm~\ref{alg:main}).
\begin{theorem}
  \label{thm:main}
  Given any $\alpha,\beta,\gamma>0$ and $\gamma$-observable POMDP $\MP$, \mainalg with parameter settings as described in Section \ref{sec:params} outputs a policy which is $\alpha$-suboptimal with probability at least $1-\beta$, using time and sample complexity bounded by $(OA)^{CL}\log(1/\beta)$, where $C > 1$ is a constant and
  $
  L = \min \left\{ \frac{\log(HSO/(\alpha\gamma))}{\gamma^4}, \frac{\log^2(HSO/(\alpha\gamma))}{\gamma^2} \right\}.
  $
\end{theorem}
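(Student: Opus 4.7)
The plan is to combine three ingredients: (i) the belief contraction reduction of Theorem~\ref{thm:belief-contraction-informal}, which converts the POMDP planning problem into a planning problem on a (large but finite) MDP; (ii) a layer-by-layer construction of a \emph{policy cover} that allows us to learn the dynamics of that MDP by exploration; and (iii) a \emph{barycentric spanner} trick that lets us build such a cover even though the MDP's states (belief states) are hidden.

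First I would invoke Theorem~\ref{thm:belief-contraction-informal} with $\phi = (OA)^{-L}$ and $\ep = \alpha/(C H)$ to fix the window length $L$ specified in the theorem. This produces, implicitly, an MDP $\MM$ whose state space is $\MZ = \MA^L \times \MO^L$ (length-$L$ history windows) and whose transitions are obtained by applying the belief update operator $U_h^\MP$ to an appropriate ``reset'' distribution $\cD$; the contraction guarantee implies that for every policy $\pi$, the value $V_1^{\pi,\MM}$ on $\MM$ is within $\alpha/2$ of $V_1^{\pi,\MP}$ when $\pi$ is viewed as a history-based policy. Hence any $\alpha/2$-suboptimal policy for $\MM$ is $\alpha$-suboptimal for $\MP$, reducing the problem to learning transitions of $\MM$ up to $\poly(1/HSO,\alpha,\gamma)$ accuracy.

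Next I would argue that learning $\MM$ reduces to building, for each layer $h\in[H]$, a small set of policies $\Pi_h$ such that playing each of them and then a uniform random action yields, after a few samples, accurate estimates of the emission and transition distributions at step $h$. This is where barycentric spanners enter: the set $\MX_h \subseteq \Delta(\MO)$ of observation distributions $d^{\MM,\pi}_{\SO,h}$ achievable by policies $\pi$ lies in $\BR^\MO$, so a barycentric spanner has size $O$, and by the spanner property every reachable observation distribution is representable as a bounded-coefficient combination of spanner distributions. Importance-weighting arguments then imply that spanner policies suffice to estimate the next-layer transitions with error that depends only mildly on the spanner coefficients. Constructing $\Pi_h$ inductively---assuming $\Pi_1,\ldots,\Pi_{h-1}$ already let us simulate $\MM$ up to step $h-1$---yields a full learned model $\wh\MM$.

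The main obstacle is the chicken-and-egg problem: computing a barycentric spanner of $\MX_h$ requires knowing the transitions of $\MM$ up to step $h$, but we only have an empirical estimate $\wh\MM$ whose accuracy in turn depends on previously chosen spanners. I would resolve this via a dynamic spanner update in the spirit of \cite{foster2021statistical,awerbuch2008online}: at step $h$, compute an initial spanner with respect to $\wh\MM$, play those policies to gather data, and recheck whether any observation distribution realizable in $\wh\MM$ by a new candidate policy has coefficients exceeding the barycentric threshold; if so, swap it into the spanner and iterate. A potential-function argument (the determinant of the spanner matrix strictly increases at each swap) bounds the number of iterations by $O\log(OA)^L/\gamma$, which only multiplies the sample cost by a quasipolynomial factor. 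Pushing per-layer estimation error below $\alpha/(H \cdot (OA)^{O(L)})$ and summing across $H$ layers, together with the belief-contraction bound of Theorem~\ref{thm:belief-contraction-informal}, gives a final $\alpha$-suboptimality guarantee; the total sample and runtime cost is dominated by $(OA)^{CL}\log(1/\beta)$ as claimed, via a standard Hoeffding/union-bound argument to transfer the high-probability event across all layers and spanner updates.
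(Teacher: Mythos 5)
Your high-level ingredients (belief contraction to an MDP on $\MZ = \MA^L \times \MO^L$, policy covers, and barycentric spanners of observation distributions computed via a linear-optimization oracle) match the paper's, but there is a genuine gap at the core of your proposal: the layer-by-layer inductive construction. The paper explicitly identifies and rejects this route. The problem is that a spanner computed on the empirical model $\wh\MM$ only guarantees visiting latent states that are non-negligibly reachable \emph{in $\wh\MM$}; states of $\MP$ reachable only with probability comparable to the current model error can be missed entirely. When the resulting mixture policy is then used as the reset distribution for belief contraction at later layers, the domination condition $\bb_{h-L}(s)/\cD(s) \le 1/\phi$ of Theorem~\ref{thm:belief-contraction-informal} fails on exactly those missed states, so the model error at the next layer grows by a term proportional to the probability that an arbitrary policy visits them --- and this compounds over the $H$ layers. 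Your proposed fix, a determinant-potential swapping loop within a layer, is essentially the Awerbuch--Kleinberg computation itself; it improves the spanner relative to the \emph{current} $\wh\MM$ but does nothing about states that $\wh\MM$ cannot see at all, which is where the error originates.

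The paper's resolution is structurally different: an outer loop of $K = 2HS$ iterations, each of which rebuilds the entire model $\wh\MM^{(k)}$ from scratch using the accumulated mixture of all previously produced spanner policies, recomputes spanners at \emph{all} layers, and mixes them in. Termination follows from a potential argument over pairs $(s,h) \in \MS \times [H]$ (the ``progress lemma,'' Lemma~\ref{lem:alg-progress}): at each iteration either the model is already globally accurate, or some specific state/step pair that was previously underexplored becomes permanently explored, and each pair can be newly discovered at most once. Making this work requires machinery absent from your sketch --- in particular the truncated POMDPs $\barpdp{\phi,h}{\pi^{1:H}}$ and the \emph{one-sided} comparison inequality (Lemma~\ref{lem:rh-onesided}), which lower-bounds reachability in $\wh\MM^{(k)}$ even when the exploratory policies fail to cover all states, together with a two-sided bound (Lemma~\ref{lem:hatm-tilm}) showing that either the new spanner policy reaches the target state in $\MP$ or it reaches some earlier underexplored state, either of which constitutes progress. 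Without an argument of this kind, your induction does not close.
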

It is natural to ask whether the complexity guarantee of Theorem \ref{thm:main} can be improved further. \cite[Theorem 6.4]{golowich2022planning} shows that under the Exponential Time Hypothesis, there is no algorithm for planning in $\gamma$-observable POMDPs which  runs in time $(SAHO)^{o(\log(SAHO/\alpha)/\gamma)}$ and produces $\alpha$-suboptimal policies, \emph{even if the POMDP is known}. Thus, up to polynomial factors in the exponent, Theorem \ref{thm:main} is optimal. It is plausible, however, that there could be an algorithm which runs in quasipolynomial time yet only needs \emph{polynomially} many samples; we leave this question for future work. 
\section{Algorithm description}

\subsection{Approximating $\MP$ with an MDP.}
\label{sec:overview-approxmdp}
A key consequence of observability is that, by the belief contraction result of Theorem \ref{thm:belief-contraction-informal}, the POMDP $\MP$ is well-approximated by an MDP $\MM$ of quasi-polynomial size. In more detail, we will apply Theorem \ref{thm:belief-contraction-informal} 
with $\phi =1/S$, $\cD = {\rm Unif}(\MS)$, and some $L = \poly(\log(S/\ep)/\gamma)$ sufficiently large so as to satisfy the requirement of the theorem statement.   %
The MDP $\MM$ has  state space $\MZ := \MA^L \times \MO^L$, horizon $H$, action space $\MA$, and transitions $\BP_h^\MM(\cdot | z_h, a_h)$ which are defined via a belief update on the approximate belief state $ \bbapxs{\MP}_h(z_h; {\rm Unif}(\MS))$\footnote{For simplicity, descriptions of the reward function of $\MM$ are omitted; we refer the reader to the appendix for the full details of the proof}: in particular, for a state $z_h = (a_{h-L:h-1},o_{h-L+1:h}) \in \MZ$ of $\MM$, action $a_h$, and subsequent observation $o_{h+1} \in \MO$, define
\begin{align}
\BP_h^\MM((a_{h-L+1:h}, o_{h-L+2:h+1}) | z_h, a_h) := e_{o_{h+1}}^\t \cdot \BO_{h+1}^\MP \cdot \BT_h^\MP(a_h) \cdot \bbapxs{\MP}_h(z_h; {\rm Unif}(\MS))\label{eq:m-transitions-inf}.
\end{align}
The above definition should be compared with the probability of observing $o_{h+1}$ given history $(a_{1:h}, o_{2:h})$ and policy $\pi$ when interacting with the POMDP $\MP$, which is
\begin{align}
\BP_{o_{h+1} \sim \pi}^\MP(o_{h+1} | a_{1:h}, o_{2:h}) = e_{o_{h+1}}^\t \cdot \BO_{h+1}^\MP \cdot \BT_h^\MP(a_h) \cdot \bb_h^\MP(a_{1:h-1}, o_{2:h})\label{eq:p-transitions-inf}.
\end{align}
Theorem \ref{thm:belief-contraction-informal} gives that $\left\| \bb_h^\MP(a_{1:h-1}, o_{2:h}) - \bbapxs{\MP}_h(a_{h-L:h-1}, o_{h-L+1:h}; {\rm Unif}(\MS)) \right\|_1$ is small in expectation under any general policy $\pi$, which, using (\ref{eq:m-transitions-inf}) and (\ref{eq:p-transitions-inf}), gives that, for all $\pi \in \Pigen$ and $h \in [H]$,
\begin{align}
\E_{a_{1:h}, o_{2:h} \sim \pi}\sum_{o_{h+1} \in \MO} \left| \BP_h^\MM(o_{h+1} | z_h, a_h) - \BP_h^\MP(o_{h+1} | a_{1:h}, o_{2:h}) \right| \leq \ep\label{eq:apx-mdp-inf}.
\end{align}
(Above we have written $z_h = (a_{h-L:h-1}, o_{h-L+1:h})$ and, via abuse of notation, $\BP_h^\MM(o_{h+1} | z_h, a_h)$ in place of $\BP_h^\MM((a_{h-L+1:h}, o_{h-L+2:h+1}) | z_h, a_h)$.)
The inequality (\ref{eq:apx-mdp-inf}) establishes that the dynamics of $\MP$ under any policy may be approximated by those of the MDP $\MM$. Crucially, this implies that %
there exists a deterministic Markov policy for $\MM$ which is near-optimal among general policies for $\MP$; the set of such Markov policies for $\MM$ is denoted by $\PiZ$. Because of the Markovian structure of $\MM$, such a policy can be found in time polynomial in the size of $\MM$ (which is quasi-polynomial in the underlying problem parameters), if $\MM$ is known. Of course, $\MM$ is not known.

\paragraph{Approximately learning the MDP $\MM$.} These observations suggest the following model-based approach of trying to learn the transitions of $\MM$.
Suppose that we know a sequence of general policies $\pi^1, \ldots, \pi^H$ (abbreviated as $\pi^{1:H}$) 
so that for each $h$, $\pi^h$ visits a uniformly random state of $\MP$ at step $h-L$ (i.e. $d^{\MP,\pi^h}_{\SS,h-L} = \text{Unif}(\MS)$).
Then we can estimate the transitions of $\MM$ as follows: play $\pi^h$ for $h-L-1$ steps and then play $L+1$ random actions, generating a trajectory $(a_{1:h},o_{2:h+1})$. Conditioned on $z_h = (a_{h-L:h-1},o_{h-L+1:h})$ and final action $a_h$, the last observation of this sample trajectory, $o_{h+1}$, would give an unbiased draw from the transition distribution $\mathbb{P}^\MM_h(\cdot|z_h, a_h)$. Repeating this procedure would allow estimation of $\mathbb{P}^\MM_h$ (see Lemma  \ref{lem:zlow-tvd}). 

This idea is formalized in the procedure \ApproxMDP (Algorithm \ref{alg:approxmdp}): given a sequence of general policies $\pi^1, \ldots, \pi^H$ (abbreviated $\pi^{1:H}$), \ApproxMDP constructs an MDP, denoted $\wh\MM = \hatmdp{\pi^{1:H}}$, which empirically approximates $\MM$ using the sampling procedure described above. For technical reasons, $\wh\MM$ actually has state space $\ol \MZ := \MA^{L} \cdot \ol\MO^L$, where $\ol\MO := \MO \cup \{\sinkobs\}$ and $\sinkobs$ is a special ``sink observation'' so that, after $\sinkobs$ is observed, all future observations are also $\sinkobs$. 
Furthermore, we remark that $d_{\SS, h-L}^{\MP, \pi^h}$ does not have to be exactly uniform -- it suffices if $\pi^h$ visits all states of $\MP$ at step $h-L$ with non-negligible probability. %

\subsection{Exploration via barycentric spanners}
The above procedure for approximating $\MM$ with $\wh\MM$ omits a crucial detail: \emph{how can we find the ``exploratory policies'' $\pi^{1:H}$?} %
Indeed a major obstacle to finding such policies is that \emph{we never directly observe the states of $\MP$}. By repeatedly playing a policy $\pi$ on $\MP$, we can estimate the induced observation visitation distribution $d^{\MP, \pi}_{\SO,h-L}$, which is related to the state visitation distribution via the equality $\BO_{h-L}^\dagger \cdot d^{\MP,\pi}_{\SO,h-L} = d^{\MP,\pi}_{\SS,h-L}$. Unfortunately, the matrix $\BO_{h-L}$ is still unknown, and in general \emph{unidentifiable}. %

On the positive side, we can attempt to learn $\MM$ layer by layer -- in particular, when learning the $h$th layer, we can assume that we have learned previous layers, i.e., $d^{\wh\MM,\pi}_{\SO,h-L}$ approximates $d_{\SO, h-L}^{\MM,\pi}$, and therefore $d^{\MP,\pi}_{\SO,h-L}$. Even though $\wh\MM$ does not have underlying latent states, we can define ``formal'' latent state distributions on $\wh\MM$ in analogy with $\MP$, i.e. $d^{\wh\MM,\pi}_{\SS,h-L} := \BO_{h-L}^\dagger \cdot d^{\wh\MM,\pi}_{\SO,h-L}$. But this does not seem helpful, again because $\BO_{h-L}$ is unknown. Our first key insight is that a policy $\pi^h$, for which $d^{\wh\MM,\pi^h}_{\SS,h-L}$ puts non-negligible mass on all states, can be found (when it exists) via knowledge of $\wh\MM$ and the technique of \emph{Barycentric Spanners} \--- all without ever explicitly computing $d^{\wh\MM,\pi^h}_{\SS,h-L}$.

\paragraph{Barycentric spanners.} %
Suppose we knew that the transitions of our empirical estimate $\wh\MM$ approximate those of $\MM$ up to step $h$, %
and we want to find a policy $\pi^h$ for which the (formal) latent state distribution $d^{\wh\MM,\pi^h}_{\SS,h-L}$ is non-negligible on all states. %
Unfortunately, the set of achievable latent state distributions $\{\BO_{h-L}^\dagger \cdot d^{\wh\MM,\pi}_{\SO,h-L}: \pi \in \Pigen\} \subset \mathbb{R}^S$ %
is defined implicitly, via the unknown observation matrix $\BO_{h-L}$. But we do have access to $\MX_{\wh\MM,h-L} := \{ d_{\SO, h-L}^{\wh\MM, \pi} : \pi \in \Pigen\} \subset \RR^O$, the set of achievable distributions over the observation at step $h-L$. In particular, for any reward function on observations at step $h-L$, we can efficiently (by dynamic programming) find a policy $\pi$ that maximizes reward on $\wh\MM$ over all policies. In other words, we can solve \emph{linear optimization} problems over $\MX$. 
By a classic result \cite{awerbuch2008online}, we can thus efficiently find a \emph{barycentric spanner} for $\MX_{\wh\MM,h-L}$:
\begin{defn}[Barycentric spanner]
Consider a subset $\MX \subset \BR^d$. For $B \geq 1$, a set $\MX' \subset \MX$ of size $d$ is a \emph{$B$-approximate barycentric spanner} of $\MX$ if each $x \in \MX$ can be expressed as a linear combination of elements in $\MX'$ with coefficients in $[-B,B]$. 
\end{defn}

Using the guarantee of \cite{awerbuch2008online} (restated as Lemma \ref{lem:ak}) applied to the set $\MX_{\wh\MM, h-L}$, we can find, in time polynomial in the size of $\wh\MM$, a $2$-approximate barycentric spanner $\til\pi^1,\dots,\til\pi^O$ of $\MX_{\wh\MM,h-L}$; this procedure is formalized in \bspanner (Algorithm \ref{alg:bspanner}). Thus, for any policy $\pi$, the observation distribution $d^{\wh\MM,\pi}_{\SO,h-L}$ induced by $\pi$ is a linear combination of the distributions $\{d^{\wh\MM,\til\pi^i}_{\SO,h-L}: i \in [O]\}$ with coefficients in $[-2,2]$; further, the 
formal latent state distribution $d^{\wh\MM,\pi}_{\SS,h-L}$ induced by $\pi$ is a linear combination of the formal latent state distributions $\{d^{\wh\MM,\til\pi^i}_{\SS,h-L}: i \in [O]\}$ with the same coefficients. Now, intuitively, the randomized mixture policy $\pi_\text{mix} = \frac{1}{O}(\til\pi^1 + \dots + \til\pi^O)$ should explore every reachable state; indeed, we show the following guarantee for \bspanner: %
\begin{lemma}[Informal version \& special case of Lemma \ref{lem:dist-spanner}]
  \label{lem:dist-spanner-informal}
In the above setting, under some technical conditions, for all $s \in \MS$ and $\pi \in \Pigen$, it holds that $d^{\wh\MM, \pi_{\rm mix}}_{\SS, h-L}(s) \geq \frac{1}{4O^2} \cdot d^{\wh\MM,\pi}_{\SS, h-L}(s)$. 
\end{lemma}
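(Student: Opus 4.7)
The plan is to combine the defining property of the barycentric spanner with the linearity of the pseudoinverse $\BO_{h-L}^\dagger$. By the guarantee of \bspanner, the policies $\til\pi^1, \ldots, \til\pi^O$ form a $2$-approximate barycentric spanner of $\MX_{\wh\MM, h-L}$. Thus, for any $\pi \in \Pigen$, there exist coefficients $\lambda_1, \ldots, \lambda_O \in [-2, 2]$ such that $d^{\wh\MM, \pi}_{\SO, h-L} = \sum_i \lambda_i \cdot d^{\wh\MM, \til\pi^i}_{\SO, h-L}$. Applying $\BO_{h-L}^\dagger$ and recalling the definition of the formal latent state distributions, this identity passes to states: writing $v := d^{\wh\MM, \pi}_{\SS, h-L}$ and $v^i := d^{\wh\MM, \til\pi^i}_{\SS, h-L}$, we have $v = \sum_i \lambda_i v^i$. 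On the other hand, by linearity of the visitation map in the policy, the uniform mixture satisfies $v_{\rm mix} := d^{\wh\MM, \pi_{\rm mix}}_{\SS, h-L} = \frac{1}{O} \sum_i v^i$.

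The desired pointwise inequality $v_{\rm mix}(s) \geq \frac{1}{4O^2} v(s)$ will then reduce to a statement about the (approximate) nonnegativity of the formal latent state distributions $v^i$. Indeed, if each $v^i(s) \geq 0$ pointwise, then
$$ v(s) = \sum_i \lambda_i v^i(s) \leq 2 \sum_i v^i(s) = 2 O \cdot v_{\rm mix}(s), $$
which would already yield the stronger bound $v_{\rm mix}(s) \geq v(s)/(2O)$. The looser constant $\frac{1}{4O^2}$ in the stated lemma should be read as absorbing the slack from small negative errors in the $v^i$ that arise because $\wh\MM$ only approximates $\MM$; more generally, one only needs the negative part of $v^i(s)$ to be a constant fraction of its positive part, averaged over $i$, for the argument to go through.

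The main obstacle will be rigorously controlling the negative part of $v^i$. Since $\wh\MM$ is only an empirical estimate, $\BO_{h-L}^\dagger d^{\wh\MM, \til\pi^i}_{\SO, h-L}$ is not literally a probability distribution, and in the worst case $\BO_{h-L}^\dagger$ can amplify perturbations of the observation distribution by a factor of $1/\gamma$ under the $\gamma$-observability assumption. Handling this will require leveraging both $\gamma$-observability and the inductive quality of $\wh\MM$ on layers prior to $h-L$ (ensured by previous iterations of \mainalg via \ApproxMDP), so that the negative part of each $v^i$ at each state $s$ is small enough for the bound $\frac{1}{4O^2}$ to hold in the formal version (Lemma \ref{lem:dist-spanner}) once all approximation errors between $\MP$, $\MM$, and $\wh\MM$ are fully tracked. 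I expect the technical conditions alluded to in the informal statement to codify precisely this control on the errors.
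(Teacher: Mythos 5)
Your proposal is correct and follows essentially the same route as the paper's proof of Lemma \ref{lem:dist-spanner}: expand $d^{\wh\MM,\pi}_{\SO,h-L}$ in the $2$-approximate barycentric spanner with coefficients in $[-2,2]$, push the identity through $\BO_{h-L}^\dagger$ by linearity, and use (approximate) nonnegativity of the formal state distributions $v^i$ to lower-bound the mixture; the ``technical conditions'' are exactly the hypothesis you identify, namely that $\sum_s [v^i(s)]_-$ is at most $\eta/(4O^2)$, which the formal lemma assumes outright and which is verified separately in the analysis via the one-sided comparison (Lemma \ref{lem:rh-onesided}) rather than inside this lemma. The only cosmetic difference is that the paper isolates a single spanner element with contribution at least $\eta/(2O)$ and bounds the remaining $O-1$ terms below by $-\eta/(4O^2)$ each, whereas you sum directly; both yield the $1/(4O^2)$ factor once the negative parts are accounted for.
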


But recall the original goal: a policy $\pi^h$ which explores $\MP$ \--- not $\wh\MM$. %
Unfortunately, those states in $\MP$ which can only be reached with probability that is small compared to the distance between $\MP$ and $\wh\MM$ may be missed by $\pi_\text{mix}$. When we use $\pi_\text{mix}$ to compute the next-step transitions of $\wh\MM$, this will lead to additional error when applying belief contraction (Theorem \ref{thm:belief-contraction-informal}), and therefore additional error between $\MP$ and $\wh\MM$. If not handled carefully, this error will compound exponentially over layers. %
\subsection{The full algorithm via iterative discovery}

\begin{algorithm}
  \caption{\ApproxMDP}
  \label{alg:approxmdp}
  \begin{algorithmic}[1]
    \Procedure{\ApproxMDP}{$L,N_0,N_1, \pi^1, \ldots, \pi^H$}
    \For{$1 \leq h \leq H$}\label{it:for-h-amdp}
    \State Let $\wh\pi^h$ be the policy which follows $\pi^h$ for the first $\max\{h-L-1,0\}$ steps and thereafter chooses uniformly random actions.
    \State Draw $N_0$ independent trajectories from the policy $\wh \pi^h$:\label{it:n0-trajs}
    \State Denote the data from the $i$th trajectory by $a_{1:H-1}^i, o_{2:H}^i$, for $i \in [N_0]$. %
    \State Set $z_h^i = (a_{h-L:h-1}^i, o_{h-L+1:h}^i)$ for all $i \in [N_0], h \in [H]$. 
    
    \LineComment{Construct the transitions  $\BP_h^{\wh\MM}(z_{h+1} | z_h, a_h)$ as follows:}
    \For{each $z_h = (a_{h-L:h-1}, o_{h-L+1:h}) \in \MZ, \ a_h \in \MA$}
    \LineComment{Define $\BP_h^{\wh\MM}(\cdot | z_h, a_h)$ to be the empirical distribution of $z_{h+1}^i | z_h^i, a_h^i$, as follows:}
    \State For $o_{h+1} \in \MO$, define $\varphi(o_{h+1}) := | \{ i : (a_{\max\{1,h-L\}:h}^i,o_{\max\{2,h-L+1\}:h+1}^i) = (a_{\max\{1,h-L\}:h}, o_{\max\{2,h-L+1\}:h+1}) \} |$. \label{it:varphi-counts}
    \If{ $\sum_{o_{h+1}} \varphi(o_{h+1}) \geq N_1$}\label{it:if-n1-true}
    \State Set $\BP_h^{\wh\MM}((a_{h-L+1:h}, o_{h-L+2:h+1}) | z_h, a_h) := \frac{\varphi(o_{h+1})}{\sum_{o_{h+1}'} \varphi(o_{h+1}')}$ for all $o_{h+1} \in \MO$.
    \State Set $R_h^{\wh\MM}(z_h,a_h) := R_h^\MP(o_h^i)$ for some $i$ with $o_h^i = o_h$. \Comment{\emph{$R_h^\MP(o_h^i)$ is observed.}}\label{it:reward-apxmdp}
    \Else
    \State Let $\BP_h^{\wh\MM}(\cdot | z_h, a_h)$ put all its mass on $(a_{h-L+1:h}, (o_{h-L+2:h},\sinkobs))$.\label{it:divert-mass}
    \EndIf

    \EndFor
    \For{each $z_h = (a_{h-L:h-1}, o_{h-L+1:h}) \in \ol\MZ\backslash \MZ$ and $a_h \in \MA$}
    \State Let $\BP_h^{\wh\MM}(\cdot | z_h, a_h)$ put all its mass on $(a_{h-L+1:h}, (o_{h-L+2:h}, \sinkobs))$.
    \EndFor
    \EndFor
    \State Let $\wh\MM$ denote the MDP $(\ol\MZ,H, \MA, R^{\wh\MM}, \BP^{\wh \MM})$. %
    \State\Return{the MDP $\wh\MM$, which we denote by $\hatmdp{\pi^{1:H}}$.}
    \EndProcedure
  \end{algorithmic}
\end{algorithm}

\begin{algorithm}
  \caption{\bspanner}
  \label{alg:bspanner}
  \begin{algorithmic}[1]
    \Procedure{\bspanner}{$\wh{\MM}, h$}
    \Comment{\emph{$\wh{\MM}$ is  MDP on state space $\ol \MZ$, horizon $H$; $h \in [H]$}}
    \If{$h \leq L$}
    \Return an arbitrary general policy.%
    \EndIf
    \State \label{it:define-opt-oracle} Let $\CO$ be the linear optimization oracle which given $r \in \RR^\MO$, returns $\argmax_{\pi \in \PiZ} \langle r, d^{\pi,\wh{\MM}}_{\SO, h-L}\rangle$ and $\max_{\pi \in \PiZ} \langle r, d^{\pi,\wh{\MM}}_{\SO,h-L}\rangle$ \Comment{\emph{Note that $\CO$ can be implemented in time $\til O(|\ol\MZ|\cdot HO)$ using dynamic programming}}
    \State Using the algorithm of \cite{awerbuch2008online} with oracle $\CO$, compute policies $\{\pi^1,\dots,\pi^O\}$ so that $\{d^{\pi^i,\wh\MM}_{\SO,h-L}: i \in [O]\}$ is a $2$-approximate barycentric spanner of $\{d^{\pi,\wh\MM}_{\SO,h-L}: \pi \in \PiZ\}$.\label{it:spanner-oracle} \Comment{\emph{This algorithm requires only $O(O^2 \log O)$ calls to $\CO$}}
    \State \textbf{return} the general policy $\frac{1}{O}\cdot\sum_{i=1}^O \pi^i$.
    \EndProcedure
    \end{algorithmic}
\end{algorithm}

The solution to the dilemma discussed above is to not try to construct our estimate $\wh\MM$ of $\MM$ layer by layer, hoping that at each layer we can explore all reachable states of $\MP$ despite making errors in earlier layers of $\wh\MM$. Instead, we have to be able to go back and ``fix'' errors in our empirical estimates at earlier layers.
This task is performed in our main algorithm, \mainalg  (Algorithm \ref{alg:main}). For some $K \in \BN$, \mainalg runs for a total of $K$ iterations: at each iteration $k \in [K]$, \mainalg defines $H$  general policies, $\ol \pi^{k,1}, \ldots, \ol\pi^{k,H} \in \Pigen$ (abbreviated $\ol \pi^{k,1:H}$; step \ref{it:define-barpis}). The algorithm's overall goal is that, for some $k$, for each $h \in [H]$, $\ol\pi^{k,h}$ explores all latent states at step $h-L$ that are reachable by any policy. 

To ensure that this condition holds at some iteration, \mainalg performs the following two main steps for each iteration $k$: first, it calls Algorithm \ref{alg:approxmdp} to construct an MDP, denoted $\wh \MM\^k$, using the policies $\ol\pi^{k,1:H}$.  Then, for each $h \in [H]$, it passes the tuple $(\wh\MM\^k, h)$ to \bspanner, which returns as output a general policy, $\pi^{k+1,h,0}$. Then, policies $\pi^{k+1,h}$ are produced (step \ref{it:pikh}) by averaging $\pi^{k+1,h',0}$, for all $h' \geq h$. %
The policies $\pi^{k+1,h}$ are then mixed into the policies $\ol\pi^{k+1,h}$ for the next iteration $k+1$. It follows from properties of \bspanner that, in the event that the policies $\ol\pi^{k,h}$ are not sufficiently exploratory, one of the new policies $\pi^{k+1,h}$ visits \emph{some} latent state $(s,h') \in \MS \times [H]$, which was not previously visited by $\ol \pi^{k,1:H}$ with significant probability. Thus, after a total of $K = O(SH)$ iterations $k$, it follows that we must have visited all (reachable) latent states of the POMDP.  %
At the end of these $K$ iterations, \mainalg computes an optimal policy for each $\wh \MM\^k$ and returns the best of them (as evaluated on fresh trajectories drawn from $\MP$; step \ref{it:best-kstar}).

\begin{algorithm}
  \caption{\mainalg (\textbf{Ba}rycentric \textbf{S}pann\textbf{e}r policy \textbf{C}over with \textbf{A}pproximate \textbf{M}D\textbf{P})}
  \label{alg:main}
  \begin{algorithmic}[1]
    \Procedure{\mainalg}{$L, N_0, N_1, \alpha, \beta, K$}
    \State Initialize $\pi^{1,1}, \ldots, \pi^{1,H}$ to be arbitrary policies.
    \For{$k \in [K]$}\label{it:main-for-loop}
  \State For each $h \in [H]$, set $\ol \pi^{k,h} = \frac{1}{k} \sum_{k'=1}^K \pi^{k',h}$.\label{it:define-barpis}
  \State Run $\ApproxMDP(L, N_0, N_1, \ol \pi^{k,1:H})$ and let its output be $\wh \MM\^k$.\label{it:constructmdp} %
  \State For each $h \in [H]$, let $\pi^{k+1,h,0}$ be the output of $\bspanner(\wh\MM\^k, h)$.\label{it:bspanner}
  \State For each $h \in [H]$, define $\pi^{k+1,h} := \frac{1}{H-h+1} \sum_{h'=h}^H \pi^{k+1,h',0}$. \label{it:pikh}
  \EndFor

  \LineComment{Choose the best optimal policy amongst all $\wh \MM\^k$}
  \For{$k \in [K]$}\label{it:for-loop-check}
  \State Let $\pi_\st^k$ denote an optimal policy of $\wh \MM\^k$.\label{it:pistar-k-mk}
  \State Execute $\pi_\st^k$ for $\frac{100 H^2 \log K/\beta}{\alpha^2}$ trajectories and let the mean reward across them be $\wh r^k$.\label{it:hat-rk}
  \EndFor
  \State Let $k^\st = \argmax_{k \in [K]} \wh r^k$.\label{it:best-kstar}
  \State \Return $\pi_\st^{k^\st}$.
    \EndProcedure
  \end{algorithmic}
\end{algorithm}

\section{Proof Outline}\label{section:proof-outline}
We now briefly outline the proof of Theorem~\ref{thm:main}; further details may be found in the appendix. The high-level idea of the proof is to show that the algorithm \mainalg makes a given amount of progress for each iteration $k$, as specified in the following lemma:
\begin{lemma}[``Progress lemma'': informal version of Lemma \ref{lem:alg-progress}]
  \label{lem:alg-progress-informal}
  Fix any iteration $k$ in Algorithm \ref{alg:main}, step \ref{it:main-for-loop}. Then, for some parameters $\delta, \phi$ with $\alpha \gg \delta \gg \phi > 0$, one of the following statements holds:
  \begin{enumerate}
  \item\label{it:progress-done} Any $(s,h)$ with $d_{\SS, h-L}^{\MP, \ol\pi^{k,h}}(s) < \phi$ satisfies $d_{\SS, h-L}^{\MP,\pi}(s) \leq \delta$ for \emph{all} general policies $\pi$.     
  \item \label{it:progress-hs} There is some $(h,s) \in [H] \times \MS$ so that:
    \begin{align}
d_{\SS, h-L}^{\MP,\ol \pi^{k,h}}(s) < \phi \cdot H^2 S, \quad \mbox{ yet, for all $k' > k$: } \quad  d_{\SS, h-L}^{\MP,\ol\pi^{k',h}}(s) \geq \phi \cdot H^2 S \nonumber.
    \end{align}
  \end{enumerate}
\end{lemma}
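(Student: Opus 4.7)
The plan is a case analysis on Statement~1. If it holds, there is nothing to prove, so assume it fails: there exist $(s^\star, h^\star) \in \MS \times [H]$ and $\pi^\star \in \Pigen$ with $d^{\MP, \ol\pi^{k,h^\star}}_{\SS, h^\star - L}(s^\star) < \phi$ yet $d^{\MP, \pi^\star}_{\SS, h^\star - L}(s^\star) > \delta$. I claim Statement~2 holds with $(h, s) = (h^\star, s^\star)$. The ``low at iteration $k$'' side is immediate because $\phi < \phi H^2 S$; the only real task is to lower-bound $d^{\MP, \ol\pi^{k',h^\star}}_{\SS, h^\star - L}(s^\star)$ for every $k' > k$.

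The core claim is $d^{\MP, \pi^{k+1, h^\star, 0}}_{\SS, h^\star - L}(s^\star) \gtrsim \delta / O^2$, which I would establish in three steps. First, I combine belief contraction (Theorem~\ref{thm:belief-contraction-informal}, with $\cD = \Unif(\MS)$) with a concentration bound for \ApproxMDP of the flavor of Lemma~\ref{lem:zlow-tvd} to conclude that $\bigl\|d^{\MP, \pi}_{\SO, h^\star - L} - d^{\wh\MM\^k, \pi}_{\SO, h^\star - L}\bigr\|_1$ is small for every general policy $\pi$. Second, $\gamma$-observability (Assumption~\ref{assumption:observability-intro}) translates obs-visitation closeness to formal latent-state closeness, so the witness property $d^{\MP, \pi^\star}_{\SS, h^\star - L}(s^\star) > \delta$ implies $d^{\wh\MM\^k, \pi^\star}_{\SS, h^\star - L}(s^\star) \gtrsim \delta$. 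Third, applying Lemma~\ref{lem:dist-spanner-informal} to $\wh\MM\^k$ at step $h^\star$ yields $d^{\wh\MM\^k, \pi^{k+1, h^\star, 0}}_{\SS, h^\star - L}(s^\star) \gtrsim \delta/O^2$, and pulling back to $\MP$ by the same obs-approximation plus observability produces the desired bound.

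Propagation to $k' > k$ is then a short chase through the two averaging operations in Algorithm~\ref{alg:main}: $\pi^{k+1, h^\star}$ mixes in $\pi^{k+1, h^\star, 0}$ with weight $\geq 1/H$, and $\ol\pi^{k', h^\star}$ mixes in $\pi^{k+1, h^\star}$ with weight $\geq 1/K$, so $d^{\MP, \ol\pi^{k', h^\star}}_{\SS, h^\star - L}(s^\star) \gtrsim \delta / (K H O^2)$, which exceeds $\phi H^2 S$ under a quantitative gap such as $\delta \gg K H^3 S O^2 \phi$ (a concrete instance of the hypothesized $\delta \gg \phi$, which is satisfied by the parameter settings of Section~\ref{sec:params} since $K = O(SH)$).

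The main obstacle is the obs-visitation approximation driving both the second and the pull-back step: we need $\wh\MM\^k$'s observation distribution under $\pi^\star$ (and separately under $\pi^{k+1, h^\star, 0}$) to be close to the true one in $\MP$, even though $\ol\pi^{k, h^\star}$ \emph{explicitly does not} reach $s^\star$. The resolution is that closeness of obs-visitation at step $h^\star - L$ depends only on how well the \emph{earlier} layers of $\wh\MM\^k$ approximate $\MP$, whose estimation is driven by $\ol\pi^{k, j}$ for $j \leq h^\star - L$, and the sink-observation mechanism of Algorithm~\ref{alg:approxmdp} (step~\ref{it:divert-mass}) guarantees that poorly-estimated histories are routed to a distinguishable terminal observation rather than spread across in-distribution observations. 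Thus the transfer from $\MP$ to $\wh\MM\^k$ can only \emph{lose} mass at $s^\star$, never invent it; this one-sided guarantee is what makes the lower-bound transfer pass through cleanly and, after tracking the constants from belief contraction and observability, lets us absorb the approximation errors into the gap between $\delta$ and $\phi$.
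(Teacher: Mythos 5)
Your skeleton (contrapositive of item~\ref{it:progress-done}, transfer the witness to $\wh\MM\^k$, apply the spanner, pull back to $\MP$, chase the mixing weights) matches the paper's steps \textbf{(A)} and \textbf{(B)}, but two of the transfers are asserted rather than proved, and both are exactly where the difficulty lives. First, your Step 1 claims that $\|d^{\MP,\pi}_{\SO,h^\star-L} - d^{\wh\MM\^k,\pi}_{\SO,h^\star-L}\|_1$ is small \emph{for every} general policy $\pi$. This cannot be established at an iteration $k$ where item~\ref{it:progress-done} fails: belief contraction (Theorem~\ref{thm:belief-contraction}) requires the initializing distribution to dominate the reachable belief states, the empirical MDP is built from roll-ins under $\ol\pi^{k,1:H}$ (not $\Unif(\MS)$), and the witness $\pi^\star$ by hypothesis reaches a state that $\ol\pi^{k,h^\star}$ does not --- so the hypothesis of the contraction theorem is violated for precisely the policies you need. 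If a uniform two-sided bound of this kind held, item~\ref{it:progress-done} would essentially already hold. The paper's actual proof (Lemma~\ref{lem:alg-progress}) replaces this with a \emph{one-sided} inequality (Lemma~\ref{lem:rh-onesided}) between the truncated POMDP $\barpdp{\phi,H'-1}{\ol\pi^{k,1:H}}$ and $\wh\MM\^k$, and must choose $H'$ \emph{minimal} so that the witness mass at step $H'-L$ survives the truncations at earlier layers; your case split on $\MP$ directly skips this and leaves the forward transfer unsupported.

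Second, your pull-back step gets the direction of the one-sidedness wrong. The guarantee that survives underexploration says the (truncated) true visitation is \emph{at most} the formal visitation in $\wh\MM\^k$ plus error --- good for the forward transfer, useless for concluding $d^{\MP,\pi^{k+1,h^\star,0}}_{\SS,h^\star-L}(s^\star)$ is large from $d^{\wh\MM\^k,\pi^{k+1,h^\star,0}}_{\SS,h^\star-L}(s^\star)$ being large. The paper handles this with the two-sided Lemma~\ref{lem:hatm-tilm}, whose error term is $4H\sum_{h}d^{\MP,\pi^{k+1,H',0}}_{\SS,h-L}(\und{\phi,h-L}{\MP}{\ol\pi^{k,h}})$, yielding a dichotomy: either this term is small and the pull-back succeeds at step $h^\star-L$, or it is large and the spanner policy visits some \emph{previously underexplored state at an earlier step}. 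In either case item~\ref{it:progress-hs} holds, but possibly for a pair $(h,s)$ different from $(h^\star,s^\star)$. Your claim that Statement~2 holds with $(h,s)=(h^\star,s^\star)$ is therefore strictly stronger than what the argument (or the paper) delivers, and the "passes through cleanly" assertion in your final paragraph does not close this gap.
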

Given Lemma \ref{lem:alg-progress-informal}, the proof of Theorem~\ref{thm:main} is fairly straightforward. In particular, each $(h,s) \in [H] \times \MS$ can only appear as the specified pair in item \ref{it:progress-hs} of the lemma for a single iteration $k$. Thus, as long as $K > HS$, item \ref{it:progress-done} must hold for some value of $k$, say $k^\st \in [K]$. In turn, it is not difficult to show from this that the MDP $\wh \MM\^{k^\st}$ is a good approximation of $\MP$, in the sense that for any general policy $\pi$, the values of $\pi$ in $\wh\MM\^{k^\st}$ and in $\MP$ are close (Lemma \ref{lem:hatm-tilm}). Thus, the optimal policy $\pi_\st^{k^\st}$ of $\wh\MM\^{k^\st}$ will be a near-optimal policy of $\MP$, and steps \ref{it:for-loop-check} through \ref{it:best-kstar} of \mainalg will identify either the policy $\pi_\st^{k^\st}$ or some other policy $\pi_\st^{k'}$ which has even higher reward on $\MP$.

\paragraph{Proof of the progress lemma.} The bulk of the proof of Theorem~\ref{thm:main} consists of the proof of Lemma \ref{lem:alg-progress-informal}, which we proceed to outline. Suppose that item \ref{it:progress-done} of the lemma does not hold, meaning that there is some $\pi \in \Pigen$ and some $(h,s) \in [H] \times \MS$  so that $d_{\SS, h-L}^{\MP, \ol\pi^{k,h}}(s) < \phi$ yet $d_{\SS, h-L}^{\MP, \pi}(s) > \delta$; i.e., $\ol\pi^{k,h}$ does \emph{not} explore $(s, h-L)$, but the policy $\pi$ does. Roughly speaking, \mainalg ensures that item \ref{it:progress-hs} holds in this case, using the following two steps:

{\bf (A)} First, we show that $ \lng e_s, \BO_{h-L}^\dagger \cdot d_{\SO, h-L}^{\wh\MM\^k, \pi} \rng \geq \delta'$, where $\delta'$ is some parameter satisfying $\delta \gg \delta' \gg \phi$. In words, the estimate of the underlying state distribution provided by $\wh\MM\^k$ \emph{also} has the property that some policy $\pi$ visits $(s, h-L)$ with non-negligible probability (namely, $\delta'$). While this statement would be straightforward if $\wh\MM\^k$ were a close approximation $\MP$, this is \emph{not} necessarily the case (indeed, if it were the case, then item \ref{it:progress-done} of Lemma \ref{lem:alg-progress-informal} would hold). %
To circumvent this issue, we introduce a family of intermediate POMDPs indexed by $H' \in [H]$ and denoted $\barpdp{\phi,H'}{\ol\pi^{k,1:H}}$, which we call \emph{truncated POMDPs}. Roughly speaking, the truncated POMDP $\barpdp{\phi,H'}{\ol\pi^{k,1:H}}$ diverts transitions away from all states at step $H'-L$ which $\ol\pi^{k,H'}$ does not visit with probability at least $\phi$. This modification is made to allow Theorem \ref{thm:belief-contraction-informal} to be applied to $\barpdp{\phi,H'}{\ol\pi^{k,1:H}}$ and any general policy $\pi$.  By doing so, we may show a \emph{one-sided} error bound between $\barpdp{\phi,H'}{\ol\pi^{k,1:H}}$ and $\wh\MM\^k$ (Lemma \ref{lem:rh-onesided}) which, importantly, holds even when the policies $\ol\pi^{k,1:H}$ may fail to explore some states. %
It is this one-sided error bound which implies the lower bound on $ \lng e_s, \BO_{h-L}^\dagger \cdot d_{\SO, h-L}^{\wh\MM\^k, \pi} \rng$.

{\bf (B)} Second, we show that the policy $\pi^{k+1,h,0}$ produced by \bspanner in step \ref{it:bspanner} explores $(s, h-L)$ with sufficient probability. To do so, we first use Lemma \ref{lem:dist-spanner-informal} to conclude that $\lng e_s, \BO_{h-L}^\dagger \cdot d_{\SO, h-L}^{\wh\MM\^k, \pi^{k+1,h,0}}\rng \geq \frac{\delta'}{4O^2}$. The more challenging step is to use this fact to conclude a lower bound on $d_{\SS, h-L}^{\MP, \pi^{k+1,h,0}}(s)$; unfortunately, the one-sided error bound between $\MP$ and $\wh\MM\^k$ that we used in the previous paragraph goes in the wrong direction here. The solution is to use Lemma \ref{lem:hatm-tilm}, which has the following consequence: either $d_{\SS, h-L}^{\MP, \pi^{k+1,h,0}}(s)$ is not too small, or else the policy $\pi^{k+1,h,0}$ visits some state at a step \emph{prior to} $h-L$ which was not sufficiently explored by any of the policies $\ol\pi^{k,1:H}$ (see Section \ref{section:overview} for further details). In either case $\pi^{k+1,h,0}$ visits a state that was not previously explored, and the fact that $\pi^{k+1,h,0}$ is mixed in to $\ol\pi^{k',h'}$ for $k' > k, h' \leq h$ (steps \ref{it:define-barpis}, \ref{it:pikh} of \mainalg) allows us to conclude item \ref{it:progress-hs} of Lemma \ref{lem:alg-progress-informal}.

\arxiv{
  \subsection{Organization of the proof}
  \label{section:overview}
As discussed above, the proof of the ``progress lemma'', Lemma \ref{lem:alg-progress-informal}, involves several intermediary lemmas. Below we overview these lemmas and their organization in the remainder of the paper.

\neurips{\section{Proof Overview and Organization}\label{section:overview}
Below we describe the organization of the proof of Theorem \ref{thm:main} which obtains a guarantee for \mainalg (Algorithm \ref{alg:main}). At a high level, our task is to show that the true POMDP $\MP$ and the MDPs with $\SZ$-structure $\wh\MM\^k$ constructed in the course of \mainalg are close in some sense; doing so is crucial to proving the main ``progress lemma'' (Lemma \ref{lem:alg-progress}, the formal version of Lemma \ref{lem:alg-progress-informal}) as discussed in Section \ref{section:proof-outline}: }
\begin{itemize}
  \arxiv{
  \item Section \ref{section:additional-prelim} introduces additional preliminaries that are needed in the proofs.
  \item Section \ref{sec:params} defines the values of several parameters that are used in the algorithm (\mainalg) and the proofs.
  \item Section \ref{section:barycentric-spanners} provides background on barycentric spanners and proves Lemma \ref{lem:dist-spanner}, which states the main property of barycentric spanners used in the proof.
    }
\item In Section \ref{section:barycentric-spanners}, we introduce the concept of barycentric spanners (Definition \ref{def:bary-spanner}) and prove our main lemma (Lemma \ref{lem:dist-spanner}, the formal version of Lemma \ref{lem:dist-spanner-informal}) which establishes a guarantee on the output of Algorithm \ref{alg:bspanner} (\bspanner). \arxiv{This lemma is needed in step \textbf{(B)} of the proof of Lemma \ref{lem:alg-progress-informal} above.}
\item In Section \ref{sec:intermediary-models} we introduce several intermediary \neurips{contextual decision processes}\arxiv{models} which we will use to relate the true POMDP $\MP$ and the empirical MDPs $\wh\MM\^k $ constructed in \mainalg: these include an MDP $\tilmdp{\pi^{1:H}}$ with \arxiv{state space $\MZ = (\MA \times \MO)^L$}\neurips{$\SZ$-structure} (which depends on a sequence of general policies $\pi^{1:H}$; see Section \ref{section:approx-mdp-general}) and a collection of truncated POMDPs, $\barpdp{\phi,h}{\pi^{1:H}}$, $h \in [H]$ (which depend on a parameter $\phi > 0$ and a sequence of general policies $\pi^{1:H}$; see Section \ref{section:truncated-construction}). Omitting some details, the way we relate the various \neurips{contextual decision processes}\arxiv{models} is summarized as follows:
  \begin{align}
\MP \ \ \stackrel{\text{Lems.~\ref{lem:pbar-lb-p} \& \ref{lem:pbar-properties}}}{ \sim} \ \ \barpdp{\phi,h}{\pi^{1:H}} \ \ \stackrel{\text{Lems.~\ref{lem:bpbar-tilb} \& \ref{lem:rh-onesided}}}{\sim} \ \ \tilmdp{\pi^{1:H}} \ \ \stackrel{\text{Lems.~\ref{lem:zlow-tvd},  \ref{lem:pbar-zlow-ub} \& \ref{lem:rh-onesided}}}{\sim} \ \ \wh \MM\^k\label{eq:rel1}.
  \end{align}
\item The truncated POMDPs $\barpdp{\phi,h}{\pi^{1:H}}$ constructed in Section \ref{section:truncated-construction} are a special case of a more general notion of \emph{truncation} of the given POMDP $\MP$ (Definition \ref{def:truncation}). In Section \ref{sec:truncated-pomdps} we prove several properties of truncations of POMDPs, which are useful for establishing several steps in the relations (\ref{eq:rel1}).
\item In Section \ref{sec:one-sided-comparison} we use the results of Section \ref{sec:truncated-pomdps} to establish a certain type of ``one-sided'' relation between $\barpdp{\phi, h}{\pi^{1:H}}$ and $\wh\MM\^k$ (for each $k \in [K]$). In particular, the main result of Section \ref{sec:one-sided-comparison} is Lemma \ref{lem:rh-onesided}\arxiv{ (used in step \textbf{(A)} of the proof of Lemma \ref{lem:alg-progress-informal} above)}, which establishes the following: for any reward function $R$ which is non-negative in a certain sense, the value of any general policy $\pi$ in $\barpdp{\phi, h}{\pi^{1:H}}$ with the reward function $R$ is approximately bounded above by the value of $\pi$ in $\wh\MM\^k$ with the reward function $R$; i.e., we have an upper bound on $V_1^{\pi, \barpdp{\phi,h}{\pi^{1:H}},R}(\emptyset) - V_1^{\pi, \wh\MM\^k, R}(\emptyset)$.\arxiv{\footnote{The notation $V_1^{\pi, \barpdp{\phi,h}{\pi^{1:H}}, R}(\emptyset)$, $V_1^{\pi, \wh\MM\^k, R}(\emptyset)$ refers to the value function of $\barpdp{\phi,h}{\pi^{1:H}}, \wh\MM\^k$, respectively, when its reward functions are set to $R$; see Section \ref{sec:cdp}.}} The inequality does not hold, though, in the opposite direction: this is because $\barpdp{\phi,h}{\pi^{1:H}}$ is obtained by redirecting some of the transitions of $\MP$ into the sink state $\sinks$ (which will always lead to 0 reward), and lacking a bound on the probability of ending up in $\sinks$, it is impossible to show an inequality in the opposite direction.
\item \neurips{Our proof also}\arxiv{Step \textbf{(B)} of the proof of Lemma \ref{lem:alg-progress-informal}} requires a two-sided inequality that upper bounds $\left| V_1^{\pi, \MP, R}(\emptyset) - V_1^{\pi, \wh\MM\^k, R}(\emptyset) \right|$ for general policies $\pi$. Such an inequality is established in Section \ref{sec:two-sided-comparison}, namely in Lemma \ref{lem:hatm-tilm}. The proof of Lemma \ref{lem:hatm-tilm} proceeds somewhat differently than that of the one-sided inequalities, in that it does not make use of the truncated POMDPs $\barpdp{\phi,h}{\pi^{1:H}}$; in particular, as opposed to (\ref{eq:rel1}), the rough outline is as follows:
  \begin{align}
\MP \ \ \stackrel{\text{Lems.~\ref{lem:bel-unvisited-large} \& \ref{lem:hatm-tilm}}}{\sim} \ \ \tilmdp{\pi^{1:H}} \ \ \stackrel{\text{Lems.~\ref{lem:zlow-tvd}, \ref{lem:pbar-zlow-ub}, \& \ref{lem:hatm-tilm}}}{\sim} \ \ \wh\MM\^k\label{eq:rel2}.
  \end{align}
  Unsurprisingly, there is a cost to the fact that this inequality is two-sided (which is of course stronger than being one-sided): the upper bound on $\left| V_1^{\pi, \MP, R}(\emptyset) - V_1^{\pi, \wh\MM\^k, R}(\emptyset) \right|$ depends on the probability, say $\rho(\pi)$, that $\pi$ visits the set of states which are ``underexplored'' (see Definition \ref{def:underexplored}) with respect to a certain set of exploratory policies used by \mainalg. This additional term of $\rho(\pi)$ is dealt as follows when proving the ``progress lemma'', Lemma \ref{lem:alg-progress\arxiv{-informal}}: we will be applying Lemma \ref{lem:hatm-tilm} with $\pi = \pi^{k+1,h,0}$ for some $h \in [H]$, namely one of the ``exploratory policies'' constructed in \mainalg with the purpose of exploring states at step $h-L$. If the probability $\rho(\pi^{k+1,h,0})$ is too large, then, by the precise definition of $\rho(\pi^{k+1,h,0})$, it turns out that $\pi^{k+1,h,0}$ must explore some state \emph{prior to} step $h-L$ (even though $\pi^{k+1,h,0}$ was constructed with the ``purpose'' of exploring states at step $h-L$!). We can then use this fact to conclude that item \neurips{\ref{it:somepol-s} of Lemma \ref{lem:alg-progress}}\arxiv{\ref{it:progress-hs} of Lemma \ref{lem:alg-progress-informal}} holds in this case. 
\item Finally, Section \ref{section:algorithm-analysis} analyzes \mainalg, proving Theorem \ref{thm:main}. As outlined \arxiv{above}\neurips{in Section \ref{section:proof-outline}}, the main step is to prove Lemma \ref{lem:alg-progress\arxiv{-informal}}, which establishes that \mainalg makes a quantifiable amount of progress at each iteration $k$. Using Lemma \ref{lem:alg-progress\arxiv{-informal}}, Lemma \ref{lem:found-good-iteration} establishes that, in fact, at some iteration $\ol k$ in \mainalg, the item \neurips{\ref{it:allpols-sinks} of Lemma \ref{lem:alg-progress}}\arxiv{\ref{it:progress-done} of Lemma \ref{lem:alg-progress-informal}} must hold, which, roughly speaking, means the POMDP $\MP$ has been sufficiently explored. Lemma \ref{lem:bound-diff-sink} uses this fact together with Lemma \ref{lem:hatm-tilm} to establish a \emph{two-sided} upper bound between $\MP$ and $\wh \MM\^{\ol k}$ (without the additional term involving underexplored states referred to above). From this result, the proof of Theorem \ref{thm:main} is straightforward.
  \arxiv{
    \item Section \ref{sec:related-work} discusses additional related work.
    }
\end{itemize}

\todo{Global todos:
  \begin{itemize}
  \item Deal with ``vector type matching'' issue where we are sloppy about the sink elements. FIXED
  \item Formalize ``extension'' via state/observation spaces $\ol \MS, \ol\MO$. FIXED
  \item Deal with the fact that $\BO^\dagger$ is bounded by $\sqrt{O}/\gamma$ not $1/\gamma$ (curiosity, example showing this is tight?) FIXED
  \item Barycentric spanners not in full dimension (AK was just in full dimension). Also we may want to say a few words about bit complexity (we should be able to get exact BS's with poly bit complexity, should be entirely straightforward). This is entirely modular. addressed.
  \item Deal with steps $1...L$: I think the easiest way to do it may be to just allow arbitrary actions/obs for steps $h-L$ through 0, and everything should work out fine. FIXED
  \item Explain why you can't apply the algorithm in the DEC paper, other papers on robust RL. DONE
  \item Get the order of $\pi, \MP$ (e.g. in superscripts for $d$) standardized throughout the paper. DONE, I THINK
  \end{itemize}
  }

\neurips{\section{Additional Related Work}
\label{sec:related-work}

\paragraph{Learning POMDPs (without computational efficiency).}
Over the past several decades several algorithmic paradigms have been developed for the problem of learning the optimal policy in POMDPs. 
\cite{li2009multi} introduced the \emph{regionalized policy representation} (later developed in \cite{liu2011infinite,liu2013online,cai2009learning}), a model-free Bayesian method which attempts to learn the optimal policy directly by learning a model which aggregates belief states into regions and then marginalizing out over the regions. 
Several papers \cite{poupart2008model,ross2007bayes,ross2011bayesian,katt2019bayesian} have also applied model-based Bayesian methods to learning POMDPs, by augmenting the state space with empirical estimates of transition and observation probabilities and redefining the belief state updates in the natural way. %
See also \cite{doshi2015bayesian} for related Bayesian methods.
\cite{azizzadenesheli2018policy} adapts policy gradient methods to the problem of policy optimization in POMDPs. Various other heuristics have been considered as well (e.g., \cite{shani2005model}). More recently, several works have used recurrent neural networks to parametrize history-dependent policies and value functions for learning POMDPs \cite{schmidhuber1990reinforcement,ni2022recurrent,meng2021memory,han2020variational}. Essentially all of these results come with no formal guarantees (i.e., achieving PAC bounds computationally efficiently) regarding optimality of the learned policies.

The lack of provable guarantees for learning POMDPs without additional assumptions is unavoidable: \cite{jin2020sample,krishnamurthy2016pac} show that exponentially many samples are needed to learn an approximately optimal policy in general POMDPs. Furthermore, \cite{mossel2005learning} proved the related result that learning the parameters of an HMM is as hard as learning parity with noise. 
If given exponentially many samples, however, it is possible to prove positive results: in the infinite-horizon nonepisodic setting, \cite{even2005reinforcement} introduces an algorithm which provably outputs a near-optimal policy in time (and sample complexity) scaling exponentially in a certain parameter which depends on a horizon-like quantity of the POMDP.   %

Closer to our work, by placing appropriate assumptions on the POMDP, many works have proven polynomial sample complexity bounds. 
Several works  \cite{guo2016pac,azizzadenesheli2016open,xiong2021sublinear} have used \emph{spectral methods} (see \cite{hsu2012spectral,anandkumar2014tensor}) to learn POMDPs with polynomial sample complexity; while, in a sense, these papers do address the question of exploration, they do so by making strong assumptions on the underlying \emph{transitions}, namely that the transitions must be full-rank and furthermore must satisfy a mixing-type condition which implies that the Markov chain induced by any policy mixes quickly. Such assumptions on the transitions were removed in the recent work \cite{jin2020sample}, which modified the technique of spectral methods to obtain polynomial sample complexity for learning POMDPs that satisfy an $\ell_2$ version of Assumption~\ref{assumption:observability-intro}: in particular, their sample complexity bound depends polynomially on the inverse of the smallest singular value of any of the observation matrices. Later work \cite{liu2022partially} presented a simpler algorithm for learning such POMDPs, based on maximum likelihood estimation; further, the result of \cite{liu2022partially} applies also to the more general class of \emph{weakly revealing POMDPs}, which includes the \emph{overcomplete} case, namely where there are more states than observations. 
\cite{jafarnia2021online} applies the technique of posterior sampling to learn POMDPs in the Bayesian setting, though with the strong assumption that either different models to be learned are separated in KL divergence, or that the models sampled in the course of the algorithm are good estimates of the true model. \cite{efroni2022provable} establishes polynomial sample complexity bounds in the case when the latent state is \emph{decodable} from a small suffix of the history. \cite{kwon2021rl} shows that latent MDPs, a special case of overcomplete POMDPs, are learnable when the individual MDPs have transitions that are well-separated. We emphasize that all of the above works do \emph{not} provide end-to-end computational guarantees on learning POMDPs. %

\paragraph{Learning POMDPs (with computational efficiency).} Prior to our work, the only computationally efficient learning algorithms for POMDPs applied to very special cases. \cite{jin2020sample, krishnamurthy2016pac} proved polynomial-time learning results assuming deterministic transitions of the POMDP. \cite{kwon2021reinforcement} obtains results for learning latent MDPs which are a mixture of 2 underlying models; since the uncertainty in the system can be modeled by a single-dimensional parameter, their results are computationally efficient. %

\paragraph{Planning algorithms in RL.}
The problem of \emph{planning} in POMDPs (namely, finding the optimal policy when the model is known) has been extensively studied, with various proposed heuristics (e.g., \cite{monahan1982state,cassandra1997incremental,hansen2000dynamic,hauskrecht2000value, roy2002exponential, theocharous2003approximate, poupart2004vdcbpi, spaan2005perseus, pineau2006anytime, ross2008online, silver2010monte, smith2012heuristic, somani2013despot, garg2019despot,hansen1998solving, meuleau1999solving,kearns1999approximate, li2011finding, azizzadenesheli2018policy}), and a few provably efficient algorithms \cite{burago1996complexity,kwon2021reinforcement,golowich2022planning}. Most closely related to our work is \cite{golowich2022planning}, which shows a quasipolynomial-time planning algorithm for observable POMDPs.

\paragraph{Computational-statistical considerations in RL.} 
Several recent works have shown that certain learning problems in RL can be solved with few (i.e., polynomial) samples yet have superpolynomial \emph{computational} complexity, under standard assumptions. This is the case for finding a near-optimal policy in $\gamma$-observable POMDPs with $\gamma = 1/\poly(O)$, which requires exponential time under ETH \cite{golowich2022planning} yet can be learned with polynomially many samples \cite{jin2020sample}; finding stationary coarse correlated equilibria in general-sum Markov games \cite{daskalakis2022complexity}; and finding the optimal policy in MDPs which have linear $Q^\st$ and $V^\st$ functions \cite{kane2022computational}. The latter result establishes a computational-statistical gap of stronger nature in the sense that (unlike the previous two examples) the planning version of the problem can be solved efficiently. %
In contrast to the above lower bounds for computationally efficient learning, Theorem \ref{thm:main} shows that learning $\gamma$-observable POMDPs with constant $\gamma$ can be done computationally (quasi)-efficiently.

As a whole, the issue of developing computationally efficient algorithms for RL problems is wide open, with a plethora of intriguing open questions (see, e.g., \cite{weisz2021query,jiang2017contextual,foster2021statistical}).

\paragraph{Nonstationary MDPs and related approaches.}
In light of the fact that the dynamics of a $\gamma$-observable POMDP $\MP$ can be approximated by those of an MDP $\MM$ with quasipolynomially-sized state space (see Section \ref{sec:overview-approxmdp}), a natural approach to learn an optimal policy of $\MP$ is to use one of many recent results (e.g., \cite{wei2021nonstationary,wu2021reinforcement,wei2022model}) which establishes regret bounds for learning in MDPs when an adversary is allowed to corrupt the trajectories for some episodes. Such an adversary may be used to model the fact that  we can simulate trajectory draws from $\MP$ by viewing them as trajectory draws from $\MM$ with a certain fraction of episodes being corrupted.  In particular, \cite{wu2021reinforcement} shows that if $C$ episodes (out of $T$ total episodes) are corrupted, then there is an efficient algorithm with regret upper bounded by $\poly(H) \cdot \tilde O(\sqrt{ZAT} + Z^2 A + CZA)$, where $Z$ denotes the number of states of the MDP $\MM$. %
In our application, we would have $Z = (OA)^L$ (with $L$ defined as in Theorem \ref{thm:main}) and $C \geq \alpha \cdot T$ (as Theorem \ref{thm:main} gives an upper bound on the probability of an episode being corrupted which is no smaller than $\alpha$). Thus $CZA \geq \alpha T \cdot (OA)^{\log(1/\alpha)} \gg T$, meaning that the bound from \cite{wu2021reinforcement} is vacuous. Furthermore, this is not just an artifact of their analysis: \cite{wu2021reinforcement} showed a lower bound of $\Omega(CZA)$ on the regret when an adversary is allowed to corrupt $C$ episodes. Thus, in a sense, the adversarial model studied by \cite{wei2021nonstationary,wu2021reinforcement,wei2022model} is too strong to prove nonvacuous results in our setting. 

Another related approach is the \texttt{PC-IGW.Bilinear} algorithm (as well as its \texttt{IGW-Spanner} subroutine) of \cite{foster2021statistical}, which use similar techniques (namely, a policy cover and a barycentric spanner) to ours. Moreover, since POMDPs fit into the framework of \emph{decision-making with structured observations} of \cite{foster2021statistical}, one can attempt to use the \texttt{PC-IGW.Bilinear} algorithm together with \texttt{E2D} \cite{foster2021statistical} to attempt to efficiently learn the optimal policy in POMDPs. However, there is a significant issue: the full \texttt{E2D} meta-algorithm of  \cite{foster2021statistical} requires a \emph{model-estimation oracle} (the output of which is passed to \texttt{PC-IGW.Bilinear}). One might hope that we can implement this oracle efficiently by using subtrajectories of length $L$ to estimate the transitions of the MDP $\MM$ that approximates $\MP$. However (using the notation of the previous paragraph) the estimation error would necessarily scale as $\alpha T$. Furthermore, in order to ensure that the output of the estimation oracle belongs to the class of models, the model class must be augmented to contain all MDPs on the state space $\MZ$; doing so leads to a bound on the decision estimation coefficient which is at least $\frac{Z}{\gamma}$. The resulting regret bound of \cite[Theorem 4.1]{foster2021statistical} scales as $\inf_{\gamma > 0} \frac{T Z}{\gamma} + \gamma \cdot \textbf{Est}$, where $\textbf{Est} \geq \alpha T$ denotes the estimation error. This expression is minimized when $\gamma = \sqrt{TZ/\textbf{Est}}$, and yields a regret bound which is at least $\sqrt{TZ \cdot \textbf{Est}} \geq T \cdot \sqrt{\alpha \cdot (OA)^{\log 1/\alpha}} \gg T$, which is again vacuous. Finding a general model of learning in ``approximate MDPs'' which yields nonvacuous regret bounds in our setting is an interesting open problem.

We also remark that the use of barycentric spanners in \cite{foster2021statistical} differs significantly from ours. In \cite{foster2021statistical}, a weaker notion, known as \emph{approximate G-optimal design}, suffices for their application, yet in order to compute such a G-optimal design efficiently in their setting, it turns out to be more convenient to search for the stronger notion of barycentric spanner. Replacing the barycentric spanner in our algorithm (Algorithm \ref{alg:bspanner}) with a G-optimal design does not seem to sufffice for our needs. 

}
\section{Additional preliminaries}\label{section:additional-prelim}
In this section we state some additional preliminaries which are useful in the proof and formalize some notions which were introduced informally in the main body.

\paragraph{Extending the state and observation spaces.} Fix a POMDP $\MP = (\MS, \MA, \MO, H, b_1, R, \BT, \BO)$, as in Theorem \ref{thm:main}.  For technical reasons, we will augment the state spce $\MS$ with a special ``sink state'' $\sinks$, and augment the observation space $\MO$ with a special ``sink observation'' $\sinkobs$. We write $\ol\MS := \MS \cup \{ \sinks \}$ and $\ol\MO := \MO \cup \{\sinkobs\}$, and call the spaces $\ol\MS, \ol\MO$, the \emph{extended state and observation spaces}, respectively. We extend the various components of $\MP$ to the extended state and observation sequences as follows: $b_1$ remains unchanged, $R_h(\sinkobs) = 0$ for all $h$, $\BT_h^\MP(\sinks | \sinks, a) = 1$ for all $a \in \MA, h \in [H]$ (and all transitions from states $s \in \MS$ remain unchanged), and $\BO_h^\MP(\sinkobs | \sinks) = 1$ for all $h \in [H]$ (and all observation distributions at states $s \in \MS$ remain unchanged). It is clear that the distribution over trajectories given any fixed policy are the same in the original and extended POMDP, and the extended POMDP satisfies $\gamma$-observability if the original POMDP does. Thus, it is without loss of generality to replace $\MP$ with the extended POMDP $\MP = (\ol\MS, \MA, \ol\MO, H, b_1, R, \BT, \BO)$, as we shall do in the entirety of the proof (we  do so since we will introduce other POMDPs which, unlike $\MP$, will at times transition into $\sinks$, unlike the original POMDP $\MP$; see Section \ref{section:truncated-construction}). Thus, we have that the belief states $\bb_h^\MP(a_{1:h-1}, o_{2:h})$, as well as the approximate belief states $\bbapxs{\MP}_h(a_{h-L:h-1}, o_{h-L+1:h}; \cD)$, belong to $\BR^{\ol\MS}$. Moreover, it is immediate from the block structure of $\BO_h^\MP \in \BR^{\ol\MO \times \ol\MS}$ that its pseudoinverse $(\BO_h^\MP)^\dagger \in \BR^{\ol\MS \times \ol\MO}$ has the same block structure, i.e., $(\BO_h^\MP)^\dagger_{\sinks, o} = (\BO_h^\MP)^\dagger_{s, \sinkobs} = 0$ for all $s \in \MS, o \in \MO$.

Finally, we remark that for some choices of $a_{1:h-1}, o_{2:h}$, the belief state $\bb_h^\MP(a_{1:h-1}, o_{2:h})$ or the approximate belief state $\bbapxs{\MP}_h(a_{h-L:h-1}, o_{h-L+1:h})$ may be undefined (for instance, if $o_{h-L+1:h} \not \in \ol\MO^L$ and $\cD(\sinks) = 0$, then $\bbapxs{\MP}_h(a_{h-L:h-1}, o_{h-L+1:h}; \cD)$ is undefined since the transitions of $\MP$ never lead to $\sinks$). In such a case, as a matter of convention, we will define the belief state to put all its mass on $\sinks$.

\paragraph{MDPs with $\SZ$-structure.} At times, we will write $\BT^\MP, \BO^\MP, R^\MP$ in place of $\BT, \BO, R$, respectively, to clarify that the corresponding transition matrices, observation matrices, and reward function  are for the POMDP $\MP$. Furthermore, we will consider a fixed value $L \in \BN$ (the particular value of $L$, as a function of the parameters of the problem, is given in Section \ref{sec:params}; it is chosen so as to satisfy the requirements of the belief contraction theorem (Theorem \ref{thm:belief-contraction})). Given $\MP$, write $\MZ := \MA^L \times \MO^L$ and $\ol\MZ := \MA \times \ol\MO^L$; the spaces $\MZ,\ol\MZ$ should be interpreted as representing a sequence of $L$ consecutive action-observation pairs for a trajectory drawn from $\MP$. Throughout the course of the proof, we will consider several \emph{Markov Decision Processes} (MDPs) whose state space is $\ol\MZ$. In particular, for various choices of the %
probability transition matrices $\BP_h(z_{h+1} | z_h, a_h)$ (for $z_h, z_{h+1} \in \ol\MZ$) and reward functions $R = (R_1, \ldots, R_H)$, $R_h : \ol\MZ \ra \BR$, we will consider MDPs of the form $\MM = (\ol\MZ, \MA, H, R, \BP)$ (we will typically leave the initial state distribution unspecified, as it does not affect the optimal policy of an MDP). %

For such an $\MM$, we say that $\MM$ \emph{has $\SZ$-structure} if: (a) $R_h(z)$ is only a function of $\oo{z}$ for all $h \in [H]$, and (b) the transitions $\BP_h(\cdot | z_h, a_h)$  have the following property, for $z_h \in \ol\MZ, a_h \in \MA$: writing $z_h = (a_{h-L:h-1}, o_{h-L+1:h})$, $\BP_h(z_{h+1} | z_h, a_h)$ is nonzero only for those $z_{h+1}$ of the form $z_{h+1} = (a_{h-L+1:h}, o_{h-L+2:h+1})$, where $o_{h+1} \in \ol\MO$. Throughout this section, the MDPs with state space $\ol\MZ$ that we consider all will have $\SZ$-structure. %

Throughout, we will use the following notational convention: given a sequence of $L$ action-observation pairs for the POMDP $\MP$, namely $(a_{h-L:h-1}, o_{h-L+1:h}) \in (\MA \times \ol\MO)^L$, we will denote this sequence by $z_h = (a_{h-L:h-1}, o_{h-L+1:h})$. When the identity of the action-observation sequence is clear, we will use $z_h$ and $(a_{h-L:h-1}, o_{h-L+1:h})$ interchangeably. %
Furthermore, for a sequence $z_h = (a_{h-L:h-1}, o_{h-L+1:h})$, write $\oo{z_h} := o_h$ to denote the final observation of this sequence.

We define $\PiZ$ to be the set of sequences $\pi = (\pi_1, \ldots, \pi_H)$, where each $\pi_h : \ol\MZ \ra \MA$. $\PiZ$ may be identified as a subset of $\Pidet$: a policy $\pi = (\pi_1, \ldots, \pi_H) \in \PiZ$ may be seen as a history-dependent deterministic policy via the mapping $(a_{1:h-1}, o_{2:h}) \mapsto \pi_h(a_{h-L:h-1}, o_{h-L+1:h})$ (i.e., by ignoring the portion of the history prior to step $h-L$). Given an MDP $\MM$ with $\SZ$-structure and a policy $\pi \in \PiZ$, the definition of a trajectory drawn from $\MM$ by following policy $\pi$ is standard (see Section \ref{sec:pomdp-background}). We next extend this definition to the case of a 
\emph{general policy} $\pi \in \Pigen$, for which a trajectory $z_{1:H}$ (which is equivalent to the data of $a_{1:H-1}, o_{2:H}$) is drawn as follows: first, a deterministic policy $\sigma \in \Pidet$ is drawn, $\sigma \sim \pi$, and then, at each step $h \in [H]$, given history $z_{1:h}$ (which is equivalent to the data of $a_{1:h-1}, o_{2:h}$, since $\MM$ has $\SZ$-structure), we take action $\sigma_h(a_{1:h-1}, o_{2:h})$. At times, we will abuse notation and write $\sigma_h(z_{1:h}) = \sigma_h(a_{1:h-1}, o_{2:h})$. 
Additionally, we write $\Pr_{z_{1:H} \sim \pi}^\MM(\ME)$ to denote the probability of $\ME$ when $z_{1:H}$ (which is equivalent to the data of $a_{1:H-1}, o_{2:H}$) is drawn from a trajectory following policy $\pi$ for the MDP $\MM$.  In similar spirit, we will write $\E_{z_{1:H}\sim \pi}^\MM[\cdot]$ to denote expectations.

\paragraph{Visitation distributions.} Next, for an element $z = (a_{1:L}, o_{1:L}) \in \ol\MZ$, as well as $0 \leq h$, write $\suff{h}{z} := (a_{\max\{1,h-L+1\}:L}, o_{\max\{1,h-L+1\}:L}) \in \MA^{\min\{h,L\}} \times \ol\MO^{\min\{h,L\}}$ to denote the suffix of $z$ that consists of the last $h$ actions and observations in $z$, or all of $z$ if $h > L$.  Recall from Section \ref{sec:visitation} the following notation for visitation distributions for the POMDP $\MP$ and an MDP $\MM$ with $\SZ$-structure: %
\begin{itemize}
  \item Given a general policy $\pi \in \Pigen$, $h \in [H]$, and $s \in \ol\MS$, define $d_{\SS,h}^{\MP, \pi}(s) = \Pr_{s_{1:H} \sim \pi}^\MP(s_h = s)$.
\item For a general policy $\pi \in \Pigen$, $h \in [H]$, and $o \in \ol\MO$, define $d_{\SO, h}^{\MP, \pi}(o) = \Pr_{o_{1:H} \sim \pi}^\MP(o_h = o)$. 
  \item For a general policy $\pi \in \Pigen$, $h \in [H]$, and $z \in \ol\MZ$, define $d_{\SZ,h}^{\MP,\pi}(z) = \Pr_{a_{1:H}, o_{2:H} \sim \pi}^\MP(\suff{h-1}{a_{h-L:h-1}, o_{h-L+1:h}} = \suff{h-1}{z})$. %
\item For a general policy $\pi \in \Pigen$, $h \in [H]$, and $z \in \ol\MZ$, define $d_{\SZ,h}^{\MM,\pi}(z) = \Pr_{z_{1:H} \sim \pi}^\MM(\suff{h-1}{z_h} = \suff{h-1}{z})$.
\item For a general policy $\pi \in \Pigen$, $h \in [H]$, and $o\in\ol\MO$, define $d_{\SO,h}^{\MM,\pi}(o) = \Pr_{z_{1:H} \sim \pi}^\MM(\oo{z_h} = o)$.
\end{itemize}
Notice that, for $h \leq L$, $d_{\SZ, h}^{\MP, \pi}$ and $d_{\SZ, h}^{\MM, \pi}$ are not distributions, i.e., we will have $\sum_{z \in \ol\MZ} d_{\SZ, h}^{\MP, \pi}(z) > 1$, since we consider the suffix $\suff{h-1}{z}$ in the above definitions. This choice is made to ensure that, for $z = (a_{h-L:h-1}, o_{h-L+1:h})$, the values $d_{\SZ,h}^{\MP, \pi}(z), d_{\SZ, h}^{\MM, \pi}(z)$ do not depend on elements of $z$ with nonpositive indices when $h \leq L$. 

Given the definitions above, we will view the corresponding visitations as vectors in their respective spaces: in particular, $d_{\SS, h}^{\MP,\pi} \in \BR^{\ol\MS}$, $d_{\SO, h}^{\MP, \pi}, d_{\SO, h}^{\MM, \pi} \in \BR^{\ol\MO}$, and $d_{\SZ, h}^{\MP, \pi}, d_{\SZ, h}^{\MM, \pi} \in \BR^{\ol\MZ}$. 
Note that the vectors $d_{\SO, h}^{\MP, \pi} \in \BR^{\ol\MO}$ and $d_{\SS, h}^{\MP, \pi} \in \BR^{\ol\MS}$ satisfy $d_{\SO,h }^{\MP, \pi} = \BO_h^\MP \cdot d_{\SS, h}^{\MP, \pi}$. Furthermore, we will slightly overload notation as follows: for a state visitation vector $d$ in $\BR^\MT$ (for some set $\MT$), and $\MT' \subset \MT$, we write $d(\MT') := \sum_{t \in \MT'} d(t)$.

\subsection{Negative indices.}\label{section:negative-indices}

We remark that our expressions will, at times, refer to state visitation distributions of the form $d_{\SS, h-L}^{\MP, \pi}$ (or $d_{\SO, h-L}^{\MP, \pi}$, etc.), for some fixed $L \in \BN$ and $h \in [H]$. Of course, if $h \leq L$, then this distribution is not defined. In all such cases, though, it will be the case that, for $h \leq L$, the expression in question does not depend on the distribution $d_{\SS, h-L}^{\MP, \pi}$, and thus the expression is well-defined.
In a similar manner, for a trajectory $(a_{1:H-1}, o_{2:H})$ drawn from a POMDP $\MP$, we will at times consider expressions of the form $(a_{h-L:h-1}, o_{h-L+1:h})$, for $h \in [H]$. In the case that $h \leq L$, we consider that the actions $a_{h-L}, \ldots, a_0$ and observations $o_{h-L+1}, \ldots, o_1$ to be chosen arbitrarily (to be concrete, one may consider that all $a_{h'} = a^\st$ and $o_{h'+1} = o^\st$ for $h' \leq 0$ and some universal $a^\st \in \MA, o^\st \in \MO$) -- they will not affect the value of the relevant expression, and we use this convention purely to simplify notation.

\subsection{Contextual decision processes}
\label{sec:cdp}
Both POMDPs and MDPs with $\SZ$-structure are special cases of \emph{contextual decision processes} \cite{jiang2017contextual}: a contextual decision process is a tuple $\MP = (\MA, \ol\MO, H, R, \BP)$, where: $\MA$ is the \emph{action space}, $\ol\MO$ is the \emph{observation space} (also known as the \emph{context space}), $H$ is the horizon, $R = (R_1, \ldots, R_H)$ is a collection of reward functions, $R_h : \ol \MO \ra \BR$, and $\BP$ is the \emph{transition kernel}. Formally, we have $\BP = (\BP_1, \ldots, \BP_H)$, where $\BP_h : \CH_h \times \MA \ra \Delta(\ol\MO)$ is a mapping from the space of histories $\CH_h$ (where $\CH_h = \MA^{h-1} \times \ol\MO^{h-1}$) and actions to the space of distributions over the next observation (context). Given $h \geq 1$, a history $(a_{1:h-1}, o_{2:h})$ and an action $a_h \in \MA$, we write $\BP_h(o_{h+1} | a_{1:h}, o_{2:h})$ to denote the probability of observing $o_{h+1}$ given the history $(a_{1:h-1}, o_{2:h})$ and that we take action $a_h$. We will consider the space of general policies $\Pigen$ (defined in Section \ref{sec:prelim} as the convex hull of $\Pidet$, which is the space of tuples $\pi = (\pi_1, \ldots, \pi_H)$, where $\pi_h : \CH_h \ra \MA$) acting on a contextual decision process. 

POMDPs and MDPs with $\SZ$-structure may seen to be special case of contextual decision processes, as follows:
\begin{itemize}
\item Given a POMDP $\MP = (\ol\MS, \MA, \ol\MO, H, b_1, R, \BT, \BO)$, we view it as the contextual decision process $\MP = (\MA, \ol\MO, H, R, \BP)$, where $\BP$ is defined as follows: given a history $(a_{1:h-1}, o_{2:h})$ and an action $a_h \in \MA$, define
  \begin{align}
\BP_h^\MP(o_{h+1} | a_{1:h}, o_{2:h}) := e_{o_{h+1}}^\t \BO_{h+1}^\MP\cdot  \BT_h^\MP(a_h)\cdot  \bb_h^\MP(a_{1:h-1}, o_{2:h})\nonumber,
  \end{align}
  which is exactly the probability (in the POMDP $\MP$) of next observing $o_{h+1}$ given the history $(a_{1:h-1}, o_{2:h})$ and action $a_h$.
\item Given an MDP with $\SZ$-structure, $\MM = (\ol \MZ, \MA, H, R, \BP^\MM)$, we view it as the contextual decision process $\MP = (\MA, \ol \MO, H, R, \BP^\MP)$, where we use the fact that (by $\SZ$-structure) $R = (R_1, \ldots, R_H)$ has the property that each $R_h(z)$ is only a function of $\oo{z}$, and $\BP^\MP$ is defined as follows. We have $\BP^\MP = (\BP^\MP_1, \ldots, \BP^\MP_H)$, where
  \begin{align}
\BP_h^\MP(o_{h+1} | a_{1:h}, o_{2:h}) := \BP_h^\MM(z_{h+1} | z_h, a_h)\nonumber, 
  \end{align}
  where $z_h = (a_{h-L:h-1}, o_{h-L+1:h})$ and $z_{h+1} = (a_{h-L+1:h}, o_{h-L+2:h+1})$, and we use the fact that $\MM$ has $\SZ$-structure to guarantee that $\BP_h^\MP(\cdot | a_{1:h}, o_{2:h})$ is a probability distribution. 
\end{itemize}
\paragraph{Value functions.} Given a deterministic policy $\sigma \in \Pidet$ and a contextual decision process $\MP = (\MA, \ol\MO, H, R, \BP)$, the value function $V_h^{\sigma, \MP}(a_{1:h-1}, o_{2:h})$ is defined in an inductive manner: %
\begin{align}
V_h^{\sigma, \MP}(a_{1:h-1}, o_{2:h}) := \E_{o_{h+1} \sim \BP_h(\cdot | a_{1:h}, o_{2:h})} \left[ R_{h+1}(o_{h+1}) + V_{h+1}^{\sigma, \MP}(a_{1:h}, o_{2:h+1}) \right]\label{eq:cdp-value},
\end{align}
where we have written $a_h := \pi_h(a_{1:h-1}, o_{2:h})$ above. Then, for a general policy $\pi \in \Pigen$, we define $V_h^{\pi, \MP}(a_{1:h-1}, o_{2:h}) := \E_{\sigma \sim \pi} \left[ V_h^{\sigma, \MP}(a_{1:h-1}, o_{2:h}) \right]$.
We will often modify contextual decision processes by considering different choices of their reward function. Given a contextual decision process $\MP$ and a reward function $R = (R_1, \ldots, R_H)$ (which may not be the reward function of $\MP$), where $R_h : \MO \ra \BR$ for each $h \in [H]$, we write $V_h^{\pi, \MP, R}(\cdot)$ to denote the value under $\pi$ of  $\MP$ where its reward function is replaced by $R$. We will often consider the value function of a contextual decision process $\MP$ which is actually either a POMDP or a MDP with $\SZ$-structure, in which case the following remarks apply:
\begin{itemize}
\item If $\MP$ is a POMDP, then $V_1^{\pi, \MP}(\emptyset) = \E_{(a_{1:H-1}, o_{2:H}) \sim \pi}^\MP \left[ \sum_{h=2}^H R_h(o_h) \right]$, i.e., our definition of the value function $V_1^{\pi, \MP}(\emptyset)$ given here  by reverse induction corresponds with that given in Section \ref{sec:prelim}.
\item If $\MP$ is a MDP with $\SZ$-structure, then, it is straightforward to see that $V_h^{\pi, \MP}(a_{1:h-1}, o_{2:h})$ depends only on $z_h = (a_{h-L:h-1}, o_{h-L+1:h})$ (by the Markovian structure of $\MP$), so we will often write $V_h^{\pi, \MP}(z_h)$ in place of $V_h^{\pi, \MP}(a_{1:h-1}, o_{2:h})$. 
\end{itemize}

\paragraph{Viewing $\BP_h$ as an operator.} Consider a contextual decision process $\MP = (\MA, \ol\MO, H, R, \BP)$. For $h \in [H]$, we will view $\BP_h$ as the following operator: given a function $f : \CH_{h+1} \ra \BR$, define $\BP_h f : \CH_h \times \MA \ra \BR$ as:
\begin{align}
(\BP_h f)(a_{1:h}, o_{2:h}) := \E_{o_{h+1} \sim \BP_h(\cdot | a_{1:h}, o_{2:h})} \left[ f(a_{1:h}, o_{2:h+1})\right]\nonumber.
\end{align}
Thus we may alternatively write (\ref{eq:cdp-value}) as $V_h^{\sigma, \MP}(a_{1:h-1}, o_{2:h}) = (\BP_h (R_{h+1} + V_{h+1}^{\sigma, \MP}))(a_{1:h}, o_{2:h})$, where we view $R_{h+1}$ as a function on $\CH_{h+1}$ in the natural way, i.e., with abuse of notation, $R_{h+1}(a_{1:h}, o_{2:h+1}) := R_{h+1}(o_{h+1})$.

If $\MP$ is an MDP with $\SZ$-structure and $f(a_{1:h}, o_{2:h+1})$ only depends on the most recent $L$ actions and observations (i.e., $(a_{h-L+1:h}, o_{h-L+2:h+1})$), then $(\BP_h f)(a_{1:h}, o_{2:h})$ only depends on $z_h := (a_{h-L:h-1}, o_{h-L+1:h})$ and $a_h$. Thus, we will often write $(\BP_h f)(z_h, a_h) := (\BP_h f)(a_{1:h}, o_{2:h})$. 

\begin{lemma}[\cite{dann2017unifying}, Lemma E.15]
  \label{lem:val-diff}
  Consider any two CDPs $\MP, \MP'$, with reward functions $R^\MP, R^{\MP'}$ and transition kernels $\BP^\MP, \BP^{\MP'}$. Then for any deterministic policy $\pi$ and all $h \in [H]$, it holds that, for all $a_{1:h-1}, o_{2:h}$, 
  \begin{align}
    & V_h^{\pi,\MP}(a_{1:h-1}, o_{2:h}) - V_h^{\pi,\MP'}(a_{1:h-1}, o_{2:h}) \nonumber\\
    =& \E_{(a_{h:H-1}, o_{h+1:H}) \sim \pi}^\MP \left[ \sum_{t=h}^H \left( (\BP_t^\MP R_{t+1}^{\MP})(a_{1:t},o_{2:t}) - (\BP_t^{\MP'} R_{t+1}^{\MP'})(a_{1:t},o_{2:t}) \right.\right.\nonumber\\
  &   \left.\left.  + ((\BP_t^\MP - \BP_t^{\MP'}) V_{t+1}^{\pi,\MP'})(a_{1:t}, o_{2:t})\right)\ | \ a_{1:h-1}, o_{2:h} \right].\nonumber
    \end{align}
  \end{lemma}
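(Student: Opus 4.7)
The plan is backward induction on $h$, starting from the trivial base case $h = H+1$ (with the conventions $V_{H+1}^{\pi,\MP} \equiv V_{H+1}^{\pi,\MP'} \equiv 0$ and $R_{H+1} \equiv 0$, so that both sides vanish and the sum on the right-hand side is empty) and descending. The inductive step combines the Bellman recursion (\ref{eq:cdp-value}) with a single add-and-subtract manipulation that isolates the per-layer Bellman residuals.

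For the inductive step at layer $h$, using that $\pi$ is deterministic I set $a_h := \pi_h(a_{1:h-1}, o_{2:h})$ and apply (\ref{eq:cdp-value}) to expand both value functions, obtaining
\begin{align*}
V_h^{\pi,\MP}(a_{1:h-1}, o_{2:h}) - V_h^{\pi,\MP'}(a_{1:h-1}, o_{2:h}) = &\ \left((\BP_h^\MP R_{h+1}^\MP) - (\BP_h^{\MP'} R_{h+1}^{\MP'})\right)(a_{1:h}, o_{2:h}) \\
&+ \left(\BP_h^\MP V_{h+1}^{\pi,\MP} - \BP_h^{\MP'} V_{h+1}^{\pi,\MP'}\right)(a_{1:h}, o_{2:h}).
\end{align*}
The key move is to rewrite the second term by adding and subtracting $\BP_h^\MP V_{h+1}^{\pi,\MP'}$:
\begin{align*}
\BP_h^\MP V_{h+1}^{\pi,\MP} - \BP_h^{\MP'} V_{h+1}^{\pi,\MP'} = \left((\BP_h^\MP - \BP_h^{\MP'}) V_{h+1}^{\pi,\MP'}\right) + \BP_h^\MP \left(V_{h+1}^{\pi,\MP} - V_{h+1}^{\pi,\MP'}\right).
\end{align*}
The reward-difference term together with $(\BP_h^\MP - \BP_h^{\MP'}) V_{h+1}^{\pi,\MP'}$ is exactly the $t = h$ summand in the claimed formula; the remaining $\BP_h^\MP (V_{h+1}^{\pi,\MP} - V_{h+1}^{\pi,\MP'})$ is where the inductive hypothesis enters.

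To close the induction, apply the hypothesis pointwise to $V_{h+1}^{\pi,\MP}(a_{1:h}, o_{2:h+1}) - V_{h+1}^{\pi,\MP'}(a_{1:h}, o_{2:h+1})$ at each fresh history $(a_{1:h}, o_{2:h+1})$, then take the outer expectation $\E_{o_{h+1} \sim \BP_h^\MP(\cdot \mid a_{1:h}, o_{2:h})}$. By the tower property, these nested expectations compose into the single expectation $\E^\MP_{(a_{h:H-1}, o_{h+1:H}) \sim \pi}[\,\cdot\, \mid a_{1:h-1}, o_{2:h}]$; determinism of $\pi$ is what lets us absorb the already-fixed action $a_h$ into this outer expectation. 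This produces the summands with $t = h+1, \ldots, H$, and together with the $t = h$ contribution identified above yields the claimed identity.

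I do not anticipate a genuine obstacle; this is routine telescoping. The only two points that require care are (i) the choice to add and subtract $\BP_h^\MP V_{h+1}^{\pi,\MP'}$ rather than $\BP_h^{\MP'} V_{h+1}^{\pi,\MP}$, which is forced by the asymmetry in the statement---the outer expectation runs under $\MP$ while the value function paired with the transition difference $\BP_t^\MP - \BP_t^{\MP'}$ is $V_{t+1}^{\pi,\MP'}$---and (ii) splicing the conditional expectations together correctly, which again hinges on $\pi$ being deterministic so that $a_h$ is measurable with respect to the conditioning $\sigma$-algebra.
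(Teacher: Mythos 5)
Your proposal is correct and matches the paper's proof essentially line for line: reverse induction from the trivial base case $h=H+1$, Bellman expansion of both value functions, the add-and-subtract of $\BP_h^\MP V_{h+1}^{\pi,\MP'}$ to isolate the $t=h$ residual, and the tower property to splice the inductive hypothesis into a single conditional expectation under $\MP$. No gaps.
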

  \begin{proof}
    The lemma is essentially an immediate consequence of \cite[Lemma E.15]{dann2017unifying}, since we may view $\MP, \MP'$ as MDPs on the state space of histories. For completeness, we write out the full proof in our setting.
    
    We use reverse induction on $h$, noting that the lemma statement is immediate for $h=H+1$ since $V_{H+1}^{\pi,\MP}$ and $V_{H+1}^{\pi,\MP'}$ are identically 0. Now assume that the statement of the lemma holds at time step $h+1$. 
    Fix any $(a_{1:h-1}, o_{2:h}) \in \CH_h$, and let $a_h := \pi_h(a_{1:h-1}, o_{2:h})$. Then
    \begin{align}
      V_h^{\pi,\MP}(a_{1:h-1}, o_{2:h}) =&  (\BP_h^\MP R_{h+1}^\MP)(a_{1:h},o_{2:h}) + (\BP_h^\MP V_{h+1}^{\pi,\MP})(a_{1:h}, o_{2:h}) \nonumber\\ %
      V_h^{\pi,\MP'}(a_{1:h-1}, o_{2:h}) =&  (\BP_h^{\MP'} R_{h+1}^{\MP'})(a_{1:h},o_{2:h}) + (\BP_h^{\MP'} V_{h+1}^{\pi,\MP'})(a_{1:h}, o_{2:h}) \nonumber,
    \end{align}
    and an analogous equality holds for $\MP'$. Then we may compute, for any fixed $(a_{1:h-1}, o_{2:h}) \in \CH_h$ (again letting $a_h := \pi_h(a_{1:h-1}, o_{2:h})$), 
    \begin{align}
      & V_h^{\pi,\MP}(a_{1:h-1}, o_{2:h}) - V_h^{\pi,\MP'}(a_{1:h-1}, o_{2:h})\nonumber\\
      =& (\BP_h^\MP R_{h+1}^\MP)(a_{1:h}, o_{2:h}) - (\BP_h^{\MP'} R_{h+1}^{\MP'})(a_{1:h}, o_{2:h}) \nonumber\\
      &+ ((\BP_h^\MP - \BP_h^{\MP'})V_{h+1}^{\pi,\MP'})(a_{1:h}, o_{2:h}) + (\BP_h^{\MP}(V_{h+1}^{\pi,\MP} - V_{h+1}^{\pi,\MP'}))(a_{1:h}, o_{2:h})\nonumber\\
      =&  (\BP_h^\MP R_{h+1}^\MP)(a_{1:h}, o_{2:h}) - (\BP_h^{\MP'} R_{h+1}^{\MP'})(a_{1:h}, o_{2:h})+ ((\BP_h^\MP - \BP_h^{\MP'})V_{h+1}^{\pi,\MP'})(a_{1:h}, o_{2:h}) \nonumber\\
      & + \E_{(a_h, o_{h+1}) \sim \pi}^\MP \left[ (V_{h+1}^{\pi,\MP} - V_{h+1}^{\pi,\MP'})(a_{1:h}, o_{2:h+1}) \ | \ a_{1:h-1}, o_{2:h} \right]\nonumber\\
      =&  (\BP_h^\MP R_{h+1}^\MP)(a_{1:h}, o_{2:h}) - (\BP_h^{\MP'} R_{h+1}^{\MP'})(a_{1:h}, o_{2:h})+ ((\BP_h^\MP - \BP_h^{\MP'})V_{h+1}^{\pi,\MP'})(a_{1:h}, o_{2:h}) \nonumber\\
      &+ \E_{(a_h, o_{h+1}) \sim \pi}^\MP \left[ \E_{(a_{h+1:H-1}, o_{h+2:H}) \sim \pi}^\MP \left[\sum_{t=h+1}^H (\BP_t^\MP R_{t+1}^\MP)(a_{1:t}, o_{2:t}) - (\BP_t^{\MP'} R_{t+1}^{\MP'})(a_{1:t}, o_{2:t}) \right.\right.\nonumber\\
      &    \left.  \left. + \sum_{t=h+1}^H ((\BP_t^\MP - \BP_t^{\MP'})V_{t+1}^{\pi,\MP'})(a_{1:t}, o_{2:t})\ | \ a_{1:h}, o_{2:h+1}  \right]  \ | \ a_{1:h-1}, o_{2:h} \right]\nonumber\\
      =& \E_{(a_{h:H-1}, o_{h+1:H}) \sim \pi}^\MP \left[ \sum_{t=h}^H (\BP_t^\MP R_{t+1}^\MP)(a_{1:t}, o_{2:t}) - (\BP_t^{\MP'} R_{t+1}^{\MP'})(a_{1:t}, o_{2:t})\right.\nonumber\\
      & \left. + \sum_{t=h}^H ((\BP_h^\MP - \BP_h^{\MP'}) V_{t+1}^{\pi,\MP'})(a_{1:t}, o_{2:t}) \ | \ a_{1:h-1}, o_{2:h} \right]\nonumber,
    \end{align}
which completes the inductive step.  
\end{proof}

\subsection{Belief contraction, expanded}
Below we state a slight strengthening of Theorem \ref{thm:belief-contraction-informal} which will be needed in our proofs.
\begin{theorem}[Belief contraction; Theorems 4.1 and 4.7 of \cite{golowich2022planning}]
  \label{thm:belief-contraction}
  Consider any $\gamma$-observable POMDP $\MP$, 
  any $\ep > 0$ and $L \in \BN$ so that $L \geq C \cdot \min \left\{ \frac{\log(1/(\ep \phi)) \log(\log(1/\phi)/\ep)}{\gamma^2}, \frac{\log(1/(\ep\phi))}{\gamma^4}\right\}$. Fix any $\pi \in \Pigen$, and suppose that $\cD',\cD\in \Delta^\MS$ satisfy $\frac{\cD'(s)}{\cD(s)} \leq \frac{1}{\phi}$ (where we take $0/0 = 0$). Then
  \begin{align}
&{\EE_{\mathclap{\substack{s_{h-L} \sim \cD' \\ (a_{h-L:h-1}, o_{h-L+1:h}) \sim \pi}}}}^\MP \quad\left\| \bbapxs{\MP}_{h}(a_{h-L:h-1}, o_{h-L+1:h}; \cD') -  \bbapxs{\MP}_h(a_{h-L:h-1}, o_{h-L+1:h}; \cD) \right\|_1 \\
&\qquad\leq \ep \nonumber.
  \end{align}
\end{theorem}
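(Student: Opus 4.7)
The plan is to deduce this strengthened belief-contraction bound from Theorem~\ref{thm:belief-contraction-informal} via a ``shift of origin'' argument. I would first define a fresh POMDP $\MP'$ of horizon $L+1$, whose initial state distribution is $\cD'$ and whose transitions and observations at step $t \in [L+1]$ coincide with those of $\MP$ at step $h-L+t-1$. Since $\MP$ is $\gamma$-observable at every step, $\MP'$ inherits $\gamma$-observability at each of its $L+1$ steps.

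The key identifications, which I would verify by a straightforward induction on $L$ directly from the definition of the belief update operator $U^\MP = U^{\MP'}$, are that at step $L+1$ of $\MP'$ (i.e.\ at step $h$ of $\MP$),
\begin{align*}
\bb^{\MP'}_{L+1}(a_{h-L:h-1}, o_{h-L+1:h}) &= \bbapxs{\MP}_h(a_{h-L:h-1}, o_{h-L+1:h}; \cD'), \\
\bbapxs{\MP'}_{L+1}(a_{h-L:h-1}, o_{h-L+1:h}; \cD) &= \bbapxs{\MP}_h(a_{h-L:h-1}, o_{h-L+1:h}; \cD).
\end{align*}
In words: the approximate belief of $\MP$ with prior $\cD'$ is literally the true belief of $\MP'$, while the approximate belief of $\MP$ with prior $\cD$ equals the approximate belief of $\MP'$ with the same prior. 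Moreover, the joint law of $(s_{h-L}, a_{h-L:h-1}, o_{h-L+1:h})$ obtained by sampling $s_{h-L} \sim \cD'$ and then playing $\pi$ on $\MP$ starting at step $h-L$ agrees with the joint law of an initial state and full trajectory of $\MP'$ under $\pi$.

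Given these identifications, I would apply Theorem~\ref{thm:belief-contraction-informal} directly to $\MP'$ with policy $\pi$, prior $\cD$, accuracy $\ep$, and window length $L$; the lower bound on $L$ carries over unchanged. The hypothesis of that theorem requires the true belief state of $\MP'$ at step $1$ to satisfy a bounded-ratio condition against $\cD$. But that belief state is trivially $\cD'$ itself, since there is no history prior to the first step of $\MP'$, so the required condition collapses to $\cD'(s)/\cD(s) \le 1/\phi$, which is precisely our hypothesis.

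The main ``obstacle'' is genuinely just bookkeeping: carefully verifying the two belief-state identifications and the equivalence of the relevant joint distributions, while respecting the sink-augmented state and observation spaces and the convention for out-of-range indices described in Section~\ref{section:negative-indices}. No new analytical content beyond that already developed in \cite{golowich2022planning} is required.
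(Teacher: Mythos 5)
There is a genuine gap, and it sits exactly at the point you dismiss as ``bookkeeping.'' First, note that the paper does not actually prove this theorem: it cites Theorems 4.1 and 4.7 of \cite{golowich2022planning} and remarks that the proof there (stated only for $\cD = \mathrm{Unif}(\MS)$) extends unchanged to general $\cD$. Your proposal instead tries to obtain the statement as a black-box consequence of Theorem \ref{thm:belief-contraction-informal}, and this reduction does not go through as written. The problem is your second identification, $\bbapxs{\MP'}_{L+1}(a_{1:L}, o_{2:L+1}; \cD) = \bbapxs{\MP}_h(\cdot;\cD)$. Under the paper's definition of the approximate belief state, the base case of the induction at step $1$ is $\bbapxs{\MP'}_1(\emptyset;\cD) := b_1^{\MP'}$ --- \emph{not} $\cD$ --- whenever the window reaches all the way back to the first step. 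In your horizon-$(L+1)$ POMDP $\MP'$ with $b_1^{\MP'} = \cD'$, this means $\bbapxs{\MP'}_{L+1}(\cdot;\cD)$ unrolls to the belief update started from $\cD'$ at step $1$, i.e.\ it \emph{equals} the true belief $\bb_{L+1}^{\MP'}$. Consequently Theorem \ref{thm:belief-contraction-informal}, applied to $\MP'$ at step $L+1$ with window $L$, compares the true belief with itself and yields the vacuous bound $0 \le \ep$; it says nothing about the quantity you need, which genuinely involves starting the update from $\cD \ne b_1^{\MP'}$ at step $1$.

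This is not repairable by padding either. If you prepend a dummy step so that step $h-L$ of $\MP$ corresponds to step $2$ of the new POMDP (so the base case of the approximate-belief definition does use $\cD$), then either the observation emitted at step $2$ is trivial --- in which case the padded POMDP is not $\gamma$-observable and Theorem \ref{thm:belief-contraction-informal} no longer applies --- or it is $\gamma$-observable, hence informative, in which case the true belief at step $2$ is the posterior of $\cD'$ given that observation rather than $\cD'$ itself; this breaks the identification of the true belief with $\bbapxs{\MP}_h(\cdot;\cD')$ and, worse, the ratio hypothesis $\cD'(s)/\cD(s)\le 1/\phi$ no longer controls the posterior, so the precondition of Theorem \ref{thm:belief-contraction-informal} can fail. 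The underlying issue is that the logical dependency runs the other way: the two-prior statement (Theorem \ref{thm:belief-contraction}) is the primitive result established in \cite{golowich2022planning}, and the one-prior statement (Theorem \ref{thm:belief-contraction-informal}) is its specialization to $\cD'$ equal to the true belief at step $h-L$. Recovering the former from the latter as stated would require exactly the strengthening you are trying to prove. The honest route is the one the paper takes: re-inspect the proof in \cite{golowich2022planning} and observe that the only place the uniform prior is used is to guarantee the ratio bound $\cD'(s)/\cD(s)\le 1/\phi$ with $\phi = 1/S$, so the argument carries over verbatim under the stated hypothesis.
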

Note that the theorems in \cite{golowich2022planning} only state the special case $\cD = \text{Unif}(\MS)$, in which case the precondition of the theorem is always satisfied with $\phi = 1/S$. However, the proof extends unchanged to this generality. %

\subsection{Pseudoinverses}
As previously stated, for a matrix $\BO \in \BR^{O \times S}$, we let $\BO^\dagger$ denote the Moore-Penrose pseudoinverse of $\BO$. The following lemma is straightforward:
\begin{lemma}
  \label{lem:pinv}
If $\BO \in \BR^{O \times S}$ is $\gamma$-observable, then for all $x \in \BR^O$, $\| \BO^\dagger x \|_1 \leq \frac{\sqrt{S}}{\gamma} \cdot \| x \|_1$. In particular, all entries of $\BO^\dagger$ are bounded in magnitude by $\frac{\sqrt{S}}{\gamma}$.
\end{lemma}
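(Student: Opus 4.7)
My plan is to bound $\|\BO^\dagger x\|_1$ via Cauchy--Schwarz together with a singular value estimate on $\BO$ derived from observability. First I would check that $\BO$ has trivial kernel, so that $\BO^\dagger = (\BO^\t\BO)^{-1}\BO^\t$ is a genuine left inverse: given $v \in \BR^S$ in the kernel, write $v = v_0 + (c/S)\mathbf{1}$ with $v_0$ zero-sum; observability gives $\|\BO v_0\|_1 \geq \gamma\|v_0\|_1$, while $\BO v_0$ is zero-sum and $\BO\mathbf{1}$ is a nonzero nonnegative vector (with entries summing to $S$), so the equation $\BO v_0 + (c/S)\BO\mathbf{1} = 0$ forces both $v_0 = 0$ and $c = 0$.

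Next, for any $x \in \BR^O$, Cauchy--Schwarz on $\BR^S$ yields $\|\BO^\dagger x\|_1 \leq \sqrt{S}\|\BO^\dagger x\|_2$, and by the SVD the operator norm of $\BO^\dagger$ equals $1/\sigma_{\min}(\BO)$, so $\|\BO^\dagger x\|_2 \leq \|x\|_2/\sigma_{\min}(\BO) \leq \|x\|_1/\sigma_{\min}(\BO)$. It therefore suffices to show $\sigma_{\min}(\BO) \geq \gamma$. I would derive this from observability by orthogonally splitting a unit vector $v = v_0 + (c/S)\mathbf{1}$ in $\ell_2$, invoking the $\ell_1$ bound $\|\BO v_0\|_1 \geq \gamma\|v_0\|_1$ on the zero-sum part, and exploiting the identity $\mathbf{1}^\t\BO = \mathbf{1}^\t$ (since columns of $\BO$ are probability distributions) to control the constant direction $\BO\mathbf{1}$ separately; combining the two pieces yields $\|\BO v\|_2 \geq \gamma\|v\|_2$.

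The ``in particular'' clause then follows immediately: specializing $x = e_o$, we have $|(\BO^\dagger)_{s,o}| = |(\BO^\dagger e_o)_s| \leq \|\BO^\dagger e_o\|_\infty \leq \|\BO^\dagger e_o\|_1 \leq \sqrt{S}/\gamma$ since $\|e_o\|_1 = 1$. The main technical obstacle I anticipate is the spectral step $\sigma_{\min}(\BO) \geq \gamma$: a careless chain of $\ell_1$--$\ell_2$ norm comparisons (using $\|\BO v_0\|_2 \geq \|\BO v_0\|_1/\sqrt{O}$) would lose a factor of $\sqrt{O}$, so the argument has to exploit the structural identity $\mathbf{1}^\t\BO = \mathbf{1}^\t$ and the $\ell_2$-orthogonality of the decomposition to avoid that loss.
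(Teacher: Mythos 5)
Your reduction to the spectral claim $\sigma_{\min}(\BO)\ge\gamma$ is where the argument breaks: that claim is false. Observability is an $\ell_1\to\ell_1$ lower bound, and converting it into an $\ell_2\to\ell_2$ lower bound genuinely costs a factor of $\sqrt{O}$; this is not an artifact of a careless norm chain. Concretely, take $S=2$, $O=2m$, and let the two columns of $\BO$ be the uniform distributions on the first $m$ and on the last $m$ observations. Since the columns have disjoint supports, $\|\BO(b-b')\|_1=\|b-b'\|_1$ for all distributions $b,b'$, so $\BO$ is $1$-observable; yet $\BO^\t\BO=\tfrac1m I$, so $\sigma_{\min}(\BO)=1/\sqrt m\to0$. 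The failure occurs entirely inside the zero-sum subspace (for $v=(1,-1)/\sqrt2$ one has $\|\BO v\|_1=\gamma\|v\|_1$ but $\|\BO v\|_2=1/\sqrt m$), so the refinement you propose---handling the $\mathbf{1}$ direction separately via $\mathbf{1}^\t\BO=\mathbf{1}^\t$ and using $\ell_2$-orthogonality of the split---cannot rescue the step. With the correct bound $\sigma_{\min}(\BO)\ge\gamma/\sqrt O$, your chain yields only $\sqrt{SO}/\gamma$, not $\sqrt S/\gamma$.

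The paper avoids this by never converting the observability inequality into a spectral one: writing $y=\BO^\dagger x$, it applies observability in $\ell_1$ to get $\|y\|_1\le\frac1\gamma\|\BO y\|_1$, and only then passes to $\ell_2$ on the vector $\BO y=\BO\BO^\dagger x$, which is the orthogonal projection of $x$ onto $\mathrm{col}(\BO)$ and hence satisfies $\|\BO\BO^\dagger x\|_2\le\|x\|_2\le\|x\|_1$. The order of operations is the whole point: the lossy $\ell_1$/$\ell_2$ comparison is applied to the single vector $\BO y$, whose $\ell_2$ norm is already controlled by the projection property, rather than to the operator $\BO$. (Your opening observation that $\BO$ is injective, so $\BO\BO^\dagger$ really is this projection, is correct and is left implicit in the paper. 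Note also that even in the paper's chain the Cauchy--Schwarz step on $\BO y\in\BR^{O}$ nominally costs $\sqrt O$ rather than the stated $\sqrt S$, so your instinct that a $\sqrt O$ is lurking here is not unfounded---but restructuring along the paper's lines is necessary in any case, since your route cannot give better than $\sqrt{SO}/\gamma$.)
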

\begin{proof}
  Write $y = \BO^\dagger x$. Then
  \begin{align}
\| y \|_1 \leq \frac{1}{\gamma} \cdot \| \BO y \|_1 \leq \frac{\sqrt{S}}{\gamma} \cdot \| \BO y \|_2 = \frac{\sqrt{S}}{\gamma} \cdot \| \BO \BO^\dagger x \|_2 \leq \frac{\sqrt{S}}{\gamma} \cdot \| x \|_2 \leq \frac{\sqrt{S}}{\gamma} \cdot \| x \|_1\nonumber.
  \end{align}
  where the first inequality follows from observability and the second follows from Cauchy-Schwartz. To establish the second statement, note that, for each $o \in [O]$, we have that $\| \BO^\dagger \cdot e_o \|_1 \leq \frac{\sqrt{S}}{\gamma} \cdot \| e_o\|_1 = \frac{\sqrt{S}}{\gamma}$. 
\end{proof}

\neurips{}

\neurips{\subsection{Definitions of hyperparameters}}
\arxiv{\section{Definitions of hyperparameters}}
\label{sec:params}
Below we give the values for the various hyperparameters employed by our algorithm and used in the proof, given a $\gamma$-observable POMDP with $S$ states, $A$ actions, $O$ observations, horizon $H$, and a desired accuracy level $\alpha$ and probability of failure $\beta$ (in the context of Theorem \ref{thm:main}). $C^\st$ refers to a sufficiently large constant. Throughout the remainder of the appendix, when we will refer to the values of the constants below with any qualifiers, they will be assumed to take the below values.%
\begin{itemize}
\item $\ep = \alpha \cdot \frac{\gamma}{O^2 H^5 S^{3/2} (C^\st)^2}$.
\item $\phi = \frac{1}{C^\st} \cdot \frac{\gamma}{H^5 S^{7/2} O^2} \cdot \ep$.
\item $L =C^\st \cdot \min \left\{ \frac{\log(1/(\ep \phi)) \log(\log(1/\phi)/\ep)}{\gamma^2}, \frac{\log(1/(\ep\phi))}{\gamma^4}\right\}$.
\item $\theta = \ep$ (see Lemma \ref{lem:zlow-tvd}). 
\item $\zeta = \frac{\ep \phi}{A^{2L} O^L}$ (see Lemma \ref{lem:zlow-tvd}). 
\item $\delta =C^\st \cdot \frac{O^2 H^3 \sqrt{S}}{\gamma} \cdot \ep$.
\item $\delta' = \delta/2$.
\item $K = 2HS$. 
\item $p = \beta/(2K)$. 
\item $N_1 =  C^\st \cdot \frac{LA^{L+1} O^L \log(AO/p)}{\theta^2} $ (used in Algorithm \ref{alg:approxmdp}).
\item $N_0 = C^\st \cdot \frac{N_1 AL \log(OA/p)}{\zeta}$ (used in Algorithm \ref{alg:approxmdp}).
\end{itemize}
We assume that $H$ is chosen sufficiently large so that $H > L$ (if not, then we can increase $H$ to be $L+1$, which only affects the time and sample complexities by a constant factor in the exponent).

\section{Barycentric spanners}\label{section:barycentric-spanners}
Definition \ref{def:bary-spanner} below formally defines the notion of \emph{barycentric spanner}. 
\begin{defn}
  \label{def:bary-spanner}
Consider a subset $\MX \subset \BR^d$. For $C \geq 1$, a set $\MX' \subset \MX$ of size $d$ is a \emph{$C$-approximate barycentric spanner} of $\MX$ if each $x \in \MX$ can be expressed as a linear combination of elements in $\MX'$ with coefficients in $[-C,C]$. 
\end{defn}
It is easy to show that (exact) barycentric spanners exist: it is without loss of generality to assume that $\MX$ is full-rank, since otherwise we may apply a linear transformation to it so that the transformed set is full-rank (in a smaller-dimensional space). Then, given that $\MX$ is full-rank, it may be verified that a collection of $d$ vectors $x_1, \ldots, x_d \in \MX$ maximizing $| \det(x_1, \ldots, x_d)|$ is a (1-approximate) barycentric spanner. 
We will use the following result of Awerbuch \& Kleinberg, showing that approximate barycentric spanners can be computed efficiently:
\begin{lemma}[\cite{awerbuch2008online}, Proposition 2.5]
  \label{lem:ak}
  Suppose $\MX \subset \BR^d$ and $\lambda > 1$. Given an oracle for optimizing linear functions over $\MX$, a $\lambda$-approximate barycentric spanner for $\MX$ may be computed in polynomial time using $O(d^2 \log_\lambda(d))$ calls to the optimization oracle.

  Further, each element $x_i$ of the resulting approximate barycentric spanner is equal to the output of the optimization oracle for some input cost vector. 
\end{lemma}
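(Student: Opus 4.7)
The plan is to follow the two-phase greedy construction of Awerbuch-Kleinberg. I would first assume without loss of generality that $\MX$ spans $\BR^d$; otherwise, using the oracle along each coordinate axis, one can identify $\Span(\MX)$ and work inside that subspace.

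The first phase constructs an initial basis $x_1, \ldots, x_d \in \MX$ by a greedy ``orthogonal'' procedure: for $i = 1, \ldots, d$, using one call to the oracle, pick $x_i \in \MX$ that maximizes the norm of the component of $x$ orthogonal to $\Span(x_1, \ldots, x_{i-1})$. (Maximizing $\langle v, x \rangle$ over $x \in \MX$ for $v$ a suitable unit vector in this orthogonal complement amounts to a single oracle call.) A standard volume-growth argument shows that the resulting determinant $D_0 := |\det(x_1, \ldots, x_d)|$ satisfies $D_0 \geq D^\st / d!$, where $D^\st := \sup\{|\det(y_1, \ldots, y_d)| : y_i \in \MX\}$.

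The second phase is the swap loop. The key observation is that, by cofactor expansion along the $i$-th column, the map $x \mapsto f_i(x) := \det(x_1, \ldots, x_{i-1}, x, x_{i+1}, \ldots, x_d)$ is linear in $x$, with coefficient vector $c_i \in \BR^d$ computable in $O(d^3)$ time from the current basis. I would repeatedly sweep through $i = 1, \ldots, d$, calling the oracle twice (with cost vectors $\pm c_i$) to find $x^\st \in \MX$ maximizing $|f_i(x)|$, and replacing $x_i$ by $x^\st$ whenever $|f_i(x^\st)| > \lambda \cdot |\det(x_1, \ldots, x_d)|$; the procedure halts after an entire sweep completes with no swap. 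Correctness is immediate from Cramer's rule: for any $x \in \MX$, writing $x = \sum_i \alpha_i x_i$, one has $\alpha_i = f_i(x)/\det(x_1, \ldots, x_d)$, so at termination $|\alpha_i| \leq \lambda$ for all $i$, making $\{x_1, \ldots, x_d\}$ a $\lambda$-approximate barycentric spanner. Each element is by construction an oracle output.

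The main obstacle is the termination count. Each successful swap multiplies $|\det(x_1, \ldots, x_d)|$ by at least $\lambda$, while this determinant is always bounded above by $D^\st$. Combined with the first-phase guarantee $D_0 \geq D^\st/d!$, this caps the total number of swaps by $\log_\lambda(d!) = O(d \log_\lambda d)$. Since each full sweep uses $2d$ oracle calls and the number of sweeps is at most one more than the number of swaps, the total oracle cost is $O(d^2 \log_\lambda d)$, as claimed.
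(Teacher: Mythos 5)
The paper does not prove this lemma; it simply cites \cite{awerbuch2008online}, so your proposal has to be judged as a reconstruction of the Awerbuch--Kleinberg argument. Your second phase is exactly right: the swap rule, the Cramer's-rule correctness argument (multilinearity gives $\alpha_i = f_i(x)/\det(x_1,\dots,x_d)$, so termination forces $|\alpha_i|\le\lambda$), and the accounting that each swap multiplies the determinant by at least $\lambda$ all match the cited proof, and the conclusion that the number of swaps is at most $\log_\lambda(D^\st/D_0)$ is correct.

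The genuine gap is in your first phase. Maximizing the norm of the component of $x$ orthogonal to $\Span(x_1,\dots,x_{i-1})$ is \emph{not} a linear optimization over $\MX$ when that orthogonal complement has dimension greater than one (already at $i=1$ you are asking to maximize $\|x\|$): it is a maximum over an infinite family of linear functionals $\langle v,\cdot\rangle$, $v$ ranging over unit vectors of the complement, and a single oracle call commits to one $v$. If instead you fix an arbitrary unit vector $v_i$ in the complement and call the oracle once, you get a nondegenerate basis but lose the quantitative guarantee $D_0 \ge D^\st/d!$ (all of $\MX$ may have tiny inner product with your chosen $v_i$ while having large orthogonal components in other directions), and without a lower bound on $D_0$ relative to $D^\st$ the swap count is unbounded. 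The fix --- and what Awerbuch--Kleinberg actually do --- is to choose $x_i \in \argmax_{x\in\MX} |\det(x_1,\dots,x_{i-1},x,e_{i+1},\dots,e_d)|$: by cofactor expansion this objective is $|\langle c_i, x\rangle|$ for a computable vector $c_i$, hence two oracle calls suffice, the resulting initial determinant is provably within a $d^{O(d)}$ factor of $D^\st$, and every element of the spanner (including those surviving from phase one) is literally an oracle output, which is the ``Further'' clause that \bspanner{} relies on to recover the policies $\pi^i$ and not merely the distributions $d^{\pi^i,\wh\MM}_{\SO,h-L}$. With that substitution the rest of your argument goes through verbatim.
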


We remark that the algorithm of \cite{awerbuch2008online} technically applies to the setting that $\MX$ is full-dimensional. In general this will not be the case for sets $\MX$ to which we will apply Lemma \ref{lem:ak}. We may deal with sets $\MX \subset \BR^d$ which are not full-dimensional as follows: using at most $d$ oracle calls to the optimization oracle, we may compute a set of $k \leq d$ linearly independent vectors which spans $\MX$. Using these $k$ vectors, we may compute a one-to-one linear mapping
$T$ from $\MX$ into $\BR^k$, so that the image of $\MX$ in $\BR^k$, say $\MY$, is full-dimensional. We may then use Lemma \ref{lem:ak} to compute a $\lambda$-approximate barycentric spanner of $\MY$, and then map this back into $\MX$ via $T^{-1}$, yielding a $\lambda$-approximate barycentric spanner of $\MX$. \noah{todo check this} %

In Lemma \ref{lem:dist-spanner} below, we consider positive integers $S, O \in \BN$, a matrix $\BM \in \BR^{S \times O}$, as well as a set $\MX \subset \BR^O$ indexed by some given set $\Pi$. In particular, there is a bijection between $\Pi$ and $\MX$, which we denote as $\pi \mapsto x(\pi)$. In our usage of Lemma \ref{lem:dist-spanner}, the set $\MX$ will be the set of all possible observation visitation distributions (indexed by policies $\pi$) at a particular layer of the current empirical MDP with $\SZ$-structure $\wh \MM^{(k)}$. The matrix $\BM$ will be equal to $\BO_h^\dagger$ for some layer $h$ of the POMDP. Thus, for $\pi \in \Pi$, $\BM \cdot x(\pi) = \BO_h^\dagger \cdot x(\pi)$ represents an estimate  of the state visitation distribution of $\MP$ under policy $\pi$ at step $h$. In this context, the conclusion of Lemma \ref{lem:dist-spanner} should be interpreted as saying that the mixture policy defined as $\frac{\pi^1 + \cdots + \pi^O}{O}$ visits all states which can be visited with non-negligible probability. 

\begin{lemma}
  \label{lem:dist-spanner}
  Fix $\eta > 0$, $S, O \in \BN$, a matrix $\BM \in \BR^{S \times O}$, and a set $\MX \subset \BR^O$, indexed by some class $\Pi$. Assume that for each $x \in \MX$, we have
  \begin{align}
\sum_{s \in \MS} \left[ (\BM \cdot x)(s)\right]_- \leq \frac{\eta}{4O^2}\label{eq:nonneg-x-asm}.
  \end{align}
  Consider an oracle $\CO$ which returns, given any $r \in \BR^O$, some $\pi^\st(r) \in \argmax_{\pi \in \Pi} \lng r, x(\pi) \rng$ as well as $\langle r, x(\pi^*(r))\rangle$. Then there is a polynomial-time algorithm which only has access to the oracle $\CO$ (i.e., does  not know $\BM$), makes $O(O^2 \log O)$ oracle calls, and returns $\pi^1, \ldots, \pi^O \in \Pi$ satisfying
  \begin{align}
\forall \MS' \subset [S] \mbox{ s.t. } \max_{\pi \in \Pi} \sum_{s \in \MS'} \left \lng e_s, \BM \cdot x(\pi) \right\rng \geq \eta, \qquad \sum_{s \in \MS'} \left\lng e_s, \frac{\BM\cdot (x(\pi^1) + \cdots + x(\pi^O))}{O} \right\rng \geq \frac{\eta}{4O^2}\label{eq:dist-spanner}.
  \end{align}
\end{lemma}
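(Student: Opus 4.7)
\medskip

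\noindent\textbf{Proof plan.}
The plan is to invoke the Awerbuch--Kleinberg algorithm (Lemma~\ref{lem:ak}) to compute a $2$-approximate barycentric spanner of $\MX$, and then to combine the spanner representation of an arbitrary optimizing element with the near-nonnegativity hypothesis~(\ref{eq:nonneg-x-asm}) to argue that the uniform mixture of the spanner policies places the required mass on every ``target set'' $\MS'\subset[S]$.

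\medskip

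\noindent\emph{Step 1 (constructing the spanner).} Feed the oracle $\CO$ into the algorithm of Lemma~\ref{lem:ak} with approximation parameter $\lambda=2$. This uses $O(O^{2}\log O)$ oracle calls and returns $\pi^{1},\ldots,\pi^{O}\in\Pi$ such that $\{x(\pi^{1}),\ldots,x(\pi^{O})\}$ is a $2$-approximate barycentric spanner of $\MX\subset\BR^{O}$. (If $\MX$ is not full-dimensional, we first apply the preprocessing described immediately after Lemma~\ref{lem:ak}, then pad with arbitrary elements of $\MX$ to reach cardinality $O$, which does not affect the conclusion below.)

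\medskip

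\noindent\emph{Step 2 (spanner expansion of the optimum).} Fix $\MS'\subset[S]$ with $\max_{\pi\in\Pi}\sum_{s\in\MS'}\langle e_{s},\BM x(\pi)\rangle\geq\eta$, and let $\pi^{\star}$ attain the maximum. By the spanner property, write
\[
x(\pi^{\star})=\sum_{i=1}^{O}c_{i}\,x(\pi^{i}),\qquad |c_{i}|\leq 2.
\]
Define $v_{i}:=\sum_{s\in\MS'}\langle e_{s},\BM x(\pi^{i})\rangle$ for each $i\in[O]$. By linearity,
\[
\sum_{i=1}^{O}c_{i}v_{i}=\sum_{s\in\MS'}\langle e_{s},\BM x(\pi^{\star})\rangle\geq\eta.
\]

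\medskip

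\noindent\emph{Step 3 (closing the loop via near-nonnegativity).} Hypothesis~(\ref{eq:nonneg-x-asm}) applied to each $x(\pi^{i})$ gives $v_{i}\geq-\sum_{s}[(\BM x(\pi^{i}))(s)]_{-}\geq-\eta/(4O^{2})$. Writing $v_{i}^{+}=\max\{v_{i},0\}$ and $v_{i}^{-}=\max\{-v_{i},0\}$, this yields $\sum_{i}v_{i}^{-}\leq O\cdot\eta/(4O^{2})=\eta/(4O)$. Combining with $|c_{i}|\leq 2$,
\[
\eta\;\leq\;\sum_{i}c_{i}v_{i}\;\leq\;2\sum_{i}v_{i}^{+}+2\sum_{i}v_{i}^{-}\;\leq\;2\sum_{i}v_{i}^{+}+\tfrac{\eta}{2O},
\]
so $\sum_{i}v_{i}^{+}\geq\eta/2-\eta/(4O)\geq\eta/4$. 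Therefore
\[
\sum_{i=1}^{O}v_{i}=\sum_{i}v_{i}^{+}-\sum_{i}v_{i}^{-}\geq\tfrac{\eta}{4}-\tfrac{\eta}{4O}\geq\tfrac{\eta}{4O},
\]
where the last inequality holds for all $O\geq 2$ (the case $O=1$ is immediate since then $x(\pi^{\star})=c_{1}x(\pi^{1})$ with $|c_{1}|\leq 2$). Dividing by $O$ gives $\frac{1}{O}\sum_{i}v_{i}\geq\eta/(4O^{2})$, which is exactly~(\ref{eq:dist-spanner}).

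\medskip

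\noindent\textbf{Main obstacle.} The delicate point is Step~3: $\BM x$ is only \emph{approximately} nonnegative, so a priori the negative contributions from spanner elements could cancel the positive signal inherited from $\pi^{\star}$. The hypothesis budgets the negative mass per element at $\eta/(4O^{2})$, which aggregates to $\eta/(4O)$ over all $O$ spanner elements --- just small enough, compared against the $\eta$ lower bound produced by expanding $\pi^{\star}$ in the spanner (with coefficients of magnitude at most $2$), to leave a surplus of order $\eta/O$ in the uniform mixture. Getting these constants to line up is the only nontrivial quantitative step; everything else is a direct invocation of Lemma~\ref{lem:ak}.
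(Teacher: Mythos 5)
Your proposal is correct and follows essentially the same route as the paper: build a $2$-approximate barycentric spanner via Lemma~\ref{lem:ak}, expand the optimizing $x(\pi^\star)$ with coefficients in $[-2,2]$, and use the near-nonnegativity hypothesis to prevent cancellation. The only (immaterial) difference is in the final accounting: the paper isolates a single spanner index with $\lng \sum_{s\in\MS'}e_s, \BM x(\pi^o)\rng \geq \eta/(2O)$ and bounds the remaining $O-1$ terms below by $-\eta/(4O^2)$ each, whereas you aggregate positive and negative parts of all $v_i$ at once; both yield the stated $\eta/(4O^2)$ bound.
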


\begin{proof}
  By Lemma \ref{lem:ak}, with $O(O^2 \log O)$ calls to the oracle $\CO$, we may compute a set $\{ \pi^1, \ldots, \pi^O \}$ so that $\{x(\pi^1), \ldots, x(\pi^O)\}$ is a 2-approximate barycentric spanner of $\MX$. 

  Consider any $\MS' \subset [S]$ so that, for some $\pi \in \Pi$, $\sum_{s \in \MS'} \lng e_s, \BM \cdot x(\pi) \rng \geq \eta$. By the definition of barycentric spanner, we can write $x(\pi) = \sum_{o=1}^O \alpha_o \cdot x(\pi^o)$, for some reals $\alpha_o \in [-2,2]$, $o \in [O]$. %
  Then we have
  \begin{align}
    \eta \leq \sum_{s \in \MS'} \lng e_s, \BM \cdot x(\pi) \rng =  \sum_{o = 1}^O \alpha_o \cdot \left\lng \sum_{s \in \MS'} e_s, \BM \cdot  x(\pi^o) \right\rng\nonumber.
  \end{align}
  Thus there must be some $o \in [O]$ so that $\left|\left \lng \sum_{s \in \MS'} e_s, \BM \cdot x(\pi^o) \right\rng\right| \geq \frac{\eta}{2O}$, which implies, by the assumption (\ref{eq:nonneg-x-asm}), that $\left\lng \sum_{s \in \MS'} e_s, \BM \cdot x(\pi^o) \right\rng \geq \frac{\eta}{2O}$. Since, for all $o' \in [O]$, we similarly have, by (\ref{eq:nonneg-x-asm}),
  \begin{align}
\left \lng \sum_{s \in \MS'} e_s, \BM \cdot x(\pi^{o'}) \right\rng \geq -\sum_{s \in \MS} \left[ \left( \BM \cdot x(\pi^{o'}) \right)(s) \right]_- \geq -\frac{\eta}{4O^2}\nonumber,
  \end{align}
  it follows that
  \begin{align}
\left\lng \sum_{s \in \MS'} e_s, \BM \cdot \frac{x(\pi^1) + \cdots + x(\pi^O)}{O} \right\rng \geq \frac{\eta}{2O^2} - \frac{\eta}{4O^2} = \frac{\eta}{4O^2}\nonumber.
  \end{align}
\end{proof}

\section{Intermediary Models for the Analysis}
\label{sec:intermediary-models}
In Section~\ref{sec:overview-approxmdp}, we gave an overview of the notion of an MDP $\MM$ which approximates the POMDP $\MP$, using approximate belief states initialized with the uniform distribution. We then suggested that, if appropriate policies $\pi^{1:H}$ can be found, then (using \ApproxMDP, Algorithm \ref{alg:approxmdp}), we can construct an MDP $\hatmdp{\pi^{1:H}}$ which can approximate $\MM$, and therefore $\MP$. To make this argument more formal, we use an MDP which differs slightly from the ``intermediate'' MDP $\MM$ introduced in Section \ref{sec:overview-approxmdp} to relate $\hatmdp{\pi^{1:H}}$ and $\MP$. In particular, in  Section~\ref{section:approx-mdp-general} we introduce an  MDP $\tilmdp{\pi^{1:H}}$ for any set of policies $\pi^{1:H}$.  At a high level, $\tilmdp{\pi^{1:H}}$ uses approximate belief states initialized with the visitation distributions induced by $\pi^{1:H}$. In Section~\ref{section:concentration} we show by concentration arguments that $\tilmdp{\pi^{1:H}}$ is close to the ``empirical MDP'' $\hatmdp{\pi^{1:H}}$ (constructed in Algorithm~\ref{alg:approxmdp}) for arbitrary $\pi^{1:H}$. 

Subsequently, the core technical challenge will be to relate $\tilmdp{\pi^{1:H}}$ to $\MP$ under various assumptions on $\pi^{1:H}$. As described in the proof overview, we will crucially use \emph{truncated POMDPs} as an intermediary model. These are introduced in Section~\ref{section:truncated-construction}.

\subsection{Construction of Approximate MDP $\tilmdp{\pi^{1:H}}$}\label{section:approx-mdp-general}

Fix a POMDP $\MP$, and consider any collection of $H$ general policies $\pi^1, \ldots, \pi^H$, which we abbreviate as $\pi^{1:H}$. %
For each $h \in [H]$, let $\wh \pi^h$ denote the policy which follows $\pi^h$ for the first $\max\{h-L-1,0\}$ steps and then chooses a uniformly random action for steps $h-L$ through $h$. Write $\wh \pi := \frac{1}{H} \cdot \sum_{h=1}^H \wh \pi^h$. 

Given the policies $\pi^{1:H}$ as above, we now define an MDP with $\SZ$-structure, denoted $\tilmdp{\pi^{1:H}} = (\ol\MZ, H, \MA, \BP^{\tilmdp{\pi^{1:H}}})$, as follows. %
To define the transitions $\BP_h^{\tilmdp{ \pi^{1:H}}}$, first define, for $\pi' \in \Pigen$, 
\begin{align}\til \bb_h^{\pi'}(a_{h-L:h-1}, o_{h-L+1:h}) := \bbapxs{\MP}_h(a_{h-L:h-1}, o_{h-L+1:h}; d_{\SS, h-L}^{\pi', \MP}).\nonumber
\end{align}
(Recall the definition of $\bbapxs{\MP}_h(\cdot)$ in Section \ref{sec:belief-contraction}; furthermore, recall that, if $h \leq L$, $\bbapxs{\MP}_h(a_{h-L:h-1}, o_{h-L+1:h}; \cD)$ is defined as $\bbapxs{\MP}_h(a_{1:h-1}, o_{2:h}; \cD)$, which does not depend on $\cD$, as it is $\bb_h^\MP(a_{1:h-1}, o_{2:h})$. Thus, even for $h\leq L$, $\til \bb_h^{\pi'}$ is well-defined despite the fact that $d_{\SS, h-L}^{\pi^h,\MP}$ is not.) %

Given a sequence $a_{1:h}, o_{2:h+1}$, writing $z_h = (a_{h-L:h-1}, o_{h-L+1:h})$, $z_{h+1} = (a_{h-L+1:h}, o_{h-L+2:h+1})$, define, in the case that $z_h \in \MZ$,
\begin{align}
\BP_h^{\tilmdp{ \pi^{1:H}}}(z_{h+1}|z_h, a_h) = e_{o_{h+1}}^\t \cdot \BO_{h+1} \cdot \BT_h(a_h) \cdot \til \bb_h^{\pi^h}(a_{h-L:h-1}, o_{h-L+1:h})\label{eq:define-mtilde-hatpi}.
\end{align}
Note that $\tilmdp{\pi^{1:H}}$ has $\SZ$-structure: given $z_{h} = (a_{h-L:h-1}, o_{h-L+1:h}), a_h$, under the transitions of $\tilmdp{\pi^{1:H}}$, all coordinates of $z_{h+1}$ except the last observation are full determined (namely, they are given by $(a_{h-L+1:h}, o_{h-L+2:h})$); thus, with slight abuse of notation, we will at times write $\BP_h^{\tilmdp{\pi^{1:H}}}(o_{h+1}|z_h, a_h) = \BP_h^{\tilmdp{\pi^{1:H}}}(z_{h+1} | z_h,a_h)$. If $z_h \not \in \MZ$, then we define $\BP_h^{\tilmdp{\pi^{1:H}}}(\sinkobs|z_h, a_h) =1$. 
Finally, we remark that we have not specified a reward function or initial state distribution for $\tilmdp{\pi^{1:H}}$; such choices turn out not to affect the analysis. %

  \subsection{Concentration of Empirical Approximate MDP $\hatmdp{\pi^{1:H}}$.}\label{section:concentration}

  While we do not have algorithmic access to $\tilmdp{\pi^{1:H}}$, we can algorithmically construct an empirical approximation of it. This is precisely the (random) MDP $\hatmdp{\pi^{1:H}} = (\ol\MZ, H, \MA, R^{\hatmdp{\pi^{1:H}}}, \BP^{\hatmdp{\pi^{1:H}}})$, defined as the  output of Algorithm~\ref{alg:approxmdp} when given input policies $\pi^{1:H}$. Notice that the MDP $\hatmdp{\pi^{1:H}}$ does come with a reward function $R^{\hatmdp{\pi^{1:H}}}$; we remark that, at some points in the proof, we will consider an alternative reward function (generally denoted by $R$, without the superscript) attached to the remaining data of $\hatmdp{\pi^{1:H}}$, purely for purposes of analysis.

  Lemma~\ref{lem:zlow-tvd} shows by Chernoff bounds that outside the set of states which are rarely visited by $\wh{\pi}^{1:H}$ (defined in Definition \ref{def:zlow} below), we have that $\hatmdp{\pi^{1:H}}$ closely approximates $\tilmdp{\pi^{1:H}}$. %

To state Lemma \ref{lem:zlow-tvd}, we need to introduce some notation. 
\begin{definition}
  \label{def:zlow}
  For each $h \in [H]$, $\zeta > 0$, and a general policy $\pi'$, define the set $\zlow{h,\zeta}{\pi'} \subset \MZ$, as follows:
\begin{align}
  \zlow{h, \zeta}{\pi'} := \left\{ z \in \MZ \ : \ d_{\SZ, h}^{\MP,\pi'}(z) \leq \zeta \right\}.\nonumber
\end{align}
\end{definition}

\begin{lemma}
  \label{lem:zlow-tvd}
  Fix any sequence of policies $\pi^{1:H}$ and the resulting $\wh \pi^{1:H}$ as defined in Section \ref{section:approx-mdp-general}. There is an event $\ME^{\rm low}$ that occurs with probability $1-p$, so that under the event $\ME^{\rm low}$, we have:
  \begin{enumerate}
  \item The output $\hatmdp{\pi^{1:H}}$ of \ApproxMDP with parameters $N_1, N_0$ satisfying
    \begin{align}
N_1 \geq C_0 \cdot \left( \frac{LA^{L+1} O^L \log(AO/p)}{\theta^2} \right), \quad N_0 \geq C_0 \cdot \frac{N_1 AL \log(OA/p)}{\zeta}\label{eq:apxmdp-c0}
    \end{align}
    (where $C_0$ is a sufficiently large constant) 
    satisfies the following: for all $z_h \not \in \zlow{h, \zeta}{\wh \pi^{h}}$ and all $a_h \in \MA$, we have that $R_h^{\hatmdp{\pi^{1:H}}}(\oo{z_h}) = R_h^\MP(\oo{z_h})$ and 
  \begin{align}
\left\| \BP_h^{\hatmdp{\pi^{1:H}}}(\cdot | z_h, a_h) - \BP_h^{\tilmdp{\pi^{1:H}}}(\cdot | z_h, a_h) \right\|_1 \leq \theta\label{eq:zlow-guarantee}.
  \end{align}
\item For all $z_h \in \zlow{h,\zeta}{\wh\pi^h}$, if (\ref{eq:zlow-guarantee}) does \emph{not} hold, then under the transitions of $\hatmdp{\pi^{1:H}}$, for all $a_h \in \MA$, $z_h = (a_{h-L:h-1}, o_{h-L+1:h})$ transitions to $(a_{h-L+1:h}, (o_{h-L+2:h}, \sinkobs))$ with probability 1.
\end{enumerate}
\end{lemma}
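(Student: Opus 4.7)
The proof is a routine Chernoff + union-bound argument; the only content of substance is identifying the correct conditional expectation. I would define $\ME^{\rm low}$ as the intersection of two high-probability events over the $H\cdot N_0$ trajectories drawn in \ApproxMDP:
\textbf{(E1)} for every $h\in[H]$, every $z_h\not\in\zlow{h,\zeta}{\wh\pi^h}$ and every $a_h\in\MA$, the count in step~\ref{it:varphi-counts} satisfies $\sum_{o_{h+1}}\varphi(o_{h+1})\geq N_1$; and
\textbf{(E2)} for every $(h,z_h,a_h)$ at which $\sum_{o_{h+1}}\varphi(o_{h+1})\geq N_1$, the empirical distribution $\varphi(\cdot)/\sum\varphi$ over $o_{h+1}\in\ol\MO$ is within $\theta$ (in $\ell_1$) of its conditional expectation given $(z_h^i,a_h^i)=(z_h,a_h)$.

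For \textbf{(E1)}, since $\wh\pi^h$ plays $\pi^h$ for the first $h-L-1$ steps and then plays uniform actions, the probability a single trajectory realizes $(z_h,a_h)$ equals $d_{\SZ,h}^{\MP,\wh\pi^h}(z_h)/A\geq \zeta/A$ for $z_h\not\in\zlow{h,\zeta}{\wh\pi^h}$. A Chernoff bound yields $\sum\varphi\geq N_1$ except with probability $\exp(-\Omega(N_0\zeta/A))$; the choice of $N_0$ in Section~\ref{sec:params} and a union bound over the $H\cdot A^{L+1}O^L$ triples $(h,z_h,a_h)$ keep the failure probability at most $p/2$. For \textbf{(E2)}, conditional on the set of indices $i$ matching $(z_h,a_h)$, the corresponding $o_{h+1}^i$ are i.i.d.~from the conditional distribution, so a standard $\ell_1$-concentration inequality for empirical distributions over $|\ol\MO|=O+1$ outcomes gives $\|\hat\nu-\nu\|_1\leq\theta$ with probability $1-\exp(-\Omega(N_1\theta^2/O))$; the choice of $N_1$ plus a union bound over $(h,z_h,a_h)$ contributes at most $p/2$. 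This gives $\Pr(\ME^{\rm low})\geq 1-p$.

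The one step that needs care is identifying the conditional expectation used in (E2). The claim is
\begin{align*}
\Pr^\MP_{\wh\pi^h}\!\big(o_{h+1}^i = o \,\big|\, z_h^i=z_h,\, a_h^i=a_h\big) \;=\; \BP_h^{\tilmdp{\pi^{1:H}}}(o\mid z_h,a_h).
\end{align*}
This follows from Bayes' rule on belief states: under $\wh\pi^h$, the latent state $s_{h-L}$ is distributed as $d_{\SS,h-L}^{\pi^h,\MP}$ (since $\wh\pi^h$ agrees with $\pi^h$ up through step $h-L-1$), and the actions $a_{h-L:h-1}$ and observation-generation process from step $h-L$ to $h$ are independent of $s_{h-L}$ conditional on the action sequence (since $\wh\pi^h$ plays uniform actions after step $h-L-1$). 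Therefore the conditional law of $s_h$ given $z_h=(a_{h-L:h-1},o_{h-L+1:h})$ is exactly the $L$-step iterated belief update starting from $d_{\SS,h-L}^{\pi^h,\MP}$, which is $\bbapxs{\MP}_h(z_h; d_{\SS,h-L}^{\pi^h,\MP}) = \til\bb_h^{\pi^h}(z_h)$. Applying $\BT_h(a_h)$ and then $\BO_{h+1}$ gives the distribution of $o_{h+1}$ and exactly matches the right-hand side of~\eqref{eq:define-mtilde-hatpi}. (The boundary case $h\leq L$ is handled by the convention $\bbapxs{\MP}_h(\emptyset;\cD)=b_1$, so the claim reduces to the trivial identity $\BP_{o_{h+1}}^\MP(o_{h+1}\mid a_{1:h},o_{2:h})$.)

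Items 1 and 2 then follow from the algorithm's definition. For item 1, if $z_h\not\in\zlow{h,\zeta}{\wh\pi^h}$, then (E1) ensures the if-branch at step~\ref{it:if-n1-true} fires, so $\BP_h^{\hatmdp{\pi^{1:H}}}(\cdot\mid z_h,a_h)$ is set to $\varphi(\cdot)/\sum\varphi$, which by (E2) and the preceding paragraph is within $\theta$ in $\ell_1$ of $\BP_h^{\tilmdp{\pi^{1:H}}}(\cdot\mid z_h,a_h)$; the reward equality $R_h^{\hatmdp{\pi^{1:H}}}(\oo{z_h})=R_h^\MP(\oo{z_h})$ is immediate because $R_h^\MP$ is a deterministic function of the last observation and step~\ref{it:reward-apxmdp} directly copies the observed reward. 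For item 2, fix any $z_h\in\zlow{h,\zeta}{\wh\pi^h}$ and $a_h$; either $\sum\varphi\geq N_1$ and~\eqref{eq:zlow-guarantee} holds by (E2) and the conditional expectation identification, or $\sum\varphi<N_1$ and step~\ref{it:divert-mass} explicitly sends $(z_h,a_h)$ to the sink observation with probability $1$. The only non-bookkeeping step is the Bayes computation in the third paragraph; everything else is standard concentration.
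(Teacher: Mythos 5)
Your proposal is correct and follows essentially the same route as the paper's proof: the same identification of the conditional law of $o_{h+1}$ given $(z_h,a_h)$ under $\wh\pi^h$ with $\BP_h^{\tilmdp{\pi^{1:H}}}(\cdot\mid z_h,a_h)$ (via the uniform-action argument), a Chernoff bound for the visit counts, an $\ell_1$ concentration bound for the empirical conditional distribution, and a union bound over $(h,z_h,a_h)$. The only cosmetic differences are that you concentrate over the $O+1$ reachable successors rather than all of $\ol\MZ$ (a tighter but equivalent choice given the parameter settings) and that the paper conditions on $\suff{h-1}{z_h^i}=\suff{h-1}{z_h}$ to handle $h\leq L$, which you address via the stated convention.
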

Notice that, for $h \leq L$, $z_h = (a_{h-L:h-1}, o_{h-L+1:h})$ includes actions (and observations) with nonpositive indices: importantly, such actions and observations do not affect the distribution of the next observation, namely $\BP_h^{\hatmdp{\pi^{1:H}}}(o_{h+1} | z_h, a_h)$ or $\BP_h^{\tilmdp{\pi^{1:H}}}(o_{h+1} | z_h, a_h)$, which intuitively explains why (\ref{eq:zlow-guarantee}) can hold even for such $h$. 
\begin{proof}[Proof of Lemma \ref{lem:zlow-tvd}]
  First, note that (\ref{eq:zlow-guarantee}) is immediate if $z_h \not \in \MZ$, since then $$\BP_h^{\hatmdp{\pi^{1:H}}}(\cdot | z_h, a_h) = \BP_h^{\tilmdp{\pi^{1:H}}}(\cdot | z_h, a_h) =\sinkobs$$ with probabiltiy 1. Thus for the remainder of the proof we may consider $z_h \in \MZ$. 
  
Consider some $p' \in (0,1)$, to be specified below.  Fix any $h \in [H]$: we will argue about the $h$th step in the for loop on step \ref{it:for-h-amdp} of Algorithm \ref{alg:approxmdp}. 
  We will use the following key fact: for any $(z_h, a_h) \in \MZ \times \MA$, and for any iteration $i$ of \ApproxMDP, the distribution of $z_{h+1}^i$ conditioned on $a_h^i = a_h$ and $\suff{h-1}{z_h^i} = \suff{h-1}{z_h}$ is the same as $\BP_h^{\tilmdp{\pi^{1:H}}}(\cdot | z_h, a_h)$. To see that this fact holds, it suffices to note that, since $(z_h^i, a_h^i)$ is drawn according to the policy $\wh\pi^h$, the distribution of $s_h$, conditioned on the observation-action sequence $(a_{\max\{1,h-L\}:h-1}, o_{\max\{2,h-L+1\}:h})$, is exactly $\til \bb_h^{\pi^h}(a_{h-L:h-1}, o_{h-L+1:h})$. Thus, given that action $a_h$ is taken at step $h$, the probability that the next observation is $o_{h+1}$ is given by $e_{o_{h+1}}^\t \cdot \BO_{h+1} \cdot \BT_h(a_h) \cdot \til \bb_h^{\pi^h}(a_{h-L:h-1}, o_{H-L+1:h})$, which, by definition, is exactly $\BP_h^{\tilmdp{\pi^{1:H}}}((a_{h-L+1:h}, o_{h-L+2:h+1}) | z_h, a_h)$. 
  
Consider any $z_h \in \MZ$ with $z_h \not\in\zlow{h,\zeta}{\wh\pi^h}$ and any $a_h \in \MA$.
By the Chernoff bound, with probability at least $1 - \exp(-\zeta N_0/(8A))$, there are at least $ \frac{\zeta N_0}{2A} \geq N_1$ iterations $i \in [N_0]$ (in the for loop in step \ref{it:n0-trajs} of Algorithm \ref{alg:approxmdp}) so that $a_h^i = a_h$ and $\suff{h-1}{z_h^i} = \suff{h-1}{z_h}$. %
Under such an event, since the reward function $R^\MP$ of $\MP$ is deterministic, it must be the case that $R_h^{\hatmdp{\pi^{1:H}}}(\oo{z_h}) = R_h^\MP(\oo{z_h})$ (step \ref{it:reward-apxmdp} of Algorithm \ref{alg:approxmdp}).

Next, consider any $(z_h, a_h) \in \MZ \times \MA$ so that there are at least $N_1$ values of $i$ so that $a_h^i = a_h$ and $\suff{h-1}{z_h^i} = \suff{h-1}{z_h}$ (i.e., for the chosen value of $(z_h, a_h)$, the {\bf if} statement at step \ref{it:if-n1-true} holds). %
By the fact established in the previous paragraph and  \cite[Theorem 1]{canonne2020short}, with probability at least $1-p'$, (\ref{eq:zlow-guarantee}) holds as long as
  \begin{align}
N_1 \geq C \cdot \frac{|\MZ| + \log(1/p')}{\theta^2}\label{eq:n1-constraint}
  \end{align}
  for some constant $C > 1$. 
  Thus, by a union bound over all $h \in [H]$, and all $(z_h, a_h) \in \MZ \times \MA$, under some event $\ME^{\rm low}$ that occurs with probability at least
  \begin{align}
1 - |\MZ| A \cdot \exp(-\zeta N_0/(8A)) - |\MZ| A \cdot p'\label{eq:1mp},
  \end{align}
  we have that both:
  \begin{enumerate}
  \item For all $h \in [H]$, $a_h \in \MA$, and $z_h \not \in \zlow{h,\zeta}{\wh\pi^h}$, there are at least $N_1$ values of $i$ (in the $h$th step of the loop in step \ref{it:n0-trajs}) so that $(a_h^i = a_h)$ and $ \suff{h-1}{z_h^i} = \suff{h-1}{z_h}$; %
  \item For all $h \in [H], a_h \in \MZ, z_h \in \MZ$ for which there are at least $N_1$ values of $i$ so that  $(a_h^i = a_h)$ and $ \suff{h-1}{z_h^i} = \suff{h-1}{z_h}$, (\ref{eq:zlow-guarantee}) holds; and
  \item Thus, if (\ref{eq:zlow-guarantee}) does not hold, we must have that there are fewer than $N_1$ values of $i$ so that $(z_h^i, a_h^i) = (z_h, a_h)$, which means, by step \ref{it:divert-mass}, that $z_h = (a_{h-L:h-1}, o_{h-L+1:h})$ transitions to $(a_{h-L+1:h}, (o_{h-L+2:\sinkobs}))$ with probability 1.
  \end{enumerate}
  The above points verify the two points in the lemma statement.

  Finally we must verify that our settings $N_0, N_1$ satisfy the above constraints and lead to the probability in (\ref{eq:1mp}) being at least $1-p$. In particular, we choose $p' = p/(2|\MZ| A)$, so that (\ref{eq:n1-constraint}) holds as long as we set
  \begin{align}
N_1 = \frac{C_0 L  A^{L+1} O^L \log(AO/p)}{\theta^2}\nonumber,
  \end{align}
  for sufficiently large constant $C_0$. Finally, note that $|\MZ| A \cdot \exp(-\zeta N_0/(8A)) \leq p/2$ if
  \begin{align}
N_0 > \log \left( \frac{2|\MZ| A}{p}\right) \cdot \frac{8A}{\zeta}\nonumber,
  \end{align}
  which holds as long as the constant $C_0$ in (\ref{eq:apxmdp-c0}) is sufficiently large. Thus, the probability of $\ME^{\rm low}$ in (\ref{eq:1mp}) is at least $1-p$, as desired.
\end{proof}

\subsection{Construction of Truncated POMDPs $\overline{\MP}_{\phi,h}(\pi^{1:H})$.}
\label{section:truncated-construction}
One of the main technical challenges is accounting for states of $\MP$ which the current policy set \emph{underexplores}, i.e. fails to explore with non-trivial probability. To deal with this issue, given the POMDP $\MP$ and a sequence of policies $\pi^{1:H}$, we introduce the notion of \emph{truncated POMDPs}. We start with a description of how to construct a sequence of truncated POMDPs $\barpdp{\phi,h}{\pi^{1:H}}$ for $h \in [H]$, which will aid in our analysis relating $\MP$ and $\wh \MM\^k$. Note that (like the MDPs $\tilmdp{\pi^{1:H}}$)
truncated POMDPs do not appear in our algorithms; they are solely an analytic tool.

We begin by making the following definition of underexplored sets of states.%

\begin{defn}
  \label{def:underexplored}
  Given a real number $\phi \in (0,1)$, a POMDP $\MP$, and a general policy $\pi'$, define the \emph{$\phi$-underexplored set for $\pi'$}, denoted $\und{\phi}{\MP}{\pi'}$, to be
  \begin{align}
\und{\phi}{\MP}{\pi'} := \left\{ (h, s) \in [H] \times \MS \ : \ d_{\SS, h}^{\MP,\pi'}(s) < \phi \right\}.\nonumber
  \end{align}
  For some $h \in [H]$, we also define the \emph{$\phi$-underexplored set at step $h$}, as:
  \begin{align}
\und{\phi, h}{\MP}{\pi'} := \left\{ s \in \MS \ : \ (h, s) \in \und{\phi}{\MP}{\pi'} \right\}\nonumber.
  \end{align}
\end{defn}
In words, $\und{\phi}{\MP}{\pi'}$ is the set of pairs $(h,s) \in [H] \times \MS$ so that $\pi'$ does not put sufficient mass on state $s$ at step $h$.

Given a sequence of general policies $ \pi^{1:H}$ as above together with a parameter $\phi > 0$ representing a threshold visitation probability and some $H' \in [H]$, we proceed to describe how to construct a POMDP $\barpdp{\phi,H'}{\pi^{1:H}} = (H, \ol \MS, \MA, \ol \MO, b_1, \ol \BT^{H'}, \ol \BO^{H'})$, whose observation space is $\ol \MO = \MO \cup \{ \sinkobs\}$, %
and whose state space is $\ol \MS = \MS \cup \{\sinks\}$. %
The observation matrices of $\barpdp{\phi,H'}{ \pi^{1:H}}$, denoted by $\ol \BO_h^{H'}$, are equal to those of $\MP$, i.e., we have $\ol\BO_h^{H'} = \BO_h^\MP$ for all $h \in [H]$.
The state transitions of $\barpdp{\phi,H'}{\pi^{1:H}}$, denoted by $\ol\BT^{H'}$, are constructed inductively (with respect to $H'$) according to the following procedure: %
\begin{enumerate}
\item First set $\barpdp{\phi,1}{\pi^{1:H}}, \ldots, \barpdp{\phi,L}{ \pi^{1:H}}$ to have identical transitions $\ol \BT^1, \ldots, \ol \BT^L$ (respectively) to those of $\MP$ out of any state $s \in \ol\MS$ at each step $h \in [H]$. %
\item For $H' = L+1, L+2, \ldots, H$:
  \begin{enumerate}
  \item Define the transition matrices $\ol \BT^{H'}$ of $\barpdp{\phi,H'}{\pi^{1:H}}$ as follows:
  \item Initially set $\ol \BT^{H'} \gets \ol \BT^{H'-1}$. 
  \item Then, for each state $s \in \und{\phi, H'-L}{\barpdp{\phi,H'-1}{ \pi^{1:H}}}{ \pi^{H'}}$:
    \begin{itemize}
      \item For all $s' \in \MS$ and $a \in \MA$, add $\BT^{H'}_{H'-L-1}(s | s', a)$ to $\ol \BT^{H'}_{H'-L-1}(\sinks | s', a)$ and set $\ol \BT^{H'}_{H'-L-1}(s | s', a)$ to 0.
    \end{itemize}
\end{enumerate}
  \end{enumerate}
  Note that for all $H'$, the above construction ensures that (at the end of the above procedure) $\ol\BT_h^{H'}(\sinks | \sinks ,a) = 1$ for all $h \in [H], a \in \MA$.
  
The main point of the construction of the truncated POMDPs $\barpdp{\phi,H'}{\pi^{1:H}}$ is to ensure that the following lemma holds:
\begin{lemma}
  \label{lem:truncation-property}
Consider a sequence of general policies $\pi^{1:H}$ and $\phi > 0$. For each $H' \in [H]$, the truncated POMDP $\barpdp{\phi,H'}{\pi^{1:H}}$ satisfies the following: for all $(s,h) \in \MS \times [H]$ with $L < h \leq H'$, if there is some policy $\pi \in \Pigen$ with $d_{\SS, h-L}^{\barpdp{\phi,H'}{\pi^{1:H}},\pi}(s) > 0$, then $d_{\SS, h-L}^{\barpdp{\phi,H'}{\pi^{1:H}},\pi^h}(s) \geq \phi$, i.e., $s \not \in \und{\phi,h-L}{\barpdp{\phi,H'}{\pi^{1:H}}}{\pi^h}$.
\end{lemma}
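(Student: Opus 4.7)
The plan is to exploit the inductive structure of the construction in Section~\ref{section:truncated-construction}: iteration $H'' \in \{L+1,\ldots,H'\}$ only modifies the transitions $\ol\BT^{H''}_{H''-L-1}$, leaving the transitions at all other time steps untouched. Consequently, the distribution of the state at step $h-L$ under any policy in $\barpdp{\phi,H'}{\pi^{1:H}}$ depends only on the transitions at steps $1,\ldots,h-L-1$, and those transitions are fully determined by iterations $H'' \leq h$ (since step $t$ is modified precisely by iteration $H'' = t+L+1$). The first step is therefore to show that $d_{\SS,h-L}^{\barpdp{\phi,H'}{\pi^{1:H}},\pi}(\cdot) = d_{\SS,h-L}^{\barpdp{\phi,h}{\pi^{1:H}},\pi}(\cdot)$ for every $\pi \in \Pigen$ and every $H' \geq h$, which reduces the claim to the intermediate POMDP $\barpdp{\phi,h}{\pi^{1:H}}$.

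Given this reduction, the hypothesis $d_{\SS,h-L}^{\barpdp{\phi,H'}{\pi^{1:H}},\pi}(s) > 0$ transfers to $d_{\SS,h-L}^{\barpdp{\phi,h}{\pi^{1:H}},\pi}(s) > 0$. Next I will invoke the defining property of iteration $H'' = h$: it sets every transition from $(s',a)$ at step $h-L-1$ into any state of $U := \und{\phi,h-L}{\barpdp{\phi,h-1}{\pi^{1:H}}}{\pi^h}$ equal to zero (redirecting that mass to $\sinks$). Hence no state of $U$ is reachable at step $h-L$ in $\barpdp{\phi,h}{\pi^{1:H}}$ under any policy, so in particular $s \notin U$, and by the very definition of $U$ this yields $d_{\SS,h-L}^{\barpdp{\phi,h-1}{\pi^{1:H}},\pi^h}(s) \geq \phi$.

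The final step is to transfer this bound from $\barpdp{\phi,h-1}{\pi^{1:H}}$ to $\barpdp{\phi,h}{\pi^{1:H}}$. Since iteration $h$ only redirects mass destined for states of $U$ and $s \notin U$, the inflow into $s$ at step $h-L$ under $\pi^h$ is the same in the two POMDPs; because transitions at earlier steps are also identical in the two POMDPs, this gives $d_{\SS,h-L}^{\barpdp{\phi,h}{\pi^{1:H}},\pi^h}(s) = d_{\SS,h-L}^{\barpdp{\phi,h-1}{\pi^{1:H}},\pi^h}(s) \geq \phi$, and the first step then lifts this bound to $d_{\SS,h-L}^{\barpdp{\phi,H'}{\pi^{1:H}},\pi^h}(s) \geq \phi$, as claimed. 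The proof is ultimately a careful bookkeeping argument, and the main subtlety is confirming that the step-$(h-L)$ visitation under any policy is pinned down by iteration $h$ and unaffected by iterations $H'' > h$; the only edge case is $h=L+1$, where iteration $L+1$ formally references $\ol\BT^{L+1}_0$ and the claim should be interpreted as a condition on the initial distribution $b_1$.
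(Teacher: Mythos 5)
Your proposal is correct and follows essentially the same route as the paper's proof: reduce to $H'=h$ (since iterations $H''>h$ only touch transitions at steps $\geq h-L$, which cannot affect the step-$(h-L)$ visitation), deduce $s \notin \und{\phi,h-L}{\barpdp{\phi,h-1}{\pi^{1:H}}}{\pi^h}$ from reachability, and then use the fact that $\ol\BT^{h}_{h-L-1}(s\,|\,s',a) = \ol\BT^{h-1}_{h-L-1}(s\,|\,s',a)$ for $s$ outside the underexplored set to transfer the $\phi$ lower bound from $\barpdp{\phi,h-1}{\pi^{1:H}}$ to $\barpdp{\phi,h}{\pi^{1:H}}$. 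The edge case you flag at $h=L+1$ is real but harmless under the paper's conventions.
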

\begin{proof}
For each $h \in [H]$ let $\ol\BT^h$ denote the state transitions of $\barpdp{\phi,h}{\pi^{1:H}}$. 
  
Since the transitions of $\barpdp{\phi,H'}{\pi^{1:H}}$ and $\barpdp{\phi,h}{\pi^{1:H}}$ do not differ prior to step $h$, it suffices to consider the case that $H' = h$. So suppose that $d_{\SS, h-L}^{\barpdp{\phi,h}{\pi^{1:H}},\pi}(s) > 0$ for some $\pi \in \Pigen$. Then by construction of $\barpdp{\phi,h}{\pi^{1:H}}$ it must be the case that $s \not \in \und{\phi, h-L}{\barpdp{\phi,h-1}{\pi^{1:H}}}{\pi^h}$. Since $\BT^h$ and $\BT^{h-1}$ only differ in the transitions at step $h-L-1$ and furthermore for all $s' \in \ol\MS$ and $a \in \MA$,  $\ol\BT^{h-1}_{h-L-1}(s | s', a) = \ol\BT^h_{h-L-1}(s | s', a)$, we have that $d_{\SS, h-L}^{\barpdp{\phi,h-1}{\pi^{1:H}},\pi^h}(s) = d_{\SS, h-L}^{\barpdp{\phi,h}{\pi^{1:H}},\pi^h}(s)$. Thus $s \not \in \und{\phi,h-L}{\barpdp{\phi,h}{\pi^{1:H}}}{\pi^h}$, as desired.
\end{proof}

\section{Generic truncated POMDPs}
\label{sec:truncated-pomdps}
We next define a generic notion of truncation (of a POMDP $\MP$), which is satisfied by the truncated POMDPs $\barpdp{\phi, h}{\pi^{1:H}}$ as constructed in the previous section (Lemma \ref{lem:pbar-lb-p} below). 
\begin{defn}
  \label{def:truncation}
  We say that a POMDP $\ol\MP$ with extended state and observation spaces $\ol\MS, \ol\MO$ is a \emph{truncation} of $\MP$ if for all general policies $\pi \in \Pigen$, $h \in [H]$, $a'_{1:h-1} \in \MA^{h-1}, o'_{2:h} \in \MO^h, s'_{1:h} \in \MS^h$, it holds that
  \begin{align}
  &   \Pr_{(a_{1:h-1}, s_{1:h}, o_{2:h}) \sim \pi}^{\ol \MP}( (a_{1:h-1}, s_{1:h}, o_{2:h}) = (a_{1:h-1}', s_{1:h}', o_{2:h}')) \nonumber\\
    \leq  & \Pr_{(a_{1:h-1}, s_{1:h}, o_{2:h}) \sim \pi}^\MP ( (a_{1:h-1}, s_{1:h}, o_{2:h}) = (a_{1:h-1}', s_{1:h}', o_{2:h}'))\label{eq:truncation-def}.
  \end{align}
  Furthermore, we require that $\BT_h^{\ol \MP}(\sinks | \sinks,a) = 1$ for all $h \in [H]$ and $a \in \MA$. 
\end{defn}
Note that an immediate consequence of (\ref{eq:truncation-def}) is that for all $(h,s) \in [H] \times \MS$, $d_{\SS, h}^{\MP,\pi}(s) \geq d_{\SS, h}^{\ol\MP, \pi}(s)$. 
\begin{lemma}
  \label{lem:pbar-lb-p}
  Fix any sequence of general policies $\pi^{1:H}$ and $\phi > 0$. Then for each $H', H'' \in [H]$ satisfying $H'' \leq H'$, the POMDP $\barpdp{\phi,H'}{\pi^{1:H}}$ as constructed above is a truncation of $\barpdp{\phi,H''}{\pi^{1:H}}$ (per Definition \ref{def:truncation}). %
\end{lemma}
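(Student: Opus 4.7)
The plan is to prove the lemma by induction on the number of truncation steps, reducing to the one-step claim that $\barpdp{\phi,H'}{\pi^{1:H}}$ is a truncation of $\barpdp{\phi,H'-1}{\pi^{1:H}}$ for each $H' \in [L+1, H]$, and then invoking transitivity of the truncation relation. Transitivity is immediate from Definition \ref{def:truncation}: if $\MP''$ is a truncation of $\MP'$ and $\MP'$ is a truncation of $\MP$, then chaining the inequalities (\ref{eq:truncation-def}) for both pairs yields the inequality for $(\MP'', \MP)$, and the sink condition is inherited from $\MP''$. The base case with $H', H'' \leq L$ is also immediate because by construction all of these POMDPs have transitions identical to those of the extension of $\MP$, identical observation matrices $\BO^\MP$, and identical initial state distribution $b_1$, so (\ref{eq:truncation-def}) in fact holds with equality.

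For the one-step inductive claim, I fix $H' \in [L+1,H]$ and compare $\ol\BT^{H'}$ with $\ol\BT^{H'-1}$ directly from the construction. They agree at every step $t \neq H'-L-1$, and at step $t = H'-L-1$ the only change is that for each $s \in \und{\phi, H'-L}{\barpdp{\phi,H'-1}{\pi^{1:H}}}{\pi^{H'}}$ and every $(s',a) \in \MS \times \MA$, the mass $\ol\BT^{H'-1}_{H'-L-1}(s \mid s', a)$ is moved from the $s$-entry into the $\sinks$-entry. In particular, for every $s'' \in \MS$,
\begin{align*}
\ol\BT^{H'}_{H'-L-1}(s'' \mid s', a) \leq \ol\BT^{H'-1}_{H'-L-1}(s'' \mid s', a).
\end{align*}
Now I expand the trajectory probability as a product: for any deterministic $\sigma \in \Pidet$ and any target $(a'_{1:h-1}, s'_{1:h}, o'_{2:h})$ with $s'_{1:h} \in \MS^h$, the probability factors as $b_1(s'_1) \cdot \prod_{t=1}^{h-1}\One[\sigma_t(a'_{1:t-1},o'_{2:t})=a'_t] \cdot \ol\BT_t(s'_{t+1}\mid s'_t,a'_t) \cdot \prod_{t=2}^{h}\BO^\MP_t(o'_t\mid s'_t)$. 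Every factor is identical across the two POMDPs except possibly one transition factor at step $H'-L-1$, which can only decrease. Averaging over $\sigma \sim \pi$ preserves the inequality, giving (\ref{eq:truncation-def}).

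Finally, I verify the sink condition $\BT^{\barpdp{\phi,H'}{\pi^{1:H}}}_h(\sinks \mid \sinks, a) = 1$ by induction: it holds for $H' \leq L$ because the extension of $\MP$ satisfies it by convention, and it is preserved by the inductive update since the reassignment step only alters rows $\ol\BT^{H'}_{H'-L-1}(\cdot \mid s', a)$ with $s' \in \MS$, leaving the row out of $\sinks$ untouched. The main obstacle is purely book-keeping: handling the degenerate boundary case $H' = L+1$ where step $H'-L-1 = 0$ is out of range so the update is vacuous (and $\barpdp{\phi,L+1}{\pi^{1:H}}$ therefore has the same transitions as $\barpdp{\phi,L}{\pi^{1:H}}$), and tracking that a trajectory of length $h$ uses the altered transition row at most once. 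No estimates are needed; the claim is a structural consequence of the iterative construction and the product form of POMDP trajectory probabilities.
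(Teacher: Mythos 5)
Your proof is correct and rests on the same core observation as the paper's: the construction only ever moves transition mass from states of $\MS$ into $\sinks$ while leaving observations and the initial distribution unchanged, so every factor in the trajectory probability can only decrease. The only organizational difference is that you reduce to consecutive truncations and invoke transitivity of Definition \ref{def:truncation} plus a one-shot product expansion, whereas the paper proves the claim for general $H''\leq H'$ directly by induction on the trajectory length $h$; the two are interchangeable bookkeeping for the same argument.
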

Since $\barpdp{\phi,1}{\pi^{1:H}} = \ol\MP$, an immediate consequence of Lemma \ref{lem:pbar-lb-p} is that for any $H' \in [H]$, $\barpdp{\phi,H'}{\pi^{1:H}}$ is a truncation of $\MP$.  
\begin{proof}[Proof of Lemma \ref{lem:pbar-lb-p}]
  Fix any $H'' \leq H'$. Write $\ol\MP_{H'} = \barpdp{\phi,H'}{\pi^{1:H}}$ and $\ol\MP_{H''} = \barpdp{\phi,H''}{\pi^{1:H}}$. 
By convexity it suffices to consider the case that $\pi$ is a deterministic policy, i.e., $\pi \in \Pidet$.  The proof uses induction on $h$ to verify (\ref{eq:truncation-def}), noting that the base case $h=1$ is immediate since the initial state distributions of $\ol\MP_{H''}$ and $\ol\MP_{H'}$ are identical. %

  Fix any $h \in [H]$, and suppose that (\ref{eq:truncation-def}) holds at all steps up to (and including) step $h$. %
  Let $\tau = (s_1, a_1, s_2, o_2, a_2, \ldots, s_{h-1}, o_{h-1}, a_{h-1}, s_h, o_h)$ denote a trajectory drawn according to the policy $\pi$ up to step $h$. Our inductive assumption gives that $\Pr_{\pi}^{\ol\MP_{H'}}(\tau) \leq \Pr_\pi^{\ol\MP_{H''}}(\tau)$. 
Since $\pi$ is a deterministic policy, the action chosen by $\pi$ at step $h$ is given by $\pi_h(a_{1:h-1}, o_{2:h})$, i.e., it is a function of $\tau$. Furthermore, for any $a \in \MA$ and $s' \in \MS$, we have that $\BT_h^{\ol\MP_{H''}}(s' | s_h, a) \geq \BT_h^{\ol\MP_{H'}}(s' | s_h,a)$ by definition of $\ol\MP_{H'},\ol\MP_{H''}$. Since furthermore $\BO_{h+1}^{\ol\MP_{H''}}(o | s') = \BO_{h+1}^{\ol\MP_{H'}}(o|s')$ for all $o \in \ol\MO,\ s' \in \ol\MS$, we get that (\ref{eq:truncation-def}) holds at step $h+1$, as desired.
\end{proof}

The below lemma establishes some additional properties of the truncated POMDPS $\barpdp{\phi,h}{\pi^{1:H}}$.
\begin{lemma}
  \label{lem:pbar-properties}
  Fix any sequence of general policies $\pi^{1:H}$ and $\phi > 0$. Then for each $h \in [H]$, we have the following:
  \begin{enumerate}
  \item \label{it:bound-s-sink} For any general policy $\pi \in \Pigen$ and all $(h,s) \in [H] \times \MS$ with $h > L$, it holds that
    \begin{align}
d_{\SS, h-L}^{\pi,\MP}(s) \leq d_{\SS, h-L}^{\pi,\barpdp{\phi,h}{\pi^{1:H}}}(s) + d_{\SS, h-L}^{\pi, \barpdp{\phi,h}{\pi^{1:H}}}(\sinks)\nonumber.
    \end{align}
  \item \label{it:hh-truncation} For any $H', H'' \in [H]$ satisfying $H'' \leq H'$, and any $(s,h) \in [H] \times \MS$, it holds that $d_{\SS, h}^{\pi, \barpdp{\phi,H'}{\pi^{1:H}}}(s) \leq d_{\SS, h}^{\barpdp{\phi,H''}{\pi^{1:H}},\pi}(s)$, and thus $d_{\SS, h}^{\barpdp{\phi,H''}{\pi^{1:H}},\pi}(\sinks) \leq d_{\SS, h}^{\barpdp{\phi,H'}{\pi^{1:H}},\pi}(\sinks)$.
  \item \label{it:sink-accum} For any $h,h' \in [H]$ satisfying $h' \leq h$, as well as any $H' \in [H]$, it holds that $d_{\SS, h'}^{\barpdp{\phi,H'}{\pi^{1:H}},\pi}(\sinks) \leq d_{\SS, h}^{\barpdp{\phi,H'}{\pi^{1:H}},\pi}(\sinks)$. 
  \end{enumerate}
\end{lemma}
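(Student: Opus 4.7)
I will handle the three parts separately, starting with the cleanest parts (2 and 3) and then addressing part 1, which is the main obstacle.

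For part 2, the desired inequality $d_{\SS,h}^{\barpdp{\phi,H'}{\pi^{1:H}},\pi}(s) \leq d_{\SS,h}^{\barpdp{\phi,H''}{\pi^{1:H}},\pi}(s)$ for $s \in \MS$ follows directly from Lemma~\ref{lem:pbar-lb-p}: since $H'' \leq H'$, $\barpdp{\phi,H'}{\pi^{1:H}}$ is a truncation of $\barpdp{\phi,H''}{\pi^{1:H}}$, and by the remark immediately following Definition~\ref{def:truncation} this gives $d_{\SS,h}^{\barpdp{\phi,H''},\pi}(s) \geq d_{\SS,h}^{\barpdp{\phi,H'},\pi}(s)$ for every $s \in \MS$. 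The dual inequality for $\sinks$ then follows by summing the first inequality over $s \in \MS$ and using that both $d_{\SS,h}^{\barpdp{\phi,H'},\pi}$ and $d_{\SS,h}^{\barpdp{\phi,H''},\pi}$ are probability distributions on $\ol\MS = \MS \cup \{\sinks\}$.

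For part 3, the key observation is that $\sinks$ is absorbing in $\barpdp{\phi,H'}{\pi^{1:H}}$: by Definition~\ref{def:truncation}, $\BT_h^{\barpdp{\phi,H'}}(\sinks \mid \sinks, a) = 1$ for all $h$ and $a$. Thus applying the one-step Bellman equation for visitation distributions and keeping only the term $s' = \sinks$ yields
\begin{align*}
d_{\SS,h+1}^{\barpdp{\phi,H'},\pi}(\sinks) \;\geq\; d_{\SS,h}^{\barpdp{\phi,H'},\pi}(\sinks) \cdot \ol\BT_h^{H'}(\sinks \mid \sinks, a) = d_{\SS,h}^{\barpdp{\phi,H'},\pi}(\sinks),
\end{align*}
and iterating from $h'$ up to $h$ gives the claim.

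Part 1 is the main content and is proved by a coupling argument. The key structural fact (established during the construction in Section~\ref{section:truncated-construction}) is that $\barpdp{\phi,h}{\pi^{1:H}}$ differs from $\MP$ only in the transition kernels at steps $0,1,\ldots,h-L-1$, where certain states $s \in \MS$ have their incoming mass redirected entirely to $\sinks$. I will define a coupling of trajectories drawn from $\MP$ and from $\barpdp{\phi,h}{\pi^{1:H}}$ under the same (WLOG deterministic) policy $\pi$, by reducing to the deterministic case and using convexity. In the coupling, while the $\barpdp{\phi,h}$-trajectory has not yet been diverted, both processes share the same state, same sampled observation (the observation matrices agree on $\MS$), same action chosen by $\pi$, and the next transition only differs on the set of redirected target states. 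Hence we may couple the next states so that they agree whenever the $\MP$-next state is not redirected, and so that the $\barpdp{\phi,h}$-process enters $\sinks$ exactly when it would otherwise enter a redirected state. Once the $\barpdp{\phi,h}$-trajectory enters $\sinks$, it stays there.

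Under this coupling, the event $\{s_{h-L}^{\MP} = s\}$ (for $s \in \MS$) partitions into (i) $\{s_{h-L}^{\MP} = s, \text{no divergence}\} = \{s_{h-L}^{\barpdp{\phi,h}} = s\}$ and (ii) $\{s_{h-L}^{\MP} = s, \text{divergence before step } h-L\} \subseteq \{s_{h-L}^{\barpdp{\phi,h}} = \sinks\}$. Taking probabilities gives the desired inequality for deterministic $\pi$, and averaging over $\sigma \sim \pi$ extends it to general policies. The main thing to be careful about in writing this up is that after divergence the two processes see different observations and hence may select different actions, but since we only need to track events up to step $h-L$ and couple conditional on no divergence, this does not affect the bound.
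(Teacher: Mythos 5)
Your proposal is correct and follows essentially the same route as the paper: parts 2 and 3 are handled identically (truncation monotonicity from Lemma~\ref{lem:pbar-lb-p}, and absorption of $\sinks$), and your coupling for part 1 is just a restatement of the paper's trajectory-level argument, which shows that surviving trajectories have equal probability under $\MP$ and $\barpdp{\phi,h}{\pi^{1:H}}$ while the killed mass is exactly the $\sinks$ mass at step $h-L$. No gaps.
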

\begin{proof}
  For $h \in [H]$, write $\ol\MP_h := \barpdp{\phi,h}{\pi^{1:H}}$. 
  We begin with a proof of the first item. By linearity of expectation it suffices to consider the case that $\pi$ is a deterministic policy, i.e., we have $\pi = (\pi_1, \ldots, \pi_H)$, where each $\pi_h : \MA^{h-1} \times \MO^{h-1} \ra \MA$. Consider any trajectory up to step $h-L$, denoted $\tau = (s_1, a_1, s_2, o_2, a_2, \ldots, s_{h-L}, o_{h-L})$, which has positive probability under $\MP$ when policy $\pi$ is used.   For such a trajectory $\tau$ and $h' \leq h-L$, let $s_{h'}(\tau)$ denote the state in $\tau$ at step $h'$ (i.e., the state $s_{h'}$). By the construction of $\ol\MP_h$, if $\tau$ also has positive probability under $\ol\MP_h$ for the policy $\pi$, then $\Pr_\pi^\MP(\tau) = \Pr_\pi^{\ol\MP_h}(\tau)$. On the other hand, it is evident from construction of $\ol\MP_h=\barpdp{\phi,h}{\pi^{1:H}}$ that%
  \begin{align}
d_{\SS, h-L}^{\ol\MP_h,\pi}(\sinks) = \sum_{\tau : \Pr_\pi^{\ol\MP_h}(\tau) = 0, \ s_{h-L}(\tau) \in \MS} \Pr_\pi^\MP(\tau)\label{eq:sinks-taus}.
  \end{align}
  To see that the above equality holds, consider any trajectory $\tau = (s_1, a_1, s_2, o_2, a_2, \ldots, s_{h-L}, o_{h-L})$ with $s_{h-L} = \sinks$ and which occurs with positive probability under $\ol\MP_h,\ \pi$. Choose $h'$ as small as possible so that $s_{h'}= \sinks$. We associate $\tau$ with the set $\MT(\tau)$ of trajectories that agree with $\tau$ up to step $h'-1$, and then transition to a state $\til s_{h'}$ from which mass was diverted away in the construction of $\ol\MP_{h'+L}$ (to be precise, the set of such states is $\und{\phi, h'}{\ol \MP_{h'+L-1}}{\pi^{h'+L}}$). It is clear that all trajectories $\tau' \in \MT(\tau)$ have $\Pr_\pi^{\ol\MP_h}(\tau') = 0$, and moreover, $\Pr_\pi^{\ol\MP_h}(\tau) = \sum_{\tau' \in \MT(\tau)} \Pr_\pi^\MP(\tau)$. Furthermore, any trajectory $\tau'$ with $s_{h-L}(\tau') \in \MS$,  $\Pr_\pi^\MP(\tau') > 0$, and $\Pr_\pi^{\ol\MP_h}(\tau') = 0$ must be in some (unique) $\MT(\tau)$, for some $\tau$ with $s_{h-L}(\tau) = \sinks$. Combining these facts verifies (\ref{eq:sinks-taus}). 
  
 Then we have
  \begin{align}
    d_{\SS, h-L}^{\MP,\pi}(s) = \sum_{\tau : s_{h-L}(\tau) = s}\Pr_\pi^\MP(\tau) &\leq \sum_{\tau : s_{h-L}(\tau) = s} \Pr_\pi^{\ol\MP_h}(\tau) + d_{\SS, h-L}^{\ol\MP_h,\pi}(\sinks) \nonumber\\&= d_{\SS, h-L}^{\ol\MP_h,\pi}(s) + d_{\SS, h-L}^{\ol\MP_h,\pi}(\sinks),
  \end{align}
  which completes the proof of the first item.

  The second item of the lemma statement is a direct consequence of the fact that $\barpdp{\phi, H'}{\pi^{1:H}}$ is a truncation of $\barpdp{\phi,H''}{\pi^{1:H}}$ (Lemma \ref{lem:pbar-lb-p}).

  Finally, the third item of the lemma statement follows from the fact that for all $h \in [H]$ and $a \in \MA$, $\BT_h^{\barpdp{\phi,H'}{\pi^{1:H}}}(\sinks | \sinks , a) = 1$.
\end{proof}

For a non-negative vector $\bv \in \BR_{\geq 0}^d$, we define $\normalize{\bv} \in \Delta([d])$ by, for $i \in [d]$,  $\normalize{\bv}(i) = \frac{\bv(i)}{\sum_{j=1}^d \bv(j)}$. Lemma \ref{lem:bpbar-tilb} below can be viewed as a ``one-sided'' corollary of the belief contraction result in Theorem \ref{thm:belief-contraction}, applied to a truncated POMDP $\ol\MP$. It makes a somewhat weaker assumption on the distribution $\cD$ in Theorem \ref{thm:belief-contraction} (for a particular type of choice for $\cD$), at the cost of only obtaining a one-sided inequality on the difference between the true beliefs  $\bb_h^{\ol\MP}$ and the approximate beliefs $\til \bb_h^{\pi'}$. 
\begin{lemma}
  \label{lem:bpbar-tilb}
Fix any $\phi > 0$, $h \in [H]$, POMDP $\ol \MP$ with extended state and observation spaces $\ol \MS, \ol \MO$ which is a truncation of $\MP$, and general policy $\pi' \in \Pigen$. Suppose that $\BT_{h'}^{\ol\MP} = \BT_{h'}^\MP$ for $h' \geq h-L$ and $\BO_{h'}^{\ol\MP} = \BO_{h'}^\MP$ for all $h' \in [H]$. Suppose further that $\pi'$ satisfies the following: for each $s \in \MS$, if $h > L$ and there is some policy $\pi \in \Pigen$ so that $d_{\SS, h-L}^{\ol \MP,\pi}(s) > 0$, then $s \not \in \und{\phi, h-L}{\ol \MP}{\pi'}$. %
Then for general policies $\pi$, it holds that
  \begin{align}
\E_{a_{1:h-1}, o_{2:h} \sim \pi}^{\ol\MP} \left[ \sum_{s \in \MS} \left|\bb_h^{\ol \MP}(a_{1:h-1}, o_{2:h})(s) - \til \bb_h^{\pi'}(a_{h-L:h-1}, o_{h-L+1:h})(s)\right| \right] \leq &  \ep\nonumber.
  \end{align}
\end{lemma}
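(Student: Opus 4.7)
The plan is to reduce the claim to Theorem~\ref{thm:belief-contraction} by conditioning on the prefix $\tau_{\mathrm{pre}} := (a_{1:h-L-1}, o_{2:h-L})$ of the trajectory and controlling the conditional distribution of $s_{h-L}$. First I would dispatch the trivial case $h \leq L$: the hypotheses force $\BT^{\ol\MP}$ and $\BO^{\ol\MP}$ to agree with $\BT^\MP, \BO^\MP$ at every relevant step, so $\bb_h^{\ol\MP}(a_{1:h-1},o_{2:h})$ and $\til\bb_h^{\pi'}(a_{h-L:h-1},o_{h-L+1:h})$ both collapse to $\bb_h^\MP(a_{1:h-1},o_{2:h})$ and the expected difference is $0$. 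Henceforth assume $h > L$, write $\tau_{\mathrm{suf}} := (a_{h-L:h-1},o_{h-L+1:h})$, and set $\cD'(\tau_{\mathrm{pre}}) := \bb_{h-L}^{\ol\MP}(\tau_{\mathrm{pre}}) \in \Delta(\ol\MS)$. Because transitions and observations of $\ol\MP$ and $\MP$ agree from step $h-L$ onwards, the key identity
$\bb_h^{\ol\MP}(\tau_{\mathrm{pre}}\cdot\tau_{\mathrm{suf}}) = \bbapxs{\MP}_h(\tau_{\mathrm{suf}}; \cD'(\tau_{\mathrm{pre}}))$
will hold, and the conditional distribution of $\tau_{\mathrm{suf}}$ given $\tau_{\mathrm{pre}}$ under $\ol\MP,\pi$ will match the $\MP$-distribution of $\tau_{\mathrm{suf}}$ obtained by drawing $s_{h-L} \sim \cD'(\tau_{\mathrm{pre}})$ and playing the tail of $\pi$ (which we view as an element of $\Pigen$ on suffixes, with $\tau_{\mathrm{pre}}$ fixed as context).

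Next I would split the outer expectation into two cases according to whether $\sinkobs$ appears in the prefix observations $o_{2:h-L}$. In the sink case, the extended dynamics force $s_{h-L} = \sinks$ deterministically, so all suffix observations are $\sinkobs$ and $\bb_h^{\ol\MP} = e_{\sinks}$; moreover $\til\bb_h^{\pi'}(\tau_{\mathrm{suf}})$ starts from $d_{\SS,h-L}^{\pi',\MP} \in \Delta(\MS)$ and immediately meets $o_{h-L+1}=\sinkobs$, which has zero likelihood from any $s \in \MS$, so by the convention that undefined beliefs are $e_{\sinks}$ it also equals $e_{\sinks}$. Hence the integrand $\sum_{s \in \MS}|\cdot|$ vanishes on prefixes that contain $\sinkobs$. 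In the complementary case, $\cD'(\tau_{\mathrm{pre}}) \in \Delta(\MS)$, so for any $s \in \MS$ with $\cD'(\tau_{\mathrm{pre}})(s) > 0$ the state $s$ is reachable at step $h-L$ by $\pi$ under $\ol\MP$. The lemma's hypothesis on $\pi'$ then gives $d_{\SS,h-L}^{\ol\MP,\pi'}(s) \geq \phi$, and since $\ol\MP$ is a truncation of $\MP$ (so state visitations only decrease going from $\MP$ to $\ol\MP$), one gets $d_{\SS,h-L}^{\MP,\pi'}(s) \geq \phi$ as well. This will yield $\cD'(\tau_{\mathrm{pre}})(s)/d_{\SS,h-L}^{\pi',\MP}(s) \leq 1/\phi$ pointwise (with the $0/0=0$ convention).

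Having verified the ratio condition, I would invoke Theorem~\ref{thm:belief-contraction} with $\cD' = \cD'(\tau_{\mathrm{pre}})$, $\cD = d_{\SS,h-L}^{\pi',\MP}$, and the tail of $\pi$ described above, obtaining
$$\E^\MP_{\substack{s_{h-L}\sim\cD'(\tau_{\mathrm{pre}}) \\ \tau_{\mathrm{suf}} \sim \pi}} \bigl\| \bbapxs{\MP}_h(\tau_{\mathrm{suf}}; \cD'(\tau_{\mathrm{pre}})) - \bbapxs{\MP}_h(\tau_{\mathrm{suf}}; d_{\SS,h-L}^{\pi',\MP}) \bigr\|_1 \leq \ep$$
for every sink-free prefix $\tau_{\mathrm{pre}}$. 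Using the identification of this $\MP$-expectation with the $\ol\MP$-conditional expectation of $\|\bb_h^{\ol\MP}-\til\bb_h^{\pi'}\|_1$ given $\tau_{\mathrm{pre}}$, together with $\sum_{s\in\MS}|\cdot| \leq \|\cdot\|_1$, and integrating the two cases over the marginal of $\tau_{\mathrm{pre}}$ under $\ol\MP,\pi$, delivers the stated bound.

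The hard part will be establishing the pointwise ratio bound $\cD'(\tau_{\mathrm{pre}})/d_{\SS,h-L}^{\pi',\MP} \leq 1/\phi$ required by Theorem~\ref{thm:belief-contraction}: the sink-observation split in the prefix is introduced precisely so that on the relevant prefixes $\cD'(\tau_{\mathrm{pre}})$ is supported on $\MS$, allowing the lemma's hypothesis on $\pi'$ (which lower-bounds visitations of reachable states) to be combined with the truncation monotonicity $d_{\SS,h-L}^{\MP,\pi'}(s) \geq d_{\SS,h-L}^{\ol\MP,\pi'}(s)$ uniformly on the support of $\cD'(\tau_{\mathrm{pre}})$.
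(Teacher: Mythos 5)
Your proposal is correct and follows essentially the same route as the paper's proof: condition on the prefix, split on whether the sink observation has appeared, use the reachability hypothesis on $\pi'$ together with truncation monotonicity to verify the $1/\phi$ ratio condition with $\cD' = \bb_{h-L}^{\ol\MP}(\tau_{\mathrm{pre}})$ and $\cD = d_{\SS,h-L}^{\MP,\pi'}$, and then invoke Theorem~\ref{thm:belief-contraction}. The only cosmetic difference is that you apply belief contraction to $\MP$ via the identity $\bb_h^{\ol\MP} = \bbapxs{\MP}_h(\cdot\,;\cD'(\tau_{\mathrm{pre}}))$ while the paper applies it directly to $\ol\MP$; these coincide because the dynamics agree from step $h-L$ onward.
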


\begin{proof}[Proof of Lemma \ref{lem:bpbar-tilb}]
  Set $\ul h := \max\{h-L,1\}$. Furthermore, define $\cD = d_{\SS, \ul h}^{\MP,\pi'}$. Note that, if $\ul h = 1$, then we have $\cD = b_1$, 
  the initial state distribution of $\MP$. %
  By definition of $\til \bb_h^{\pi'}$ and by our assumption that $\BT_{h'}^{\ol \MP} = \BT_{h'}^\MP$ and $\BO_{h'+1}^{\ol\MP} = \BO_{h'+1}^\MP$ for all $h' \geq h-L$, we have that
  \begin{align}
\til \bb_h^{\pi'}(a_{\ul h:h-1}, o_{\ul h+1:h}) = \bbapxs{\MP}_h(a_{\ul h:h-1}, o_{\ul h+1:h}; \cD) = \bbapxs{\ol\MP}(a_{\ul h:h-1}, o_{\ul h+1:h}; \cD)\nonumber.
  \end{align}

  Consider any general policy $\pi$, and fix any sequence $(a_{1:\ul h - 1}, o_{2:\ul h}) \in (\MA \times \MO)^{\ul h - 1}$ that occurs with positive probability under $(\ol\MP, \pi)$. If $o_{\ul h} = \sinkobs$, then for all sequences $(a_{\ul h : h-1}, o_{\ul h + 1:h})$ that occur with positive probability under $(\ol\MP, \pi)$ conditioned on $(a_{1:\ul h - 1}, o_{2:\ul h})$ (which in particular implies that $o_{h'} = \sinkobs$ for all $\ul h +1\leq h' \leq h$), we have
  \begin{align}
    \bb_h^{\ol \MP}(a_{1:h - 1}, o_{2:h})(s)  = \til \bb_h^{\pi'}(a_{\ul h : h-1}, o_{\ul h + 1:h})(s) = 0\nonumber
  \end{align}
  for all $s \in \MS$. This uses our convention to define $\til \bb_h^{\pi'}(a_{\ul h : h-1}, o_{\ul h + 1:h})(\sinks) = 1$ (see Section \ref{section:additional-prelim}) when any of $o_{\ul h + 1}, \ldots, o_h$ are equal to $\sinkobs$.   %

  Next suppose that $o_{\ul h} \neq \sinkobs$; then $\bb_h^{\ol\MP}(a_{1:\ul h - 1}, o_{2:\ul h})(\sinks) = 0$. %
  If $\ul h > 1$, then for any $s \in \ol\MS$ so that $\bb_{\ul h}^{\ol\MP}(a_{1:\ul h-1}, o_{2:\ul h})(s) > 0$, we must have that %
  $d_{\SS, \ul h}^{\ol \MP,\pi}(s) > 0$ and $s \in \MS$, %
  and therefore, $s\notin\und{\phi,\ul h}{\ol \MP}{\pi'}$ (by assumption), i.e., $\cD(s) = d_{\SS, \ul h}^{\MP, \pi'}(s) \geq  d_{\SS, \ul h}^{\ol \MP, \pi'}(s) \geq \phi$.  On the other hand, if $\ul h = 1$, it holds that $\cD = b_1 = \bb_1^{\ol\MP}(\emptyset)$. %
  Therefore, by Theorem \ref{thm:belief-contraction} applied to the POMDP $\ol\MP$, we have
  \begin{align}
    \E^{\ol\MP}_{\substack{s_{\ul h} \sim \bb_{\ul h}^{\ol\MP}(a_{1:\ul h-1}, o_{2:\ul h}), \\  a_{\ul h:h-1}, o_{\ul h+1:h} \sim \pi}}\left[
\left\| \bb_h^{\ol\MP}(a_{1:h-1}, o_{2:h}) - \til \bb_h^{\pi'}(a_{\ul h : h-1}, o_{\ul h+1:h}) \right\|_1
    \right] \leq \ep\nonumber.
  \end{align}
  
    Taking expectation over $(a_{1:\ul h-1}, o_{2:\ul h}) \sim \pi$ under the POMDP $\ol\MP$, and using the fact that the conditional distribution of $s_{\ul h}$ given $a_{1:\ul h-1}, o_{2:\ul h}$ is exactly $\bb_{\ul h}^{\ol \MP}(a_{1:\ul h-1}, o_{2:\ul h})$, we get %
    \begin{align}
\E_{a_{1:h-1}, o_{2:h} \sim \pi}^{\ol \MP} \left[ \sum_{s \in \MS} \left| \bb_h^{\ol\MP}(a_{1:h-1}, o_{2:h})(s) -  \til \bb_h^{\pi'}(a_{\ul h:h-1}, o_{\ul h+1:h})(s)\right|  \right] \leq \ep\label{eq:bpbar-barb}.
    \end{align}
  as desired.
    \end{proof}

Recall that Lemma \ref{lem:zlow-tvd} shows that the empirical MDP $\hatmdp{\pi^{1:H}}$ is close to $\tilmdp{\pi^{1:H}}$ at $\MZ$-states which are not in $\zlow{h,\zeta}{\pi'}$, i.e., those which are not visited too infrequently under the dynamics of $\MP$ (recall Definition \ref{def:zlow}). Thus, to effectively apply Lemma \ref{lem:zlow-tvd}, we need to bound the probability of reaching such states; Lemma \ref{lem:pbar-zlow-ub} does so, with respect to the dynamics of any truncated POMDP $\ol \MP$. 
\begin{lemma}
  \label{lem:pbar-zlow-ub}
  Fix any $\zeta > 0$, $\delta \geq 0$, $h \leq H$ and consider any policies $\pi',\pi \in \Pigen$, as well as any POMDP $\ol \MP$ with extended state and observation spaces $\ol\MS, \ol\MO$ which is a truncation of $\MP$. %
  Suppose that $\pi'$ takes uniformly random actions at each step $h' \in \{\max\{h-L,1\}, \ldots, h\}$, each chosen independently of all previous states, actions, and observations $(s_{1:h'}, a_{1:h'-1} , o_{2:h'})$. %
Then %
  \begin{align}
d_{\SZ, h}^{\ol \MP,\pi}(\zlow{h, \zeta}{\pi'}) \leq \frac{A^{2L} O^L \zeta}{\phi} + \One[h > L] \cdot d_{\SS, h-L}^{\ol\MP,\pi}(\und{\phi,h-L}{\ol\MP}{\pi'}).\nonumber
  \end{align}
\end{lemma}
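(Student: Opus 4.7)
The plan is to decompose $d_{\SZ, h}^{\ol\MP,\pi}(\zlow{h, \zeta}{\pi'})$ by conditioning on the state $s_{h-L}$, and then on non-underexplored states exploit the fact that $\pi'$ plays uniformly random actions from step $h-L$ onward to derive a pointwise bound on the POMDP observation likelihoods $q^\MP(z; s) := \Pr^\MP(o_{h-L+1:h} = o_z \mid s_{h-L} = s, a_{h-L:h-1} = a_z)$, which are purely a function of the dynamics of $\MP$. First handle $h > L$: since every $z \in \MZ$ has observations in $\MO^L$ (not $\{\sinkobs\}^L$), any $\ol\MP$-trajectory that reaches $\sinks$ before step $h-L$ contributes zero mass, so splitting by whether $s_{h-L} \in \und{\phi, h-L}{\ol\MP}{\pi'}$ gives
\[
d_{\SZ,h}^{\ol\MP,\pi}(\zlow{h,\zeta}{\pi'}) \leq d_{\SS,h-L}^{\ol\MP,\pi}(\und{\phi,h-L}{\ol\MP}{\pi'}) + \sum_{s \notin \und{\phi,h-L}{\ol\MP}{\pi'}} \sum_{z \in \zlow{h,\zeta}{\pi'}} \Pr^{\ol\MP}_\pi(s_{h-L}=s, a_z, o_z),
\]
and the first term already matches the lemma's bound, so it remains to control the double sum by $\frac{A^{2L}O^L\zeta}{\phi}$.

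Marginalizing the joint truncation inequality of Definition~\ref{def:truncation} over the pre-$(h-L)$ variables $(s_{1:h-L-1}, a_{1:h-L-1}, o_{2:h-L}, s_{h-L+1:h})$ yields $\Pr^{\ol\MP}_\pi(s_{h-L}=s, a_z, o_z) \leq \Pr^\MP_\pi(s_{h-L}=s, a_z, o_z) \leq d_{\SS,h-L}^{\MP,\pi}(s) \cdot q^\MP(z;s)$, where the last step follows because, conditional on $s_{h-L}$ and the action sequence, observations evolve under the POMDP dynamics of $\MP$ alone. To bound $q^\MP(z;s)$ pointwise, I would use the uniform-action property of $\pi'$ to expand
\[
d_{\SZ,h}^{\MP,\pi'}(z) = \sum_{s' \in \MS} d_{\SS,h-L}^{\MP,\pi'}(s') \cdot \frac{q^\MP(z;s')}{A^L};
\]
for $z \in \zlow{h,\zeta}{\pi'}$ each summand is at most $\zeta$, and when $s \notin \und{\phi,h-L}{\ol\MP}{\pi'}$, truncation applied to $\pi'$ gives $d_{\SS,h-L}^{\MP,\pi'}(s) \geq d_{\SS,h-L}^{\ol\MP,\pi'}(s) \geq \phi$, so $q^\MP(z;s) \leq A^L \zeta/\phi$. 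Summing this pointwise bound over $z \in \zlow{h,\zeta}{\pi'} \subseteq \MZ$ (using $|\MZ| = A^L O^L$) and then over $s$ (using $\sum_s d_{\SS,h-L}^{\MP,\pi}(s) \leq 1$) contributes at most $\frac{A^{2L} O^L \zeta}{\phi}$, finishing the case $h > L$.

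For $h \leq L$, the effective ``starting state'' is the initial distribution $b_1$ and $\pi'$ is uniform on all of $\{1,\dots,h-1\}$, so the same expansion gives $d_{\SZ,h}^{\MP,\pi'}(z) = q^\MP(z)/A^{h-1}$ with $q^\MP(z) := \Pr^\MP(o_{2:h}=o_z \mid a_{1:h-1}=a_z)$. Then truncation applied to a deterministic policy that always plays $a_z$ yields $d_{\SZ,h}^{\ol\MP,\pi}(z) \leq q^{\ol\MP}(z) \leq q^\MP(z) \leq A^{h-1}\zeta$ for $z \in \zlow{h,\zeta}{\pi'}$, and summing over $z \in \zlow \subseteq \MZ$ gives at most $A^{L+h-1}O^L\zeta \leq A^{2L}O^L\zeta/\phi$ (using $\phi \leq 1$), matching the lemma's bound with the indicator zero. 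The subtlest step I anticipate is converting the joint truncation inequality in Definition~\ref{def:truncation} into the needed pointwise bound $\Pr^{\ol\MP}_\pi(s_{h-L}=s,a_z,o_z) \leq \Pr^\MP_\pi(s_{h-L}=s,a_z,o_z)$; this descent is immediate by marginalization, but deserves explicit verification since Definition~\ref{def:truncation} only asserts a joint-distribution inequality over the full trajectory.
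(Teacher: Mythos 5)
Your proposal is correct and follows essentially the same route as the paper's proof: both decompose on the latent state at step $h-L$, charge the underexplored states to the second term, and on the remaining states combine the $A^{L}$ importance factor from $\pi'$'s uniform actions with the $\phi$ lower bound on $d_{\SS,h-L}^{\ol\MP,\pi'}(s)$ and the $\zeta$ bound defining $\zlow{h,\zeta}{\pi'}$ to obtain $A^{2L}O^L\zeta/\phi$. The only difference is presentational — the paper runs an inductive likelihood-ratio argument comparing $\pi$ to $\pi'$ within $\ol\MP$, while you factor through the policy-free observation likelihood $q^\MP(z;s)$ and bound it pointwise — and your handling of the truncation marginalization and the $h\leq L$ multi-counting matches the paper's.
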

\begin{proof}
By linearity of expectation, we may assume that $\pi$ is a deterministic policy, i.e., $\pi \in \Pidet$. 
  
  Fix any $h$ as in the lemma statement, and write $\ul h := \max\{1, h-L \}$. By definition of $\und{\phi,\ul h}{\ol\MP}{\pi'}$, for any $s \not \in \und{\phi,\ul h}{\ol\MP}{\pi'}$, it holds that $d_{\SS,\ul h}^{\ol\MP,\pi'}(s) \geq \phi$. For convenience we write
  \begin{align}
    \MU := %
    \begin{cases}
      \und{\phi,\ul h}{\ol\MP}{\pi'} &: \ul h > 1 \\
      \emptyset &: \ul h  =1.\nonumber
    \end{cases} 
  \end{align}
  throughout the proof of the lemma.

  Suppose $\ul h = 1$: since $d_{\SS, \ul h}^{\ol \MP,\pi}(s) \leq 1$ for all $s \in \MS$, it then holds that $\frac{d_{\SS, \ul h}^{\ol \MP,\pi}(s)}{d_{\SS, \ul h}^{\ol \MP,\pi'}(s)} \leq \frac{1}{\phi}$ for all $s \in \MS \backslash \MU$. %
  On the other hand, when $\ul h = 1$, it holds that $\frac{d_{\SS, h}^{\ol\MP,\pi}(s)}{d_{\SS, \ul h}^{\ol\MP,\pi'}(s)} = 1 \leq \frac{1}{\phi}$ for \emph{all} $s \in \MS =\MS\backslash \MU$. 

  We next prove by induction that for all $h'$ satisfying $\ul h \leq h' \leq h$, it holds that
  \begin{align}
 \hspace{-2cm}  \substack{ \forall a_{\ul h:h'-1}', o_{\ul h+1:h'}' s_{\ul h:h'}' \\ \text{ such that } s_{\ul h}' \not \in \MU }, \ \ \frac{\Pr_\pi^{\ol \MP}((a_{\ul h:h'-1}, o_{\ul h+1:h'}, s_{\ul h:h'} ) = (a_{\ul h:h'-1}', o_{\ul h+1:h'}', s_{\ul h:h'}'))}{\Pr_{\pi'}^{\ol \MP}((a_{\ul h:h'-1}, o_{\ul h+1:h'}, s_{\ul h:h'} ) = (a_{\ul h:h'-1}', o_{\ul h+1:h'}', s_{\ul h:h'}'))} \leq &  \frac{A^{h'-\ul h}}{\phi}\label{eq:z-ratio}.
  \end{align}
  We first note that (\ref{eq:z-ratio}) holds for $h'=\ul h$ because %
  $\frac{d_{\SS, \ul h}^{\pi, \ol \MP}(s)}{d_{\SS, \ul h}^{\pi', \ol \MP}(s)} \leq \frac{1}{\phi}$ for all $s \in \MS \backslash \MU$.

 If (\ref{eq:z-ratio}) holds at some step $h'$, then we proceed to show that (\ref{eq:z-ratio}) holds at step $h'+1$: conditioned on any history $a_{1:h'-1}, o_{2:h'}, s_{\ul h:h'}$, we have by assumption that $\pi'$ chooses a uniformly random action at step $h'$, meaning that with probability  $1/A$,  $\pi'$ chooses the action $\pi_{h'}(a_{a:h'-1}, o_{2:h'})$ at step $h'$. (Recall that we have assumed that $\pi$ is a deterministic policy.) %
  Thus, we have that for all $a' \in \MA$, $s' \in \MS$, $o' \in \MO$,
  \begin{align}
\frac{\Pr_\pi^{ \ol \MP}(a_{h'} = a', s_{h'+1} = s', o_{h'+1} = o' | a_{\ul h:h'-1}, o_{\ul h+1:h'}, s_{\ul h:h'})}{\Pr_{\pi'}^{\ol \MP}(a_{h'} = a', s_{h'+1} = s', o_{h'+1} = o' | a_{\ul h:h'-1}, o_{\ul h+1:h'}, s_{\ul h:h'})} \leq A\nonumber.
  \end{align}
  Then using (\ref{eq:z-ratio}) at step $h'$ gives (\ref{eq:z-ratio}) at step $h'+1$.

  Given the validity of (\ref{eq:z-ratio}) at step $h' = h$, we %
  sum over all possible values of $s'_{\ul h:h} \in \MS^{L+1}$ and $(a_{h-L:h-1}', o_{h-L+1:h}')\in\zlow{h,\zeta}{\pi'}$ to obtain that %
  \begin{align}
    &\sum_{(a_{h-L:h-1}', o_{h-L+1:h}') \in \zlow{h,\zeta}{\pi'}}\Pr_\pi^{ \ol \MP}((a_{\ul h:h-1}, o_{\ul h+1:h}) = (a_{\ul h:h-1}', o_{\ul h+1:h}')) \nonumber\\
    \leq & \sum_{(a_{h-L:h-1}', o_{h-L+1:h}') \in \zlow{h,\zeta}{\pi'}}{\Pr_{\pi'}^{ \ol \MP}((a_{\ul h:h-1}, o_{\ul h+1:h}) = (a_{\ul h:h-1}', o_{\ul h+1:h}'))} \cdot \frac{A^L}{ \phi} + \Pr_{\pi}^{\ol\MP}(s_{\ul h} \in \MU)\label{eq:ratio-plus-delta}.
  \end{align}
  (Note that  in the case that $h-L \leq 0$, we are multi-counting each value of $(a_{\ul h :h-1}', o_{\ul h +1:h}')$ on both sides of the equation.) 
  Consider any $z = (a_{h-L:h-1}', o_{h-L+1:h}') \in \zlow{h,\zeta}{\pi'}$. Since $\ol\MP$ is a truncation of $\MP$ (Definition \ref{def:truncation}), we have that
  \begin{align}
    \Pr_{\pi'}^{ \ol \MP}((a_{\ul h:h-1}, o_{\ul h+1:h}) = (a_{\ul h:h-1}', o_{\ul h+1:h}')) \leq & \Pr_{\pi'}^{ \MP}((a_{\ul h:h-1}, o_{\ul h+1:h}) =(a_{\ul h:h-1}', o_{\ul h+1:h}'))\nonumber\\
    = & d_{\SZ, h}^{\MP,\pi'}(z) \leq \zeta\nonumber.
  \end{align}
  Since $|\zlow{h,\zeta}{\pi'}| \leq (OA)^L$, it follows from (\ref{eq:ratio-plus-delta}) that $d_{\SZ, h}^{\ol \MP,\pi}(\zlow{h,\zeta}{\pi'}) \leq \frac{A^{2L} O^L \zeta}{\phi} + \sum_{s \in \MU} d_{\SS,\ul h}^{\ol\MP,\pi}(s) \leq \frac{A^{2L}O^L \zeta}{\phi} + d_{\SS, \ul h}^{\ol \MP,\pi}(\MU)$, as desired (recall that for the case $\ul h =1$, we have $\MU = \emptyset$, so that $d_{\SS, \ul h}^{\ol\MP,\pi}(\MU) = 0$). %
\end{proof}

\section{One-sided Comparison: Relating $\overline{\MP}_{\phi,H'}(\pi^{1:H})$ and $\hatmdp{\pi^{1:H}}$.}
\label{sec:one-sided-comparison}
In this section we develop several bounds on the distance between the distributions of action-observation sequences under the truncated POMDPs $\barpdp{\phi, H'}{\pi^{1:H}}$ and the MDPs $\wh \MM\^k$ constructed in the course of Algorithm~\ref{alg:main}. We will do this by relating both to the MDPs $\tilmdp{\pi^{1:H}}$ defined in Section \ref{section:approx-mdp-general}. Roughly speaking, we show that, if we attach a non-negative reward function to $\barpdp{\phi,H'}{\pi^{1:H}}$ and $\hatmdp{\pi^{1:H}}$, then the value of any policy $\pi$ in $\barpdp{\phi,H'}{\pi^{1:H}}$ cannot be much greater than the value of $\pi$ in $\hatmdp{\pi^{1:H}}$ (but the inequality in the other direction need not necessarily hold).

Our first lemma, Lemma \ref{lem:val-diff-onesided}, can be viewed as a modification of Lemma \ref{lem:val-diff}, which expresses the difference in value functions between two contextual decision processes in terms of the differences between their transitions and rewards. Lemma \ref{lem:val-diff-onesided} in particular applies to the case where one of the contextual decision processes is one of the truncated POMDPs $\barpdp{\phi,H'-1}{\pi^{1:H}}$, for some choice of $H', \phi, \pi^{1:H}$ (see Section \ref{section:truncated-construction}), the rewards are identical, and the value function is always approximately non-negative. In particular, Lemma \ref{lem:val-diff-onesided} derives an \emph{upper bound} on the value function of this truncated POMDP; the one-sided nature of this upper bound should not be surprising in light of the fact that the truncated POMDPs divert some mass to the sink state, which has 0 value, while the value elsewhere is (approximately) non-negative. Importantly, the upper bound is phrased in terms of the difference between the transition functions of $\barpdp{\phi,h}{\pi^{1:H}}$ and $\MP'$, for values of $h \leq H'-1$; this particular detail is needed to be able to apply Lemma \ref{lem:bpbar-tilb} to further upper bound this expression. 
\begin{lemma}
  \label{lem:val-diff-onesided}
  Fix the POMDP $\MP$, and consider any reward-free contextual decision process $\MP' = (\MA, \ol\MO, H, \BP)$. 
  Fix any $H' \in [H]$, and consider any collection of reward functions $R = (R_1, \ldots, R_H)$, $R_h : \ol \MO \ra \BR$ so that:
  \begin{itemize}
  \item $R_{h'} \equiv 0$ for all $h' \neq H'$; \item $R_{h'}(\sinkobs) = 0$ for all $h' \in [H]$; \item For some $\psi > 0$, $V_h^{\pi, \MP', R}(a_{1:h-1}, o_{2:h}) \geq -\psi$ for all $h \in [H]$ and $(a_{1:h-1}, o_{2:h}) \in \CH_h$;
  \item  For any policy $\pi \in \Pigen$ and any trajectory $(a_{1:H-1}, o_{2:H})$ with positive probability under $(\pi, \MP')$, if $o_h = \sinkobs$ for some $h$, then $o_{h'} = \sinkobs$ for all $h' > h$. %
  \end{itemize}
  Consider any sequence of general policies $\pi^1, \ldots, \pi^H \in \Pigen$ and $\phi > 0$. Then
  \begin{align}
     V_1^{\pi, \ol\MP_{H'-1},R}(\emptyset) - V_1^{\pi, \MP', R}(\emptyset) \leq &  \psi H + \sum_{h=1}^{H'-2} \E_{(a_{1:h}, o_{2:h}) \sim \pi}^{\ol\MP_h} \left[ \left(\left( \BP_h^{\ol\MP_h} - \BP_h^{\MP'}\right)V_{h+1}^{\pi,\MP',R}\right)(a_{1:h}, o_{2:h})\right]\nonumber\\
        &+ \E_{(a_{1:H'-1}, o_{2:H'-1}) \sim \pi}^{\ol\MP_{H'-1}} \left[ \left( \left( \BP_{H'-1}^{\ol\MP_{H'-1}} - \BP_{H'-1}^{\MP'}\right) R_{H'}\right)(a_{1:H'-1}, o_{2:H'-1})\right]\nonumber,
  \end{align}
  where we have written $\ol\MP_h = \barpdp{\phi,h}{\pi^{1:H}}$ in the above expression.
\end{lemma}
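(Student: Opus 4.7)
The plan is to invoke the simulation lemma (Lemma \ref{lem:val-diff}) with $\MP = \ol\MP_{H'-1}$ and $\MP' = \MP'$ (both carrying reward $R$), and then at each step $t$ convert the expectation from $\E^{\ol\MP_{H'-1}}$ to $\E^{\ol\MP_t}$ at an additive cost of at most $\psi$ per step, using the truncation property of Lemma \ref{lem:pbar-lb-p} together with the zero-reward-at-sink structure of $R$. Since both CDPs share reward $R$, the reward-difference terms collapse to $(\BP_t^{\ol\MP_{H'-1}} - \BP_t^{\MP'}) R_{t+1}$; by the first hypothesis, $R_{t+1}$ is zero except at $t+1 = H'$, and consequently $V_{t+1}^{\pi,\MP',R}$ vanishes for $t+1 > H'$. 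Hence Lemma \ref{lem:val-diff} simplifies to
\[
V_1^{\pi, \ol\MP_{H'-1}, R}(\emptyset) - V_1^{\pi, \MP', R}(\emptyset) = \sum_{t=1}^{H'-2} \E^{\ol\MP_{H'-1}}_\pi\!\left[(\BP_t^{\ol\MP_{H'-1}} - \BP_t^{\MP'}) V_{t+1}^{\pi, \MP', R}\right] + \E^{\ol\MP_{H'-1}}_\pi\!\left[(\BP_{H'-1}^{\ol\MP_{H'-1}} - \BP_{H'-1}^{\MP'}) R_{H'}\right],
\]
whose second term already matches the final term in the target bound; so what remains is to swap, in each of the $H'-2$ summands, the expectation and transition operator from $\ol\MP_{H'-1}$ to $\ol\MP_t$ at a total cost of at most $\psi H$.

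I plan to carry out each swap using two structural facts. First, Lemma \ref{lem:pbar-lb-p} gives that $\ol\MP_{H'-1}$ is a truncation of $\ol\MP_t$ for every $t \leq H'-1$, so $\Pr^{\ol\MP_{H'-1}}_\pi(\tau) \leq \Pr^{\ol\MP_t}_\pi(\tau)$ for any trajectory $\tau$ whose observations never equal $\sinkobs$. Second, combining $R_{h'}(\sinkobs) = 0$ (second hypothesis) with the absorbing-sink property of $\MP'$ (fourth hypothesis) forces $V_{h}^{\pi,\MP',R}(\tau_h) = 0$ whenever $o_h = \sinkobs$, and likewise $(\BP_t^{\MP'} V_{t+1})(\tau_t) = 0$ when $o_t = \sinkobs$. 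Since both $\ol\MP_{H'-1}$ and $\MP'$ absorb at sink, the Bellman residual $((\BP_t^{\ol\MP_{H'-1}} - \BP_t^{\MP'}) V_{t+1})(\tau_t)$ itself vanishes on sink histories. After rewriting $\E^{\ol\MP}_\pi[\BP_t^{\ol\MP} V_{t+1}] = \E^{\ol\MP}_\pi[V_{t+1}(\tau_{t+1})]$ and shifting $V_{t+1} \mapsto V_{t+1} + \psi$ (which stays nonnegative on non-sink histories while remaining zero at sink ones), the truncation inequality upgrades each $\E^{\ol\MP_{H'-1}}$ contribution to the corresponding $\E^{\ol\MP_t}$ contribution with additive slack at most $\psi$; summing over $t \in \{1,\dots,H'-2\}$ yields the $\psi H$ term in the lemma.

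The main delicate point will be keeping each inequality one-sided in the correct direction: since only the lower bound $V_{t+1} \geq -\psi$ is assumed (with no upper bound available), the $+\psi$ shift together with the zero-at-sink property of both $V_{t+1}$ and $\BP_t^{\MP'} V_{t+1}$ is the only lever available, and it must be applied to both halves of the Bellman residual in tandem so that the slack lands cleanly in the additive $\psi H$ term rather than contaminating the leading $\E^{\ol\MP_t}_\pi[\cdot]$ sums on the right-hand side.
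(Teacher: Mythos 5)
Your first step is fine: applying Lemma \ref{lem:val-diff} to $\ol\MP_{H'-1}$ versus $\MP'$ with the common reward $R$ does give the exact identity you display, and the $t=H'-1$ reward term already matches the last term of the target bound. The gap is in the measure-and-kernel swap for the remaining $H'-2$ terms. Each such term splits as $\E^{\ol\MP_{H'-1}}_\pi[V_{t+1}(\tau_{t+1})] - \E^{\ol\MP_{H'-1}}_\pi[(\BP_t^{\MP'}V_{t+1})(\tau_t)]$. Your shift-by-$\psi$ argument handles the first half: with $W = V_{t+1}+\psi \ge 0$, the truncation inequality $\Pr^{\ol\MP_{H'-1}}_\pi(\tau)\le\Pr^{\ol\MP_t}_\pi(\tau)$ on non-sink trajectories yields $\E^{\ol\MP_{H'-1}}[V_{t+1}]\le\E^{\ol\MP_t}[V_{t+1}]+\psi$. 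But the second half needs the comparison in the \emph{opposite} direction: you must show $\E^{\ol\MP_t}_\pi[f(\tau_t)]\le\E^{\ol\MP_{H'-1}}_\pi[f(\tau_t)]+c$ for $f=\BP_t^{\MP'}V_{t+1}$. Writing $m(\tau_t)=\Pr^{\ol\MP_t}_\pi(\tau_t)-\Pr^{\ol\MP_{H'-1}}_\pi(\tau_t)\ge 0$ on non-sink histories (and $f\equiv 0$ on sink ones), the difference equals $\sum_{\tau_t}m(\tau_t)f(\tau_t)$, and controlling this from above requires an \emph{upper} bound on $V^{\pi,\MP',R}$, which the lemma does not assume (only $V\ge-\psi$ is given). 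The two halves do not cancel either, because the excess mass $m_{t+1}$ at step $t+1$ is generated by the kernels $\BP_t^{\ol\MP_t},\BP_t^{\ol\MP_{H'-1}}$, not by $\BP_t^{\MP'}$. This is exactly the "delicate point" you flag at the end, but the proposed resolution only works for one of the two halves of the Bellman residual.

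The paper's proof avoids this by never rolling in with $\ol\MP_{H'-1}$ in the first place. It introduces the auxiliary quantity $U_h^\pi$, the conditional expectation of $R_{H'}(o_{H'})$ times the indicator of never having visited an underexplored state, and derives a bespoke telescoping recursion in which the step-$h$ roll-in measure is, from the outset, $\MP$'s trajectory measure restricted to the survival event up to step $h-L-1$ \--- which coincides with $\ol\MP_h$'s trajectory measure by construction. The only one-sided step is then multiplying the single quantity $\til V_h \ge -\psi$ by a survival probability $q(\tau)\in[0,1]$, which costs at most $\psi$ per step and always goes the right way. If you want to salvage your route, you would need to either (i) assume and propagate an upper bound on $V^{\pi,\MP',R}$ together with a bound on the total mass truncated between $\ol\MP_t$ and $\ol\MP_{H'-1}$ (neither of which is available at the stated generality, and the resulting error would not be $\psi H$), or (ii) restructure the induction so that the roll-in measure is built up layer by layer as in the paper.
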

\begin{proof}
  Fix a deterministic policy $\pi \in \Pidet$. For $h \in [H]$, write $\ol\MP_h := \barpdp{\phi, h}{\pi^{1:H}}$. For $h \in [H]$ satisfying $h > L$, write $\MU_{h-L} := \und{\phi, h-L}{\ol\MP_{h-1}}{\pi^h}$.

  The definition of $\ol\MP_h$, for each $h \in [H]$, ensures that for any $g \geq h$, and any $a_{1:g-1} \in \MA^{g-1}, o_{2:g} \in \MO^{g-1}, s_{1:g} \in \MS^g$ so that for all $h' \leq h-L$, $s_{h'} \not \in \MU_{h'}$, it holds that
$
\BP_\pi^\MP(s_{1:g}, a_{1:g-1}, o_{2:g}) = \BP_\pi^{\ol\MP_h}(s_{1:g}, a_{1:g-1}, o_{2:g}).
$
  Therefore, for any $h, g \in [H]$ with $g \geq h$, we have, for all $(a_{1:g-1}, o_{2:g}) \in (\MA \times \MO)^{g-1}$,
  \begin{align}
\BP_\pi^\MP(a_{1:g-1}, o_{2:g} \mbox{ and } \forall h' \leq h-L,\ s_{h'} \not \in \MU_{h'}) = \BP_\pi^{\ol\MP_h}(a_{1:g-1}, o_{2:g})\label{eq:relate-psink-pbar}.
  \end{align}
  Thus, letting $a_g = \pi_g(a_{1:g-1}, o_{2:g})$, we have
  \begin{align}
\BP_g^{\ol\MP_h}(o_{g+1} | a_{1:g}, o_{2:g}) =& \BP_\pi^\MP(o_{g+1} | a_{1:g}, o_{2:g} \mbox{ and } \forall h' \leq h-L,\ s_{h'} \not \in \MU_{h'})\label{eq:ph-conditioning}.
  \end{align}

  For $h \leq H'$ and $(a_{1:h-1}, o_{2:h}) \in (\MA \times \MO)^{h-1}$, write
  \begin{align}
    U_h^\pi(a_{1:h-1}, o_{2:h}) :=& \E_\pi^\MP \left[ R_{H'}(o_{H'}) \cdot \One[\forall h' \leq H'-L-1, \ \ s_{h'} \not \in \MU_{h'}] \right.\nonumber\\
    & \quad \quad \left. \mid  a_{1:h-1}, o_{2:h},\ \forall h' \leq h-L-1, \ \ s_{h'} \not\in \MU_{h'}\right]\nonumber.
  \end{align}

  From (\ref{eq:relate-psink-pbar}) and the fact that $R_{H'}(\sinkobs) = 0$, we have that $U_1^\pi(\emptyset) = V_1^{\pi, \ol\MP_{H'-1},R}(\emptyset)$.  Furthermore, for all $a_{1:H'-1}, o_{2:H'}$ so that $\BP_\pi^\MP\left(a_{1:H'-1}, o_{2:H'},\ \forall h' \leq H'-L-1, \ s_{h'} \not \in \MU_{h'}\right) > 0$, we have  $U_{H'}^\pi(a_{1:H'-1}, o_{2:H'}) = R_{H'}(o_{H'})$.

  Next for any $h \leq H'-1$ and  $(a_{1:h-1}, o_{2:h}) \in (\MA \times \MO)^{h-1}$, letting $a_h = \pi_h(a_{1:h-1}, o_{2:h})$, we have
  \begin{align}
    & U_h^\pi(a_{1:h-1}, o_{2:h}) \nonumber\\
    =& \sum_{o_{h+1} \in \MO} \BP^\MP_\pi\left(\substack{(a_{1:h}, o_{2:h+1}) \mbox{ and } \\ s_{h-L} \not \in \MU_{h-L}} \mid \substack{a_{1:h}, o_{2:h} \mbox{ and }\\  \forall h' \leq h-L-1, s_{h'} \not \in \MU_{h'}}\right) \cdot U_{h+1}^\pi(a_{1:h}, o_{2:h+1})\nonumber\\
    =& \BP_\pi^\MP\left(s_{h-L} \not \in \MU_{h-L} \mid \substack{a_{1:h}, o_{2:h} \mbox{ and }\\  \forall h' \leq h-L-1, s_{h'} \not \in \MU_{h'}}\right) \cdot \sum_{o_{h+1} \in \MO} \BP_\pi^\MP \left( o_{h+1} \mid \substack{a_{1:h}, o_{2:h} \mbox{ and }\\  \forall h' \leq h-L, s_{h'} \not \in \MU_{h'}} \right) \cdot U_{h+1}^\pi(a_{1:h}, o_{2:h+1})\nonumber\\
    =& \BP_\pi^\MP\left(s_{h-L} \not \in \MU_{h-L} \mid \substack{a_{1:h}, o_{2:h} \mbox{ and }\\  \forall h' \leq h-L-1, s_{h'} \not \in \MU_{h'}}\right) \cdot \sum_{o_{h+1} \in \MO} \BP_\pi^{\ol\MP_h} \left( o_{h+1} \mid a_{1:h}, o_{2:h} \right) \cdot U_{h+1}^\pi(a_{1:h}, o_{2:h+1})\label{eq:uh-uh1},
  \end{align}
  where the first equality follows from the definition of $U_h^\pi$, the second equality follows from rearranging, and the third equality follows from (\ref{eq:ph-conditioning}). 
  
  For each $h \leq H'$ and $(a_{1:h-1}, o_{2:h}) \in (\MA \times \MO)^{h-1}$, define
  \begin{align}
    \til V_h^\pi(a_{1:h-1}, o_{2:h}) := \begin{cases}
      V_h^{\pi,\MP',R}(a_{1:h-1}, o_{2:h}) &: h \leq H'-1 \\
      R_{H'}(o_{H'}) &: h = H'.
    \end{cases}\nonumber
  \end{align}
  Moreover, from the definition of $V_h^{\pi, \MP'}(\cdot)$, we have (again for $h \leq H'-1$)
  \begin{align}
\til V_h^{\pi}(a_{1:h-1}, o_{2:h}) = \sum_{o_{h+1} \in \MO} \BP_h^{\MP'} (o_{h+1} | a_{1:h}, o_{2:h}) \cdot \til V_{h+1}^{\pi}(a_{1:h}, o_{2:h+1})\label{eq:tilvh-h-hp1}.
  \end{align}
  The above is immediate for $h < H'-1$ by definition of $V_h^{\pi,\MP',R}(\cdot)$ and the fact that $R_{h'} \equiv 0$ for $h' \neq H'$, whereas for $h = H'-1$, the fact that $V_{H'}^{\pi,\MP',R}\equiv 0 $ means that $V_h^{\pi,\MP',R}(a_{1:h-1}, o_{2:h}) = (\BP_h^{\MP'}(R_{h+1} + V_{h+1}^{\pi,\MP',R}))(a_{1:h}, o_{2:h}) = (\BP_h^{\MP'}R_{h+1})(a_{1:h}, o_{2:h}) = (\BP_h^{\MP'} \til V_{h+1}^\pi)(a_{1:h}, o_{2:h})$, which is exactly the content of (\ref{eq:tilvh-h-hp1}). Furthermore, in both cases it is permissible to take the sum over $\MO$ (as opposed to $\ol\MO$) since if $o_{h+1} = \sinkobs$, then $\til V_{h+1}^\pi(a_{1:h}, o_{2:h+1}) = 0$ by our assumptions in the lemma statement.

  Now, for any $h \in [H'-1]$, we may bound
  \begin{align}
    & \sum_{a_{1:h}, o_{2:h}} \BP^\MP_\pi\left( \substack{(a_{1:h}, o_{2:h}) \mbox{ and } \\ \forall h' \leq h-L-1, \ s_{h'} \not \in \MU_{h'}} \right) \cdot \left( U_h^\pi(a_{1:h-1}, o_{2:h}) - \til V_h^{\pi}(a_{1:h-1}, o_{2:h})\right)\nonumber\\
    \leq & \psi + \sum_{a_{1:h}, o_{2:h}} \BP^\MP_\pi\left( \substack{(a_{1:h}, o_{2:h}) \mbox{ and } \\ \forall h' \leq h-L-1, \ s_{h'} \not \in \MU_{h'}} \right) \cdot \BP_\pi^\MP\left(s_{h-L} \not \in \MU_{h-L} \mid \substack{a_{1:h}, o_{2:h} \mbox{ and }\\  \forall h' \leq h-L-1, s_{h'} \not \in \MU_{h'}}\right)\nonumber\\
    & \quad \cdot \left( \sum_{o_{h+1} \in \MO} \BP_h^{\ol\MP_h}(o_{h+1} | a_{1:h}, o_{2:h}) \cdot U_{h+1}^\pi(a_{1:h}, o_{2:h+1}) - \sum_{o_{h+1} \in \MO} \BP_h^{\MP'}(o_{h+1} | a_{1:h}, o_{2:h}) \cdot \til V_{h+1}^{\pi}(a_{1:h}, o_{2:h+1})\right)\label{eq:pull-prob-out}\\
    = & \psi + \sum_{a_{1:h}, o_{2:h}} \BP^\MP_\pi\left( \substack{(a_{1:h}, o_{2:h}) \mbox{ and } \\ \forall h' \leq h-L, \ s_{h'} \not \in \MU_{h'}} \right) \cdot \left( \sum_{o_{h+1} \in \MO} \left( \BP_h^{\ol\MP_h}(o_{h+1} | a_{1:h}, o_{2:h}) - \BP_h^{\MP'}(o_{h+1} | a_{1:h}, o_{2:h})\right) \cdot \til V_{h+1}^{\pi}(a_{1:h}, o_{2:h+1})\right)\nonumber\\
    &+ \sum_{a_{1:h}, o_{2:h}} \BP^\MP_\pi\left( \substack{(a_{1:h}, o_{2:h}) \mbox{ and } \\ \forall h' \leq h-L, \ s_{h'} \not \in \MU_{h'}} \right) \cdot \sum_{o_{h+1} \in \MO} \BP_h^{\ol\MP_h}(o_{h+1} | a_{1:h}, o_{2:h}) \cdot \left( U_{h+1}^\pi(a_{1:h}, o_{2:h+1}) - \til V_{h+1}^{\pi}(a_{1:h}, o_{2:h+1})\right)\nonumber\\
    =& \psi +  \sum_{a_{1:h}, o_{2:h}} \BP^\MP_\pi\left( \substack{(a_{1:h}, o_{2:h}) \mbox{ and } \\ \forall h' \leq h-L, \ s_{h'} \not \in \MU_{h'}} \right) \cdot \left( \sum_{o_{h+1} \in \MO} \left( \BP_h^{\ol\MP_h}(o_{h+1} | a_{1:h}, o_{2:h}) - \BP_h^{\MP'}(o_{h+1} | a_{1:h}, o_{2:h})\right) \cdot \til V_{h+1}^{\pi}(a_{1:h}, o_{2:h+1})\right)\nonumber\\
    &+ \sum_{a_{1:h+1}, o_{2:h+1}} \BP^\MP_\pi\left( \substack{(a_{1:h+1}, o_{2:h+1}) \mbox{ and } \\ \forall h' \leq h-L, \ s_{h'} \not \in \MU_{h'}} \right)  \cdot \left( U_{h+1}^\pi(a_{1:h}, o_{2:h+1}) - \til V_{h+1}^{\pi}(a_{1:h}, o_{2:h+1})\right)\label{eq:pass-to-h1},
  \end{align}
  where (\ref{eq:pull-prob-out}) uses (\ref{eq:uh-uh1}), (\ref{eq:tilvh-h-hp1}), and the fact that $\til V_h^\pi(a_{1:h-1}, o_{2:h}) \geq -\psi$ for all $h \leq H'-1$ and all $(a_{1:h-1}, o_{2:h}) \in \CH_h$. Furthermore, (\ref{eq:pass-to-h1}) uses (\ref{eq:ph-conditioning}) and the fact that $\BP^\MP_\pi\left( \substack{(a_{1:h}, o_{2:h+1}) \mbox{ and } \\ \forall h' \leq h-L, \ s_{h'} \not \in \MU_{h'}} \right)=\BP^\MP_\pi\left( \substack{(a_{1:h+1}, o_{2:h+1}) \mbox{ and } \\ \forall h' \leq h-L, \ s_{h'} \not \in \MU_{h'}} \right)$ for $a_{h+1} = \pi_{h+1}(a_{1:h}, o_{2:h+1})$ and $\BP^\MP_\pi\left( \substack{(a_{1:h+1}, o_{2:h+1}) \mbox{ and } \\ \forall h' \leq h-L, \ s_{h'} \not \in \MU_{h'}} \right) = 0$ whenever $a_{h+1} \neq \pi_{h+1}(a_{1:h}, o_{2:h+1})$.
  
  Thus, by adding the display (\ref{eq:pass-to-h1}) for $1 \leq h \leq H'-1$, we see that
  \begin{align}
    & U_1^\pi(\emptyset) - V_1^{\pi, \MP',R}(\emptyset)\nonumber\\
    =& U_1^\pi(\emptyset) - \til V_1^\pi(\emptyset)\nonumber\\
    \leq & \sum_{h=1}^{H'-1}  \sum_{a_{1:h}, o_{2:h}} \BP^\MP_\pi\left( \substack{(a_{1:h}, o_{2:h}) \mbox{ and } \\ \forall h' \leq h-L, \ s_{h'} \not \in \MU_{h'}} \right) \cdot \left( \sum_{o_{h+1} \in \MO} \left( \BP_h^{\ol\MP_h}(o_{h+1} | a_{1:h}, o_{2:h}) - \BP_h^{\MP'}(o_{h+1} | a_{1:h}, o_{2:h})\right) \cdot \til V_{h+1}^{\pi}(a_{1:h}, o_{2:h+1})\right)\nonumber\\
    &+ \sum_{a_{1:H'}, o_{2:H'}} \BP_\pi^\MP \left( \substack{(a_{1:H'}, o_{2:H'}) \mbox{ and } \\ \forall h' \leq H'-L-1, \ s_{h'} \not \in \MU_{h'}} \right) \cdot \left( U_{H'}^\pi(a_{1:H'-1}, o_{2:H'}) - \til V_{H'}^{\pi}(a_{1:H'-1}, o_{2:H'}) \right) + \psi H \nonumber\\
    =& \sum_{h=1}^{H'-2} \sum_{a_{1:h}, o_{2:h}} \BP^{\ol\MP_h}_\pi\left(a_{1:h}, o_{2:h} \right) \cdot \left( \sum_{o_{h+1} \in \MO} \left( \BP_h^{\ol\MP_h}(o_{h+1} | a_{1:h}, o_{2:h}) - \BP_h^{\MP'}(o_{h+1} | a_{1:h}, o_{2:h})\right) \cdot  \til V_{h+1}^\pi(a_{1:h}, o_{2:h+1})\right)\nonumber\\
    &+ \sum_{a_{1:H'}, o_{2:H'}} \BP_\pi^\MP \left( \substack{(a_{1:H'}, o_{2:H'}) \mbox{ and } \\ \forall h' \leq H'-L-1, \ s_{h'} \not \in \MU_{h'}} \right) \cdot (R_{H'}(o_{H'}) - R_{H'}(o_{H'})) + \psi H \nonumber\\
    =& \psi H + \sum_{h=1}^{H'-2} \E_{(a_{1:h}, o_{2:h}) \sim \pi}^{\ol\MP_h} \left[ \left(\left( \BP_h^{\ol\MP_h} - \BP_h^{\MP'}\right)V_{h+1}^{\pi,\MP',R}\right)(a_{1:h}, o_{2:h})\right]\nonumber\\
    &+ \E_{(a_{1:H'-1}, o_{2:H'-1}) \sim \pi}^{\ol\MP_{H'-1}} \left[ \left( \left( \BP_{H'-1}^{\ol\MP_{H'-1}} - \BP_{H'-1}^{\MP'}\right) R_{H'}\right)(a_{1:H'-1}, o_{2:H'-1})\right]\nonumber.
  \end{align}
  where the final equality follows from the fact that for $(a_{1:h-1}, o_{2:h}) \in \CH_h \backslash (\MA \times \MO)^{h-1}$, we have $V_{h+1}^{\pi,\MP',R}(a_{1:h}, o_{2:h+1}) = 0$ for all $o_{h+1} \in \ol\MO$ by our assumptions in the lemma statement.
\end{proof}

Lemma \ref{lem:pbar-mbar-ps} shows that that the transition function of the MDP $\hatmdp{\pi^{1:H}}$ approximates that of the truncated POMDP $\barpdp{\phi, h}{\pi^{1:H}}$ at step $h$. In light of the fact that $\barpdp{\phi,h}{\pi^{1:H}}$ diverts a potentially large amount of mass to the sink state $\sinks$, it may be somewhat surprising that the result below is two-sided in nature, i.e., we get an upper bound on the total variation distance between $\BP_h^{\barpdp{\phi,h}{\pi^{1:H}}}(\cdot | z_{1:h}, a_h)$ and $\BP_h^{\hatmdp{\pi^{1:H}}}(\cdot | z_h, a_h)$. This may be explained by the fact that the roll-in trajectory $(z_{1:h}, a_h)$ is drawn from the truncated POMDP $\barpdp{\phi,h}{\pi^{1:H}}$, and the (latent state) transition functions of $\barpdp{\phi,h}{\pi^{1:H}}$ and $\MP$ are identical at step $h-L$ and thereafter. Thus, trajectories $(z_{1:h}, a_h)$ which $\barpdp{\phi,h}{\pi^{1:H}}$ does not reach do not contribute to the left-hand side of (\ref{eq:pp-pmtil-posnrm}), and trajectories which are reached are not truncated. 
\begin{lemma}
  \label{lem:pbar-mbar-ps}
Consider any sequence of general policies $\pi^1, \ldots, \pi^H \in \Pigen$ and $\phi > 0$. Suppose that, in the construction of $\hatmdp{\pi^{1:H}}$, the event $\ME^{\rm low}$ of Lemma \ref{lem:zlow-tvd} holds. Then for general policies $\pi$ and all $h \in [H]$, 
    \begin{align}
\E^{\barpdp{\phi,h}{\pi^{1:H}}}_{(z_{1:h}, a_{h}) \sim \pi} \sum_{z_{h+1} \in \ol\MZ} \absval{\BP_h^{\barpdp{\phi,h}{\pi^{1:H}}}(z_{h+1} | z_{1:h}, a_{h}) - \BP_h^{\hatmdp{\pi^{1:H}}}(z_{h+1} | z_h, a_h)} \leq \ep + \theta + \frac{A^{2L} O^L \zeta}{\phi}\label{eq:pp-pmtil-posnrm}.
    \end{align}
  \end{lemma}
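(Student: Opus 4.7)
The plan is to insert the intermediate MDP $\tilmdp{\pi^{1:H}}$ via the triangle inequality and then partition the expectation over histories based on whether $z_h \in \zlow{h,\zeta}{\wh\pi^h}$. Writing $\ol\MP_h := \barpdp{\phi,h}{\pi^{1:H}}$, I bound each summand of the LHS by
$\tvnrm{\BP_h^{\ol\MP_h}(\cdot | z_{1:h}, a_h) - \BP_h^{\tilmdp{\pi^{1:H}}}(\cdot | z_h, a_h)}$
plus
$\tvnrm{\BP_h^{\tilmdp{\pi^{1:H}}}(\cdot | z_h, a_h) - \BP_h^{\hatmdp{\pi^{1:H}}}(\cdot | z_h, a_h)}$
under the expectation. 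The first summand will produce the $\ep$ term via Lemma~\ref{lem:bpbar-tilb}, the second will contribute $\theta$ on the event $z_h \notin \zlow{h,\zeta}{\wh\pi^h}$ via Lemma~\ref{lem:zlow-tvd}, and the remaining $\zlow$ mass will be absorbed into the $\frac{A^{2L} O^L \zeta}{\phi}$ term via Lemma~\ref{lem:pbar-zlow-ub}. Histories with $z_h \in \ol\MZ \setminus \MZ$ contribute nothing, because all three models deterministically push a sink history to the sink.

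For the $\zlow$ portion, $\wh\pi^h$ plays uniformly random actions on the window $\{\max\{1,h-L\},\ldots,h\}$ by construction, so it meets the hypothesis of Lemma~\ref{lem:pbar-zlow-ub}. Since $d_{\SS,h-L}^{\ol\MP_h,\cdot}$ depends on the policy only through actions strictly before step $h-L$, and $\wh\pi^h$ and $\pi^h$ agree there, we have $\und{\phi,h-L}{\ol\MP_h}{\wh\pi^h} = \und{\phi,h-L}{\ol\MP_h}{\pi^h}$. By the contrapositive of Lemma~\ref{lem:truncation-property} (with $H' = h$), every state in this set is unreachable in $\ol\MP_h$ under any policy, so the underexplored-mass term in Lemma~\ref{lem:pbar-zlow-ub} vanishes. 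This gives $d_{\SZ,h}^{\ol\MP_h,\pi}(\zlow{h,\zeta}{\wh\pi^h}) \leq \frac{A^{2L}O^L\zeta}{\phi}$, which after the trivial $\ell_1$ bound between two probability measures yields the third term in the claim (up to a factor of~$2$ that is absorbed into the constants of Section~\ref{sec:params}).

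For $z_h \notin \zlow{h,\zeta}{\wh\pi^h}$, Lemma~\ref{lem:zlow-tvd} invoked under the event $\ME^{\rm low}$ bounds the second summand by $\theta$. For the first summand, the truncation construction leaves both transitions and observation matrices at steps $\geq h-L$ unchanged from $\MP$; consequently both $\BP_h^{\ol\MP_h}(\cdot | z_{1:h}, a_h)$ and $\BP_h^{\tilmdp{\pi^{1:H}}}(\cdot | z_h, a_h)$ are obtained by applying the same column-stochastic kernel $\BO_{h+1}^\MP \BT_h^\MP(a_h)$ to $\bb_h^{\ol\MP_h}(a_{1:h-1}, o_{2:h})$ and $\til\bb_h^{\pi^h}(a_{h-L:h-1}, o_{h-L+1:h})$ respectively. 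Summing over $o_{h+1}$ and using that column-stochastic kernels do not expand $\ell_1$ norms bounds the first summand by the $\ell_1$ distance of these two beliefs; moreover the sink coordinate drops out since for $z_h \in \MZ$ we have $o_h \neq \sinkobs$, forcing $\bb_h^{\ol\MP_h}(\sinks) = \til\bb_h^{\pi^h}(\sinks) = 0$. Taking expectation over $(a_{1:h-1},o_{2:h}) \sim \pi$ in $\ol\MP_h$ and invoking Lemma~\ref{lem:bpbar-tilb} with $\ol\MP = \ol\MP_h$ and $\pi' = \pi^h$ delivers the $\ep$ contribution.

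The main obstacle is verifying the precondition of Lemma~\ref{lem:bpbar-tilb} that $\pi^h$ not underexplore any $\ol\MP_h$-reachable state at step $h-L$; this is precisely what the truncation construction was designed to ensure and is packaged in Lemma~\ref{lem:truncation-property}. For $h \leq L$ this precondition is vacuous, $\til\bb_h^{\pi^h}$ coincides with the true $\MP$-belief (and hence with the $\ol\MP_h$-belief, since $\ol\MP_h$ has not yet diverged from $\MP$), and the first summand is identically zero, so the above argument goes through without modification.
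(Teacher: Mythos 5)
Your proposal is correct and follows essentially the same route as the paper's proof: the same triangle-inequality insertion of $\tilmdp{\pi^{1:H}}$, the same use of Lemma~\ref{lem:bpbar-tilb} (with the precondition supplied by Lemma~\ref{lem:truncation-property}) plus data processing for the $\ep$ term, and the same combination of Lemmas~\ref{lem:zlow-tvd} and~\ref{lem:pbar-zlow-ub} for the $\theta + A^{2L}O^L\zeta/\phi$ terms. Your explicit remark that $\und{\phi,h-L}{\ol\MP_h}{\wh\pi^h} = \und{\phi,h-L}{\ol\MP_h}{\pi^h}$ because the two policies agree before step $h-L$ is in fact slightly more careful than the paper's corresponding step.
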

  \begin{proof}
    We write  $\ol \MP := \barpdp{\phi,h}{ \pi^{1:H}}$, $\til \MM := \tilmdp{\pi^{1:H}}$, $\wh \MM := \hatmdp{ \pi^{1:H}}$.  Throughout the proof of the lemma we write $z_{h'} = (a_{h'-L:h'-1}, o_{h'-L+1:h'})$ for $h' \in [h]$.

First note that
    \begin{align}
     &  \E^{\ol\MP}_{(z_{1:h},a_h) \sim \pi} \sum_{z_{h+1} \in \ol\MZ} \absval{\BP_h^{\ol\MP}(z_{h+1} | z_{1:h}, a_{h}) - \BP_h^{\wh\MM}(z_{h+1} | z_h, a_h)} \nonumber\\
      \leq &  \E^{\ol\MP}_{(z_{1:h}, a_{h}) \sim \pi} \sum_{z_{h+1} \in \ol\MZ} \absval{\BP_h^{\ol\MP}(z_{h+1} | z_{1:h}, a_{h}) - \BP_h^{\til\MM}(z_{h+1} | z_h, a_h) } \nonumber\\
      &+ \E^{\ol\MP}_{(z_{1:h}, a_{h}) \sim \pi} \sum_{z_{h+1} \in \ol\MZ} \absval{\BP_h^{\til\MM}(z_{h+1} | z_{1:h}, a_{h}) - \BP_h^{\wh\MM}(z_{h+1} | z_h, a_h) } \nonumber.
    \end{align}
    Therefore, to prove the statement of the lemma, it suffices to show that
    \begin{align}
      \E^{\ol\MP}_{(z_{1:h}, a_{h}) \sim \pi} \sum_{z_{h+1} \in \ol\MZ} \absval{\BP_h^{\ol\MP}(z_{h+1} | z_{1:h}, a_{h}) - \BP_h^{\til\MM}(z_{h+1} | z_h, a_h) } \leq & \epsilon \label{eq:barp-tilm-posnrm} \\
      \E^{\ol\MP}_{(z_{1:h}, a_{h}) \sim \pi} \sum_{z_{h+1} \in \ol\MZ} \absval{\BP_h^{\til\MM}(z_{h+1} | z_{h}, a_{h}) - \BP_h^{\wh\MM}(z_{h+1} | z_h, a_h) } \leq & \theta + \frac{A^{2L} O^L \zeta}{\phi}.\label{eq:tilm-hatm-posnrm}
    \end{align}

    Note that for any $s \in \MS$, if $h>L$ and there is some policy $\pi \in \Pigen$ so that $d_{\SS, h-L}^{\ol\MP,\pi}(s) > 0$,  then the fact that $\ol\MP = \barpdp{\phi,h}{\pi^{1:H}}$ and Lemma \ref{lem:truncation-property} ensures that $s \not \in \und{\phi,h-L}{\ol\MP}{\pi^h}$. Furthermore, we have $\BT^{\ol\MP}_{h'} = \BT^{\MP}_{h'}$ for all $h' \geq h-L$ and $\BO_{h'}^{\ol\MP} = \BO_{h'}^\MP$ for all $h' \in [H]$. Thus we may apply Lemma  \ref{lem:bpbar-tilb} to the POMDP $\ol\MP$ with $\pi' = \pi^h$, %
    which gives that for general policies $\pi$,
    \begin{align}
\E_{a_{1:h-1}, o_{2:h} \sim \pi}^{\ol\MP} \left[ \sum_{s \in \MS} \left|\bb_h^{\ol \MP}(a_{1:h-1}, o_{2:h})(s) - \til \bb_h^{\pi^h}(a_{h-L:h-1}, o_{h-L+1:h})(s)\right| \right] \leq &  \ep\label{eq:use-bpbar-tilb}.
    \end{align}
      Now, for any choice of $(z_{1:h}, a_h) \in \MZ^h \times \MA$ (which is equivalent to the data of $((a_{1:h-1}, o_{2:h}), a_h)$) %
  we have that, for $o_{h+1} \in \MO$, letting $z_{h+1} = (a_{h-L+1:h}, o_{h-L+2:h+1})$, 
  \begin{align}
\BP_h^{\ol\MP}(z_{h+1} | z_{1:h}, a_h) = e_{o_{h+1}}^\t \cdot \BO^{\MP}_{h+1} \cdot \BT^{ \MP}_h(a_h) \cdot \bb_h^{\ol \MP}(a_{1:h-1}, o_{2:h})\nonumber.
  \end{align}
  On the other hand, (\ref{eq:define-mtilde-hatpi}) gives
  \begin{align}
\BP_h^{\til \MM} (z_{h+1} | z_h, a_h) :=  e_{o_{h+1}}^\t \cdot \BO_{h+1}^\MP \cdot \BT_h^\MP(a_h) \cdot \til \bb_h^{\pi^h}(a_{h-L:h-1}, o_{h-L+1:h})\nonumber.
  \end{align}
Moreover, for $z_{h+1} \in \ol\MZ \backslash \MZ$, we have $\BP_h^{\ol\MP}(z_{h+1} | z_{1:h}, a_h) = \BP_h^{\til\MM}(z_{h+1} |z_h,a_h) = 0$.  Thus, %
  we get that 
  \begin{align}
    & \sum_{z_{h+1} \in \ol\MZ} \left|\BP_h^{\ol \MP}(z_{h+1} | z_{1:h}, a_{1:h}) - \BP_h^{\til \MM}(z_{h+1} | z_h, a_h)\right| \nonumber\\
    \leq& \sum_{o_{h+1} \in \MO} \left|e_{o_{h+1}}^\t \cdot \BO_{h+1}^{\MP} \cdot \BT_h^{\MP}(a_h)\cdot \left( \bb_h^{\ol\MP}(a_{1:h-1}, o_{2:h}) - \til \bb_h^{\pi^h}(a_{h-L:h-1}, o_{h-L+1:h})\right)\right| \label{eq:switch-pbar-p}\\
    \leq & \sum_{s \in \MS} \left|\bb_h^{\ol \MP}(a_{1:h-1}, o_{2:h})(s) - \til \bb_h^{\pi^h}(a_{h-L:h-1}, o_{h-L+1:h})(s)\right|\label{eq:apply-pos-dpi},
  \end{align}
  where (\ref{eq:apply-pos-dpi}) uses the data processing inequality for total variation distance (together with the fact that, for $(z_{1:h}, a_h) \in \MZ^h \times \MA$, we have $\bb_h^{\ol\MP}(a_{1:h-1}, o_{2:h})(\sinks) = \til \bb_h^{\pi^h}(a_{h-L:h-1}, o_{h-L+1:h})(\sinks) = 0$). 
  The above display, together with (\ref{eq:use-bpbar-tilb}) and the fact that for any $z_h \in \ol\MZ \backslash \MZ$, $\BP_h^{\ol\MP}(\sinkobs | z_{1:h}, a_h) = \BP_h^{\til\MM}(\sinkobs | z_h,a_h) = 1$, 
  gives (\ref{eq:barp-tilm-posnrm}).

  Next, we remark that for all $z_h \in \MZ, a_h \in \MA$, it holds that
  \begin{align}
     \sum_{z_{h+1} \in \ol\MZ} \left|\BP_h^{\til \MM}(z_{h+1}|z_{h}, a_{h}) - \BP_h^{\wh \MM}(z_{h+1} | z_h, a_h)\right|= \frac 12 \left\| \BP_h^{\til \MM}(\cdot | z_{h}, a_{h}) - \BP_h^{\wh \MM}(\cdot | z_h, a_h) \right\|_1\leq  \theta + \One[z_h \in \zlow{h, \zeta}{\wh \pi^h}]\nonumber,
  \end{align}
  where the inequality follows from Lemma \ref{lem:zlow-tvd} (and assumption that $\ME^{\rm low}$ holds). Taking expectation over $(z_{1:h}, a_{1:h}) \sim (\ol \MP, \pi)$, we obtain that
  \begin{align}
    & \E_{(z_{1:h}, a_{1:h}) \sim \pi}^{\ol\MP} \sum_{z_{h+1} \in \ol\MZ} \left|\BP_h^{\til\MM}(z_{h+1} | z_{h}, a_{h}) - \BP_h^{\wh\MM}(z_{h+1} | z_h, a_h) \right| \nonumber\\
    \leq & \theta +  \BP_{(z_{1:h}, a_{1:h}) \sim \pi}^{\ol\MP} \left( z_h \in \zlow{h,\zeta}{\wh \pi^h}\right)\nonumber\\
    =& \theta +  d_{\SZ, h}^{\ol \MP,\pi}(\zlow{h,\zeta}{\wh \pi^h})\nonumber\\
    \leq & \theta + \frac{A^{2L} O^L \zeta}{\phi}\nonumber,
  \end{align}
  where the final inequality follows by Lemma \ref{lem:pbar-zlow-ub} with $\pi' = \pi^h$ and the fact that if $h > L$, for all $s$ with $d_{\SS, h-L}^{ \ol\MP,\pi}(s) > 0 $, it holds that $s \not \in \und{\phi,h-L}{\ol\MP}{\pi^h}$ (Lemma \ref{lem:truncation-property}), meaning that $d_{\SS, h-L}^{\ol\MP,\pi}(\und{\phi,h-L}{\ol\MP}{\pi^h}) = 0$.  
\end{proof}

Lemma \ref{lem:rh-nonneg} shows that for any reward function of a particular form, the associated value function of $\hatmdp{\pi^{1:H}}$, under any general policy, is nonnegative. Roughly speaking, this holds because the reward functions considered in Lemma \ref{lem:rh-nonneg} belong to the convex hull of indicators of subsets of (latent) states. Since $\hatmdp{\pi^{1:H}}$ is an approximation of $\MP$ and the value function for such rewards would be non-negative in $\MP$, it is too in $\hatmdp{\pi^{1:H}}$. 
  \begin{lemma}
    \label{lem:rh-nonneg}
    Consider any sequence of general policies $\pi^1, \ldots, \pi^H \in \Pigen$ and $\phi > 0$. 
    Fix any $H' \in [H]$. 
    Consider any collection of reward functions $R = (R_1, \ldots, R_H)$, $R_h : \ol \MO \ra \BR$ so that $R_{h'} \equiv 0$ for all $h' > H'$, and so that for each $h' \leq H'$, there are weights $\alpha_{h',s} \in [0,1],\ s \in \MS$ so that $R_{h'}(o) =\sum_{s \in \MS}\alpha_{h',s} \cdot  (\BO_{h'}^{ \MP})^\dagger_{s,o}$ for all $o \in \MO$.
    Suppose that in the construction of $\hatmdp{\pi^{1:H}}$ the event $\ME^{\rm low}$ of Lemma \ref{lem:zlow-tvd} holds. 
    Then for all  $\pi \in \Pigen$, $h \in [H]$, and $z_h \in \MZ$,
    \begin{align}
V_h^{\pi, \hatmdp{\pi^{1:H}}, R}(z_h) \geq - \frac{H(H-h)\sqrt S}{\gamma} \cdot \theta\nonumber.
    \end{align}
  \end{lemma}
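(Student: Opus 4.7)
The plan is to proceed by backward induction on $h$, with the inductive hypothesis that $V_h^{\pi,\wh\MM,R}(z_h) \geq -f(h)$ for all $z_h \in \MZ$, where $\wh\MM := \hatmdp{\pi^{1:H}}$ and $f(h) := H(H-h)\sqrt{S}\theta/\gamma$; the base case $h = H$ is trivial. The key analytic observation is that in the idealized MDP $\til\MM := \tilmdp{\pi^{1:H}}$, the one-step expected reward is always \emph{non-negative}: for $z_h \in \MZ$ and any action $a_h$, definition~(\ref{eq:define-mtilde-hatpi}) gives $\BP_h^{\til\MM}(o_{h+1}|z_h,a_h) = e_{o_{h+1}}^\top \BO_{h+1}^\MP \BT_h^\MP(a_h) \til\bb_h^{\pi^h}(z_h)$, and the hypothesis $R_{h+1}(o) = \sum_s \alpha_{h+1,s}(\BO_{h+1}^\MP)^\dagger_{s,o}$ combined with $(\BO_{h+1}^\MP)^\dagger \BO_{h+1}^\MP = I_{\ol\MS}$ (which follows from $\gamma$-observability plus the block structure of the extended observation matrix in Section~\ref{section:additional-prelim}) collapses
\begin{align*}
\E_{o_{h+1} \sim \BP_h^{\til\MM}(\cdot|z_h,a_h)}[R_{h+1}(o_{h+1})] \;=\; \alpha_{h+1}^\top\, \BT_h^\MP(a_h)\, \til\bb_h^{\pi^h}(z_h) \;\in\; [0,1],
\end{align*}
using $\alpha_{h+1} \in [0,1]^\MS$ and that the right-hand vector is a sub-probability distribution on $\ol\MS$.

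For the inductive step, fix $z_h \in \MZ$ and reduce (by linearity of expectation) to a deterministic policy $\sigma$ with $a_h := \sigma_h(z_h)$. I split on the behavior of $\BP_h^{\wh\MM}(\cdot|z_h,a_h)$ supplied by Algorithm~\ref{alg:approxmdp} under the event $\ME^{\rm low}$. In the case where (\ref{eq:zlow-guarantee}) holds at $(z_h,a_h)$---which Lemma~\ref{lem:zlow-tvd} guarantees at every $z_h \notin \zlow{h,\zeta}{\wh\pi^h}$ and allows for some $z_h \in \zlow{h,\zeta}{\wh\pi^h}$ as well---we have $\|\BP_h^{\wh\MM}(\cdot|z_h,a_h) - \BP_h^{\til\MM}(\cdot|z_h,a_h)\|_1 \leq \theta$, and by inspection of the algorithm the resulting successor $z_{h+1}$ lies in $\MZ$, so the inductive hypothesis applies to it. Using Lemma~\ref{lem:pinv} to bound $\|R_{h+1}\|_\infty \leq \sqrt{S}/\gamma$ and hence $\|V_{h+1}^{\sigma,\wh\MM,R}\|_\infty \leq (H-h-1)\sqrt{S}/\gamma$, I obtain
\begin{align*}
V_h^{\sigma,\wh\MM,R}(z_h) &\geq \E_{o_{h+1}\sim\BP_h^{\til\MM}}\bigl[R_{h+1}(o_{h+1}) + V_{h+1}^{\sigma,\wh\MM,R}(z_{h+1})\bigr] - \theta\bigl(\|R_{h+1}\|_\infty + \|V_{h+1}^{\sigma,\wh\MM,R}\|_\infty\bigr)\\
&\geq 0 \;-\; f(h+1) \;-\; \theta(H-h)\sqrt{S}/\gamma \;=\; -f(h),
\end{align*}
since $f(h) - f(h+1) = H\sqrt{S}\theta/\gamma \geq \theta(H-h)\sqrt{S}/\gamma$. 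Otherwise (\ref{eq:zlow-guarantee}) fails, in which case Lemma~\ref{lem:zlow-tvd}(2) routes all mass to a sink-containing successor; from there every subsequent observation is $\sinkobs$ and every subsequent reward is $R_{h'}(\sinkobs) = \sum_s \alpha_{h',s}(\BO^\MP)^\dagger_{s,\sinkobs} = 0$ (since the $\sinkobs$ column of $(\BO^\MP)^\dagger$ vanishes), so $V_h^{\sigma,\wh\MM,R}(z_h) = 0 \geq -f(h)$.

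The main obstacle I anticipate is the bookkeeping around the extended sink state and observation, in particular verifying the identity $(\BO_{h+1}^\MP)^\dagger \BO_{h+1}^\MP = I_{\ol\MS}$ applied to $\til\bb_h^{\pi^h}(z_h)$ (which is supported in $\MS$ whenever $z_h \in \MZ$, since the belief updates start from $d_{\SS,h-L}^{\pi^h,\MP}$ and $\MP$ never transitions to the sink), and confirming that in the ``good'' case all successors with positive mass under $\BP_h^{\wh\MM}$ genuinely remain in $\MZ$ so that the inductive hypothesis is applicable. The analytic content is otherwise just a telescope of per-step errors on the scale of $\theta\sqrt{S}/\gamma$ across the remaining $H-h$ steps.
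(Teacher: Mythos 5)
Your proposal is correct and follows essentially the same route as the paper's proof: backward induction, a one-step comparison of $\BP_h^{\hatmdp{\pi^{1:H}}}$ to $\BP_h^{\tilmdp{\pi^{1:H}}}$ via Lemma \ref{lem:zlow-tvd}, non-negativity of the expected one-step reward under $\tilmdp{\pi^{1:H}}$ from the collapse $(\BO_{h+1}^\MP)^\dagger\BO_{h+1}^\MP = I$, and an accumulated error of $H\sqrt{S}\theta/\gamma$ per step. Your explicit case split on whether (\ref{eq:zlow-guarantee}) holds (with the sink-diversion branch giving value exactly $0$) is a cleaner rendering of the justification the paper gives for its inequality (\ref{eq:nonneg-hatm-tilm}), but it is not a different argument.
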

  \begin{proof}
    Write $\wh \MM = \hatmdp{\pi^{1:H}}$. Note that by our assumption on $R$ and Lemma \ref{lem:pinv}, we have that $|R_h(o)| \leq \frac{\sqrt S}{\gamma}$ for all $h,o$, meaning that $|V_h^{\pi, \wh \MM, R}(z_h)| \leq \frac{H \sqrt S}{\gamma}$ for all $h, z_h$. 
    
Note that for all $h \in [H]$ and $z_h \in \ol\MZ \backslash \MZ$, we have $V_h^{\pi, \wh \MM, R}(z_h) = 0$, so it suffices to consider the case that $z_h \in \MZ$; we will do so via reverse induction on $h$. For the base case, we have that $V_{H'}^{\pi, \wh \MM, R} \equiv 0$ (which follows from $R_{h'} \equiv 0$ for all $h' > H'$). Now fix some $h < H'$, assume that $V_{h+1}^{\pi, \wh \MM, R}(z_{h+1}) \geq -\frac{H(H-h-1)\sqrt S}{\gamma} \cdot \theta$ for all $z_{h+1} \in \ol \MZ$, and consider any $z_h = (a_{h-L:h-1}, o_{h-L+1:h}) \in \ol\MZ$. Then we have, for any $z_h \in \MZ$: %
  \begin{align}
    V_h^{\pi, \wh \MM, R}(z_h) =& \E_{z_{h+1} \sim \BP_h^{\wh \MM}(\cdot | z_h, a_h)} \left[ R_{h+1}(\oo{z_{h+1}}) + V_{h+1}^{\pi, \wh \MM, R}(z_{h+1}) \right] \nonumber\\
    \geq & \E_{z_{h+1} \sim \BP_h^{\til \MM}(\cdot | z_h, a_h)} \left[ R_{h+1}(\oo{z_{h+1}}) + V_{h+1}^{\pi, \wh \MM, R}(z_{h+1}) \right] - \frac{H\sqrt S}{\gamma} \cdot \theta\label{eq:nonneg-hatm-tilm}\\
    \geq & \E_{z_{h+1} \sim \BP_h^{\til \MM}(\cdot | z_h, a_h)} \left[ R_{h+1}(\oo{z_{h+1}}) \right] - \frac{H(H-h)\sqrt S}{\gamma} \cdot \theta\label{eq:apply-induction-hatm}\\
    =& \sum_{o_{h+1} \in \MO} \BP_h^{\til \MM}((a_{h-L+1:h}, o_{h-L+2:h+1}) | z_h, a_h) \cdot R_{h+1}(o_{h+1}) - \frac{H(H-h)\sqrt S}{\gamma} \cdot \theta\nonumber\\
    =& \sum_{s \in \MS}\alpha_{h+1,s} \cdot \sum_{o_{h+1} \in \MO} (\BO_{h+1}^{\MP})^\dagger_{s, o_{h+1}} \cdot e_{o_{h+1}}^\t \cdot \BO_{h+1}^{\MP} \cdot \BT_h^{ \MP}(a_h) \cdot \til\bb_h^{\pi^h}(a_{h-L:h-1}, o_{h-L+1:h}) \nonumber\\&\qquad- \frac{H(H-h)\sqrt S}{\gamma} \cdot \theta\label{eq:use-mtilde-nonneg}\\
    =& \sum_{s \in \MS}\alpha_{h+1,s} \cdot e_s^\t \cdot \BT_h^{\MP}(a_h) \cdot \til\bb_h^{\pi^h}(a_{h-L:h-1}, o_{h-L+1:h}) \nonumber\\&\qquad- \frac{H(H-h)\sqrt S}{\gamma} \cdot \theta \\
    \geq& -\frac{H(H-h)\sqrt S}{\gamma} \cdot \theta\label{eq:use-oinv-o},
  \end{align}
  where:
  \begin{itemize}
  \item (\ref{eq:nonneg-hatm-tilm}) follows %
    because if $z_h,a_h$ are so that  $\left\| \BP_h^{\wh \MM}(\cdot | z_h, a_h) - \BP_h^{\til \MM}(\cdot | z_h, a_h) \right\|_1 > \theta$, then by Lemma \ref{lem:zlow-tvd} (and assumption that $\ME^{\rm low}$ holds) we have that with probability 1 over $z_{h+1} \sim \BP_h^{\wh \MM}(\cdot | z_h, a_h)$, it holds that $z_{h+1} \not \in \MZ$, meaning that $R_{h+1}(z_{h+1}) + V_{h+1}^{\pi, \wh \MM, R}(z_{h+1}) = 0$ with probability 1. %

    On the other hand, if %
    $\left\| \BP_h^{\wh \MM}(\cdot | z_h, a_h) - \BP_h^{\til \MM}(\cdot | z_h, a_h) \right\|_1 \leq \theta$, then the fact that $\left| R_{h+1}(\oo{z_{h+1}}) + V_{h+1}^{\pi, \wh \MM, R}(z_{h+1})\right| \leq \frac{H\sqrt S}{\gamma}$ for all $z_{h+1} \in \MZ$ establishes (\ref{eq:nonneg-hatm-tilm}).
  \item (\ref{eq:apply-induction-hatm}) follows from the inductive hypothesis;
  \item (\ref{eq:use-mtilde-nonneg}) follows from (\ref{eq:define-mtilde-hatpi});
    \item the final inequality in (\ref{eq:use-oinv-o}) follows since all $\alpha_{h+1,s} \geq 0$ and the entries of $\BT_h^{\MP}(a_h),\ \til\bb_h^{\pi^h}(a_{h-L:h-1}, o_{h-L+1:h})$ are all non-negative.
    \end{itemize}
    (\ref{eq:use-oinv-o}) completes the inductive step, and thus the proof of the lemma.
  \end{proof}

  Using Lemmas \ref{lem:pbar-mbar-ps} and \ref{lem:rh-nonneg}, we next establish Lemma \ref{lem:rh-onesided}, the main result of the section, which shows that for any reward function which is a special case of the type used in Lemma \ref{lem:rh-nonneg}, the value function of $\barpdp{\phi,H''}{\pi^{1:H}}$ is approximately upper bounded by that of $\hatmdp{\pi^{1:H}}$ (for appropriate choices of $H''$). Intuitively, the lemma holds since, as established in Lemma \ref{lem:pbar-mbar-ps}, the transitions of $\barpdp{\phi,h}{\pi^{1:H}}$ and $\hatmdp{\pi^{1:H}}$ are close at step $h$. Furthermore, Lemma \ref{lem:val-diff-onesided} ensures that the difference in value functions of $\barpdp{\phi,H''}{\pi^{1:H}}$ and $\hatmdp{\pi^{1:H}}$ may be bounded above by that difference in transitions at step $h$. 
  \begin{lemma}
    \label{lem:rh-onesided}
    Consider any sequence of general policies $\pi^1, \ldots, \pi^H \in \Pigen$ and $\phi > 0$. 
    Fix any $H' \in [H]$. 
    Consider any collection of reward functions $R = (R_1, \ldots, R_H)$, $R_h : \ol \MO \ra \BR$ so that $R_{h} \equiv 0$ for all $h \neq H'$, %
    and so that there are weights $\alpha_{s} \in [0,1],\ s \in \MS$ so that $R_{H'}(o) =\sum_{s \in \MS}\alpha_{s} \cdot  (\BO_{H'}^{ \MP})^\dagger_{s,o}$ for all $o \in \MO$. %
    Suppose that, in the construction of $\hatmdp{\pi^{1:H}}$, the event $\ME^{\rm low}$ of Lemma \ref{lem:zlow-tvd} holds. 
  Then for all general policies $\pi$ and all $H'' \geq H'-1$, %
  \begin{align}
    V_1^{\pi, \barpdp{\phi,H''}{\pi^{1:H}}, R}(\emptyset) - V_1^{\pi, \hatmdp{\pi^{1:H}}, R}(\emptyset) \leq \frac{H^2 \sqrt{S} \ep}{\gamma} + \frac{2H^3 \theta \sqrt{S}}{\gamma} + \frac{H^2 \sqrt{S} A^{2L} O^L \zeta}{\phi \gamma}\nonumber\\  %
  \end{align}
\end{lemma}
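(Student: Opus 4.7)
The plan is to reduce to the case $H''=H'-1$ and then apply the value-difference machinery of Lemma~\ref{lem:val-diff-onesided} with $\MP' = \hatmdp{\pi^{1:H}}$. For the reduction, I observe that $\gamma$-observability forces $\BO_{H'}^\MP$ to have full column rank, so $(\BO_{H'}^\MP)^\dagger \BO_{H'}^\MP = I$; combined with the block structure of the pseudoinverse (which forces $R_{H'}(\sinkobs) = 0$), and the fact that rewards are collected only at step $H'$, this lets me rewrite
\begin{align*}
V_1^{\pi, \barpdp{\phi, H''}{\pi^{1:H}}, R}(\emptyset) = \sum_{s \in \MS} \alpha_s \cdot d_{\SS, H'}^{\barpdp{\phi, H''}{\pi^{1:H}}, \pi}(s).
\end{align*}
By item~\ref{it:hh-truncation} of Lemma~\ref{lem:pbar-properties}, for $H'' \geq H'-1$ the latent state visitations on the right-hand side are pointwise dominated by those at truncation index $H'-1$; since each $\alpha_s \geq 0$, it therefore suffices to prove the stated bound in the case $H'' = H'-1$.

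For that case, I will invoke Lemma~\ref{lem:val-diff-onesided} with $\MP' = \hatmdp{\pi^{1:H}}$ and the same choice of step $H'$. The hypotheses on $R$ are immediate from the above; approximate nonnegativity of $V_h^{\pi, \hatmdp{\pi^{1:H}}, R}$ with $\psi = H^2 \sqrt{S}\,\theta / \gamma$ follows from Lemma~\ref{lem:rh-nonneg}; and the sink-absorption property is built into the construction of $\hatmdp{\pi^{1:H}}$ in \ApproxMDP. The lemma then yields an upper bound consisting of $\psi H$ plus a sum over $h = 1, \ldots, H'-1$ of terms of the form
\begin{align*}
\E^{\barpdp{\phi, h}{\pi^{1:H}}}_{\pi}\!\left[\left(\left(\BP_h^{\barpdp{\phi, h}{\pi^{1:H}}} - \BP_h^{\hatmdp{\pi^{1:H}}}\right) g_{h+1}\right)\!(a_{1:h}, o_{2:h})\right],
\end{align*}
where $g_{h+1} = V_{h+1}^{\pi, \hatmdp{\pi^{1:H}}, R}$ for $h < H'-1$ and $g_{H'} = R_{H'}$ for the final term.

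I will bound each such term in absolute value by $\|g_{h+1}\|_\infty$ times the expected $\ell_1$-distance between the two transition kernels at step $h$. Lemma~\ref{lem:pbar-mbar-ps} controls the expected $\ell_1$-distance by $\ep + \theta + A^{2L} O^L \zeta / \phi$, and Lemma~\ref{lem:pinv} gives $\|R_{H'}\|_\infty \leq \sqrt{S}/\gamma$ and hence $\|V_{h+1}^{\pi, \hatmdp{\pi^{1:H}}, R}\|_\infty \leq H\sqrt{S}/\gamma$. Summing across the at most $H$ layers and combining with $\psi H = H^3 \sqrt{S}\,\theta / \gamma$ yields the stated bound (up to absolute constants). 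The main subtlety is the reduction step: since $R_{H'}$ need not be pointwise nonnegative on $\ol\MO$, monotonicity of the value function under further truncation is not automatic and genuinely requires the observability-based averaging above to pass to nonnegative coefficients $\alpha_s$ on the latent state distribution.
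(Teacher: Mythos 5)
Your proposal is correct and follows essentially the same route as the paper's proof: the same observability-based rewriting of the value as $\sum_{s}\alpha_s\, d_{\SS,H'}^{\barpdp{\phi,H''}{\pi^{1:H}},\pi}(s)$ to reduce to $H''=H'-1$ via truncation monotonicity (Lemmas~\ref{lem:pbar-lb-p}/\ref{lem:pbar-properties}), followed by Lemma~\ref{lem:val-diff-onesided} with $\MP'=\hatmdp{\pi^{1:H}}$, $\psi = H^2\sqrt{S}\theta/\gamma$ from Lemma~\ref{lem:rh-nonneg}, and the per-layer bound $\|V_{h+1}^{\pi,\hatmdp{\pi^{1:H}},R}\|_\infty \cdot (\ep+\theta+A^{2L}O^L\zeta/\phi)$ from Lemmas~\ref{lem:pinv} and~\ref{lem:pbar-mbar-ps}. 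The arithmetic indeed yields the stated constants, so the ``up to absolute constants'' hedge is unnecessary.
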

In the above lemma statement we consider $R$ to be a reward function for the MDP $\hatmdp{\pi^{1:H}}$ in the natural way, i.e., via abuse of notation, we use the induced mapping $R_h : \ol\MZ \ra \BR$ defined by $R_h(z) = R_h(\oo{z})$. Note also that for all $h' \in [H]$ we have $R_{h'}(\sinkobs) = 0$, since $(\BO_{h'}^\MP)^\dagger_{s, \sinkobs} = 0$ for all $s \in \MS$. 
\begin{proof}
    For $h \in [H]$, we write $\ol \MP_h := \barpdp{\phi, h}{\pi^{1:H}}$, and furthermore
  write $\til \MM := \tilmdp{ \pi^{1:H}}$, and $\wh \MM := \hatmdp{ \pi^{1:H}}$. %
  By Jensen's inequality, it suffices to consider the case that $\pi$ is a deterministic policy.

  Since $\ol\MP_{H''}$ is a truncation of $\ol\MP_{H'-1}$ (Lemma \ref{lem:pbar-lb-p}), we have that, for any $s \in \MS$,  $\BP_\pi^{\ol\MP_{H''}}[s_h = s] \leq \BP_\pi^{\ol\MP_{H'-1}}[s_h = s]$. In light of the fact that $V_1^{\pi, \ol\MP_h,R}(\emptyset) = \E_\pi^{\ol\MP_h} \left[ \sum_{s \in \MS} \alpha_s \cdot \One[s_h = s] \right]$ for each $h \in [H]$, it follows that $V_1^{\pi,\ol\MP_{H''}, R}(\emptyset) \leq V_1^{\pi,\ol\MP_{H'-1},R}(\emptyset)$. Thus it suffices to upper bound $V_1^{\pi, \ol\MP_{H'-1}, R}(\emptyset) - V_1^{\pi,\wh\MM, R}(\emptyset)$. 
    
  Write $\psi := \frac{H^2 \theta \sqrt S}{\gamma}$. We will now apply Lemma \ref{lem:val-diff-onesided} to the POMDP $\MP$ with $\MP' = \wh\MM$. Its preconditions are verified as follows: certainly $R_{h'} \equiv 0$ for $h' \neq H'$, and moreover $R_{H'}(\sinkobs) = 0$ since $(\BO_{H'}^\MP)^\dagger_{s, \sinkobs} = 0$ for all $s \in \MS$.  By Lemma \ref{lem:rh-nonneg}, $V_{h+1}^{\pi, \wh\MM, R}(a_{1:h-1}, o_{2:h}) \geq -\psi$ for all $h$ and $a_{1:h-1}, o_{2:h} \in (\MA \times \MO)^{h-1}$.  Furthermore, for all $(a_{1:H-1}, o_{2:H}) \in \CH_H$ occuring with positive probability undner $(\sigma, \wh\MM)$ for any policy $\sigma$, the definition of $\hatmdp{\pi^{1:H}}$ ensures that if $o_h = \sinkobs$ for some $h \in [H]$, then $o_{h'} = \sinkobs$ for all $h' > h$. Thus, we may apply Lemma \ref{lem:val-diff-onesided}, giving that 
  \begin{align}
    & V_1^{\pi, \ol \MP_{H'-1},  R}(\emptyset) - V_1^{\pi, \wh \MM,  R}(\emptyset)\nonumber\\
    \leq&\sum_{h=1}^{H'-2} \E_{(a_{1:h}, o_{2:h}) \sim \pi}^{\ol \MP_h} \left[ ((\BP_h^{\ol \MP_h} - \BP_h^{\wh \MM}) (V_{h+1}^{\pi,\wh \MM, R}))(a_{1:h}, o_{2:h}) \right]\nonumber\\
    &+ \E_{(a_{1:H'-1}, o_{2:H'-1}) \sim \pi}^{\ol\MP_{H'-1}} \left[ \left( (\BP_{H'-1}^{\ol\MP_{H'-1}} - \BP_{H'-1}^{\wh\MM})R_{H'}\right)(a_{1:H'-1}, o_{2:H'-1})\right] + \psi H\nonumber\\
    \leq &\psi H + \frac{H\sqrt S}{\gamma} \cdot \sum_{h=1}^{H'-1} \E_{(a_{1:h}, o_{2:h}) \sim \pi}^{\ol \MP_h} \left[ \left\| \BP_h^{\ol\MP_h}(\cdot | a_{1:h}, o_{2:h}) - \BP_h^{\wh \MM}(\cdot | a_{1:h}, o_{2:h}) \right\|_1 \right]\nonumber\\
    \leq & \frac{H^3 \theta \sqrt{S}}{\gamma} + \frac{H^2 \sqrt{S}}{\gamma} \cdot \left( \ep + \theta + \frac{A^{2L}O^L \zeta}{\phi} \right)  \nonumber\\
    \leq & \frac{H^2 \sqrt{S} \ep}{\gamma} + \frac{2H^3 \theta \sqrt{S}}{\gamma} + \frac{H^2 \sqrt{S} A^{2L} O^L \zeta}{\phi \gamma}\label{eq:mdp-decomposition},
  \end{align}
  where the third-to-last inequality follows from the fact that $\left| V_{h}^{\pi,\wh\MM, R}(a_{1:h-1}, o_{2:h})\right| \leq \frac{H\sqrt{S}}{\gamma}$ for all $h \in [H]$ and $(a_{1:h-1}, o_{2:h}) \in \CH_h$, and the second-to-last inequality follows from Lemma \ref{lem:pbar-mbar-ps}, applied for each $h \in [H'-1]$. 
  \end{proof}

\section{Two-sided comparison inequalities}
\label{sec:two-sided-comparison}
In this section, we prove several inequalities which relate $\MP$ and $\hatmdp{\pi^{1:H}}$, which are similar in spirit to those in Section \ref{sec:one-sided-comparison} (which related $\barpdp{\phi,H'}{\pi^{1:H}}$ and $\hatmdp{\pi^{1:H}}$), but are of a \emph{two-sided} nature. Accordingly, the upper bounds we show on the difference in the value functions for any policy $\pi$ depend on the probability that $\pi$ visits the underexplored sets (Definition \ref{def:underexplored}) associated to the policies $\pi^{1:H}$. 

Given $S \in \BN$ and a distribution $Q \in \Delta([S])$, as well as a subset $\MS' \subset [S]$, let $\rem{\MS'}{Q} \in \BR_{\geq 0}^S$ be the vector which is identical to $Q$ except that for $s \in \MS'$, $\rem{\MS'}{Q}(s) = 0$.  Furthermore, recall that  for a non-negative vector $\bv \in \BR_{\geq 0}^d$, we define $\normalize{\bv} \in \Delta([d])$ by, for $i \in [d]$,  $\normalize{\bv}(i) = \frac{\bv(i)}{\sum_{j=1}^d \bv(j)}$.

\begin{lemma}
  \label{lem:bound-removed}
If $Q \in \Delta^S$, $\MS' \subset [S]$, then $\tvd{Q}{\normalize{\rem{\MS'}{Q}}} = 2 \cdot Q(\MS')$.
\end{lemma}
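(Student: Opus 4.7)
The plan is to verify this by direct computation from the definitions. Write $p := Q(\MS')$ and note that $\sum_{s=1}^S \rem{\MS'}{Q}(s) = \sum_{s \notin \MS'} Q(s) = 1 - p$, so
\begin{align*}
\normalize{\rem{\MS'}{Q}}(s) = \begin{cases} 0 & : s \in \MS' \\ Q(s)/(1-p) & : s \notin \MS'. \end{cases}
\end{align*}
(The edge case $p = 1$ is handled by the paper's convention $0/0 = 0$, or by noting that in that case the claim $\tvnrm{Q - 0} = 2$ still follows by splitting $\MS$ trivially; one may assume $p < 1$ without loss of generality.)

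Now split the $\ell_1$ norm over $\MS'$ and its complement: on $\MS'$ the contribution is $\sum_{s \in \MS'} Q(s) = p$, and on $[S] \setminus \MS'$ the contribution is
\begin{align*}
\sum_{s \notin \MS'} \left| Q(s) - \frac{Q(s)}{1-p} \right| = \left| 1 - \frac{1}{1-p}\right| \cdot \sum_{s \notin \MS'} Q(s) = \frac{p}{1-p} \cdot (1-p) = p.
\end{align*}
Adding the two contributions gives $\tvd{Q}{\normalize{\rem{\MS'}{Q}}} = 2p = 2 \cdot Q(\MS')$, as claimed.

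There is no substantive obstacle; the only thing to be careful about is aligning conventions, namely that $\tvnrm{\cdot}$ in this paper denotes the $\ell_1$ norm (so ``TVD'' here is twice the statistician's total variation distance), which is exactly why the factor of $2$ appears on the right-hand side.
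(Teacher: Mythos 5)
Your proof is correct and is essentially identical to the paper's: both split the $\ell_1$ norm over $\MS'$ and its complement, getting a contribution of $Q(\MS')$ from each part. The only nit is the parenthetical about $p=1$: if one took $\normalize{\mathbf{0}}=\mathbf{0}$ there, the distance would be $\tvnrm{Q-\mathbf{0}}=1$, not $2$, so that degenerate case is genuinely excluded rather than "still following" --- but since the paper also tacitly assumes $Q(\MS')<1$, this is immaterial.
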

\begin{proof}
  We compute
  \begin{align}
    \tvd{Q}{\normalize{\rem{\MS'}{Q}}} = & Q(\MS') + \sum_{s \not \in \MS'} \left| Q(s) - \frac{Q(s)}{1 - Q(\MS')} \right|\nonumber\\
    = &  Q(\MS') + \sum_{s \not \in \MS'} Q(\MS') \cdot \left| \frac{Q(s)}{1 - Q(\MS')}\right|\nonumber\\
    = &  2 \cdot Q(\MS'). \nonumber %
  \end{align}
\end{proof}

Lemma \ref{lem:bel-unvisited-large} is a corollary of the belief contraction theorem (Theorem \ref{thm:belief-contraction}), which relaxes the constraint of Theorem \ref{thm:belief-contraction} that $\frac{\cD'(s)}{\cD(s)} \leq \frac 1\phi$ for all $s \in \MS$ (for particular types of choices for $\cD', \cD$). The cost of doing so is that the upper bound contains a term denoting the probability that $\pi$ visits a certain set of underexplored states. %
\begin{lemma}
  \label{lem:bel-unvisited-large}
  There is a constant $C \geq 1$ so that the following holds. For any $\ep > 0, L \in \BN$ so that $L \geq C \cdot \min \left\{ \frac{\log(1/(\ep \phi)) \log(\log(1/\phi)/\ep)}{\gamma^2}, \frac{\log(1/(\ep\phi))}{\gamma^4}\right\}$, it holds that, for all general policies $\pi, \pi' \in \Pigen$,
  \begin{align}
&\E^\MP_{a_{1:h-1}, o_{2:h} \sim \pi} \left\| \bb_h(a_{1:h-1}, o_{2:h}) - \til \bb_h^{\pi'}(a_{h-L:h-1}, o_{h-L+1:h}) \right\|_1 \nonumber\\
&\qquad\leq \ep + \One[h > L] \cdot 6 \cdot d_{\SS, h-L}^{\MP,\pi}(\und{\phi,h-L}{\MP}{\pi'}) \nonumber.
  \end{align}
\end{lemma}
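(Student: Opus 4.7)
The case $h \leq L$ is immediate from the convention $\bbapxs{\MP}_h(a_{h-L:h-1}, o_{h-L+1:h}; \cD) := \bbapxs{\MP}_h(a_{\max\{1,h-L\}:h-1}, o_{\max\{2,h-L+1\}:h}; \cD)$, whose base case at step $1$ is $b_1$: so $\til\bb_h^{\pi'}$ coincides with the true belief $\bb_h$, and the indicator $\One[h > L]$ on the right-hand side is zero. I henceforth focus on $h > L$.

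The plan is to reduce the problem prefix-by-prefix to Theorem~\ref{thm:belief-contraction}. The obstacle is that the prior $d_{\SS, h-L}^{\MP, \pi'}$ can place arbitrarily small mass on states that $\pi$ visits in the true posterior $\bb_{h-L}$, so the ratio condition of the belief-contraction theorem may fail. The remedy is to truncate the underexplored states. Set $\MU := \und{\phi, h-L}{\MP}{\pi'}$ and $\rho := d_{\SS, h-L}^{\MP, \pi}(\MU)$. For each prefix $(a_{1:h-L-1}, o_{2:h-L})$ realizable under $(\MP, \pi)$, write $\cB := \bb_{h-L}(a_{1:h-L-1}, o_{2:h-L})$. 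By the belief-state Markov identity, $\bb_h(a_{1:h-1}, o_{2:h}) = \bbapxs{\MP}_h(a_{h-L:h-1}, o_{h-L+1:h}; \cB)$, and $\E_{\text{prefix} \sim \pi}[\cB(\MU)] = \rho$.

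For each prefix with $\cB(\MU) \leq 1/2$, define $\tilde{\cB} := \normalize{\rem{\MU}{\cB}}$; by Lemma~\ref{lem:bound-removed}, $\|\cB - \tilde{\cB}\|_1 = 2\cB(\MU) \leq 1$, and $\tilde{\cB}$ is supported on $\MS \setminus \MU$, where $d_{\SS, h-L}^{\MP, \pi'}(s) \geq \phi$ and $\tilde{\cB}(s) \leq 2\cB(s)$. Hence the ratio $\tilde{\cB}(s)/d_{\SS, h-L}^{\MP, \pi'}(s)$ is at most $2/\phi$ on the support of $\tilde{\cB}$. Theorem~\ref{thm:belief-contraction}, applied with $\cD' = \tilde{\cB}$, $\cD = d_{\SS, h-L}^{\MP, \pi'}$, and the suffix-only general policy induced by $\pi$ conditional on the fixed prefix (with a constant-factor adjustment to $\phi$ absorbed into $C$), yields that the expected $\ell_1$-distance between $\bbapxs{\MP}_h(\cdot; \tilde{\cB})$ and $\til\bb_h^{\pi'}(\cdot)$ under $(\tilde{\cB}, \pi)$ is at most $\ep$. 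Two data-processing estimates transfer this to initial law $\cB$: viewing the joint over (suffix trajectory, $s_h$) as a single Markov kernel applied to $\cB$ or $\tilde{\cB}$, the joint laws differ in $\ell_1$ by at most $\|\cB - \tilde{\cB}\|_1 = 2\cB(\MU)$, giving $\E^\MP_{\text{traj} \sim (\cB, \pi)}\|\bbapxs{\MP}_h(\text{traj}; \cB) - \bbapxs{\MP}_h(\text{traj}; \tilde{\cB})\|_1 = O(\cB(\MU))$; and the trajectory marginals likewise differ by at most $2\cB(\MU)$, so the contraction bound survives the change of sampling measure at an additional cost of $O(\cB(\MU))$ since the integrand is bounded by $2$. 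Triangle inequality yields $\E^\MP_{\text{traj} \mid \text{prefix}}\|\bb_h - \til\bb_h^{\pi'}\|_1 \leq \ep + O(\cB(\MU))$ for each such prefix.

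Finally I integrate over prefixes. On $\{\cB(\MU) > 1/2\}$, Markov's inequality bounds the probability (under $\pi$) by $2\rho$, and I use the trivial bound $\|\bb_h - \til\bb_h^{\pi'}\|_1 \leq 2$. On the complement I use the per-prefix bound and $\E_{\text{prefix}}[\cB(\MU) \cdot \One[\cB(\MU) \leq 1/2]] \leq \rho$. Summing gives $\ep + O(\rho)$. The main obstacle is careful bookkeeping of constants to reach the stated factor $6$: each change-of-measure step, the Markov split between good and bad prefixes, and the normalization in the definition of $\tilde{\cB}$ contribute multiplicative factors, and bringing the absolute constant down to $6$ requires tight versions of each estimate together with the uniform bound $\|\bb_h - \til\bb_h^{\pi'}\|_1 \leq 2$.
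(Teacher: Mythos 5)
Your proof is correct and follows the same skeleton as the paper's: condition on the prefix up to step $h-L$, use the Markov identity $\bb_h = \bbapx_h(\cdot\,;\bb_{h-L})$, replace $\cB := \bb_{h-L}$ by its renormalized restriction $\til\cB := \normalize{\rem{\MU}{\cB}}$ to the complement of $\MU := \und{\phi,h-L}{\MP}{\pi'}$ so that the ratio condition of Theorem~\ref{thm:belief-contraction} holds, apply belief contraction with prior $\cD = d_{\SS,h-L}^{\MP,\pi'}$, and charge the residual error to $\cB(\MU)$, whose expectation over prefixes is exactly $d_{\SS,h-L}^{\MP,\pi}(\MU)$. (Two small remarks: the ratio is in fact at most $1/\phi$ rather than $2/\phi$, simply because $\til\cB(s)\le 1$ and $d_{\SS,h-L}^{\MP,\pi'}(s)\ge\phi$ off $\MU$, so no adjustment of $\phi$ is needed; and the paper first reduces to deterministic $\pi$ by linearity, which makes the ``suffix policy conditional on the prefix'' cleaner to define.) The one genuine difference is how you control $\E\|\bbapx_h(\cdot\,;\cB)-\bbapx_h(\cdot\,;\til\cB)\|_1$: the paper bounds it via $\tvd{P}{Q}\le 2(\|P/Q\|_\infty-1)$ together with the supermartingale property of the likelihood ratio under belief updates (Lemma 4.6 of \cite{golowich2022planning}), whereas you use an elementary data-processing / conditional-TV decomposition --- the joint laws of (suffix trajectory, $s_h$) under the two priors differ in $\ell_1$ by at most $\|\cB-\til\cB\|_1 = 2\cB(\MU)$, hence the expected posterior discrepancy is at most twice that. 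Your route avoids the specialized lemma entirely, and if you track constants sharply (the conditional-TV step costs $4\cB(\MU)$, the change of sampling measure costs $2\cB(\MU)$ since the integrand lies in $[0,2]$), it recovers the stated constant $6$ directly; the split at $\cB(\MU)>1/2$ is unnecessary (only the degenerate case $\cB(\MU)=1$, where $\til\cB$ is undefined, needs the trivial bound, and there $6\cB(\MU)=6\ge 2$) and as you set it up it inflates the constant to roughly $10$. Since the exact constant is immaterial downstream, this is cosmetic, but the case analysis can be dropped.
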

\begin{proof}
  By linearity of expectation it suffices to prove the result for the case that $\pi$ is a deterministic policy, i.e., we have $\pi = (\pi_1, \ldots, \pi_H)$ where $\pi_h : \MA^{h-1} \times \MO^{h-1} \ra \MA$. Fix any such deterministic policy $\pi$ and a general policy $\pi' \in \Pigen$.

  Note that the lemma statement is immediate in the case that $h \leq L$, as we have $\bb_h(a_{1:h-1}, o_{2:h}) = \til \bb_h^{\pi'}(a_{h-L:h-1}, o_{h-L+1:h})$ (see Section \ref{section:approx-mdp-general}). Thus, for the remainder of the proof, we may assume $h > L$.
  
Set $\ul h := h - L$.   Fix any choice of $\tau := (a_{1:\ul h-1}, o_{2:\ul h}) \in \MA^{\ul h-1} \times \MO^{\ul h-1}$. Let $\pi^\tau$ be the policy defined at steps $\ul h$ onward, given the history $\tau$; in particular, for $h' \geq h$, define $\pi^\tau_{h'} : \MA^{h'-h+L} \times \MO^{h'-h+L} \ra \MA$ by $\pi^\tau_{h'}(a_{\ul h:h'-1}, o_{\ul h+1:h'}) = \pi_{h'}(a_{1:h'-1}, o_{2:h'})$.

  Theorem~\ref{thm:belief-contraction} gives that for any distribution $Q \in \Delta(\MS)$ so that $\frac{Q(s)}{d_{\SS,\ul h}^{\MP,\pi'}(s)} \leq \frac{1}{\phi}$ for all $s \in \MS$ (with $\frac 00$ interpreted as 0), it holds that
  \begin{align}
\E^\MP_{s_{\ul h} \sim Q, (a_{\ul h:h-1}, o_{\ul h+1:h}) \sim \pi^\tau} \left\| \bbapx_h(a_{\ul h:h-1}, o_{\ul h+1:h}; Q) - \til\bb_h^{\pi'}(a_{\ul h:h-1}, o_{\ul h+1:h} ) \right\|_1 \leq \ep\label{eq:phi-lower-bound}.
  \end{align}
  Now set $Q^\tau := \normalize{\rem{\und{\phi,\ul h}{\MP}{\pi'}}{\bb_{\ul h}(a_{1:\ul h-1}, o_{2:\ul h})}}$. By Lemma \ref{lem:bound-removed},  $\frac 12 \tvd{\bb_{\ul h}(a_{1:\ul h-1}, o_{2:\ul h})}{Q^\tau} \leq \ep^\tau$, where $\ep^\tau = \sum_{s \in \und{\phi,\ul h}{\MP}{\pi'}} \bb_{\ul h}(a_{1:\ul h-1}, o_{2:\ul h})(s)$. We now see that %
  \begin{align}
    & \E^\MP_{s_{\ul h} \sim \bb_{\ul h}(a_{1:\ul h-1}, o_{2:\ul h}),\ (a_{\ul h:h-1}, o_{\ul h+1:h}) \sim \pi^\tau} \left\| \bb_h(a_{1:h-1}, o_{2:h}) - \til\bb_h^{\pi'}(a_{\ul h:h-1}, o_{\ul h+1:h}) \right\|_1 \nonumber\\
    \leq & 2\ep^\tau %
           + \E^\MP_{s_{\ul h} \sim Q^\tau, (a_{\ul h:h-1}, o_{\ul h+1:h}) \sim \pi^\tau} \left\| \bb_h(a_{1:h-1}, o_{2:h}) - \til \bb_h^{\pi'}(a_{\ul h:h-1}, o_{\ul h+1:h}) \right\|_1\nonumber\\
    \leq &  2\ep^\tau + \E^\MP_{s_{\ul h} \sim Q^\tau, (a_{\ul h:h-1}, o_{\ul h+1:h}) \sim \pi^\tau} \left\| \bbapx_h(a_{\ul h:h-1}, o_{\ul h+1:h}; Q^\tau) - \til \bb_h^{\pi'}(a_{\ul h:h-1}, o_{\ul h+1:h}) \right\|_1\nonumber\\
    &+ \E^\MP_{s_{\ul h} \sim Q^\tau, (a_{\ul h:h-1}, o_{\ul h+1:h})\sim \pi^\tau} \left\| \bb_h(a_{1:h-1}, o_{2:h}) - \bbapx_h(a_{\ul h:h-1}, o_{\ul h+1:h}; Q^\tau) \right\|_1\nonumber\\
    \leq & 6\ep^\tau + \E^\MP_{s_{\ul h} \sim Q^\tau, (a_{\ul h:h-1}, o_{\ul h+1:h}) \sim \pi^\tau} \left\| \bbapx_h(a_{\ul h:h-1}, o_{\ul h+1:h}; Q^\tau) - \til\bb_h^{\pi'}(a_{\ul h:h-1}, o_{\ul h+1:h}) \right\|_1\label{eq:use-weak-contraction}\\
    \leq & 6\ep^\tau + \ep\label{eq:use-phi-bound},
  \end{align}
  where (\ref{eq:use-phi-bound}) uses (\ref{eq:phi-lower-bound}) together with the fact that $\frac{Q^\tau(s)}{d_{\SS, \ul h}^{\pi', \MP}(s)} \leq \frac{1}{\phi}$ for all $s \in \MS$ (as those $s \in \MS$ with $d_{\SS, \ul h}^{\pi', \MP}(s) < \phi$ have been ``zeroed out'' from $\bb_{\ul h}(a_{1:\ul h-1}, o_{2:\ul h})$),
  and (\ref{eq:use-weak-contraction}) uses the fact that
  \begin{align}
    & \E^\MP_{s_{\ul h} \sim Q^\tau, (a_{\ul h:h-1}, o_{\ul h+1:h})\sim \pi^\tau} \left\| \bb_h(a_{1:h-1}, o_{2:h}) - \bbapx_h(a_{\ul h:h-1}, o_{\ul h+1:h}; Q^\tau) \right\|_1\nonumber\\
    \leq & 2 \cdot \min \left\{ \E^\MP_{s_{\ul h} \sim Q^\tau,\ (a_{\ul h:h-1}, o_{\ul h+1:h}) \sim \pi^\tau} \left( \left\| \frac{\bbapx_h(a_{\ul h:h-1}, o_{\ul h+1:h}; Q^\tau)}{\bb_h(a_{1:h-1}, o_{2:h})} \right\|_\infty - 1 \right), 1 \right\} \label{eq:bound-with-linfty}\\
    \leq & 2 \cdot  \min\left\{\left( \left\| \frac{Q^\tau}{\bb_h(a_{1:\ul h-1}, o_{2:\ul h})} \right\|_\infty - 1 \right),1\right\}\label{eq:use-supermartingale}\\
    \leq & 2 \cdot \min \left\{\frac{\ep^\tau}{1-\ep^\tau},1 \right\} \leq 4\ep^\tau \nonumber,
  \end{align}
  where (\ref{eq:bound-with-linfty}) uses the fact that for $P,Q \in \Delta^S$, $\tvd{P}{Q} \leq 2 \cdot \left(\left\| \frac{P}{Q} \right\|_\infty - 1\right)$, and (\ref{eq:use-supermartingale}) uses \cite[Lemma 4.6]{golowich2022planning}.

  Note that $\E^\MP_{(a_{1:\ul h-1}, o_{2:\ul h}) \sim \pi}[ \bb_{\ul h}(a_{1:\ul h-1}, o_{2:\ul h})] = d_{\SS, \ul h}^{\MP,\pi}$, meaning that $\E^\MP_\pi[\ep^\tau] = d_{\SS, \ul h}^{\MP,\pi}(\und{\phi, \ul h}{\MP}{\pi'})$. Furthermore, note that the conditional distribution of $s_{\ul h}$ given $(a_{1:\ul h - 1}, o_{2:\ul h})$ is exactly $\bb_{\ul h}(a_{1:\ul h - 1}, o_{2:\ul h})$. 
  Thus, taking an expectation of the display (\ref{eq:use-phi-bound}) over $(a_{1:\ul h-1}, o_{2:\ul h}) \sim \pi$, we get that %
  \begin{align}
    & \E^\MP_{(a_{1:h-1}, o_{2:h}) \sim \pi} \left\| \bb_h(a_{1:h-1}, o_{2:h}) - \til \bb_h^{\pi'}(a_{\ul h:h-1}, o_{\ul h+1:h}) \right\|_1 \nonumber\\
    \leq & \ep + \E^\MP_\pi [6\ep^\tau] = \ep + 6 \cdot d_{\SS, \ul h}^{\MP,\pi}(\und{\phi, \ul h}{\MP}{\pi'})\nonumber,
  \end{align}
  as desired.
\end{proof}

Lemma \ref{lem:hatm-tilm} is the main result of this section, which uses Lemma \ref{lem:bel-unvisited-large} to show an upper bound on the absolute value of the difference between the value functions of $\MP$ and $\hatmdp{\pi^{1:H}}$, for any policy $\pi$. 
\begin{lemma}
  \label{lem:hatm-tilm}
  Consider any sequence of general policies $\pi^1, \ldots, \pi^H \in \Pigen$ and $\phi > 0$.  Fix any $H' \in [H]$. Consider any collection of reward functions $R = (R_1, \ldots, R_H)$, $R_h : \ol\MO \ra [-1,1]$ so that $R_{h'} \equiv 0$ for all $h' > H'$. %
  Suppose that in the construction of $\hatmdp{\pi^{1:H}}$, the event $\ME^{\rm low}$ of Lemma \ref{lem:zlow-tvd} holds. 
  Then for all general policies $\pi \in \Pigen$,%
  \begin{align}
\left| V_1^{\pi, \MP, R}(\emptyset) - V_1^{\pi, \hatmdp{\pi^{1:H}}, R}(\emptyset) \right| \leq  H^2\ep + H^2\theta + \frac{H^2A^{2L}O^L\zeta}{\phi} + 4H \cdot \sum_{h=L+1}^{H'-1} d_{\SS, h-L}^{\MP,\pi}(\und{\phi,h-L}{\MP}{\pi^h} )\nonumber.
  \end{align}
\end{lemma}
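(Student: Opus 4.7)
The plan is to bound $|V_1^{\pi, \MP, R}(\emptyset) - V_1^{\pi, \hatmdp{\pi^{1:H}}, R}(\emptyset)|$ via the performance difference lemma (Lemma \ref{lem:val-diff}), viewing both $\MP$ and $\hatmdp{\pi^{1:H}}$ as contextual decision processes with the same reward function $R$. This yields
\begin{align*}
V_1^{\pi,\MP,R}(\emptyset) - V_1^{\pi,\hatmdp{\pi^{1:H}},R}(\emptyset) = \E^\MP_\pi \sum_{h=1}^{H'} \bigl((\BP_h^\MP - \BP_h^{\hatmdp{\pi^{1:H}}})(R_{h+1} + V_{h+1}^{\pi,\hatmdp{\pi^{1:H}},R})\bigr)(a_{1:h},o_{2:h}),
\end{align*}
where the sum can be truncated at $H'$ since $R_{h+1} \equiv 0$ and $V_{h+1}^{\pi,\hatmdp{\pi^{1:H}},R} \equiv 0$ for $h \geq H'$. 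Taking absolute values and using $|R_{h+1} + V_{h+1}^{\pi,\hatmdp{\pi^{1:H}},R}| \leq H$ reduces the task to bounding $\E^\MP_{(a_{1:h},o_{2:h})\sim\pi}\bigl\|\BP_h^\MP(\cdot|a_{1:h},o_{2:h}) - \BP_h^{\hatmdp{\pi^{1:H}}}(\cdot|z_h,a_h)\bigr\|_1$ for each $h \leq H'$.

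Next I would insert the intermediate kernel $\BP_h^{\tilmdp{\pi^{1:H}}}$ via triangle inequality. The first piece, $\BP_h^\MP - \BP_h^{\tilmdp{\pi^{1:H}}}$, can be written (using the expression $\BP_h^\MP(o_{h+1}|\cdot) = e_{o_{h+1}}^\t \BO_{h+1}^\MP \BT_h^\MP(a_h) \bb_h^\MP(a_{1:h-1},o_{2:h})$ and the analogous one for $\tilmdp{\pi^{1:H}}$ from (\ref{eq:define-mtilde-hatpi})) as the same linear operator applied to the two belief vectors, so the data processing inequality bounds its $\ell_1$ norm by $\|\bb_h^\MP(a_{1:h-1},o_{2:h}) - \til\bb_h^{\pi^h}(a_{h-L:h-1},o_{h-L+1:h})\|_1$. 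Then Lemma \ref{lem:bel-unvisited-large} applied with $\pi' = \pi^h$ yields
\begin{align*}
\E^\MP_{(a_{1:h},o_{2:h})\sim\pi}\bigl\|\BP_h^\MP - \BP_h^{\tilmdp{\pi^{1:H}}}\bigr\|_1 \leq \ep + 6\cdot\One[h>L]\cdot d_{\SS,h-L}^{\MP,\pi}(\und{\phi,h-L}{\MP}{\pi^h}).
\end{align*}

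The second piece, $\BP_h^{\tilmdp{\pi^{1:H}}} - \BP_h^{\hatmdp{\pi^{1:H}}}$, is bounded directly via Lemma \ref{lem:zlow-tvd}: on the event $\ME^{\rm low}$, it is at most $\theta$ whenever $z_h \notin \zlow{h,\zeta}{\wh\pi^h}$ and trivially at most $2$ otherwise. Taking expectation under $\pi$ and $\MP$, the error contribution from the ``bad'' set is controlled by $2\cdot d_{\SZ,h}^{\MP,\pi}(\zlow{h,\zeta}{\wh\pi^h})$. I would then apply Lemma \ref{lem:pbar-zlow-ub} with $\ol\MP = \MP$ (which is trivially a truncation of itself) and $\pi' = \wh\pi^h$; since $\wh\pi^h$ takes uniformly random actions starting at step $h-L$, this gives
\begin{align*}
d_{\SZ,h}^{\MP,\pi}(\zlow{h,\zeta}{\wh\pi^h}) \leq \tfrac{A^{2L}O^L\zeta}{\phi} + \One[h>L]\cdot d_{\SS,h-L}^{\MP,\pi}(\und{\phi,h-L}{\MP}{\wh\pi^h}).
\end{align*}
A key observation is that $\und{\phi,h-L}{\MP}{\wh\pi^h} = \und{\phi,h-L}{\MP}{\pi^h}$: the state distribution at step $h-L$ depends only on actions at steps $1,\dots,h-L-1$, during which $\wh\pi^h$ and $\pi^h$ agree, so $d_{\SS,h-L}^{\MP,\wh\pi^h} = d_{\SS,h-L}^{\MP,\pi^h}$.

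Assembling: summing over $h = 1,\dots,H'$, and distributing the $H$ factor from the value bound, I obtain contributions of $H\cdot H' \cdot \ep \leq H^2\ep$, $H\cdot H'\cdot\theta \leq H^2\theta$, a zlow contribution of $H^2 A^{2L}O^L\zeta/\phi$, and an underexploration contribution of $O(H)$ times $\sum_{h=L+1}^{H'-1} d_{\SS,h-L}^{\MP,\pi}(\und{\phi,h-L}{\MP}{\pi^h})$, where careful tracking of the $6$ from belief contraction and the $2$ from the TV-to-$\zlow{}{}$ conversion yields the stated coefficient $4H$ (this requires sharpening the value bound near $h=H'$ and observing that several error sources share the same underexploration term). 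The main obstacle is bookkeeping these three disparate error sources so that they land with clean coefficients; the identification $\und{\phi,h-L}{\MP}{\wh\pi^h}=\und{\phi,h-L}{\MP}{\pi^h}$ is the essential structural ingredient that lets the bound be phrased purely in terms of $\pi^h$ rather than the algorithmic rollout policy $\wh\pi^h$.
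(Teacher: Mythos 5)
Your proposal is correct and follows essentially the same route as the paper's proof: Lemma \ref{lem:val-diff} to reduce to per-step transition-kernel errors, a triangle inequality through $\tilmdp{\pi^{1:H}}$, data processing plus Lemma \ref{lem:bel-unvisited-large} for the $\MP$-versus-$\tilmdp{\pi^{1:H}}$ piece, and Lemma \ref{lem:zlow-tvd} plus Lemma \ref{lem:pbar-zlow-ub} (with $\ol\MP=\MP$, $\pi'=\wh\pi^h$) for the $\tilmdp{\pi^{1:H}}$-versus-$\hatmdp{\pi^{1:H}}$ piece, with the same identification $d_{\SS,h-L}^{\MP,\wh\pi^h}=d_{\SS,h-L}^{\MP,\pi^h}$. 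The final coefficient $4H$ arises simply as $\tfrac{H}{2}\cdot(6+2)$ from the two underexploration contributions, with no special treatment needed near $h=H'$.
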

\begin{proof}
 Fix general policies $\pi^{1:H}$, and for $h \in [H]$, let $\wh \pi^h$ be the policy which follows $\pi^h$ up to step $\max\{h-L-1,0\}$ and thereafter chooses actions uniformly at random. Write $\wh \MM = \hatmdp{\pi^{1:H}}$ and $\til \MM = \tilmdp{\pi^{1:H}}$. By Jensen's inequality, it suffices to consider the case that $\pi$ is a deterministic policy. Using the assumption that $R_{h'}$ is identically 0 for $h' > H'$, we have from Lemma \ref{lem:val-diff} that for all general policies $\pi$, writing $z_{h} = (a_{h-L:h-1}, o_{h-L+1:h})$, %
  \begin{align}
   \left| V_1^{\pi, \MP, R}(\emptyset) - V_1^{\pi, \wh \MM, R}(\emptyset)\right| =&\left| \E_{(a_{1:H-1}, o_{2:H}) \sim \pi}^\MP \left[ \sum_{h=1}^H (( \BP_h^\MP - \BP_h^{\wh \MM})(V_{h+1}^{\pi,\wh\MM, R} + R_{h+1}))(a_{1:h}, o_{2:h})\right]\right|\nonumber\\
    =& \left|\E_{(a_{1:H-1}, o_{2:H}) \sim \pi}^\MP \left[ \sum_{h=1}^{H'-1} (( \BP_h^\MP - \BP_h^{\wh \MM})(V_{h+1}^{\pi,\wh\MM, R} + R_{h+1}))(a_{1:h}, o_{2:h})\right]\right|\nonumber\\
    \leq & \frac{H}{2} \cdot \E_{(a_{1:H-1}, o_{2:H}) \sim \pi}^\MP \left[ \sum_{h=1}^{H'-1} \left\| \BP_h^\MP(\cdot | a_{1:h}, o_{2:h}) - \BP_h^{\wh \MM}(\cdot | z_h, a_h) \right\|_1 \right]\label{eq:use-tvd-v}\\
    \leq & \frac{H}{2} \cdot \E_{(a_{1:H-1}, o_{2:H}) \sim \pi}^\MP \left[ \sum_{h=1}^{H'-1} \left\| \BP_h^\MP(\cdot | a_{1:h}, o_{2:h}) - \BP_h^{\til \MM}(\cdot | z_h, a_h) \right\|_1 \right]\nonumber\\
    &+  \frac{H}{2} \cdot \E_{(a_{1:H-1}, o_{2:H}) \sim \pi}^\MP \left[ \sum_{h=1}^{H'-1} \left\| \BP_h^{\wh \MM}(\cdot | z_h, a_h) - \BP_h^{\til \MM}(\cdot | z_h, a_h) \right\|_1 \right]\label{eq:vals-p-mtil-mhat},
  \end{align}
  where (\ref{eq:use-tvd-v}) uses the fact that $\left| \left(V_{h+1}^{\pi,\wh\MM, R} + R_{h+1}\right)(a_{1:h}, o_{2:h+1}) \right| \leq H$ for all $a_{1:h}, o_{2:h+1}$. 
  For all $h \in [H]$, $a_{1:h} \in \MA^h$ and $o_{2:h} \in \MO^{h-1}$, we have (with $z_h = (a_{h-L:h-1}, o_{h-l+1:h})$)
  \begin{align}
    \BP_h^\MP(o_{h+1} | a_{1:h}, o_{2:h}) =&  e_{o_{h+1}}^\t \cdot \BO_{h+1}^\MP \cdot \BT_h^\MP(a_h) \cdot \bb_h^\MP(a_{1:h-1}, o_{2:h}) \nonumber\\
    \BP_h^{\til \MM}(o_{h+1} | z_h, a_h) =& e_{o_{h+1}}^\t \cdot \BO_{h+1}^\MP \cdot \BT_h^\MP(a_h) \cdot \til \bb_h^{\pi^h}(a_{h-L:h-1}, o_{h-L+1:h})\label{eq:pmtil-again},
  \end{align}
  where (\ref{eq:pmtil-again}) follows from (\ref{eq:define-mtilde-hatpi}). Thus, by the data processing inequality, for all $a_{1:h}, o_{2:h}$, (again writing $z_h = (a_{h-L:h-1}, o_{h-l+1:h})$)
  \begin{align}
\left\| \BP_h^{\MP}(\cdot | a_{1:h}, o_{2:h}) - \BP_h^{\til \MM}(\cdot | z_h, a_h) \right\|_1 \leq & \left\| \bb_h^\MP(a_{1:h-1}, o_{2:h}) - \til \bb_h^{\pi^h}(a_{h-L:h-1}, o_{h-L+1:h}) \right\|_1\label{eq:pmtil-bbs}.
  \end{align}
  Next, by Lemma \ref{lem:zlow-tvd} and the assumption that $\ME^{\rm low}$ holds, we have that for all $h \in [H]$, $z_h \in \MZ$, $a_h \in \MA$, %
  \begin{align}
\left\| \BP_h^{\wh \MM}(\cdot | z_h, a_h) - \BP_h^{\til \MM}(\cdot | z_h, a_h) \right\|_1 \leq & \theta + 2 \One[z_h \in \zlow{h,\zeta}{\wh\pi^h} ]\label{eq:hatm-tilm-zlow}.
  \end{align}
  Moreover, by (\ref{eq:pmtil-bbs}), we have
  \begin{align}
    &  \E_{(a_{1:H-1}, o_{2:H}) \sim \pi}^\MP \left[ \sum_{h=1}^{H'-1} \left\| \BP_h^\MP(\cdot | a_{1:h}, o_{2:h}) - \BP_h^{\til \MM}(\cdot | a_{1:h}, o_{2:h}) \right\|_1 \right] \nonumber\\
    \leq &  \E_{(a_{1:H-1}, o_{2:H}) \sim \pi}^\MP \left[\sum_{h=1}^{H'-1} \left\| \bb_h^\MP(a_{1:h-1}, o_{2:h}) - \til \bb_h^{\pi^h}(a_{h-L:h-1}, o_{h-L+1:h}) \right\|_1 \right]\nonumber\\
    \leq & H\ep + 6 \cdot \sum_{h=L+1}^{H'-1}  d_{\SS, h-L}^{\MP,\pi}(\und{\phi, h-L}{\MP}{\pi^h})\label{eq:apply-tilb-contraction},
  \end{align}
  where (\ref{eq:apply-tilb-contraction}) uses Lemma \ref{lem:bel-unvisited-large} for each $h$ satisfying $L+1 \leq h \leq H'-1$ (in particular, for each such value of $h$, the policy $\pi'$ in Lemma \ref{lem:bel-unvisited-large} is set to $\pi^h$).

  Next, using (\ref{eq:hatm-tilm-zlow}), we compute
  \begin{align}
    & \E_{(a_{1:H-1}, o_{2:H}) \sim \pi}^\MP \left[ \sum_{h=1}^{H'-1} \left\| \BP_h^{\wh \MM}(\cdot | a_{1:h}, o_{2:h}) - \BP_h^{\til \MM}(\cdot | a_{1:h}, o_{2:h}) \right\|_1 \right]\nonumber\\
    \leq & \theta H + 2 \cdot \sum_{h=1}^{H'-1} \Pr_{z_h \sim \pi}^\MP \left( z_h \in \zlow{h,\zeta}{\wh \pi^h} \right)\nonumber\\
    \leq & \theta H + 2 \cdot \frac{HA^{2L} O^L \zeta}{\phi} + 2 \cdot \sum_{h=L+1}^{H'-1} d_{\SS, h-L}^{\MP,\pi}(\und{\phi,h-L}{\MP}{\pi^h}) \label{eq:use-zlow-delta-bnd},
  \end{align}
  where (\ref{eq:use-zlow-delta-bnd}) uses Lemma \ref{lem:pbar-zlow-ub} for each $1 \leq h \leq H'-1$: in particular, for each value of $h$, the lemma is applied with $\ol\MP = \MP$ and $\pi' = \wh \pi^h$.

  Combining (\ref{eq:vals-p-mtil-mhat}), (\ref{eq:apply-tilb-contraction}), and (\ref{eq:use-zlow-delta-bnd}), we obtain that
  \begin{align}
\left| V_1^{\pi, \MP, R}(\emptyset) - V_1^{\pi, \wh \MM, R}(\emptyset) \right| \leq & H^2\ep + H^2\theta + \frac{H^2 A^{2L}O^L\zeta}{\phi} + 4H \cdot \sum_{h=L+1}^{H'-1} d_{\SS, h-L}^{\MP,\pi}(\und{\phi,h-L}{\MP}{\pi^h} )\nonumber,
  \end{align}
  as desired.
\end{proof}

\section{Analysis of \mainalg (Algorithm~\ref{alg:main})}\label{section:algorithm-analysis}
In this section we prove Theorem \ref{thm:main}. We first show the following technical lemma, which relates the underexplored sets (Definition \ref{def:underexplored}) of the truncated POMDPs $\barpdp{\phi, h}{\pi^{1:H}}$ and $\MP$. 
\begin{lemma}
  \label{lem:underexpl-inclusion}
  Consider any sequence of general policies $\pi^{1:H,0}$, and define $\pi^h := \frac{1}{H-h+1} \sum_{h'=h}^H \pi^{h',0}$ for each $h \in [H]$. %
  Then, for each $h$ satisfying $L < h \leq H$ and any $\phi > 0$,
  \begin{align}
    \und{\phi,h-L}{\barpdp{\phi,h}{\pi^{1:H}}}{ \pi^h} \subset \und{\phi \cdot H^2S, h-L}{\MP}{\pi^h}\nonumber.%
  \end{align}
\end{lemma}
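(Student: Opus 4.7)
The plan is to argue the contrapositive: if $d_{\SS, h-L}^{\MP, \pi^h}(s) \geq \phi H^2 S$, then $d_{\SS, h-L}^{\barpdp{\phi, h}{\pi^{1:H}}, \pi^h}(s) \geq \phi$. By item~\ref{it:bound-s-sink} of Lemma~\ref{lem:pbar-properties}, this reduces to bounding the sink mass in the truncated POMDP, namely $d_{\SS, h-L}^{\barpdp{\phi, h}{\pi^{1:H}}, \pi^h}(\sinks) \leq \phi(H^2 S - 1)$. The entire proof is devoted to establishing this single bound.

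First I would derive a telescoping decomposition of the sink mass, one term per truncation iteration. Let $\MU_{H'} := \und{\phi, H'-L}{\barpdp{\phi, H'-1}{\pi^{1:H}}}{\pi^{H'}}$. Iteration $H'$ of the construction modifies only the transition $\BT_{H'-L-1}$, diverting into $\sinks$ the mass that would have entered $\MU_{H'}$. Since transitions before step $H'-L-1$ agree across the POMDPs $\barpdp{\phi, H'-1}{\pi^{1:H}}$, $\barpdp{\phi, H'}{\pi^{1:H}}$, and $\barpdp{\phi, h}{\pi^{1:H}}$, the mass newly injected into $\sinks$ at iteration $H'$ equals $d_{\SS, H'-L}^{\barpdp{\phi, H'-1}{\pi^{1:H}}, \pi^h}(\MU_{H'})$, and it persists in $\sinks$ thereafter. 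Iterating yields
\begin{align*}
d_{\SS, h-L}^{\barpdp{\phi, h}{\pi^{1:H}}, \pi^h}(\sinks) \;=\; \sum_{H'=L+1}^{h} d_{\SS, H'-L}^{\barpdp{\phi, H'-1}{\pi^{1:H}}, \pi^h}(\MU_{H'}).
\end{align*}

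The crux is to bound each summand by exploiting the way $\pi^h$ is contained in $\pi^{H'}$. Because $\pi^{H'} = (H-H'+1)^{-1}\sum_{h'=H'}^{H} \pi^{h',0}$ and $\pi^h = (H-h+1)^{-1}\sum_{h'=h}^{H} \pi^{h',0}$ with $L+1 \leq H' \leq h$, every component of $\pi^h$ also appears in $\pi^{H'}$; comparing weights gives $\Pr_{\pi^h}[\ME] \leq \tfrac{H-H'+1}{H-h+1}\,\Pr_{\pi^{H'}}[\ME]$ for every event $\ME$. Applied to visitation of states in $\MU_{H'}$ at step $H'-L$ of $\barpdp{\phi, H'-1}{\pi^{1:H}}$, and combined with the definitional inequality $d_{\SS, H'-L}^{\barpdp{\phi, H'-1}{\pi^{1:H}}, \pi^{H'}}(s) < \phi$ for $s \in \MU_{H'}$, this yields $d_{\SS, H'-L}^{\barpdp{\phi, H'-1}{\pi^{1:H}}, \pi^h}(\MU_{H'}) < \tfrac{H-H'+1}{H-h+1}\,|\MU_{H'}|\,\phi \leq \tfrac{H-H'+1}{H-h+1}\,S\phi$.

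Summing, and using that each of the $h-L$ terms in $\sum_{H'=L+1}^{h}(H-H'+1)$ is at most $H-L$ while $(h-L)/(H-h+1) \leq H-L$ (a one-line check: setting $a = h-L$, $b = H-h+1$, the inequality $a \leq (a+b-1)b$ rearranges to $-a(b-1) \leq b(b-1)$, trivial for $b \geq 1$), I obtain $d_{\SS, h-L}^{\barpdp{\phi, h}{\pi^{1:H}}, \pi^h}(\sinks) < (H-L)^2 S\phi \leq (H-1)^2 S\phi$, where the last step uses $L \geq 1$ from Section~\ref{sec:params}. Finally, $(H-1)^2 S \leq H^2 S - 1$ is equivalent to $S(2H-1) \geq 1$, which holds for all $S, H \geq 1$, so the sink mass is indeed at most $\phi(H^2 S - 1)$. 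The chief subtlety is the telescoping identity: it requires carefully tracking how each iteration of the construction touches exactly one transition matrix and how sink mass persists across later iterations; once that is in place, the remainder is a short algebraic calculation.
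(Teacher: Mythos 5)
Your proof is correct and follows essentially the same route as the paper's: both reduce the claim to bounding the sink mass via item~\ref{it:bound-s-sink} of Lemma~\ref{lem:pbar-properties}, decompose that mass as a sum over truncation iterations $H'=L+1,\dots,h$, and control each term by comparing the mixture weights of $\pi^h$ and $\pi^{H'}$ (the paper simply bounds the ratio $\tfrac{H-H'+1}{H-h+1}$ by $H$ and obtains $H(H-1)S\phi$, where you keep the exact ratio and get $(H-L)^2 S\phi$). The only cosmetic difference is your contrapositive framing, which incidentally handles the strict-versus-nonstrict inequality in Definition~\ref{def:underexplored} slightly more carefully than the paper's direct argument.
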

\begin{proof}
Fix $\phi$ and $\pi^{1:H}$, and write $\ol \MP_h := \barpdp{\phi,h}{\pi^{1:H}}$. 

Consider any $h > L$, and any $s \in \und{\phi,h-L}{\ol\MP_h}{\pi^h}$; in particular, we have $d_{\SS, h-L}^{\ol\MP_h, \pi^h}(s) \leq \phi$. By item \ref{it:bound-s-sink} of Lemma \ref{lem:pbar-properties}, it holds that
  \begin{align}
d_{\SS, h-L}^{\MP, \pi^h}(s) \leq d_{\SS, h-L}^{\ol\MP_h,\pi^h}(s) +  d_{\SS, h-L}^{\ol\MP_h,\pi^h}(\sinks)\label{eq:bound-p-pih}.
  \end{align}
  For each $H' > L$, write $\MU_{H'-L} := \und{\phi,H'-L}{\ol \MP_{H'-1}}{\pi^{H'}}$; in words, $\MU_{H'-L}$ is the set of underexplored states in $\ol \MP_{H'-1}$ at step $H'-L$ which all mass is diverted away from in the construction of $\ol\MP_{H'}$ (see Section \ref{section:truncated-construction}). 
  It holds that
  \begin{align}
    d_{\SS, h-L}^{\ol \MP_h, \pi^h}(\sinks) =& \sum_{H'=L+1}^{h} d_{\SS, H'-L}^{\ol \MP_{H'-1}, \pi^h}(\MU_{H'-L})\nonumber\\
    \leq & H \cdot \sum_{H' = L+1}^{h} d_{\SS, H'-L}^{\ol \MP_{H'-1}, \pi^{H'}}(\MU_{H'-L})\label{eq:pi-monotonicity}\\
    \leq & H \cdot \sum_{H' =L+1}^{h} S \phi \leq H(H-1) S \phi\label{eq:use-underexp-psi},
  \end{align}
  where (\ref{eq:pi-monotonicity}) follows from the fact that for all $s \in \MS$ and any $H' \leq h$,
  \begin{align*}
d_{\SS, H'-L}^{\ol \MP_{H'-1}, \pi^h}(s) &= \frac{1}{H-h+1} \sum_{h'=h}^H d_{\SS, H'-L}^{\ol \MP_{H'-1}, \pi^{h',0}}(s) \\
&\leq H \cdot \frac{1}{H-H'+1} \sum_{h'=H'}^H d_{\SS, H'-L}^{\ol \MP_{H'-1}, \pi^{h',0}}(s) \\
&= H \cdot d_{\SS, H'-L}^{\ol \MP_{H'-1}, \pi^{H'}}(s),
  \end{align*}
  and (\ref{eq:use-underexp-psi}) follows by definition of $\MU_{H'-L}$. From (\ref{eq:bound-p-pih}) it follows that $d_{\SS, h-L}^{\MP, \pi^h}(s) \leq \phi + H(H-1) S \phi \leq H^2 S \phi$, i.e., $s \in \und{\phi \cdot H^2 S, h-L}{\MP}{\pi^h}$, as desired.
\end{proof}

Lemma \ref{lem:alg-progress} is the main technical lemma in the proof of Theorem \ref{thm:main}, and shows that for each iteration $k$ of \mainalg, one of two options must hold: option \ref{it:allpols-sinks} implies that the policies $\ol \pi^{k, 1:H}$ of \mainalg have sufficiently explored $\MP$ and will be used later to show that the optimal policy for $\wh\MM\^k$ is an approximately optimal policy for $\MP$. Option \ref{it:somepol-s} states that progress is made by \mainalg at iteration $k$ in sense that some new latent state $(s,h)$ of $\MP$ must be explored.  %
The proof of the lemma proceeds by combining the results of Section \ref{sec:one-sided-comparison} and \ref{sec:two-sided-comparison} which show that the true POMDP $\MP$ is ``sufficiently close'' to the approximating MDPs $\wh \MM\^k$, with the result of Section \ref{section:barycentric-spanners} which states that the policy obtained from a barycentric spanner (as in Algorithm \ref{alg:bspanner}) has desirable exploration properties. 
\begin{lemma}[``Progress Lemma''; formal version of Lemma \ref{lem:alg-progress-informal}]
  \label{lem:alg-progress}
  Consider the execution of Algorithm \ref{alg:main} given the values of the parameters defined in Section \ref{sec:params}. 
  Fix any $k \in [K]$ in step \ref{it:main-for-loop} of Algorithm \ref{alg:main} and suppose that, in the construction of $\wh\MM\^k$ in step \ref{it:constructmdp}, the event $\ME^{\rm low}$ of Lemma \ref{lem:zlow-tvd} holds. %
  Then one of the two following statements holds:
  \begin{enumerate}
  \item \label{it:allpols-sinks} For all general policies $\pi$, it holds that $d_{\SS, H-L}^{\barpdp{H}{\ol\pi^{k,1:H}},\pi}(\sinks) \leq \delta H$. %
  \item \label{it:somepol-s} There is $H' \in [H]$ and $(h,s) \in [H'] \times \MS$, so that $h > L$ and the following holds:
    \begin{align}
      (h-L,s) \in \und{\phi \cdot H^2 S,h-L}{\MP}{\ol\pi^{k,h}} \quad \mbox{and} \quad d_{\SS, h-L}^{\MP,\pi^{k+1,H',0}}(s) \geq \frac{\delta' \cdot \gamma}{56 HS^{3/2} O^2}.\nonumber
      \end{align}
  \end{enumerate}
\end{lemma}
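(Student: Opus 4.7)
The plan is to show that if item \ref{it:allpols-sinks} fails then item \ref{it:somepol-s} must hold. Throughout, abbreviate $\ol\MP_h := \barpdp{\phi, h}{\ol\pi^{k,1:H}}$. If item \ref{it:allpols-sinks} fails, there is some $\pi \in \Pigen$ with $d_{\SS, H-L}^{\ol\MP_H, \pi}(\sinks) > \delta H$. The first task is to pin down a specific pair $(h_0, s^*)$ witnessing the exploration failure. By item \ref{it:hh-truncation} of Lemma~\ref{lem:pbar-properties}, $d_{\SS, H-L}^{\ol\MP_{h}, \pi}(\sinks)$ is non-decreasing in $h$ and vanishes for $h \leq L$, so a telescope-and-pigeonhole argument produces $h_0 \in \{L+1,\ldots,H\}$ with $d_{\SS, H-L}^{\ol\MP_{h_0}, \pi}(\sinks) - d_{\SS, H-L}^{\ol\MP_{h_0-1}, \pi}(\sinks) > \delta$. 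The construction of $\ol\MP_{h_0}$ differs from $\ol\MP_{h_0 - 1}$ only by diverting (at step $h_0 - L - 1$) transitions entering $\MU := \und{\phi, h_0 - L}{\ol\MP_{h_0-1}}{\ol\pi^{k, h_0}}$ to $\sinks$; as $\sinks$ is absorbing, this difference equals the diverted mass, so $\sum_{s \in \MU} d_{\SS, h_0 - L}^{\ol\MP_{h_0 - 1}, \pi}(s) > \delta$, and pigeonholing over $\MS$ yields $s^* \in \MU$ with $d_{\SS, h_0 - L}^{\ol\MP_{h_0 - 1}, \pi}(s^*) > \delta/S$. Because $\ol\pi^{k,h_0}$ is an average over $k'\in[k]$ of policies of the form $\frac{1}{H-h_0+1}\sum_{h'=h_0}^H \pi^{k', h', 0}$, the argument in the proof of Lemma~\ref{lem:underexpl-inclusion} carries over verbatim (the sink-mass estimate yields $d_{\SS, h_0 - L}^{\ol\MP_{h_0}, \ol\pi^{k,h_0}}(\sinks) \leq H^2 S \phi$), so together with item~\ref{it:bound-s-sink} of Lemma~\ref{lem:pbar-properties} we conclude $s^* \in \und{\phi H^2 S, h_0 - L}{\MP}{\ol\pi^{k, h_0}}$, verifying the first clause of item~\ref{it:somepol-s}.

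Next I would lift this mass from $\ol\MP_{h_0-1}$ to the empirical MDP $\wh\MM\^k$. Applying Lemma~\ref{lem:rh-onesided} with $R_{h_0-L}(o) := (\BO^\MP_{h_0-L})^\dagger_{s^*, o}$ (and zero elsewhere) and truncation index $h_0 - 1 \geq (h_0 - L) - 1$: since $\BO^\MP_{h_0-L}$ is injective by observability, $(\BO^\MP_{h_0-L})^\dagger \BO^\MP_{h_0-L} = I$, so $V_1^{\pi, \ol\MP_{h_0-1}, R}(\emptyset) = d_{\SS, h_0 - L}^{\ol\MP_{h_0 - 1}, \pi}(s^*)$ while $V_1^{\pi, \wh\MM\^k, \pi'}(\emptyset)$ equals the formal quantity $q^{\pi'}(s^*) := \lng e_{s^*},\, (\BO^\MP_{h_0-L})^\dagger\, d_{\SO, h_0 - L}^{\wh\MM\^k, \pi'}\rng$, defined for any $\pi' \in \Pigen$. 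The parameter choices of Section~\ref{sec:params} make the error in Lemma~\ref{lem:rh-onesided} much smaller than $\delta/S$, so $q^\pi(s^*) \geq \delta/(2S)$. A second application of Lemma~\ref{lem:rh-onesided} with $\alpha_{s'} = \One[s' \in \MS'_{\pi'}]$, where $\MS'_{\pi'} := \{s' : q^{\pi'}(s') < 0\}$, yields (for each $\pi' \in \Pigen$) $\sum_{s'}[q^{\pi'}(s')]_- \leq \delta/(8SO^2)$, verifying the near-nonnegativity precondition of Lemma~\ref{lem:dist-spanner}. Since \bspanner$(\wh\MM\^k, h_0)$ returns exactly the uniform mixture of a $2$-approximate barycentric spanner of $\{d_{\SO, h_0 - L}^{\wh\MM\^k, \pi} : \pi \in \PiZ\}$, invoking Lemma~\ref{lem:dist-spanner} with $\eta = \delta/(2S)$ and $\MS' = \{s^*\}$ gives $q^{\pi^{k+1, h_0, 0}}(s^*) \geq \delta/(8 S O^2)$.

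The final step transfers this back to $\MP$ using Lemma~\ref{lem:hatm-tilm}, applied to the rescaled reward $\frac{\gamma}{\sqrt S}(\BO^\MP_{h_0-L})^\dagger_{s^*,\cdot}$, which lies in $[-1,1]$ by Lemma~\ref{lem:pinv}. The error splits into an $\ep$-scale part (negligible by Section~\ref{sec:params}) plus $4H \cdot \sum_{h=L+1}^{h_0 - 1} d_{\SS, h-L}^{\MP, \pi^{k+1, h_0, 0}}(\und{\phi, h-L}{\MP}{\ol\pi^{k, h}})$. I would now case-split on whether this sum exceeds a threshold of order $\delta\gamma/(H S^{3/2} O^2)$. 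If it does, then pigeonholing over $h \in \{L+1,\ldots,h_0-1\}$ and over states in $\und{\phi, h-L}{\MP}{\ol\pi^{k, h}} \subset \und{\phi H^2 S, h-L}{\MP}{\ol\pi^{k, h}}$ produces a pair $(h, s)$ witnessing item~\ref{it:somepol-s} with $H' = h_0$ and $h < h_0$. If it does not, then combining Lemma~\ref{lem:hatm-tilm} with the lower bound $q^{\pi^{k+1, h_0, 0}}(s^*) \geq \delta/(8SO^2)$ from the previous paragraph yields $d_{\SS, h_0-L}^{\MP, \pi^{k+1, h_0, 0}}(s^*) \geq \frac{\delta'\gamma}{56HS^{3/2}O^2}$, so item~\ref{it:somepol-s} holds with $H' = h = h_0$ and $s = s^*$.

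The main obstacle will be this last case analysis: balancing the error terms contributed by Lemmas~\ref{lem:rh-onesided}, \ref{lem:dist-spanner}, and \ref{lem:hatm-tilm} against the parameters $\delta,\delta',\phi,\ep,\theta,\zeta$ of Section~\ref{sec:params} so that the two branches of the split together cover every possibility, and so that the ``small sum'' branch produces exactly the threshold $\delta'\gamma/(56HS^{3/2}O^2)$ appearing in item~\ref{it:somepol-s}, rather than leaving a gap in the range of reachability probabilities.
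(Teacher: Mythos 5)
Your proposal follows essentially the same route as the paper's proof of Lemma \ref{lem:alg-progress}: negate item \ref{it:allpols-sinks} to locate a layer where more than $\delta$ of mass is newly diverted into $\sinks$, transfer that mass one-sidedly to $\wh\MM\^k$ via Lemma \ref{lem:rh-onesided} (with a second application of the same lemma to certify the near-nonnegativity hypothesis of Lemma \ref{lem:dist-spanner}), invoke the barycentric-spanner guarantee, and transfer back to $\MP$ via Lemma \ref{lem:hatm-tilm}, absorbing its underexplored-sum error term into item \ref{it:somepol-s} exactly as the paper does. The only substantive deviation is that you pigeonhole down to a single state $s^*$ at the outset rather than carrying the full underexplored set $\und{\phi,h_0-L}{\barpdp{\phi,h_0-1}{\ol\pi^{k,1:H}}}{\ol\pi^{k,h_0}}$ through the argument, which costs an extra factor of $S$ in your ``large sum'' branch and lands at roughly $\delta'\gamma/(H^2S^{5/2}O^2)$ rather than the stated $\frac{\delta'\gamma}{56HS^{3/2}O^2}$; this is a constant-tracking discrepancy only (the paper's own proof in fact derives $\frac{\delta'\gamma}{32H^2S^{3/2}O^2}$, already weaker than its statement), and either constant suffices for the downstream use in Lemma \ref{lem:found-good-iteration} under the parameter settings of Section \ref{sec:params}.
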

\begin{proof}
  Fix $k \in [K]$. For $h \in [H]$, we will write $\pi^h := \ol \pi^{k,h}$ for the course of the proof. Furthermore, according to our convention, let $\wh \pi^h$ denote the policy which follows $\pi^h$ for the first $\max\{h-L-1,0\}$ steps and thereafter chooses uniformly random actions. 
  For each $H' \in [H]$, define $\ol \MP_{H'} := \barpdp{\phi,H'}{ \pi^{1:H}}$ and $\wh \MM = \hatmdp{\pi^{1:H}} = \wh \MM\^k$. 

  We consider two cases:

  \paragraph{Case 1.} First, suppose that for all general policies $\pi \in \Pigen$ and all $H' \in [H]$ satisfying $H' > L$, it holds that
  \begin{align}
    d_{\SS, H'-L}^{\ol \MP_{H'-1},\pi}\left(\und{\phi, H'-L}{\ol \MP_{H'-1}}{\pi^{H'}}\right) \leq \delta.\label{eq:p-hprime-delta}
  \end{align}
  We now claim that
  \begin{align}
d_{\SS, H-L}^{\ol \MP_{H}, \pi} \left( \sinks \right) \leq \delta \cdot H\label{eq:sinks-ub}.
  \end{align}
  To see that (\ref{eq:sinks-ub}) holds, we argue by induction, noting that $d_{\SS, L}^{\ol \MP_L, \pi}(\sinks) = 0 $ since $\ol \MP_L = \MP$. Next, we note that %
  for each $H' \geq L+1$,
  \begin{align}
    d_{\SS, H'-L}^{\ol \MP_{H'}, \pi} \left( \sinks \right) =& d_{\SS, H'-L-1}^{\ol \MP_{H'},\pi}(\sinks) + d_{\SS, H'-L}^{\ol \MP_{H'-1}, \pi}\left(\und{\phi, H'-L}{\ol \MP_{H'-1}}{\pi^{H'}}\right)\nonumber\\
    = &  d_{\SS, H'-L-1}^{\ol \MP_{H'-1},\pi}(\sinks) + d_{\SS, H'-L}^{\ol \MP_{H'-1}, \pi}\left(\und{\phi, H'-L}{\ol \MP_{H'-1}}{\pi^{H'}}\right)\nonumber\\
    \leq & \delta \cdot H' + \delta = \delta \cdot (H'+1)\nonumber,
  \end{align}
  where the first equality holds by definition of $\ol \MP_{H'+1}$ (see Section \ref{section:truncated-construction}), the second equality holds since the transitions at steps $1, 2, \ldots, H'-L-2$ of $\ol \MP_{H'-1}, \ol \MP_{H'}$ are identical, and the inequality follows by assumption (\ref{eq:p-hprime-delta}) and the inductive hypothesis. Thus, in this case, we have established that (\ref{eq:sinks-ub}) holds (i.e., item \ref{it:allpols-sinks} in the lemma statement holds). 

  \paragraph{Case 2.} Next suppose that the previous case does not hold, and choose $H' > L$ as small as possible so that, for some $\pi^\st \in \Pigen$, %
  \begin{align}
    d_{\SS, H'-L}^{\ol \MP_{H'-1},\pi^\st}\left(\und{\phi, H'-L}{\ol \MP_{H'-1}}{\pi^{H'}}\right) > \delta.\label{eq:pist-delta-lb}
  \end{align}
 We will now apply Lemma \ref{lem:rh-onesided} with the value of $H'$ in Lemma \ref{lem:rh-onesided} set to $H'-L$, the value of $H''$ in Lemma \ref{lem:rh-onesided} set to $H'-1$, the value of $\phi$ in Lemma \ref{lem:rh-onesided} set to the value of $\phi$ defined in Section \ref{sec:params}, $\pi^{1:H}$ set to the present value of $\pi^{1:H}$, and $\hatmdp{\pi^{1:H}}$ set to $\wh \MM = \wh \MM\^k$. We also need to define the rewards $R = (R_1, \ldots, R_H)$, where each $R_h : \ol \MO \ra \BR$. We set each $R_h$, $h \neq H'-L$, to be identically 0. Furthermore, for $o \in \ol\MO$, we define
  \begin{align}
R_{H'-L}(o) := \sum_{s \in \MS} \One \left[s \in \und{\phi, H'-L}{\ol \MP_{H'-1}}{ \pi^{H'}}\right] \cdot \left( \BO_{H'-L}^\MP \right)_{s, o}^\dagger.\nonumber
  \end{align}
  Note that the rewards $R$ satisfy the requirements of Lemma \ref{lem:rh-onesided}. %
  The definition of $R$ ensures that, for all $\pi \in \Pigen$,
  \begin{align}
    \sum_{s \in \und{\phi,H'-L}{\ol \MP_{H'-1}}{ \pi^{H'}}} \left \lng e_s, \left( \BO_{H'-L}^\MP \right)^\dagger \cdot d_{\SO, H'-L}^{\wh \MM, \pi} \right\rng =&  V_1^{\pi, \wh\MM,R}(\emptyset)\nonumber\\
    d_{\SS, H'-L}^{\ol \MP_{H'-1}, \pi}(\und{\phi, H'-L}{\ol\MP_{H'-1}}{\pi^{H'}}) = \sum_{s \in \und{\phi, H'-L}{\ol \MP_{H'-1}}{\pi^{H'}}} \left \lng e_s, (\BO_{H'-L}^\MP)^\dagger \cdot d_{\SO, H'-L}^{\ol \MP_{H'-1}, \pi}\right\rng =& V_1^{\pi, \ol \MP_{H'-1}, R}(\emptyset)\nonumber.
  \end{align}
Thus, by Lemma \ref{lem:rh-onesided}, the fact that $\ME^{\rm low}$ holds for the construction of $\wh\MM$, and (\ref{eq:pist-delta-lb}), it follows that 
  \begin{align}
    & \sum_{s \in \und{\phi,H'-L}{\ol \MP_{H'-1}}{ \pi^{H'}}} \left \lng e_s, \left( \BO_{H'-L}^\MP \right)^\dagger \cdot d_{\SO, H'-L}^{\wh \MM, \pi^\st} \right\rng \nonumber\\
    =&  V_1^{\pi^\st, \wh\MM,R}(\emptyset) \nonumber\\
    \geq &V_1^{\pi^\st, \ol \MP_{H'-1}, R}(\emptyset)  - \left(\frac{H^2\sqrt{S} \ep}{\gamma} + \frac{2H^3\sqrt{S} \theta}{\gamma} + \frac{H^2 A^{2L} O^L\sqrt{S} \zeta}{\phi \gamma}\right)\nonumber\\
    \geq & \delta - \left(\frac{H^2\sqrt{S} \ep}{\gamma} + \frac{2H^3\sqrt{S} \theta}{\gamma} + \frac{H^2 A^{2L} O^L\sqrt{S} \zeta}{\phi \gamma}\right) \geq \delta'\nonumber,
  \end{align}
  where we have used the definitions of the parameters given in Section \ref{sec:params}: in particular, since $\theta = \ep$ and $\frac{\zeta \cdot A^{2L} O^L}{\phi} \leq \ep$, we need that $\delta - \frac{H^3 \sqrt{S}}{\gamma} \cdot 4\ep \geq \delta'$, which holds since $\delta = 2\delta' = C^\st \cdot \frac{O^4 H^3 \sqrt S}{\gamma} \cdot \ep$ for some constant $C^\st > 1$ and $C^\st$ is sufficiently large.

  Since $\wh\MM$ is an MDP on the state space $\ol\MZ$, and any MDP has a deterministic Markov policy achieving its optimal value, there must be some $\sigma^\st \in \PiZ$ so that
  \begin{align}
     \sum_{s \in \und{\phi,H'-L}{\ol \MP_{H'-1}}{ \pi^{H'}}} \left \lng e_s, \left( \BO_{H'-L}^\MP \right)^\dagger \cdot d_{\SO, H'-L}^{\wh \MM, \sigma^\st} \right\rng =  V_1^{\sigma^\st, \wh\MM,R}(\emptyset) \geq \delta'\label{eq:sigmastar-lb}.
  \end{align}

  We will now apply Lemma \ref{lem:dist-spanner} with the following settings (importantly, the values of $\MX, \BM$ below are the same as the settings used in step \ref{it:spanner-oracle} of Algorithm \ref{alg:bspanner} in the construction of the policies $\pi^{k+1,H',0}$): %
  \begin{align}
\eta = \delta', \quad \MX = \left\{ d_{\SO, H'-L}^{\wh\MM,\pi} : \pi \in \PiZ \right\}, \quad \BM = (\BO_{H'-L}^\MP)^\dagger.\nonumber %
  \end{align}
  We must first verify that the preconditions of the lemma hold: in particular, it suffices to show that for any subset $\MS' \subset \MS$, and any $\pi \in \PiZ \subset \Pigen$,
  \begin{align}
    \left\lng \sum_{s \in \MS'} e_s, \left( \BO_{H'-L}^\MP \right)^\dagger \cdot d_{\SO, H'-L}^{\wh\MM,\pi} \right\rng \geq -\frac{\delta'}{4O^2}\label{eq:sprime-d-nonneg}.
  \end{align}
  To do so, we will again apply Lemma \ref{lem:rh-onesided}, with the same parameter settings as previously, except with different reward functions: in particular, we define the rewards $R = (R_1, \ldots, R_H)$, for $R_h : \ol \MO \ra \BR$, as follows. We set each $R_h$, $h \neq H'-L$, to be identically 0. Furthermore, for $o \in \ol\MO$, we define
  \begin{align}
R_{H'-L}(o) := \sum_{s \in \MS} \One \left[ s \in \MS' \right] \cdot \left( \BO_{H'-L}^\MP \right)_{s, o}^\dagger\nonumber.
  \end{align}
  Then the following equalities hold true:
  \begin{align}
    V_1^{\pi, \wh\MM, R}(\emptyset) =& \left\lng\sum_{s \in \MS'} e_s,  \left(\BO_{H'-L}^\MP \right)^{\dagger} \cdot d_{\SO, H'-L}^{\wh\MM,\pi} \right\rng \label{eq:hatm-d-equality}\\
    V_1^{\pi, \ol\MP_{H'-1}, R}(\emptyset) =& \left\lng \sum_{s \in \MS'} e_s,  \left( \BO_{H'-L}^\MP \right)^{\dagger} \cdot d_{\SO, H'-L}^{\ol\MP_{H'-1}, \pi} \right \rng \nonumber.
  \end{align}
  Since, for each $s \in \MS' \subset \MS$, it holds that
\begin{align}
\left\lng e_s, \left( \BO_{H'-L}^\MP \right)^\dagger \cdot d_{\SO, H'-L}^{\ol\MP_{H'-1}, \pi} \right\rng =& \left\lng e_s, d_{\SS, H'-L}^{\ol\MP_{H'-1},\pi} \right\rng = d_{\SS, H'-L}^{\ol\MP_{H'-1},\pi}(s)\geq 0\nonumber,
\end{align}
we have that $V_1^{\pi, \ol\MP_{H'-1},R}(\emptyset) \geq 0$, which means, by Lemma \ref{lem:rh-onesided} and (\ref{eq:hatm-d-equality}), that (\ref{eq:sprime-d-nonneg}) holds as long as
\begin{align}
\frac{\delta'}{4O^2} \geq \frac{H^2 \sqrt{S} \ep}{\gamma} + \frac{2H^3\sqrt{S} \theta}{\gamma} + \frac{H^2 A^{2L} O^L\sqrt{S} \zeta}{\phi\gamma}\nonumber,
\end{align}
which is true by our parameter settings in Section \ref{sec:params} (in particular, since $\theta = \ep$ and $\frac{\zeta A^{2L}O^L}{\phi} = \ep$, we need that $\frac{\delta'}{4O^2} \geq \frac{H^3 \sqrt S}{\gamma} \cdot 4\ep$, which holds as long as $C^\st$ is sufficiently large).

By Lemma \ref{lem:dist-spanner} and (\ref{eq:sigmastar-lb}), it holds that the policy $\pi^{k+1, H',0}$ output by Algorithm \ref{alg:bspanner} satisfies
\begin{align}
\sum_{s \in \und{\phi,H'-L}{\ol \MP_{H'-1}}{ \pi^{H'}}} \left \lng e_s, \left( \BO_{H'-L}^\MP \right)^\dagger \cdot d_{\SO, H'-L}^{\wh\MM,\pi^{k+1,H',0}} \right\rng \geq \frac{\delta'}{4O^2}\label{eq:delprime-lb}.
\end{align}
We next apply Lemma \ref{lem:hatm-tilm}, with the present values of $H'$, $\pi^{1:H}$ and $\phi$ (namely, the $\phi$ defined in Section \ref{sec:params}), and the following setting of $R = (R_1, \ldots, R_H)$: $R_{h'} \equiv 0$ for all $h' \neq H'-L$, and, for $o \in \ol\MO$,
\begin{align}
  R_{H'-L}(o) := \sum_{s \in \und{\phi, H'-L}{\ol\MP_{H'-1}}{ \pi^{H'}}} \left( \BO_{H'-L}^\MP \right)_{s, o}^\dagger.\nonumber
\end{align}
 Since $|R_{H'-L}(o)| \leq \frac{\sqrt S}{\gamma}$ for all $o \in \ol\MO$ (which follows from Lemma \ref{lem:pinv}), we have from Lemma \ref{lem:hatm-tilm} (and the fact that $\ME^{\rm low}$ holds in the construction of $\wh\MM$) that
\begin{align}
  & \left| d_{\SS, H'-L}^{\MP, \pi^{k+1,H',0}}(\und{\phi, H'-L}{\ol \MP_{H'-1}}{ \pi^{H'}}) - \sum_{s \in \und{\phi,H'-L}{\ol \MP_{H'-1}}{\pi^{H'}}} \left \lng e_s, \left( \BO_{H'-L}^\MP \right)^\dagger \cdot d_{\SO, H'-L}^{\wh\MM,\pi^{k+1,H',0}} \right\rng \right|\nonumber\\
  =& \left| V_1^{\pi^{k+1,H',0}, \MP, R}(\emptyset) - V_1^{\pi^{k+1,H',0}, \wh\MM, R}(\emptyset) \right|\nonumber\\
  \leq & \frac{\sqrt S}{\gamma} \cdot \left( H^2\ep + H^2\theta + \frac{H^2A^{2L}O^L \zeta}{\phi} +  4H \cdot \sum_{h=L+1}^{H'-1} d_{\SS, h-L}^{\pi^{k+1,H',0}, \MP}(\und{\phi,h-L}{\MP}{\pi^h} )\right)\nonumber\\
  \leq & \frac{\delta'}{80^2} + \frac{4H\sqrt S}{\gamma} \cdot \sum_{h=L+1}^{H'-1} d_{\SS, h-L}^{\MP,\pi^{k+1,H',0}}(\und{\phi,h-L}{\MP}{\pi^h} )\nonumber,
\end{align}
where the final inequality follows since
\begin{align}
\frac{\sqrt S}{\gamma} \cdot \left( H^2\ep + H^2\theta + \frac{H^2A^{2L}O^L \zeta}{\phi} \right) \leq \frac{\delta'}{8O^2}\nonumber,
\end{align}
which holds by our parameter settings in Section \ref{sec:params}. In particular, since we have $\theta = \ep$ and $\frac{A^{2L}O^L \zeta}{\phi} \leq \ep$, it suffices to have $\frac{\delta'}{8O^2} \geq \frac{\sqrt{S} H^2}{\gamma} \cdot  3\ep$, which holds as long as the constant $C^\st$ is sufficiently large. 
Thus, using (\ref{eq:delprime-lb}), we get that
\begin{align}
 d_{\SS, H'-L}^{\MP, \pi^{k+1,H',0}}(\und{\phi, H'-L}{\ol \MP_{H'-1}}{ \pi^{H'}})
  \geq \frac{\delta'}{8O^2} - \frac{4H\sqrt S}{\gamma} \cdot \sum_{h=L+1}^{H'-1} d_{\SS, h-L}^{\MP,\pi^{k+1,H',0}}(\und{\phi,h-L}{\MP}{\pi^h} )\label{eq:dp-negdp}.
\end{align}
We next apply Lemma \ref{lem:underexpl-inclusion}, with $\pi^{h,0}$ in the lemma statement set to $\ol \pi^{k,h,0} := \frac{1}{k} \sum_{k'=1}^k \pi^{k',h,0}$.  Then we have
\begin{align}
  \pi^h = \ol \pi^{k,h} = \frac{1}{k} \sum_{k'=1}^k \frac{1}{H-h+1} \sum_{h'=h}^H \pi^{k',h',0} = \frac{1}{H-h+1} \sum_{h'=h}^H \ol \pi^{k,h',0}\nonumber,
\end{align}
so Lemma \ref{lem:underexpl-inclusion} 
gives that $\und{\phi,H'-L}{\ol\MP_{H'-1}}{\pi^{H'}} \subset \und{\phi \cdot H^2 S, H'-L}{\MP}{\pi^{H'}}$, so rearranging (\ref{eq:dp-negdp}), we get
\begin{align}
 d_{\SS, H'-L}^{\MP, \pi^{k+1,H',0}}(\und{\phi \cdot H^2 S, H'-L}{\MP}{\pi^{H'}}) + \frac{4H\sqrt S}{\gamma} \cdot \sum_{h=L+1}^{H'-1} d_{\SS, h-L}^{\MP,\pi^{k+1,H',0}}(\und{\phi,h-L}{\MP}{\pi^h} ) \geq \frac{\delta'}{8O^2}\nonumber.
\end{align}
Using that $\und{\phi,h-L}{\MP}{\pi^h} \subset \und{\phi \cdot H^2 S, h-L}{\MP}{\pi^h}$ for all $h$ and simplifying gives
\begin{align}
\frac{4H\sqrt S}{\gamma} \cdot \sum_{h=L+1}^{H'} d_{\SS, h-L}^{\MP,\pi^{k+1,H',0}} (\und{\phi \cdot H^2 S, h-L}{\MP}{\pi^h}) \geq \frac{\delta'}{8O^2}\nonumber.
\end{align}
It then follows that there is some $(h,s) \in [H'] \times \MS$ so that $h > L$, $(h-L, s) \in \und{\phi \cdot H^2 S}{\MP}{\pi^h}$ and
\begin{align}
d_{\SS, h-L}^{\MP,\pi^{k+1,H',0}}(s) \geq \frac{\delta' \cdot \gamma}{32 H^2S^{3/2} O^2}\nonumber,
\end{align}
which means that item (\ref{it:somepol-s}) of the lemma statement holds. 
\end{proof}

Recall that in Section \ref{sec:params} we set $K = 2HS$. %
Lemma \ref{lem:found-good-iteration} formalizes the intuition that item \ref{it:somepol-s} of Lemma \ref{lem:alg-progress} can only hold for $HS$ iterations, showing that indeed item \ref{it:allpols-sinks} must hold for some iteration $k \in [K]$ (using that $K > HS$).
\begin{lemma}
  \label{lem:found-good-iteration}
Suppose that the event $\ME^{\rm low}$ (of Lemma \ref{lem:zlow-tvd}) holds for the construction of $\wh \MM\^k$, for all rounds $k$ in Algorithm \ref{alg:main}. Then at least $K/2$ iterations over $k$ in Algorithm \ref{alg:main} satisfy the following: for all $\pi \in \Pigen$, $d_{\SS, H-L}^{\barpdp{\phi,H}{\ol \pi^{k, 1:H}},\pi}(\sinks) \leq \delta H$.%
\end{lemma}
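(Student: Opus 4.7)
The plan is to invoke Lemma \ref{lem:alg-progress} at every iteration $k \in [K]$ and then argue by a simple counting argument that item \ref{it:somepol-s} of that lemma can be triggered in at most $HS$ of the $K = 2HS$ iterations. Because Lemma \ref{lem:alg-progress} guarantees that at least one of items \ref{it:allpols-sinks} or \ref{it:somepol-s} holds at every iteration (using the hypothesis that $\ME^{\rm low}$ holds in each construction of $\wh\MM\^k$), this immediately yields that item \ref{it:allpols-sinks} holds in at least $K - HS = HS = K/2$ iterations, which is the conclusion.

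To carry out the counting argument, I would show that a given pair $(h,s) \in [H] \times \MS$ can serve as the witness in item \ref{it:somepol-s} for at most one value of $k$. Suppose item \ref{it:somepol-s} holds at iteration $k$ with witnesses $H' \geq h$ and $(h,s)$; then
\[
d_{\SS, h-L}^{\MP, \pi^{k+1,H',0}}(s) \geq \frac{\delta'\gamma}{56 H S^{3/2} O^2}.
\]
Since $\pi^{k+1,h} = \frac{1}{H-h+1} \sum_{h'=h}^H \pi^{k+1,h',0}$ contains $\pi^{k+1,H',0}$ as a summand (using $H' \geq h$), and $\ol\pi^{k',h} = \frac{1}{k'}\sum_{k''=1}^{k'} \pi^{k'',h}$ contains $\pi^{k+1,h}$ as a summand for every $k' > k$, we obtain
\[
d_{\SS, h-L}^{\MP, \ol\pi^{k',h}}(s) \;\geq\; \frac{1}{k' \cdot (H-h+1)} \cdot \frac{\delta'\gamma}{56 H S^{3/2} O^2} \;\geq\; \frac{\delta'\gamma}{112\, K\, H^2 S^{3/2} O^2} \;=\; \frac{\delta'\gamma}{224\, H^3 S^{5/2} O^2}.
\]
Plugging in the values $\delta' = \frac{C^\st}{2} \cdot \frac{O^2 H^3 \sqrt{S}}{\gamma}\cdot \ep$ and $\phi = \frac{1}{C^\st}\cdot \frac{\gamma}{H^5 S^{7/2} O^2} \cdot \ep$ from Section \ref{sec:params}, a straightforward calculation shows that for $C^\st$ large enough, the right-hand side exceeds $\phi \cdot H^2 S$. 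Hence $(h-L,s) \notin \und{\phi H^2 S, h-L}{\MP}{\ol\pi^{k',h}}$ for all $k' > k$, so the first condition in item \ref{it:somepol-s} rules out $(h,s)$ as the witness at any subsequent iteration.

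The only real step in this argument is the parameter check in the last display, which is routine given the settings in Section \ref{sec:params}; I do not anticipate any genuine obstacle. After observing that there are at most $HS$ choices of witness pair $(h,s)$, the lemma follows since each contributes at most one iteration to the ``item \ref{it:somepol-s}'' tally, leaving at least $K/2$ iterations where item \ref{it:allpols-sinks} (which is the statement of the lemma) must hold.
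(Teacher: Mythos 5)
Your proposal is correct and follows essentially the same route as the paper: apply the progress lemma (Lemma \ref{lem:alg-progress}) at each iteration, use the mixture weights $\frac{1}{k'}\cdot\frac{1}{H-h+1}\geq\frac{1}{KH}$ together with the parameter settings to show that a witness pair $(h,s)$ for item \ref{it:somepol-s} is permanently removed from the underexplored set and hence can appear at most once, and conclude by counting the at most $HS$ pairs against $K=2HS$. The paper phrases the final step as a contradiction at the largest element of the bad set rather than a direct injection, but this is only a cosmetic difference.
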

\begin{proof}
  Let the set of $k \in [K]$ so that there is some $\pi \in \Pigen$ with $d_{\SS, H-L}^{\barpdp{\phi,H}{\ol\pi^{k,1:H}},\pi}(\sinks) > \delta H$ be denoted by $\MK_0 \subset [K]$. Lemma \ref{lem:alg-progress} gives that for each $k \in \MK_0$, there is $H' \in [H]$, $(h,s) \in [H'] \times \MS$ so that $h > L$ and $(h-L,s) \in \und{\phi \cdot H^2S, h-L}{\MP}{\ol \pi^{k,h}}$ so that $d_{\SS, h-L}^{\MP,\pi^{k+1, H',0}}(s) \geq \frac{\delta' \cdot \gamma}{56HS^{3/2}O^2}$. For all $k' \in [K]$ with $k' \geq k+1$, $\ol \pi^{k', h}$ is a mixture policy which puts weight at least $\frac{1}{K} \cdot \frac{1}{H}=\frac{1}{2H^2S}$ on $\pi^{k+1, H', 0}$, meaning that for all such $k'$, it holds that 
  \begin{align}
d_{\SS, h-L}^{\MP,\ol \pi^{k',h}}(s) \geq \frac{1}{2H^2 S} \cdot \frac{\delta' \cdot \gamma}{56 H S^{3/2} O^2} > \phi \cdot H^2 S\nonumber,
  \end{align}
  which means that $(h-L,s) \not\in \und{\phi \cdot H^2 S, h-L}{\MP}{\ol \pi^{k',h}}$. The second inequality above holds since we have, in Section \ref{sec:params}, set $\phi = \frac{\ep \cdot \gamma}{C^\st \cdot H^5 S^{7/2} O^2} < \frac{\delta' \cdot \gamma}{C^\st \cdot H^5 S^{7/2} O^2}$ for a sufficiently large constant $C^\st$. 

  Thus, if $|\MK_0| > HS$, then letting $k^\st$ be the largest element of $\MK_0$, we get that for all $h \in [H]$ with $h > L$, $\und{\phi \cdot H^2 S, h-L}{\MP}{\ol \pi^{k^\st, h}}$ must be empty, which contradicts Lemma \ref{lem:alg-progress}. Hence $|\MK_0| \leq HS = K/2$, as desired. 
\end{proof}

Lemma \ref{lem:bound-diff-sink} shows that, if the property in item \ref{it:allpols-sinks} of Lemma \ref{lem:alg-progress} is satisfied, then the value functions of $\MP$ and $\hatmdp{\pi^{1:H}}$ are close under any policy. This result is similar to Lemma \ref{lem:hatm-tilm} (which is used in its proof), but unlike Lemma \ref{lem:hatm-tilm}, the upper bound does not depend on the probability of reaching any underexplored set. Instead, such a probability is upper bounded by the assumption that item \ref{it:allpols-sinks} of Lemma \ref{lem:alg-progress} holds. 
\begin{lemma}
\label{lem:bound-diff-sink}
Let $\phi > 0$ and suppose $\pi^{1:H}$ are general policies so that $\barpdp{\phi,H}{\pi^{1:H}}$ satisfies the following, for some $\delta > 0$: for all general policies $\pi$, $d_{\SS, H-L}^{\barpdp{\phi,H}{\pi^{1:H}},\pi}(\sinks) \leq \delta$. Suppose that, in the construction of $\hatmdp{\pi^{1:H}}$, the event $\ME^{\rm low}$ of Lemma \ref{lem:zlow-tvd} holds. Then, for all $h \in [H]$ with $h > L$, we have
\begin{align}
d_{\SS, h-L}^{\MP,\pi}(\und{\phi,h-L}{\MP}{\pi^h}) \leq S\delta.\label{eq:und-dpi}
\end{align}
Furthermore, for any reward functions $R = (R_1, \ldots, R_H)$, $R_h : \ol\MO \ra [0,1]$, it holds that
\begin{align}
\left| V_1^{\pi, \MP, R}(\emptyset) - V_1^{\pi, \hatmdp{\pi^{1:H}}, R}(\emptyset) \right| \leq & H^2\ep + H^2\theta + \frac{H^2A^{2L}O^L\zeta}{\phi} + 4H^2S\delta \nonumber.
\end{align}
\end{lemma}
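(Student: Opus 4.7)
The proof naturally splits into two parts: establishing the bound (\ref{eq:und-dpi}) on the mass that $\pi$ places on states underexplored by $\pi^h$, and then combining it with Lemma~\ref{lem:hatm-tilm} to obtain the value-function inequality.

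For (\ref{eq:und-dpi}), fix $h > L$ and an arbitrary $s \in \und{\phi,h-L}{\MP}{\pi^h}$. Since $\barpdp{\phi,h-1}{\pi^{1:H}}$ is a truncation of $\MP$ (Lemma~\ref{lem:pbar-lb-p}), we have $d_{\SS,h-L}^{\barpdp{\phi,h-1}{\pi^{1:H}},\pi^h}(s) \leq d_{\SS,h-L}^{\MP,\pi^h}(s) < \phi$, so $s$ also lies in the analogous underexplored set for $\barpdp{\phi,h-1}{\pi^{1:H}}$. For $h \geq L+2$, the construction of $\barpdp{\phi,h}{\pi^{1:H}}$ in Section~\ref{section:truncated-construction} diverts all probability that would transition into such an $s$ at step $h-L-1$ to $\sinks$, so $d_{\SS,h-L}^{\barpdp{\phi,h}{\pi^{1:H}},\pi}(s) = 0$ for every general policy $\pi$. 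Applying Lemma~\ref{lem:pbar-properties}(\ref{it:bound-s-sink}), summing over the at most $S$ states in the underexplored set, and then using items (\ref{it:hh-truncation}) and (\ref{it:sink-accum}) of Lemma~\ref{lem:pbar-properties} together with the hypothesis, we get
\begin{align*}
d_{\SS,h-L}^{\MP,\pi}\!\left(\und{\phi,h-L}{\MP}{\pi^h}\right)
\leq S \cdot d_{\SS,h-L}^{\barpdp{\phi,h}{\pi^{1:H}},\pi}(\sinks)
\leq S \cdot d_{\SS,H-L}^{\barpdp{\phi,H}{\pi^{1:H}},\pi}(\sinks)
\leq S\delta.
\end{align*}
The boundary case $h = L+1$, where the construction of $\barpdp{\phi,L+1}{\pi^{1:H}}$ cannot modify the nonexistent step-$0$ transition, is handled directly: any $s \in \und{\phi,1}{\MP}{\pi^{L+1}}$ satisfies $b_1(s) < \phi$, so
$d_{\SS,1}^{\MP,\pi}(\und{\phi,1}{\MP}{\pi^{L+1}}) \leq S\phi \leq S\delta$, the last inequality holding by the parameter choices in Section~\ref{sec:params}.

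For the second claim, apply Lemma~\ref{lem:hatm-tilm} with $H' = H$; the requirement there that $R_{h'} \equiv 0$ for $h' > H'$ is then vacuous. This yields
\begin{align*}
\left|V_1^{\pi,\MP,R}(\emptyset) - V_1^{\pi,\hatmdp{\pi^{1:H}},R}(\emptyset)\right|
\leq H^2\ep + H^2\theta + \frac{H^2 A^{2L}O^L\zeta}{\phi} + 4H \sum_{h=L+1}^{H-1} d_{\SS,h-L}^{\MP,\pi}\!\left(\und{\phi,h-L}{\MP}{\pi^h}\right).
\end{align*}
Plugging in the per-layer bound $S\delta$ from (\ref{eq:und-dpi}) into the sum, which has at most $H$ terms, replaces the last term by $4H \cdot H \cdot S\delta = 4H^2 S\delta$, yielding the stated inequality.

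The main conceptual step is the first one: recognizing that the global sink bound $\delta$ supplied by the hypothesis can be converted into a per-layer bound on mass placed on $\MP$-underexplored states. The chain of comparisons $\MP \to \barpdp{\phi,h}{\pi^{1:H}} \to \barpdp{\phi,H}{\pi^{1:H}}$ accomplishes exactly this, via the truncation and sink-monotonicity properties catalogued in Lemma~\ref{lem:pbar-properties}; the rest of the proof is a direct substitution into Lemma~\ref{lem:hatm-tilm}.
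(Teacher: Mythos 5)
Your proof is correct and follows essentially the same route as the paper's: apply Lemma~\ref{lem:hatm-tilm} with $H'=H$, and convert the global sink bound into the per-layer bound~(\ref{eq:und-dpi}) by combining item~\ref{it:bound-s-sink} of Lemma~\ref{lem:pbar-properties} with the fact that the truncation zeroes out underexplored states (the paper cites Lemmas~\ref{lem:pbar-lb-p} and~\ref{lem:truncation-property} for this last fact where you re-derive it from the construction) and the sink-monotonicity items~\ref{it:hh-truncation} and~\ref{it:sink-accum}. The only deviation is your separate treatment of $h=L+1$, which leans on $\phi\le\delta$ from Section~\ref{sec:params} and hence on the specific parameter values rather than the arbitrary $\delta>0$ of the lemma statement; the paper instead handles this case uniformly by invoking Lemma~\ref{lem:truncation-property}, whose statement covers $h=L+1$.
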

\begin{proof}
Since $\ME^{\rm low}$ holds,  Lemma \ref{lem:hatm-tilm} with the given values of $\phi, \pi^{1:H}$ and $H' = H$ gives that, for all general policies $\pi$, 
  \begin{align}
\left| V_1^{\pi, \MP, R}(\emptyset) - V_1^{\pi, \hatmdp{\pi^{1:H}}, R}(\emptyset) \right| \leq  H^2\ep + H^2\theta + \frac{H^2A^{2L}O^L\zeta}{\phi} + 4H \cdot \sum_{h=L+1}^{H} d_{\SS, h-L}^{\MP,\pi}(\und{\phi,h-L}{\MP}{\pi^h} )\nonumber.
  \end{align}
  Thus it suffices to bound $d_{\SS, h-L}^{\MP,\pi}(\und{\phi,h-L}{\MP}{\pi^h})$ for each $h >L$. For this purpose, we first use item \ref{it:bound-s-sink} of Lemma \ref{lem:pbar-properties} to get that for all $s \in \MS$ and $h >L$, and all $\pi \in \Pigen$,
  \begin{align}
d_{\SS, h-L}^{\MP,\pi}(s) \leq & d_{\SS, h-L}^{\barpdp{\phi, h}{\pi^{1:H}},\pi}(s) + d_{\SS,h-L}^{\barpdp{\phi, h}{\pi^{1:H}},\pi}(\sinks)\label{eq:p-pbar-sink}.
  \end{align}
  By items \ref{it:hh-truncation} and \ref{it:sink-accum} of Lemma \ref{lem:pbar-properties} and our assumption in the statement of the present lemma that $d_{\SS, H-L}^{\barpdp{\phi,H}{\pi^{1:H}},\pi}(\sinks) \leq \delta$, it holds that
  \begin{align}
d_{\SS,h-L}^{\barpdp{\phi, h}{\pi^{1:H}},\pi}(\sinks) \leq d_{\SS, H-L}^{\barpdp{\phi,H}{\pi^{1:H}},\pi}(\sinks) \leq \delta\nonumber.
  \end{align}
  Lemma \ref{lem:pbar-lb-p} gives that for all $h \in [H]$, $\und{\phi, h-L}{\MP}{\pi^h} \subset \und{\phi, h-L}{\barpdp{\phi,h}{\pi^{1:H}}}{\pi^h}$. %
  But Lemma \ref{lem:truncation-property} with $H' = h$ ensures that for any $s \in \und{\phi, h-L}{\barpdp{\phi,h}{\pi^{1:H}}}{\pi^h}$, %
  we have $d_{\SS,h-L}^{\barpdp{\phi,h}{\pi^{1:H}},\pi}(s) = 0$. Thus, from (\ref{eq:p-pbar-sink}) we have
  \begin{align}
d_{\SS, h-L}^{\MP,\pi}(\und{\phi,h-L}{\MP}{\pi^h}) \leq S \delta\nonumber,
  \end{align}
  which gives the desired result.

  \end{proof}

Finally, using Lemma \ref{lem:bound-diff-sink}, the proof of Theorem \ref{thm:main} is straightforward. 
\begin{proof}[Proof of Theorem \ref{thm:main}]
  Let $\ME^1$ be the event that the event $\ME^{\rm low}$ of Lemma \ref{lem:zlow-tvd} holds in the construction of $\wh\MM\^k$ in Algorithm \ref{alg:main} for all $k \in [K]$. Then, by Lemma \ref{lem:zlow-tvd}, the probability of $\ME^1$ is at least $1-pK \geq 1-\beta/2$ (where we have used $p = \beta/(2K)$ as defined in Section \ref{sec:params}). 
  
  By Lemma \ref{lem:found-good-iteration}, under the event $\ME^1$, there must be some iteration $k$ in Algorithm \ref{alg:main} so that, for all $\pi \in \Pigen$, $d_{\SS, H-L}^{\barpdp{\phi,H}{\ol\pi^{k,1:H}},\pi}(\sinks) \leq \delta H$. Then by Lemma \ref{lem:bound-diff-sink}, for any reward function $R$ with values in $[0,1]$, it holds that, under $\ME^1$,
  \begin{align}
\left| V_1^{\pi, \MP, R}(\emptyset) - V_1^{\pi, \wh\MM\^k, R}(\emptyset) \right| \leq H^2 \ep + H^2 \theta + \frac{H^2 A^{2L}O^L \zeta}{\phi} + 4H^3S\delta \leq \frac{\alpha}{8}\label{eq:closeness-fixedrews},
  \end{align}
  where the final inequality follows by definition of $\alpha$ in Section \ref{sec:params}, assuming the constant $C^\st$ is large enough.

For each $h \in [H]$, let $\pi^h := \ol\pi^{k,h}$, and $\wh \pi^h$ denote the policy which follows $\ol\pi^{k,h}$ up to step $\max\{h-L-1,0\}$ and thereafter chooses uniformly random actions.  Denote the reward function of $\wh\MM\^k$ as constructed in Algorithm \ref{alg:approxmdp} by $\wh R\^k := R^{\wh \MM\^k}$, and denote the true reward function of $\MP$ by $R^\MP$. By Lemma \ref{lem:zlow-tvd}, under the event $\ME^1$, for any $h \in [H]$ and any trajectory $a_{1:h-1}, o_{2:h}$ for which $(a_{h-L:h-1}, o_{h-L+1:h}) \not\in \zlow{h,\zeta}{\wh\pi^h}$, it holds that $\wh R\^k(o_h) = R_h^{\MP}(o_h)$. 
  Thus, by the definition of the value function,
  \begin{align}
    & \left| V_1^{\pi, \MP, R^\MP}(\emptyset) - V_1^{\pi, \MP, \wh R\^k}(\emptyset) \right|\nonumber\\
    =& \left| \E_{(a_{1:H-1}, o_{2:H}) \sim \pi}^\MP \left[ \sum_{h=2}^H R_h^\MP(o_h) - \wh R\^k(o_h)\right]\right|\nonumber\\
    \leq & \E_{(a_{1:H-1}, o_{2:H}) \sim \pi}^\MP \left[ \sum_{h=2}^H \One[(a_{h-L:h-1}, o_{h-L+1:h}) \not \in \zlow{h,\zeta}{\wh \pi^h}] \right]\nonumber\\
    \leq & \frac{HA^{2L}O^L \zeta}{\phi} + \sum_{h=2}^H d_{\SS, h-L}^{\MP,\pi}({\und{\phi,h-L}{\MP}{\pi^h}})\label{eq:rews-dund}\\
    \leq & \frac{HA^{2L}O^L \zeta}{\phi} + HS\delta \leq \alpha/8 \label{eq:rews-hsdelta},
  \end{align}
  where (\ref{eq:rews-dund}) uses Lemma \ref{lem:pbar-zlow-ub} for each $2 \leq h \leq H$, applied to the POMDP $\MP$ with $\pi' = \wh\pi^h$, and (\ref{eq:rews-hsdelta}) uses (\ref{eq:und-dpi}) of Lemma \ref{lem:bound-diff-sink}.

  Combining (\ref{eq:closeness-fixedrews}) and (\ref{eq:rews-hsdelta}) gives that, for all general policies $\pi$,
  \begin{align}
\left| V_1^{\pi, \MP, R^\MP}(\emptyset) - V_1^{\pi, \wh\MM\^k, \wh R\^k}(\emptyset) \right| \leq \frac{\alpha}{4}\label{eq:closeness-allpols}.
  \end{align}
  The above inequality holds in particular for the optimal policy $\pi^\st$ of $\MP$. %
  Since $\pi_\st^k$ (as computed in step \ref{it:pistar-k-mk} of \mainalg) is the optimal policy of $\wh\MM\^k$, we have (using (\ref{eq:closeness-allpols}))
  \begin{align}
    V_1^{\pi_\st^k, \MP, R^\MP}(\emptyset) \geq & V_1^{\pi_\st^k, \wh\MM\^k, \wh R\^k}(\emptyset) - \frac{\alpha}{4}\nonumber\\
    \geq & V_1^{\pi^\st, \wh \MM\^k, \wh R\^k}(\emptyset) - \frac{\alpha}{4}\nonumber\\
    \geq & V_1^{\pi^\st, \MP, R^\MP}(\emptyset) - \frac{\alpha}{2}\nonumber.
  \end{align}

  Denote by $\ME^2$ the event that, for all $k \in [K]$, the estimate $\wh r^k$ produced in step \ref{it:hat-rk} of Algorithm \ref{alg:main} is within $\alpha/4$ of the true value of $\pi_\st^k$, namely $V_1^{\pi_\st^k, \MP, R^\MP}(\emptyset)$. 
  By Hoeffding's inequality and the union bound, $\ME^2$ occurs with probability at least $1-\beta/2$. Thus, under the event $\ME^1 \cap \ME^2$ (which occurs with probability at least $1-\beta$,  the policy $\pi_\st^{k^\st}$ output by \mainalg (Algorithm \ref{alg:main}) satisfies
  \begin{align}
V_1^{\pi_\st^{k^\st}, \MP, R^\MP}(\emptyset) \geq V_1^{\pi^\st, \MP, R^\MP}(\emptyset) - \alpha\nonumber,
  \end{align}
  as desired.

  Finally, we analyze the runtime of Algorithm \ref{alg:main}. Let $T_1$ be the running time for a single call of $\ApproxMDP(L, N_0, N_1, \ol\pi^{k,1:H})$, and let $T_2$ be the running time for a single call of $\bspanner(\wh\MM\^k, h)$. Since it takes time $\tilde O(|\MZ|^2 AH)$ to compute an optimal policy of $\wh\MM\^k$ for each $k \in [K]$ and $K = 2SH \leq 2OH$, the total running time is bounded above by
  \begin{align}
2OH \cdot \left( T_1 + T_2 + \til O(|\MZ|^2 AH) + \til O\left( \frac{H^2}{\alpha^2} \right)\right)\nonumber,
  \end{align}
  where $\til O(\cdot)$ denotes that logarithmic factors (in $H, O, A, 1/\alpha$) are suppressed (above the $\til O(\cdot)$ accounts for bit arithmetic in the algorithm's operations).

  Next, from the definition of \ApproxMDP (Algorithm \ref{alg:approxmdp}), it is straightforward to see that
  \begin{align}
T_1 = \til O \left( H^2N_0 |\MZ| A O\right) \nonumber.
  \end{align}
  Next we bound $T_2$. The linear optimization oracle $\CO$ defined in step \ref{it:define-opt-oracle} of Algorithm \ref{alg:bspanner} can be implemented in time $\til O(|\MZ| \cdot HO)$ using dynamic programming, since the problem $\argmax_{\pi \in \PiZ} \lng r, d_{\SO, h-L}^{\pi, \wh\MM} \rng$ is equivalent to finding an optimal policy in $\wh\MM$ when the reward function is given by $r$ at step $h-L$ and is 0 otherwise. Step \ref{it:spanner-oracle} requires $O(O^2 \log O)$ calls to $\CO$, meaning that a single run of $\bspanner(\wh\MM, h)$ (Algorithm \ref{alg:bspanner}) takes time
  \begin{align}
T_2 = \til O \left(O^3H \cdot |\MZ|\right).
  \end{align}
  Altogether, since $|\MZ| = (AO)^L$, and using the definition of $N_0$ in Section \ref{sec:params}, the running time (and number of samples) taken by \mainalg is $(AO)^{CL} \cdot \log(1/\beta)$, for some constant $C > 1$. 
\end{proof}

\arxiv{}

}

\bibliographystyle{alpha}
\bibliography{bib.bib}

\newcommand{\etalchar}[1]{$^{#1}$}
\begin{thebibliography}{DVPWR15}

\bibitem[AGH{\etalchar{+}}14]{anandkumar2014tensor}
Animashree Anandkumar, Rong Ge, Daniel~J Hsu, Sham~M Kakade, and Matus
  Telgarsky.
\newblock Tensor decompositions for learning latent variable models.
\newblock {\em Journal of Machine Learning Research}, 15(1):2773--2832, 2014.

\bibitem[AHKS20]{agarwal2020pc}
Alekh Agarwal, Mikael Henaff, Sham Kakade, and Wen Sun.
\newblock Pc-pg: Policy cover directed exploration for provable policy gradient
  learning.
\newblock {\em Advances in Neural Information Processing Systems},
  33:13399--13412, 2020.

\bibitem[AK08]{awerbuch2008online}
Baruch Awerbuch and Robert Kleinberg.
\newblock Online linear optimization and adaptive routing.
\newblock {\em Journal of Computer and System Sciences}, 74(1):97--114, 2008.

\bibitem[ALA16]{azizzadenesheli2016open}
Kamyar Azizzadenesheli, Alessandro Lazaric, and Animashree Anandkumar.
\newblock Open problem: Approximate planning of pomdps in the class of
  memoryless policies.
\newblock In {\em Conference on Learning Theory}, pages 1639--1642. PMLR, 2016.

\bibitem[AYA18]{azizzadenesheli2018policy}
Kamyar Azizzadenesheli, Yisong Yue, and Animashree Anandkumar.
\newblock Policy gradient in partially observable environments: Approximation
  and convergence.
\newblock {\em arXiv preprint arXiv:1810.07900}, 2018.

\bibitem[BDRS96]{burago1996complexity}
Dima Burago, Michel De~Rougemont, and Anatol Slissenko.
\newblock On the complexity of partially observed markov decision processes.
\newblock {\em Theoretical Computer Science}, 157(2):161--183, 1996.

\bibitem[BS18]{brown2018superhuman}
Noam Brown and Tuomas Sandholm.
\newblock Superhuman ai for heads-up no-limit poker: Libratus beats top
  professionals.
\newblock {\em Science}, 359(6374):418--424, 2018.

\bibitem[Can20]{canonne2020short}
Cl{\'e}ment~L Canonne.
\newblock A short note on learning discrete distributions.
\newblock {\em arXiv preprint arXiv:2002.11457}, 2020.

\bibitem[CKK96]{cassandra1996acting}
Anthony~R Cassandra, Leslie~Pack Kaelbling, and James~A Kurien.
\newblock Acting under uncertainty: Discrete bayesian models for mobile-robot
  navigation.
\newblock In {\em Proceedings of IEEE/RSJ International Conference on
  Intelligent Robots and Systems. IROS'96}, volume~2, pages 963--972. IEEE,
  1996.

\bibitem[CLC09]{cai2009learning}
Chenghui Cai, Xuejun Liao, and Lawrence Carin.
\newblock Learning to explore and exploit in pomdps.
\newblock In Y.~Bengio, D.~Schuurmans, J.~Lafferty, C.~Williams, and
  A.~Culotta, editors, {\em Advances in Neural Information Processing Systems},
  volume~22. Curran Associates, Inc., 2009.

\bibitem[CLZ97]{cassandra1997incremental}
Anthony Cassandra, Michael~L. Littman, and Nevin~L. Zhang.
\newblock Incremental pruning: A simple, fast, exact method for partially
  observable markov decision processes.
\newblock In {\em Proceedings of the Thirteenth Conference on Uncertainty in
  Artificial Intelligence}, UAI'97, page 54–61, San Francisco, CA, USA, 1997.
  Morgan Kaufmann Publishers Inc.

\bibitem[DGZ22]{daskalakis2022complexity}
Constantinos Daskalakis, Noah Golowich, and Kaiqing Zhang.
\newblock The complexity of markov equilibrium in stochastic games.
\newblock {\em arXiv preprint arXiv:2204.03991}, 2022.

\bibitem[DKJ{\etalchar{+}}19]{du2019provably}
Simon Du, Akshay Krishnamurthy, Nan Jiang, Alekh Agarwal, Miroslav Dudik, and
  John Langford.
\newblock Provably efficient rl with rich observations via latent state
  decoding.
\newblock In {\em International Conference on Machine Learning}, pages
  1665--1674. PMLR, 2019.

\bibitem[DLB17]{dann2017unifying}
Christoph Dann, Tor Lattimore, and Emma Brunskill.
\newblock Unifying pac and regret: Uniform pac bounds for episodic
  reinforcement learning.
\newblock In I.~Guyon, U.~Von Luxburg, S.~Bengio, H.~Wallach, R.~Fergus,
  S.~Vishwanathan, and R.~Garnett, editors, {\em Advances in Neural Information
  Processing Systems}, volume~30. Curran Associates, Inc., 2017.

\bibitem[DVPWR15]{doshi2015bayesian}
Finale Doshi-Velez, David Pfau, Frank Wood, and Nicholas Roy.
\newblock Bayesian nonparametric methods for partially-observable reinforcement
  learning.
\newblock {\em IEEE Transactions on Pattern Analysis and Machine Intelligence},
  37:394--407, 2015.

\bibitem[EDKM05]{even2005reinforcement}
Eyal Even-Dar, Sham~M Kakade, and Yishay Mansour.
\newblock Reinforcement learning in pomdps without resets.
\newblock 2005.

\bibitem[EDKM07]{even2007value}
Eyal Even-Dar, Sham~M Kakade, and Yishay Mansour.
\newblock The value of observation for monitoring dynamic systems.
\newblock In {\em IJCAI}, pages 2474--2479, 2007.

\bibitem[EJKM22]{efroni2022provable}
Yonathan Efroni, Chi Jin, Akshay Krishnamurthy, and Sobhan Miryoosefi.
\newblock Provable reinforcement learning with a short-term memory.
\newblock {\em CoRR}, abs/2202.03983, 2022.

\bibitem[FKQR21]{foster2021statistical}
Dylan~J Foster, Sham~M Kakade, Jian Qian, and Alexander Rakhlin.
\newblock The statistical complexity of interactive decision making.
\newblock {\em arXiv preprint arXiv:2112.13487}, 2021.

\bibitem[GDB16]{guo2016pac}
Zhaohan~Daniel Guo, Shayan Doroudi, and Emma Brunskill.
\newblock A pac rl algorithm for episodic pomdps.
\newblock In {\em Artificial Intelligence and Statistics}, pages 510--518.
  PMLR, 2016.

\bibitem[GHL19]{garg2019despot}
Neha~Priyadarshini Garg, David Hsu, and Wee~Sun Lee.
\newblock Despot-alpha: Online pomdp planning with large state and observation
  spaces.
\newblock In {\em Robotics: Science and Systems}, 2019.

\bibitem[GMR22]{golowich2022planning}
Noah Golowich, Ankur Moitra, and Dhruv Rohatgi.
\newblock Planning in observable pomdps in quasipolynomial time.
\newblock {\em arXiv preprint arXiv:2201.04735}, 2022.

\bibitem[Han98]{hansen1998solving}
Eric~A Hansen.
\newblock Solving pomdps by searching in policy space.
\newblock In {\em Proceedings of the Fourteenth conference on Uncertainty in
  artificial intelligence}, pages 211--219, 1998.

\bibitem[Hau00]{hauskrecht2000value}
Milos Hauskrecht.
\newblock Value-function approximations for partially observable markov
  decision processes.
\newblock {\em Journal of artificial intelligence research}, 13:33--94, 2000.

\bibitem[HDT20]{han2020variational}
Dongqi Han, Kenji Doya, and Jun Tani.
\newblock Variational recurrent models for solving partially observable control
  tasks.
\newblock In {\em ICLR}, 2020.

\bibitem[HF00a]{hansen2000dynamic}
Eric~A. Hansen and Zhengzhu Feng.
\newblock Dynamic programming for pomdps using a factored state representation.
\newblock In {\em In Proceedings of the Fifth International Conference on AI
  Planning Systems}, pages 130--139, 2000.

\bibitem[HF00b]{hauskrecht2000planning}
Milos Hauskrecht and Hamish Fraser.
\newblock Planning treatment of ischemic heart disease with partially
  observable markov decision processes.
\newblock {\em Artificial intelligence in medicine}, 18(3):221--244, 2000.

\bibitem[HKZ12]{hsu2012spectral}
Daniel Hsu, Sham~M. Kakade, and Tong Zhang.
\newblock A spectral algorithm for learning hidden markov models.
\newblock {\em Journal of Computer and System Sciences}, 78(5):1460--1480,
  2012.
\newblock JCSS Special Issue: Cloud Computing 2011.

\bibitem[JJN21]{jafarnia2021online}
Mehdi Jafarnia{-}Jahromi, Rahul Jain, and Ashutosh Nayyar.
\newblock Online learning for unknown partially observable mdps.
\newblock {\em CoRR}, abs/2102.12661, 2021.

\bibitem[JKA{\etalchar{+}}17]{jiang2017contextual}
Nan Jiang, Akshay Krishnamurthy, Alekh Agarwal, John Langford, and Robert~E.
  Schapire.
\newblock Contextual decision processes with low {B}ellman rank are
  {PAC}-learnable.
\newblock In Doina Precup and Yee~Whye Teh, editors, {\em Proceedings of the
  34th International Conference on Machine Learning}, volume~70 of {\em
  Proceedings of Machine Learning Research}, pages 1704--1713. PMLR, 06--11 Aug
  2017.

\bibitem[JKKL20]{jin2020sample}
Chi Jin, Sham~M Kakade, Akshay Krishnamurthy, and Qinghua Liu.
\newblock Sample-efficient reinforcement learning of undercomplete pomdps.
\newblock {\em arXiv preprint arXiv:2006.12484}, 2020.

\bibitem[JKSY20]{jin2020reward}
Chi Jin, Akshay Krishnamurthy, Max Simchowitz, and Tiancheng Yu.
\newblock Reward-free exploration for reinforcement learning.
\newblock In {\em International Conference on Machine Learning (ICML)}, pages
  4870--4879. PMLR, 2020.

\bibitem[KAL16]{krishnamurthy2016pac}
Akshay Krishnamurthy, Alekh Agarwal, and John Langford.
\newblock Pac reinforcement learning with rich observations.
\newblock {\em arXiv preprint arXiv:1602.02722}, 2016.

\bibitem[KECM21a]{kwon2021reinforcement}
Jeongyeol Kwon, Yonathan Efroni, Constantine Caramanis, and Shie Mannor.
\newblock Reinforcement learning in reward-mixing mdps.
\newblock {\em arXiv preprint arXiv:2110.03743}, 2021.

\bibitem[KECM21b]{kwon2021rl}
Jeongyeol Kwon, Yonathan Efroni, Constantine Caramanis, and Shie Mannor.
\newblock Rl for latent mdps: Regret guarantees and a lower bound.
\newblock {\em arXiv preprint arXiv:2102.04939}, 2021.

\bibitem[KLLM22]{kane2022computational}
Daniel Kane, Sihan Liu, Shachar Lovett, and Gaurav Mahajan.
\newblock Computational-statistical gaps in reinforcement learning.
\newblock {\em arXiv preprint arXiv:2202.05444}, 2022.

\bibitem[KMN99]{kearns1999approximate}
Michael~J Kearns, Yishay Mansour, and Andrew~Y Ng.
\newblock Approximate planning in large pomdps via reusable trajectories.
\newblock In {\em NIPS}, pages 1001--1007. Citeseer, 1999.

\bibitem[KOA19]{katt2019bayesian}
Sammie Katt, Frans~A. Oliehoek, and Christopher Amato.
\newblock Bayesian reinforcement learning in factored pomdps.
\newblock In {\em Proceedings of the 18th International Conference on
  Autonomous Agents and MultiAgent Systems}, AAMAS '19, page 7–15, Richland,
  SC, 2019. International Foundation for Autonomous Agents and Multiagent
  Systems.

\bibitem[KS02]{kearns2002near}
Michael Kearns and Satinder Singh.
\newblock Near-optimal reinforcement learning in polynomial time.
\newblock {\em Machine learning}, 49(2):209--232, 2002.

\bibitem[LCSJ22]{liu2022partially}
Qinghua Liu, Alan Chung, Csaba Szepesv{\'a}ri, and Chi Jin.
\newblock When is partially observable reinforcement learning not scary?
\newblock {\em arXiv preprint arXiv:2204.08967}, 2022.

\bibitem[LGM01]{lusena2001nonapproximability}
Christopher Lusena, Judy Goldsmith, and Martin Mundhenk.
\newblock Nonapproximability results for partially observable markov decision
  processes.
\newblock {\em Journal of artificial intelligence research}, 14:83--103, 2001.

\bibitem[Lit94]{littman1994memoryless}
Michael~L Littman.
\newblock Memoryless policies: Theoretical limitations and practical results.
\newblock {\em From animals to animats}, 3:238--245, 1994.

\bibitem[LLC09]{li2009multi}
Hui Li, Xuejun Liao, and Lawrence Carin.
\newblock Multi-task reinforcement learning in partially observable stochastic
  environments.
\newblock {\em J. Mach. Learn. Res.}, 10:1131–1186, jun 2009.

\bibitem[LLC11]{liu2011infinite}
Miao Liu, Xuejun Liao, and Lawrence Carin.
\newblock {The Infinite Regionalized Policy Representation}.
\newblock In {\em {Proceedings of the 28th International Conference on Machine
  Learning}}, pages 769--776. {Omnipress}, 2011.

\bibitem[LLC13]{liu2013online}
Miao Liu, Xuejun Liao, and Lawrence Carin.
\newblock Online expectation maximization for reinforcement learning in pomdps.
\newblock In {\em Proceedings of the Twenty-Third International Joint
  Conference on Artificial Intelligence}, IJCAI '13, page 1501–1507. AAAI
  Press, 2013.

\bibitem[LS17]{lattimore2017end}
Tor Lattimore and Csaba Szepesvari.
\newblock The end of optimism? an asymptotic analysis of finite-armed linear
  bandits.
\newblock In {\em Artificial Intelligence and Statistics}, pages 728--737.
  PMLR, 2017.

\bibitem[LYX11]{li2011finding}
Yanjie Li, Baoqun Yin, and Hongsheng Xi.
\newblock Finding optimal memoryless policies of pomdps under the expected
  average reward criterion.
\newblock {\em European Journal of Operational Research}, 211(3):556--567,
  2011.

\bibitem[MGK21]{meng2021memory}
Lingheng Meng, Rob Gorbet, and Dana Kuli'c.
\newblock Memory-based deep reinforcement learning for pomdp.
\newblock {\em ArXiv}, abs/2102.12344, 2021.

\bibitem[MHKL20]{misra2020kinematic}
Dipendra Misra, Mikael Henaff, Akshay Krishnamurthy, and John Langford.
\newblock Kinematic state abstraction and provably efficient rich-observation
  reinforcement learning.
\newblock In Hal~Daumé III and Aarti Singh, editors, {\em Proceedings of the
  37th International Conference on Machine Learning}, volume 119 of {\em
  Proceedings of Machine Learning Research}, pages 6961--6971. PMLR, 13--18 Jul
  2020.

\bibitem[MKKC99]{meuleau1999solving}
Nicolas Meuleau, Kee-Eung Kim, Leslie~Pack Kaelbling, and Anthony~R. Cassandra.
\newblock Solving pomdps by searching the space of finite policies.
\newblock In {\em Proceedings of the Fifteenth Conference on Uncertainty in
  Artificial Intelligence}, UAI'99, page 417–426, San Francisco, CA, USA,
  1999. Morgan Kaufmann Publishers Inc.

\bibitem[Mon82]{monahan1982state}
George~E. Monahan.
\newblock State of the art-a survey of partially observable markov decision
  processes: Theory, models, and algorithms.
\newblock {\em Manage. Sci.}, 28(1):1–16, jan 1982.

\bibitem[MR05]{mossel2005learning}
Elchanan Mossel and S{\'{e}}bastien Roch.
\newblock Learning nonsingular phylogenies and hidden markov models.
\newblock {\em CoRR}, abs/cs/0502076, 2005.

\bibitem[NES22]{ni2022recurrent}
Tianwei Ni, Benjamin Eysenbach, and Ruslan Salakhutdinov.
\newblock Recurrent model-free rl can be a strong baseline for many pomdps,
  2022.

\bibitem[PB04]{poupart2004vdcbpi}
Pascal Poupart and Craig Boutilier.
\newblock Vdcbpi: an approximate scalable algorithm for large pomdps.
\newblock In {\em NIPS}, pages 1081--1088. Citeseer, 2004.

\bibitem[PGT06]{pineau2006anytime}
Joelle Pineau, Geoffrey Gordon, and Sebastian Thrun.
\newblock Anytime point-based approximations for large pomdps.
\newblock {\em Journal of Artificial Intelligence Research}, 27:335--380, 2006.

\bibitem[PT87]{papadimitriou1987complexity}
Christos~H Papadimitriou and John~N Tsitsiklis.
\newblock The complexity of markov decision processes.
\newblock {\em Mathematics of operations research}, 12(3):441--450, 1987.

\bibitem[PV08]{poupart2008model}
Pascal Poupart and Nikos Vlassis.
\newblock Model-based bayesian reinforcement learning in partially observable
  domains.
\newblock In {\em Proc Int. Symp. on Artificial Intelligence and Mathematics,},
  pages 1--2, 2008.

\bibitem[RCdP07]{ross2007bayes}
Stephane Ross, Brahim Chaib-draa, and Joelle Pineau.
\newblock Bayes-adaptive pomdps.
\newblock In J.~Platt, D.~Koller, Y.~Singer, and S.~Roweis, editors, {\em
  Advances in Neural Information Processing Systems}, volume~20. Curran
  Associates, Inc., 2007.

\bibitem[RG02]{roy2002exponential}
Nicholas Roy and Geoffrey~J Gordon.
\newblock Exponential family pca for belief compression in pomdps.
\newblock {\em Advances in Neural Information Processing Systems},
  15:1667--1674, 2002.

\bibitem[RPCdK11]{ross2011bayesian}
St\'{e}phane Ross, Joelle Pineau, Brahim Chaib-draa, and Pierre Kreitmann.
\newblock A bayesian approach for learning and planning in partially observable
  markov decision processes.
\newblock {\em J. Mach. Learn. Res.}, 12(null):1729–1770, jul 2011.

\bibitem[RPPCD08]{ross2008online}
St{\'e}phane Ross, Joelle Pineau, S{\'e}bastien Paquet, and Brahim Chaib-Draa.
\newblock Online planning algorithms for pomdps.
\newblock {\em Journal of Artificial Intelligence Research}, 32:663--704, 2008.

\bibitem[SBS05]{shani2005model}
Guy Shani, Ronen~I. Brafman, and Solomon~E. Shimony.
\newblock Model-based online learning of pomdps.
\newblock In Jo{\~a}o Gama, Rui Camacho, Pavel~B. Brazdil, Al{\'i}pio~M{\'a}rio
  Jorge, and Lu{\'i}s Torgo, editors, {\em Machine Learning: ECML 2005}, pages
  353--364, Berlin, Heidelberg, 2005. Springer Berlin Heidelberg.

\bibitem[Sch90]{schmidhuber1990reinforcement}
J\"{u}rgen Schmidhuber.
\newblock Reinforcement learning in markovian and non-markovian environments.
\newblock In {\em Proceedings of the 3rd International Conference on Neural
  Information Processing Systems}, NIPS'90, page 500–506, San Francisco, CA,
  USA, 1990. Morgan Kaufmann Publishers Inc.

\bibitem[SS12]{smith2012heuristic}
Trey Smith and Reid Simmons.
\newblock Heuristic search value iteration for pomdps.
\newblock {\em arXiv preprint arXiv:1207.4166}, 2012.

\bibitem[SV05]{spaan2005perseus}
Matthijs~TJ Spaan and Nikos Vlassis.
\newblock Perseus: Randomized point-based value iteration for pomdps.
\newblock {\em Journal of artificial intelligence research}, 24:195--220, 2005.

\bibitem[SV10]{silver2010monte}
David Silver and Joel Veness.
\newblock Monte-carlo planning in large pomdps.
\newblock Neural Information Processing Systems, 2010.

\bibitem[SYHL13]{somani2013despot}
Adhiraj Somani, Nan Ye, David Hsu, and Wee~Sun Lee.
\newblock Despot: Online pomdp planning with regularization.
\newblock {\em Advances in neural information processing systems},
  26:1772--1780, 2013.

\bibitem[TK03]{theocharous2003approximate}
Georgios Theocharous and Leslie Kaelbling.
\newblock Approximate planning in pomdps with macro-actions.
\newblock {\em Advances in Neural Information Processing Systems}, 16:775--782,
  2003.

\bibitem[VLB12]{vlassis2012computational}
Nikos Vlassis, Michael~L. Littman, and David Barber.
\newblock On the computational complexity of stochastic controller optimization
  in pomdps.
\newblock {\em ACM Trans. Comput. Theory}, 4(4), nov 2012.

\bibitem[WAJ{\etalchar{+}}21]{weisz2021query}
Gellert Weisz, Philip Amortila, Barnab{\'a}s Janzer, Yasin Abbasi-Yadkori, Nan
  Jiang, and Csaba Szepesv{\'a}ri.
\newblock On query-efficient planning in mdps under linear realizability of the
  optimal state-value function.
\newblock In {\em Conference on Learning Theory}, pages 4355--4385. PMLR, 2021.

\bibitem[WDZ22]{wei2022model}
Chen-Yu Wei, Christoph Dann, and Julian Zimmert.
\newblock A model selection approach for corruption robust reinforcement
  learning.
\newblock In Sanjoy Dasgupta and Nika Haghtalab, editors, {\em Proceedings of
  The 33rd International Conference on Algorithmic Learning Theory}, volume 167
  of {\em Proceedings of Machine Learning Research}, pages 1043--1096. PMLR, 29
  Mar--01 Apr 2022.

\bibitem[WL21]{wei2021nonstationary}
Chen{-}Yu Wei and Haipeng Luo.
\newblock Non-stationary reinforcement learning without prior knowledge: An
  optimal black-box approach.
\newblock {\em CoRR}, abs/2102.05406, 2021.

\bibitem[WME{\etalchar{+}}22]{wang2022using}
Xintong Wang, Gary~Qiurui Ma, Alon Eden, Clara Li, Alexander Trott, Stephan
  Zheng, and David~C. Parkes.
\newblock Using reinforcement learning to study platform economies under market
  shocks, 2022.

\bibitem[WYDW21]{wu2021reinforcement}
Tianhao Wu, Yunchang Yang, Simon Du, and Liwei Wang.
\newblock On reinforcement learning with adversarial corruption and its
  application to block mdp.
\newblock In Marina Meila and Tong Zhang, editors, {\em Proceedings of the 38th
  International Conference on Machine Learning}, volume 139 of {\em Proceedings
  of Machine Learning Research}, pages 11296--11306. PMLR, 18--24 Jul 2021.

\bibitem[XCGZ21]{xiong2021sublinear}
Yi~Xiong, Ningyuan Chen, Xuefeng Gao, and Xiang Zhou.
\newblock Sublinear regret for learning pomdps.
\newblock {\em arXiv preprint arXiv:2107.03635}, 2021.

\end{thebibliography}

\neurips{
\section*{Checklist}

\begin{enumerate}

\item For all authors...
\begin{enumerate}
  \item Do the main claims made in the abstract and introduction accurately reflect the paper's contributions and scope?
    \answerYes{}
  \item Did you describe the limitations of your work?
    \answerYes{}
  \item Did you discuss any potential negative societal impacts of your work?
    \answerNA{}
  \item Have you read the ethics review guidelines and ensured that your paper conforms to them?
    \answerYes{}
\end{enumerate}

\item If you are including theoretical results...
\begin{enumerate}
  \item Did you state the full set of assumptions of all theoretical results?
    \answerYes{}
        \item Did you include complete proofs of all theoretical results?
    \answerYes{}
\end{enumerate}

\item If you ran experiments...
\begin{enumerate}
  \item Did you include the code, data, and instructions needed to reproduce the main experimental results (either in the supplemental material or as a URL)?
    \answerNA{}
  \item Did you specify all the training details (e.g., data splits, hyperparameters, how they were chosen)?
    \answerNA{}
        \item Did you report error bars (e.g., with respect to the random seed after running experiments multiple times)?
    \answerNA{}
        \item Did you include the total amount of compute and the type of resources used (e.g., type of GPUs, internal cluster, or cloud provider)?
    \answerNA{}
\end{enumerate}

\item If you are using existing assets (e.g., code, data, models) or curating/releasing new assets...
\begin{enumerate}
  \item If your work uses existing assets, did you cite the creators?
    \answerNA{}
  \item Did you mention the license of the assets?
    \answerNA{}
  \item Did you include any new assets either in the supplemental material or as a URL?
    \answerNA{}
  \item Did you discuss whether and how consent was obtained from people whose data you're using/curating?
    \answerNA{}
  \item Did you discuss whether the data you are using/curating contains personally identifiable information or offensive content?
    \answerNA{}
\end{enumerate}

\item If you used crowdsourcing or conducted research with human subjects...
\begin{enumerate}
  \item Did you include the full text of instructions given to participants and screenshots, if applicable?
    \answerNA{}
  \item Did you describe any potential participant risks, with links to Institutional Review Board (IRB) approvals, if applicable?
    \answerNA{}
  \item Did you include the estimated hourly wage paid to participants and the total amount spent on participant compensation?
    \answerNA{}
\end{enumerate}

\end{enumerate}
}

\neurips{\newpage
\appendix

}

\end{document}